\documentclass[twoside,11pt]{article}

\usepackage{jmlr2e}
\usepackage{theorem}
\usepackage{stmaryrd} %
\usepackage{epsfig} %
\usepackage{misc}
\usepackage{amsmath,amssymb,amsbsy} 
\usepackage{bm}
\usepackage{bbm}
\usepackage{proof}
\usepackage{setspace}

\usepackage{boxedminipage}
\usepackage{pifont}
\usepackage{graphicx}
\usepackage{caption}
\usepackage{multirow}
\usepackage{xspace} %
\usepackage{color}
\usepackage{url}
\usepackage[svgnames]{xcolor}
\usepackage{tikz}
\usetikzlibrary{decorations.markings}
\usetikzlibrary{shapes.geometric}
\pagestyle{empty}

\pgfdeclarelayer{edgelayer}
\pgfdeclarelayer{nodelayer}
\pgfsetlayers{edgelayer,nodelayer,main}

\tikzstyle{none}=[inner sep=0pt]

\tikzstyle{rn}=[circle,fill=Red,draw=Black,line width=0.8 pt]
\tikzstyle{gn}=[circle,fill=White,draw=Black,line width=0.8 pt]
\tikzstyle{yn}=[circle,fill=Yellow,draw=Black,line width=0.8 pt]
\tikzstyle{simple}=[circle,fill=White,draw=Black]
\tikzstyle{newstyle1}=[circle,fill=Black,draw=Black,line width=0.3 pt,inner sep=0pt]
\tikzstyle{simple2}=[-,dashed,draw=Black]
\tikzstyle{simpledotted}=[-,dotted,draw=Black]
\tikzstyle{simple}=[-,draw=Black,line width=2.000]
\tikzstyle{arrow}=[-,draw=Black,postaction={decorate},decoration={markings,mark=at position .5 with {\arrow{>}}},line width=2.000]
\tikzstyle{tick}=[-,draw=Black,postaction={decorate},decoration={markings,mark=at position .5 with {\draw (0,-0.1) -- (0,0.1);}},line width=2.000]
\tikzstyle{newstyle2}=[-latex,draw=Black]
\tikzstyle{newstyle3}=[->,dotted,draw=Black]
\tikzstyle{newstyle6}=[->,dotted,draw=Black]

\usepackage{enumerate,caption}
\usepackage{enumitem}
\usepackage{algorithm}
\usepackage{algpseudocode}
\usepackage{scalefnt}

\theoremstyle{plain}
\newtheorem{theoremaux}{Theorem}%
\renewenvironment{theorem}{\begin{theoremaux}}{\end{theoremaux}\parfillskip=0pt\finalhyphendemerits=0\goodbreak\noindent\ignorespacesafterend}
\newtheorem{lemmaaux}[theoremaux]{Lemma}
\renewenvironment{lemma}{\begin{lemmaaux}}{\end{lemmaaux}\parfillskip=0pt\finalhyphendemerits=0\goodbreak\noindent\ignorespacesafterend}
\newtheorem{corollaryaux}[theoremaux]{Corollary}

\newtheorem{propositionaux}[theoremaux]{Proposition}

\newtheorem{exampleaux}[theoremaux]{Example}
\renewenvironment{example}{\begin{exampleaux}}{\end{exampleaux}\parfillskip=0pt\finalhyphendemerits=0\goodbreak\noindent\ignorespacesafterend}
\newtheorem{observationaux}[theoremaux]{Observation}

\newtheorem{claimaux}[theoremaux]{Claim}

\newtheorem{factaux}[theoremaux]{Fact}

\newtheorem{conjectureaux}[theoremaux]{Conjecture}

\theoremstyle{definition}
\newtheorem{definitionaux}[theoremaux]{Definition}
\renewenvironment{definition}{\begin{definitionaux}}{\end{definitionaux}\parfillskip=0pt\finalhyphendemerits=0\goodbreak\noindent\ignorespacesafterend}
\newcommand{\sma}{{\ensuremath{{\boldsymbol{\sigma}}}}}

\setlength{\marginparwidth}{2cm}

\newcommand{\esiblingrule}{`Sibling merging for \Tmc'\xspace} 

\newcommand{\eleftcondition}{left saturated for \Hmc} 
\newcommand{\erightcondition}{concept saturated for \Tmc} 
\newcommand{\erolecondition}{role saturated for \Tmc}

\DeclareMathAlphabet\mathbfcal{OMS}{cmsy}{b}{n}

\usepackage{tikzsymbols}
\newcommand{\symbola}{d_2}
\newcommand{\symbolaa}{e_3}
\newcommand{\symbolb}{d_4}
\newcommand{\symbolbb}{e_2}
\newcommand{\symbolcc}{e_1}
\newcommand{\symbolc}{d_5}
\newcommand{\symbold}{d_3}
\newcommand{\symboldd}{e_4}
\newcommand{\symbole}{d_0}
\newcommand{\symbolee}{e_0}
\newcommand{\symbolf}{d_1}
\newcommand{\symbolg}{e_5}

\newcommand{\lhs}{{\sf lhs}}

\ShortHeadings{Exact Learning of Lightweight Description Logic Ontologies}{Boris Konev, Carsten Lutz, Ana Ozaki and Frank Wolter}
\firstpageno{1}

\begin{document}

\title{Exact Learning of Lightweight Description Logic Ontologies}

\author{\name Boris Konev \email konev@liverpool.ac.uk \\
       \addr Department of Computer Science\\
       University of Liverpool, United Kingdom
       \AND
       \name Carsten Lutz \email clu@informatik.uni-bremen.de \\
       \addr Department of Computer Science\\
       University of Bremen, Germany
       \AND
       \name Ana Ozaki \email anaozaki@liverpool.ac.uk \\
       \addr Department of Computer Science\\
       University of Liverpool, United Kingdom
       \AND
       \name Frank Wolter \email wolter@liverpool.ac.uk \\
       \addr Department of Computer Science\\
       University of Liverpool, United Kingdom
       }

\editor{}

\maketitle

\begin{abstract} %
  We study the problem of learning description logic (DL) ontologies
  in Angluin et al.'s framework of exact learning via queries. We
  admit membership queries (``is a given subsumption entailed by the
  target ontology?'') and equivalence queries (``is a given ontology
  equivalent to the target ontology?''). %
  We present three main results: (1)~ontologies formulated in (two
  relevant versions of) the description logic DL-Lite can be learned
  with polynomially many queries of polynomial size; (2)~this is not
  the case for ontologies formulated in the description logic \EL,
  even when only acyclic ontologies are admitted; and (3)~ontologies
  formulated in a fragment of \EL related to the web ontology language
  OWL 2 RL can be learned in polynomial time. We also show that
  neither membership nor equivalence queries alone are sufficient in
  cases (1) and (3).
\end{abstract}

\begin{keywords}
  Exact Learning, Description Logic, Complexity
\end{keywords}

\section{Introduction}

In many subfields of artificial intelligence, ontologies are used to
provide a common vocabulary for the application domain of interest and
to give a meaning to the terms in the vocabulary, and to describe the
relations between them.  Description logics (DLs) are a prominent
family of ontology languages with a long history that goes back to
Brachman's famous knowledge representation system KL-ONE in the early
1980s \citep{DBLP:journals/cogsci/BrachmanS85}. Today, there are
several widely used families of DLs that differ in expressive power,
computational complexity, and intended application.  The most
important ones are the \ALC family which aims at high expressive
power, the \EL family \citep{BaBrLu-IJCAI-05} which aims to provide
scalable reasoning, and the DL-Lite family
\citep{CDLLR07,DBLP:journals/jair/ArtaleCKZ09} which is tailored
specifically towards applications in data access. In 2004, the World
Wide Web Committee (W3C) has standardised a DL of the \ALC family as
an ontology language for the web, called OWL. The standard was updated
to OWL~2 in 2009, and since then comprises a family of five languages
including the OWL~2 profiles OWL~2 EL, OWL~2 QL, and OWL~2 RL.  While
OWL~2 EL is based on $\mathcal{EL}$ and OWL~2 QL on DL-Lite, OWL~2 RL
is closely related to the fragment of \EL that is obtained by allowing
only concept names on the right-hand side of concept inclusions.  In
this paper we study DLs from the $\mathcal{EL}$ and DL-Lite
families. %

Designing an ontology for an application domain is a subtle,
error-prone, and time consuming task.
From its beginnings, DL research was driven by the aim to provide
various forms of support for ontology engineers, assisting them in the
design of high-quality ontologies; examples include the ubiquitous
task of ontology classification~\citep{Textbook}, reasoning
support for debugging ontologies~\citep{wang2005debugging,schlobach2007debugging}, support for modular
ontology design~\citep{DBLP:series/lncs/5445}, and
checking the completeness of the modelling in a systematic
way~\citep{DBLP:conf/ijcai/BaaderGSS07}. The same aim is pursued by
the field of ontology learning, where the goal is to use machine
learning techniques
for various ontology engineering tasks such as to identify the
relevant vocabulary of the application
domain~\citep{DBLP:reference/nlp/CimianoVB10,Wong:2012:OLT:2333112.2333115},
to learn an initial version of the ontology that is then refined
manually~\citep{BorDi11,DBLP:conf/aime/MaD13,bootstrapping}, and to
learn concept expressions as building blocks of an
ontology~\citep{DBLP:journals/ml/LehmannH10}; see the recent
collection~\citep{lehmann2014perspectives} and the related work
section at the end of the paper for details.

In this paper we concentrate on learning the full logical structure
of a description logic ontology.
Our starting point is the observation that building
a high-quality ontology relies on the successful communication between
an ontology engineer and a domain expert because the former is
typically not sufficiently familiar with the domain and the latter is
rarely an expert in ontology engineering. We study the foundations of
this communication process in terms of a simple communication model
and analyse, within this model, the complexity of constructing a
correct and complete domain ontology.  Our model rests on the
following assumptions:
\begin{enumerate}
\item The domain expert has perfect knowledge of the domain, but is
  not able to formalise or communicate the target ontology \Omc to be
  constructed.
\item The domain expert is able to communicate the vocabulary
  (predicate symbols, which in the case of DLs take the form of
  concept and role names) of \Omc and shares it with the ontology
  engineer. %
  The ontology engineer knows nothing else
  about the domain.
\item The ontology engineer can pose queries to the domain expert
  which the domain expert answers truthfully. The main queries posed
  by the ontology engineer are of the form 
  \begin{quote}
``Is the concept inclusion
  $C\sqsubseteq D$ entailed by \Omc?''
  \end{quote}

\item In addition, the ontology engineer needs a way to find out
  whether the ontology \Hmc  that has been constructed so far, called the
  hypothesis ontology, is complete. If not, he
  requests an example illustrating the incompleteness. The engineer can
  thus ask:
  \begin{quote}
    ``Is the ontology \Hmc complete? If not, then return a
    concept inclusion $C\sqsubseteq D$ entailed by \Omc but not by \Hmc.''
  \end{quote}

\end{enumerate}
We are then interested in whether the target ontology \Omc can be
constructed with only polynomially many queries of polynomial size
(polynomial query learnability) or, even better, with overall
polynomial time (polynomial time learnability). In both cases, the
polynomial is in the size of the ontology to be
constructed %
plus the size of the counterexamples returned by the domain
expert. Without taking into account the latter, one can never expect
to achieve polynomial time learnability because the domain expert
could %
provide unnecessarily large counterexamples. Note that polynomial time
learnability implies polynomial query learnability, but that the converse is
false because polynomial query learnability allows the ontology engineer to run
computationally costly procedures between posing queries.

The above model is an instance of Angluin et al.'s framework of exact
learning via queries~\citep{DBLP:journals/ml/Angluin87}.  In
this context, the 
queries mentioned in Point~3 above are called \emph{membership
  queries}. The queries in Point~4 are a form of \emph{equivalence
  queries}. In Angluin's framework, however, such queries are slightly more
general:
  \begin{quote}
    ``Is the hypothesis ontology \Hmc equivalent to the target
    ontology \Omc? If not, then return a concept inclusion $C\sqsubseteq D$
    entailed by \Omc but not by \Hmc  (a \emph{positive
       counterexample}) or vice versa   (a \emph{negative counterexample}).''
  \end{quote}
  In our upper bounds (that is, polynomial learnability results), we
  admit only queries of the more restricted form in Point~4 above: the
  learning algorithm is designed in a way so that the hypothesis
  ontology \Hmc is a consequence of the target ontology \Omc at all
  times, and thus the only meaningful equivalence query is a query of
  the form ``Is \Hmc already complete?''. Our lower bounds (results
  saying that polynomial learnability is impossible), in contrast,
  apply to unrestricted equivalence queries, that is, they do not
  assume that the hypothesis is implied by the target. In this way, we
  achieve maximum generality.

\smallskip

Within the setup outlined above, we study the following description logics: 
\begin{enumerate}

\item[(a)] $\ourDLLite$, which is a member of the DL-Lite family that
  admits role inclusions and allows nested existential quantification
  on the right-hand side of concept inclusions;

\item[(b)] the extension $\ourDLLitehorn$ of $\ourDLLite$ with
  conjunction on the left-hand side of concept inclusions;

\item[(c)] the basic member \EL of the \EL family;

\item[(d)] the fragment $\mathcal{EL}_{\mn{lhs}}$ of $\mathcal{EL}$
  where only concept names (but no compound concept expressions) are
  admitted on the right-hand side of concept inclusions.

\end{enumerate}
We remark that $\ourDLLite$ is closely related to OWL~2 QL, which is
based on the fragment of $\ourDLLite$ that does not allow nested
existential quantification on the right-hand side of concept
inclusions. In this more restricted case, though, polynomial
learnability is uninteresting. In fact, the number of concept
inclusions formulated in a fixed finite vocabulary $\Sigma$ is bounded
polynomially in the size of~$\Sigma$ instead of being infinite as in
the description logics studied in this paper; consequently, TBoxes are
trivially learnable in polynomial time, even when only membership
queries (but no equivalence queries) are available or vice versa.  The
extension $\ourDLLitehorn$ of $\ourDLLite$ is not part of the OWL~2 QL
standard, but admitting conjunctions on the left-hand side of concept
inclusions is a useful and widely considered extension of basic
DL-Lite dialects, see for example
\citep{DBLP:journals/jair/ArtaleCKZ09}. $\mathcal{EL}_{\mn{lhs}}$ is a
significant part of the OWL~2 RL language and can be viewed as a
natural fragment of Datalog. An even better approximation of OWL~2 RL
would be the extension of $\mathcal{EL}_{\mn{lhs}}$ with inverse
roles, but polynomial learnability in that language remains an open
problem. And finally, unrestricted \EL can be viewed as a logical core
of the OWL~2 EL language.

After introducing preliminaries in Section~\ref{sect:prelim}, we study
exact learning of $\ourDLLite$ ontologies in Section~\ref{sec:dllite},
establishing polynomial query learnability. We strengthen this result
to \ourDLLitehorn in Section~\ref{sec:elih-rhs-horn-subsumption},
using a significantly more subtle algorithm. It remains open whether
$\ourDLLite$ and $\ourDLLitehorn$ admit polynomial time learnability.
Our algorithms do not yield such a stronger result since they use
subsumption checks to analyse counterexamples provided by the oracle
and to integrate them into the current hypothesis ontology, and
subsumption is {\sc NP}-complete in these
DLs~\citep{kikot2011tractability}.
In Section~\ref{sec:final}, we show that $\EL_{\mn{lhs}}$ ontologies
are learnable in polynomial time, a result that extends the known
polynomial time learnability of propositional Horn formulas
\citep{DBLP:journals/ml/AngluinFP92}, which correspond to
$\mathcal{EL}$ ontologies without existential restrictions.
In fact, our algorithms take inspiration from learning algorithms for
propositional Horn formulas and combine the underlying ideas with
modern concepts from DL such as canonical models, simulations, and
products.  The algorithm for $\EL_{\mn{lhs}}$ also uses subsumption
checks, which in this case does not get in the way of polynomial time
learnability since subsumption in $\EL_{\mn{lhs}}$ can be decided in
polynomial time.

In Section~\ref{sec:EL_not_learnable}, we then establish that \EL
ontologies are not polynomial query learnable.  Note that the fragment
$\mathcal{EL}_{\mn{rhs}}$ of \EL, which is symmetric to
$\mathcal{EL}_{\mn{lhs}}$ and only admits concept names on the
\emph{left}-hand side of concept inclusions is a fragment of
$\ourDLLite$. Together, our upper bounds for $\ourDLLite$ and
$\mathcal{EL}_{\mn{lhs}}$ thus establish that failure of polynomial
query learnability of $\mathcal{EL}$ ontologies is caused by the
interaction between existential restrictions on the left- and
right-hand sides of concept inclusions. Interestingly, our result
already applies to acyclic \EL TBoxes, which disallow recursive
definitions of concepts and are of a rather restricted syntactic
form. However, the result does rely on concept inclusions as
counterexamples that are of a form not allowed in acyclic TBoxes. We
also show that ontologies formulated in
$\text{DL-Lite}^{\exists}_{\mathcal{R},{\sf horn}}$ and in
$\EL_{\mn{lhs}}$ are neither polynomial query learnable with membership
queries alone nor with equivalence queries alone; corresponding results
for propositional Horn formulas can be found
in~\citep{DBLP:conf/icml/FrazierP93,DBLP:journals/ml/AngluinFP92,Ang},
see also \citep{DBLP:journals/ml/AriasB11}.
\begin{figure}[t]
 \centering
\begin{tikzpicture}
	\begin{pgfonlayer}{nodelayer}
		\node [style=none] (0) at (1.5, 0.75) {$\mathcal{EL}_{\sf rhs}$};
		\node [style=none] (1) at (3, 0.75) {$\mathcal{EL}_{\sf lhs}$};
		\node [style=none] (2) at (-0.25, 1.25) {};
		\node [style=none] (3) at (0.5, 1.25) {};
		\node [style=none] (4) at (1.25, 1.25) {};
		\node [style=none] (5) at (3.75, 1.25) {};
		\node [style=none] (6) at (-0.25, 2.25) {};
		\node [style=none] (7) at (0, 2.25) {};
		\node [style=none] (8) at (1, 2.25) {};
		\node [style=none] (9) at (3.75, 2.25) {};
		\node [style=none] (10) at (3.5, 1.75) {DL-Lite$^\exists_{\Rmc,{\sf horn}}$};
		\node [style=none] (11) at (1.25, 1.75) {DL-Lite$^\exists_{\Rmc}$};
		\node [style=none] (12) at (2.25, 2.75) {$\mathcal{EL}$};
		\node [style=none] (13) at (-4, 1.75) {\small {Polynomial query learnable}};
		\node [style=none] (14) at (-4, 0.75) {\small {Polynomial time learnable}};
		\node [style=none] (15) at (5.25, 1.25) {};
		\node [style=none] (16) at (4.75, 1.25) {};
		\node [style=none] (17) at (5, 2.25) {};
		\node [style=none] (18) at (5.25, 2.25) {};
		\node [style=none] (19) at (-4, 2.75) {\small {Not polynomial query learnable}};
		\node [style=none] (20) at (-0.75, 0.75) {};
		\node [style=none] (21) at (-1.25, 0.75) {};
		\node [style=none] (22) at (-0.75, 1.75) {};
		\node [style=none] (23) at (-1.25, 1.75) {};
		\node [style=none] (24) at (-1.25, 2.75) {};
		\node [style=none] (25) at (-0.75, 2.75) {};
	\end{pgfonlayer}
	\begin{pgfonlayer}{edgelayer}
		\draw [style=simpledotted] (2.center) to (3.center);
		\draw [style=simpledotted] (3.center) to (4.center);
		\draw [style=simpledotted] (4.center) to (5.center);
		\draw [style=simpledotted] (6.center) to (7.center);
		\draw [style=simpledotted] (7.center) to (8.center);
		\draw [style=simpledotted] (8.center) to (9.center);
		\draw [style=simpledotted] (15.center) to (16.center);
		\draw [style=simpledotted] (16.center) to (5.center);
		\draw [style=simpledotted] (18.center) to (17.center);
		\draw [style=simpledotted] (9.center) to (17.center);
		\draw [style=newstyle6] (21.center) to (20.center);
		\draw [style=newstyle6] (23.center) to (22.center);
		\draw [style=newstyle6] (24.center) to (25.center);
	\end{pgfonlayer}
\end{tikzpicture}  %
 \caption{Summary of main results}
 \label{fig:complexity}
\end{figure}
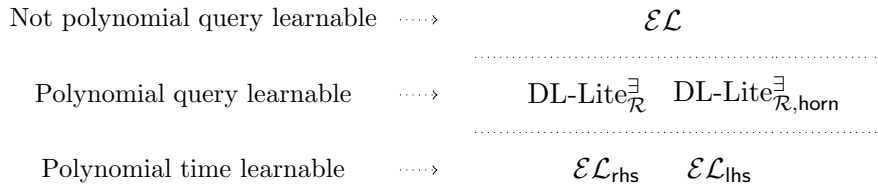
Figure \ref{fig:complexity} summarises  the main results obtained in this paper.

In Section~\ref{sect:related} we provide an extensive discussion of
related work on the exact learning of logical formulas and theories,
and we close the paper with a discussion of open problems. A small
number of proofs are deferred to an appendix. %
 \section{Preliminaries}
\label{sect:prelim}

We introduce the description logics studied in this paper, then
consider a representation of concept expressions in terms of labelled
trees and show how important semantic notions such as subsumption
between concept expressions can be characterised by homomorphisms
between the corresponding trees. This also involves introducing
canonical models, which are an important tool throughout the
paper. Finally, we formally introduce the framework of exact learning.

\subsection{Description Logics}
Let \NC be a countably infinite set of \emph{concept names} (denoted by upper
case letters $A$, $B$, etc) and let \NR be a countably infinite set of
\emph{role names} disjoint from \NC (denoted by lower case letters $r$, $s$, etc). Concept and role names can
be regarded as unary and binary predicates, respectively. 
In description logic, constructors are used to define compound concept
and role expressions from concept and role names. In this paper, the
only role constructor is the inverse role constructor: for $r \in
\NR$, the expression $r^-$ is the \emph{inverse role} of
$r$. Semantically, $r^-$ represents the converse of the binary
relation $r$. A \emph{role expression} is a role name or an inverse role. 
We set
$r^{-}:=s$ if $r=s^{-}$ for a role name $s$. For brevity, we will
typically
speak of \emph{roles} rather than of role expressions.
The concept constructors used in this paper are $\top$
(\emph{everything}), $\sqcap$ (\emph{conjunction}), and $\exists r.C$
(\emph{qualified existential restriction}). Formally,
\emph{concept expressions} $C$
are defined according to the following syntactic rule: 
$$
C,D \quad :=  \quad \top \quad | \quad A \quad | \quad C \sqcap D \quad | \quad \exists r.C
$$
where $A$ is a concept name and $r$ is a role.
For example, $\exists \mathsf{hasChild}.\top \sqcap \exists \mathsf{gender}.\mathsf{Male}$ denotes the class of individuals 
who have a child and whose gender is male.

\medskip

Terminological knowledge is captured by finite sets of
\emph{inclusions} between concept expressions or roles. Specifically,
\begin{itemize}

\item a \emph{concept inclusion (CI)} takes the form $C 
  \sqsubseteq D$, where $C$ and $D$ are concept expressions, and

\item a \emph{role inclusion (RI)} takes the form $r
  \sqsubseteq s$, where $r$ and $s$ are roles.

\end{itemize}
An \emph{ontology} or \emph{TBox} is a finite set of CIs and
RIs.\footnote{In the description logic literature, CIs of the form
  introduced here are often called $\mathcal{ELI}$ CIs to distinguish
  them from CIs that use concept expressions formulated in other
  description logics.  The TBoxes are called $\mathcal{ELIH}$ TBoxes
  (TBoxes that consist of $\mathcal{ELI}$ CIs and RIs).} We use $C
\equiv D$ as an abbreviations for the two CIs $C \sqsubseteq D$ and $D
\sqsubseteq C$ and likewise for $r\equiv s$; we speak of \emph{concept
equivalences (CEs)} and \emph{role equivalences (REs)}, respectively.

\begin{example}\label{ex:1}
Consider the following  TBox: %
\begin{eqnarray}
\label{eq:1} \mn{Prof}            & \sqsubseteq & \exists \mn{supervisor\_of}.\mn{Student}\sqcap \exists \mn{conduct\_research}.\top\\
\label{eq:2} \mn{Graduate}        & \equiv      & \exists \mn{has\_degree}.\top \\
\label{eq:3} \mn{GraduateStudent} & \equiv      & \mn{Student} \sqcap \mn{Graduate}\\
\label{eq:4} \mn{GraduateStudent} & \sqsubseteq & \exists \mn{supervisor\_of}^{-}.\mn{Prof}\\
\label{eq:5} \mn{supervisor\_of}  & \sqsubseteq & \mn{advisor\_of}\\
\label{eq:6} \mn{CS\_Graduate}    & \equiv      & \exists \mn{has\_degree}.\mn{CS\_Degree} 
\end{eqnarray}
The CI in Line~\ref{eq:1} states that every professor supervises
students and conducts research. Notice that we do not specify the
specific area of research, hence we use an unqualified existential
restriction of the form $\exists r.\top$.  The CE in Line~\ref{eq:2}
defines a graduate as anyone who has a degree. The CE in
Line~\ref{eq:3} defines a graduate student as a student who is a
graduate.  The CI in Line~\ref{eq:4} states that graduate students are
supervised by professors. Notice that we use the inverse role of
$\mn{supervisor\_of}$ here.  Line~\ref{eq:5} shows an RI which states
that every supervisor is an advisor.  The CE in the last line defines
a computer science graduate as someone with a degree in computer
science.
\end{example}
A \emph{signature} is a set of concept and role names and we use
$\Sigma_\Tmc$ to denote the signature of the TBox $\Tmc$, that is, 
the set of concept and role names that occur in it.
The \emph{size} $|C|$ of a concept expression $C$ is the length of the string that represents
$C$, where concept names and role names are considered to be of length one.  The
\emph{size} $|\Tmc|$ of a TBox \Tmc is defined as $\sum_{C \sqsubseteq D\in \Tmc} |C|+|D|$.

\medskip

The semantics of concept expressions and TBoxes is defined as follows
\citep{Textbook}.  
An \emph{interpretation} $\Imc= (\Delta^{\Imc},\cdot^{\Imc})$ is given by a
non-empty set $\Delta^{\Imc}$ (the \emph{domain} of $\Imc$) and a mapping
$\cdot^{\Imc}$ that maps every concept name $A$ to a subset $A^{\Imc}$ of
$\Delta^{\Imc}$ and every role name $r$ to a subset $r^{\Imc}$ of
$\Delta^{\Imc} \times \Delta^{\Imc}$. The \emph{interpretation} $r^{\Imc}$ of
an inverse role $r=s^{-}$ is given by $r^{\Imc}= \{(d,d')\mid (d',d)\in
s^{\Imc}\}$ and the \emph{interpretation} $C^{\Imc}$ of a 
concept expression $C$ is defined inductively by
\begin{eqnarray*}
\top^{\Imc} & = & \Delta^{\Imc}\\
(C_{1} \sqcap C_{2})^{\Imc} & = & C_{1}^{\Imc} \cap C_{2}^{\Imc}\\
(\exists r.C)^{\Imc} & = & \{ d\in \Delta^{\Imc} \mid \text{there exists $d'\in C^{\Imc}$ with $(d,d')\in r^{\Imc}$}\}.
\end{eqnarray*}
An interpretation $\Imc$ \emph{satisfies} a concept expression $C$ if
$C^\Imc$ is not empty.  It satisfies the CI $C\sqsubseteq D$ if
$C^{\Imc} \subseteq D^{\Imc}$, written as $\Imc\models C \sqsubseteq
D$.  Similarly, \Imc satisfies RI $r \sqsubseteq s$ if $r^{\Imc}
\subseteq s^{\Imc}$, written as $\Imc\models r \sqsubseteq s$.  \Imc
is a \emph{model of a TBox $\Tmc$} if it satisfies all CIs and RIs in
$\Tmc$. A TBox $\Tmc$ \emph{entails} a CI or RI $\alpha$, in symbols
$\Tmc\models \alpha$, if $\alpha$ is satisfied in every model of \Tmc.
Concept expressions $C$ and $D$ are \emph{equivalent w.r.t.~$\Tmc$}, written
$\Tmc \models C\equiv
D$, if $\Tmc\models C\sqsubseteq D$ and \mbox{$\Tmc\models D\sqsubseteq
  C$}; equivalence of roles $r$ and $s$ is defined accordingly,
written $\Tmc\models s \equiv r$. TBoxes $\Tmc$ and $\Tmc'$ are \emph{logically equivalent},
in symbols $\Tmc\equiv \Tmc'$, if $\Tmc\models \alpha$ for all $\alpha\in \Tmc'$ and vice versa. 

\begin{figure}
\begin{boxedminipage}[h]{\columnwidth}
\vspace*{1em}
\center
\begin{tikzpicture}
	\begin{pgfonlayer}{nodelayer}
		\node [style=none] (0) at (-0.75, -1.5) {};
		\node [style=none] (1) at (0.75, -0.75) {};
		\node [style=none] (2) at (-5.5, -1.25) {};
		\node [style=none] (3) at (-5.5, -0.25) {};
		\node [style=none] (4) at (-0.75, -1.75) {\scalefont{0.7}{$\mn{Prof}$}};
		\node [style=none] (5) at (-0.25, -0.5) {\scalefont{0.7}{$\mn{Stud}$}};
		\node [style=none] (6) at (0.5, -0.5) {\scalefont{0.7}{$\mn{Grad}$}};
		\node [style=none] (7) at (-0.25, -0.75) {\scalefont{0.7}{$\mn{Grad\_Stud}$}};
		\node [style=none] (8) at (-5.5, -1.75) {\scalefont{0.7}{$\mn{Grad}$}};
		\node [style=none] (9) at (-5.5, -1.5) {\scalefont{0.7}{$\mn{CS\_Grad}$}};
		\node [style=none] (10) at (-5.5, -0) {\scalefont{0.7}{$\mn{CS\_Deg}$}};
		\node [style=none] (11) at (2, -0.5) {\scalefont{0.7}{$\mn{has\_deg}$}};
		\node [style=none] (12) at (0.5, -1.25) {\scalefont{0.7}{$\mn{sup\_of}$}};
		\node [style=none] (13) at (0.5, -1.5) {\scalefont{0.7}{$\mn{adv\_of}$}};
		\node [style=none] (14) at (-6, -0.75) {\scalefont{0.7}{$\mn{has\_deg}$}};
		\node [style=none] (15) at (1.75, -0) { };
		\node [style=none] (16) at (-2, -0.75) {};
		\node [style=none] (17) at (-2, -1.25) {\scalefont{0.7}{$\mn{con\_res}$}};
		\node [style=newstyle1] (18) at (-2, -0.75) {};
		\node [style=newstyle1] (19) at (-0.75, -1.5) {};
		\node [style=newstyle1] (20) at (1.75, -0) {};
		\node [style=newstyle1] (21) at (-5.5, -0.25) {};
		\node [style=newstyle1] (22) at (-5.5, -1.25) {};
		\node [style=newstyle1] (23) at (0.75, -0.75) {};
		\node [style=none] (24) at (-2.25, -0.75) {\scalefont{0.7}{$\symbold$}};
		\node [style=none] (25) at (-1, -1.5) {\scalefont{0.7}{$\symbola$}};
		\node [style=none] (26) at (2, -0) {\scalefont{0.7}{$\symbolc$}};
		\node [style=none] (27) at (-5.25, -0.25) {\scalefont{0.7}{$\symbolf$}};
		\node [style=none] (28) at (-5.25, -1.25) {\scalefont{0.7}{$\symbole$}};
		\node [style=none] (29) at (1, -0.75) {\scalefont{0.7}{$\symbolb$}};
	\end{pgfonlayer}
	\begin{pgfonlayer}{edgelayer}
		\draw [style=newstyle2] (0.center) to (1.center);
		\draw [style=newstyle2] (2.center) to (3.center);
		\draw [style=newstyle2] (1.center) to (15.center);
		\draw [style=newstyle2] (0.center) to (16.center);
	\end{pgfonlayer}
\end{tikzpicture} \vspace*{1em}
\end{boxedminipage}
\caption{Illustration to Example \ref{ex:model}. \label{fig:model}}
\end{figure}

\begin{example}\label{ex:model}
Consider  the TBox \Tmc from Example~\ref{ex:1} and
the interpretation $\Imc$ that is illustrated
in Figure~\ref{fig:model} and defined by setting
$\Delta^\Imc = \{d_{0},\ldots,d_{5}\}$ and 
\[
\begin{array}{rcl@{\qquad}rcl}
 \mn{Prof}^\Imc                 & = &  \{\symbola\}      ,    &
 \mn{conduct\_research}^\Imc  & = &  \{(\symbola,\symbold)\}                 \\
 \mn{Student}^\Imc              & = &  \{\symbolb\}      ,    &
 \mn{supervisor\_of}^\Imc     & = &  \{(\symbola,\symbolb)\},          \\
 \mn{Graduate\_Student}^\Imc    & = &  \{\symbolb\}      ,    &
 \mn{advisor\_of}^\Imc        & = &  \{(\symbola,\symbolb)\},          \\ 
 \mn{Graduate}^\Imc           & = &  \{\symbolb,\symbole\}  ,    &
 \mn{has\_degree}^\Imc        & = &
 \{(\symbolb,\symbolc),(\symbole,\symbolf)\},\\
 \mn{CS\_Graduate}^\Imc         & = &  \{\symbole\}      ,    &
 \mn{CS\_Degree}^\Imc         & = &  \{\symbolf\}.         \\
\end{array}
\]
It is easy to see that $\Imc$ is a model of $\Tmc$.
Moreover, $\Imc\not\models \mn{Graduate}\sqsubseteq \mn{CS\_Graduate}$  as
$\mn{Graduate}^\Imc = \{\symbolb, \symbole\}$ but
$\mn{CS\_Graduate}^\Imc = \{\symbole\}$, thus 
$\Tmc\not\models\mn{Graduate} \sqsubseteq\mn{CS\_Graduate}$.
It can be shown that 
$\Tmc\models\mn{CS\_Graduate} \sqsubseteq\mn{Graduate}$.
\end{example}
It is \ExpTime-complete to decide, given a TBox \Tmc and a concept
inclusion $C \sqsubseteq D$, whether $\Tmc\models C \sqsubseteq D$
\citep{DBLP:conf/owled/BaaderLB08}; this reasoning problem is known as
\emph{subsumption}. Because of this high complexity, the profiles of
OWL~2 are based on syntactically more restricted description logics in
which subsumption is less complex. We next introduce a few relevant
such logics.
A \emph{basic concept} is a concept name or a concept expression of
the form $\exists r.\top$, where $r$ is a role. For example, $\exists
\mathsf{hasChild}^{-}.\top$ is a basic concept, but $\exists
\mathsf{hasChild}^{-}.\mathsf{Graduate}$ is not.

\medskip
\noindent
{\bf \ourDLLite.} A \emph{\ourDLLite CI} takes the form
$$
B \sqsubseteq C
$$
where $B$ is a basic concept and $C$ is a concept expression. A
\emph{\ourDLLite inclusion} is a \ourDLLite CI or an RI. A \emph{\ourDLLite
TBox} is a finite set of \ourDLLite inclusions.
\begin{example}\label{ex:1.1}
  Lines $(\ref{eq:1})$, $(\ref{eq:4})$, and $(\ref{eq:5})$ of
  Example~\ref{ex:1} are \ourDLLite inclusions and Line~$(\ref{eq:2})$
  abbreviates the two \ourDLLite CIs $\mn{Graduate} \sqsubseteq
  \mn{has\_degree}.\top$ and $\mn{has\_degree}.\top \sqsubseteq
  \mn{Graduate}$. Lines  $(\ref{eq:3})$ and $(\ref{eq:6})$ do not fall
  within   \ourDLLite.
\end{example}

\medskip
\noindent
{\bf \ourDLLitehorn.} 
In the extension \ourDLLitehorn of \ourDLLite, CIs take the form
$$
B_{1} \sqcap \cdots \sqcap B_{n} \sqsubseteq C
$$
where $B_{1},\ldots,B_{n}$ are basic concepts and $C$ is a concept
expression.  A \emph{\ourDLLitehorn TBox} $\Tmc$ is a finite set of
\ourDLLitehorn CIs and RIs. Both \ourDLLite and \ourDLLitehorn have
been investigated in detail in \citep{DBLP:journals/jair/ArtaleCKZ09}.
\begin{example}\label{ex:1.2}
  As Lines $(\ref{eq:1})$, $(\ref{eq:2})$, $(\ref{eq:4})$ and
  $(\ref{eq:5})$ from Example~\ref{ex:1} fall within \ourDLLite, they
  also fall within \ourDLLitehorn. Line $(\ref{eq:3})$ falls within
  \ourDLLitehorn. Line $(\ref{eq:6})$ is not in \ourDLLitehorn.
\end{example}

\medskip
\noindent
{\bf $\mathbfcal{EL}$.} An \EL \emph{concept expression} is a concept expression that does not use inverse roles.
An \emph{$\mathcal{EL}$ concept inclusion} is a CI of the form 
$$
C \sqsubseteq D
$$
where $C$ and $D$ are $\mathcal{EL}$ concept expressions. An
\emph{$\mathcal{EL}$ TBox} is a finite set of $\mathcal{EL}$
CIs. Thus, $\mathcal{EL}$ does neither admit role inclusions nor
inverse roles. In contrast to \ourDLLitehorn, however, it allows
existential restrictions $\exists r.C$ with $C\not=\top$ on the
left-hand side of CIs.
\begin{example}\label{ex:1.3}
Inclusions $(\ref{eq:1})$, $(\ref{eq:2})$, $(\ref{eq:3})$ and $(\ref{eq:6})$ from Example~\ref{ex:1} are
\EL inclusions. Inclusion $(\ref{eq:5})$ is not an \EL inclusion.
\end{example}
Subsumption is {\sc NP}-complete in \ourDLLite and in \ourDLLitehorn,
see~\citep{kikot2011tractability} for the lower bound
and~\citep{CDLLR07,DBLP:journals/jair/ArtaleCKZ09} for the upper
bound. Subsumption in \EL is in {\sc PTime}~\citep{BaBrLu-IJCAI-05}
and this is still true if RIs that do not use inverse roles are
admitted in the TBox. Given a TBox \Tmc and an RI $r \sqsubseteq s$,
deciding whether $\Tmc \models r \sqsubseteq s$ is possible in {\sc
  PTime} in all description logics considered in this paper. In fact,
$\Tmc\models r \sqsubseteq s$ if, and only if, there exists a sequence
$r_{0},\ldots,r_{n}$ of roles such that $r=r_{0}$, $s= r_{n}$, and for
every $i<n$ either $r_{i}\sqsubseteq r_{i+1}\in \Tmc$ or
$r_{i}^{-}\sqsubseteq r_{i+1}^{-}\in \Tmc$. Our learning algorithms
will carry out various subsumption checks as a subprocedure, as
detailed later on.

\subsection{Tree representation of concept expressions}

To achieve an elegant and succinct exposition of our learning
algorithms, it will be convenient to represent concept expressions $C$
as a finite directed tree $T_C$ whose nodes are labelled with sets of
concept names and whose edges are labelled with roles, and to describe
manipulations of concept expressions in terms of manipulations of the
corresponding tree such as merging nodes, replacing subgraphs,
modifying node and edge labels, etc.  We generally use $\rho_C$ to
denote the root node of the tree $T_C$.  In detail, $T_C$ is defined
as follows. For $C=\top$, the tree $T_C$ has a single node $d$ with
label $l(d)=\emptyset$; if $C = A$, where $A$ is a concept name, then
$T_C$ has a single node $d$ with $l(d)=\{A\}$; if $C=\exists r . D$,
then $T_C$ is obtained from $T_D$ by adding a new root $d_0$ and an
edge from $d_0$ to the root $d$ of $T_D$ with label $l(d_0,d)=r$ (we
then call $d$ an \emph{$r$-successor} of $d_0$); if $C=D_1 \sqcap
D_2$, then $T_C$ is obtained by identifying the roots of $T_{D_1}$ and
$T_{D_2}$.

\begin{example}\label{ex-tree1}
For $C= \mn{Student} \sqcap \exists \mn{has\_degree}.\exists \mn{has\_degree}^{-}.\mn{Graduate\_Student}$, $T_{C}$ has three nodes,
$\symbolee,\symbolcc,\symbolbb$, where
$\symbolee$ is the root $\rho_C$  of $T_C$,
$\symbolcc$ is a successor of $\symbolee$ and $\symbolbb$ is a
successor of $\symbolcc$, the labelling of the nodes is given by
$l(\symbolee)=\{\mn{Student}\}$,
$l(\symbolcc)=\emptyset$, and $l(\symbolbb)=\{\mn{Graduate\_Student}\}$, and the labelling of the edges is
given by $l(\symbolee,\symbolcc)=\mn{has\_degree}$ and
$l(\symbolcc,\symbolbb)=\mn{has\_degree}^{-}$, see
Figure~\ref{fig:concept-tree}~(left). 
\end{example}
Conversely, every labelled finite directed tree $T$ of the described form gives
rise to a concept expression $C_T$ in the following way: if
$T$ has a single node $d$ labelled by $\{A_{1},\ldots,A_{n}\}$, then
$C_{T}=A_{1}\sqcap \cdots \sqcap A_{n}$ (we treat $\top$ as the empty
conjunction here, so if $l(d)=\emptyset$ then $C_{T}=\top$).  
Inductively, let $d$ be the root of $T$ labelled with
$l(d)=\{A_{1},\ldots,A_{n}\}$, let $d_{1},\ldots,d_{m}$ be
the successors of $d$, and let $l(d,d_{1})=r_{1},\ldots, l(d,d_{m})=r_{m}$.
Assume $C_{d_{1}},\ldots,C_{d_{m}}$ are the concept expressions
corresponding to the subtrees of $T$ with roots $d_{1},\ldots,d_{m}$,
respectively. Then $C_{T}=A_{1}\sqcap\cdots \sqcap A_{n} \sqcap \exists
r_{1}.C_{d_{1}} \sqcap \cdots \sqcap \exists r_{m}.C_{d_{m}}$.

\begin{example}\label{ex-tree2}
Let $T$ be the tree with root $\symbolaa$ labelled by $\{\mn{Prof}\}$
and successors $\symboldd,\symbolg$
labelled by $\emptyset$ and $\{\mn{Graduate}\}$, respectively, and with edge labelling given
by $l(\symbolaa,\symboldd)=\mn{conduct\_research}$ and
$l(\symbolaa,\symbolg)=\mn{supervisor\_of}$. Then
$C_{T}=\mn{Prof} \sqcap \exists \mn{conduct\_research}.\top \sqcap
\exists \mn{supervisor\_of}.\mn{Graduate}$;
see Figure~\ref{fig:concept-tree}~(right).
\end{example}

\begin{figure}
\begin{boxedminipage}[h]{\columnwidth}
\vspace*{1em}
\center
\begin{tikzpicture}
	\begin{pgfonlayer}{nodelayer}
		\node [style=none] (0) at (-3, 0.5) {};
		\node [style=none] (1) at (-3, 1.5) {};
		\node [style=none] (2) at (1, 1) {};
		\node [style=none] (3) at (2, 1.75) {};
		\node [style=none] (4) at (-2.5, 2) {\scalefont{0.7}{$\mn{\ \ \ has\_deg}^-$}};
		\node [style=none] (5) at (-2.5, 1) {\scalefont{0.7}{$\mn{has\_deg}$}};
		\node [style=none] (6) at (2, 1.25) {\scalefont{0.7}{$\mn{sup\_of}$}};
		\node [style=none] (7) at (-3, 2.5) {};
		\node [style=none] (8) at (0, 1.75) {};
		\node [style=none] (9) at (0, 1.25) {\scalefont{0.7}{$\mn{con\_res}$}};
		\node [style=none] (10) at (-3, 0.25) {\scalefont{0.7}{$\mn{Stud}$}};
		\node [style=none] (11) at (-3, 2.75) {\scalefont{0.7}{$\mn{Grad\_Stud}$}};
		\node [style=none] (12) at (1, 0.75) {\scalefont{0.7}{$\mn{Prof}$}};
		\node [style=none] (13) at (2, 2) {\scalefont{0.7}{$\mn{Grad}$}};
		\node [style=newstyle1] (14) at (-3, 0.5) {};
		\node [style=newstyle1] (15) at (-3, 1.5) {};
		\node [style=newstyle1] (16) at (-3, 2.5) {};
		\node [style=none] (17) at (-3.25, 0.5) {\scalefont{0.7}{$\symbolee$}};
		\node [style=none] (18) at (-3.25, 1.5) {\scalefont{0.7}{$\symbolcc$}};
		\node [style=none] (19) at (-3.25, 2.5) {\scalefont{0.7}{$\symbolbb$}};
		\node [style=newstyle1] (20) at (0, 1.75) {};
		\node [style=newstyle1] (21) at (1, 1) {};
		\node [style=newstyle1] (22) at (2, 1.75) {};
		\node [style=none] (23) at (0.75, 1) {\scalefont{0.7}{$\symbolaa$}};
		\node [style=none] (24) at (-0.25, 1.75) {\scalefont{0.7}{$\symboldd$}};
		\node [style=none] (25) at (2.25, 1.75) {\scalefont{0.7}{$\symbolg$}};
	\end{pgfonlayer}
	\begin{pgfonlayer}{edgelayer}
		\draw [style=newstyle2] (0.center) to (1.center);
		\draw [style=newstyle2] (2.center) to (3.center);
		\draw [style=newstyle2] (1.center) to (7.center);
		\draw [style=newstyle2] (2.center) to (8.center);
	\end{pgfonlayer}
\end{tikzpicture} \vspace*{1em}
\end{boxedminipage}
\caption{Illustration to Examples \ref{ex-tree1} (left) and \ref{ex-tree2} (right). \label{fig:concept-tree}}
\end{figure}
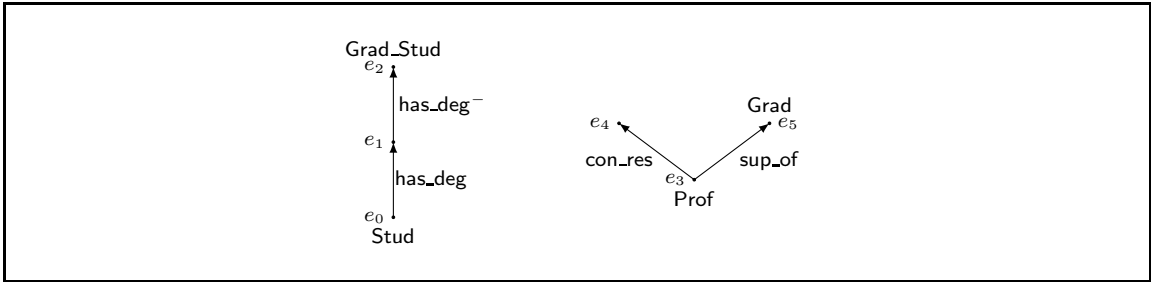

In what follows, we will not always distinguish explicitly between $C$ and its tree representation
$T_C$ which allows us to speak, for example, about the nodes and subtrees of a concept expression.

One important use of the tree representation of concept expressions is
that both the truth relation `$d\in C^{\Imc}$' and the entailment
`$\Tmc\models C \sqsubseteq D$' can be characterised in terms of
homomorphisms between labelled trees and interpretations. A mapping
$h$ from a tree $T_{C}$ corresponding to a concept expression $C$ to
an interpretation $\Imc$ is a \emph{homomorphism} if $A\in l(d)$
implies $h(d)\in A^{\Imc}$ for every concept name $A$ and $r=l(d,d')$
implies $(h(d),h(d'))\in r^{\Imc}$ for all role names $r$. The
following characterisation of the truth relation $d\in C^{\Imc}$ by
means of homomorphisms is well-known. 
\begin{lemma}\label{hom}
Let $\Imc$ be an interpretation, $d\in \Delta^{\Imc}$, and $C$ a concept expression.
Then $d\in C^{\Imc}$ if, and only if, there is a homomorphism from $T_{C}$ to $\Imc$ mapping $\rho_{C}$
to $d$. 
\end{lemma}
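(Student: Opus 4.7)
The plan is to prove both directions simultaneously by structural induction on the concept expression $C$, mirroring the recursive construction of $T_C$ described in the excerpt.

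For the base cases, if $C = \top$ then $T_C$ consists of a single node with empty label, so the map sending that node to any $d \in \Delta^{\Imc}$ is trivially a homomorphism, matching $\top^{\Imc} = \Delta^{\Imc}$. If $C = A$ is a concept name, then $T_C$ is a single node labelled $\{A\}$ and the one-point map is a homomorphism from $T_C$ to $\Imc$ sending $\rho_C$ to $d$ precisely when $d \in A^{\Imc}$, as required.

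For the inductive step with $C = C_1 \sqcap C_2$, I would use that $T_C$ is obtained by identifying the roots of $T_{C_1}$ and $T_{C_2}$: a homomorphism $h \colon T_C \to \Imc$ with $h(\rho_C) = d$ decomposes into homomorphisms $h_i \colon T_{C_i} \to \Imc$ with $h_i(\rho_{C_i}) = d$ (simply by restriction), and conversely any such pair glues into a homomorphism from $T_C$ since they agree at the shared root. The induction hypothesis then equates this with $d \in C_1^{\Imc} \cap C_2^{\Imc} = (C_1 \sqcap C_2)^{\Imc}$. For $C = \exists r.D$, the tree $T_C$ consists of $T_D$ together with a fresh root $d_0 = \rho_C$ joined to $\rho_D$ by an edge labelled $r$. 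Given a homomorphism $h \colon T_C \to \Imc$ with $h(d_0) = d$, setting $d' = h(\rho_D)$ yields $(d,d') \in r^{\Imc}$ and a homomorphism from $T_D$ to $\Imc$ mapping $\rho_D$ to $d'$; by the induction hypothesis $d' \in D^{\Imc}$, so $d \in (\exists r.D)^{\Imc}$. Conversely, if $d \in (\exists r.D)^{\Imc}$, pick a witness $d'$ with $(d,d') \in r^{\Imc}$ and $d' \in D^{\Imc}$; by the induction hypothesis there is a homomorphism $h' \colon T_D \to \Imc$ with $h'(\rho_D) = d'$, and extending $h'$ by $d_0 \mapsto d$ yields the desired homomorphism from $T_C$ to $\Imc$.

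I do not expect any serious obstacle; the only point that warrants a line of care is making sure the edge-preservation clause of the homomorphism definition applies uniformly to inverse roles. This is immediate from the convention $r^{\Imc} = \{(d,d') \mid (d',d) \in s^{\Imc}\}$ for $r = s^-$ recalled earlier in the preliminaries, so the existential case goes through verbatim whether the edge $l(d_0,\rho_D)$ is a role name or its inverse.
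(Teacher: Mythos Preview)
Your proposal is correct and follows exactly the approach the paper indicates: a straightforward induction on the structure of $C$, for which the paper gives no details beyond a reference. Your treatment of the base cases and of the $\sqcap$ and $\exists r.D$ steps is accurate, and the remark on inverse roles is appropriate.
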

The proof is by a straightforward induction on the structure of $C$,
see for example~\cite{DBLP:conf/ijcai/BaaderKM99} for details.

\begin{example}\label{ex:homom}
Consider the interpretation $\Imc$ from Example~\ref{ex:model} and
the tree representations of the concept expressions given in Figure~\ref{fig:concept-tree}.
It can be seen that functions $g$ and $h$ defined as
$g(\symbolee) =\symbolb$, $g(\symbolcc) = \symbolc$, $g(\symbolbb) = \symbolb$ and 
$h(\symbolaa) = \symbola$, $h(\symboldd) = \symbold$, $h(\symbolg) = \symbolb$ are homomorphisms and 
so, by Lemma~\ref{hom}, 
$\symbolb\in (\mn{Student}\sqcap \exists
\mn{has\_degree}.\exists\mn{has\_degree}^{-}.\mn{Graduate\_Student})^\Imc$ and
$\symbola\in(\mn{Prof}\sqcap \exists\mn{conduct\_research}.\top\sqcap
\exists\mn{supervisor\_of}.\mn{Graduate})^\Imc$.
\end{example}
It is also standard to characterise the subsumption relation $\emptyset\models
C \sqsubseteq D$ (that is, subsumption relative to the empty TBox) by
means of homomorphisms between the tree representations $T_{D}$ and
$T_{C}$. A \emph{homomorphism} $h$ from labelled tree $T_{1}$ to
labelled tree $T_{2}$ is a mapping from the nodes of $T_{1}$ to the
nodes of $T_{2}$ such that $A\in l(d)$ implies $A \in l(h(d))$ for
every concept name $A$ and $r=l(d,d')$ implies $r=l(h(d),h(d'))$ for
every role $r$. 
\begin{lemma}\label{lem:hom1}
  Let $C$ and $D$ be concept expressions. Then $\emptyset\models C \sqsubseteq
  D$ if, and only if, there is a homomorphism from $T_{D}$ to $T_{C}$
  that maps $\rho_{D}$ to $\rho_{C}$.
\end{lemma}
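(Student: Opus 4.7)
My plan is to reduce the lemma to Lemma \ref{hom} via a canonical interpretation built from $T_C$. First I would define an interpretation $\Imc_C$ whose domain is the node set of $T_C$, with $A^{\Imc_C} = \{d : A \in l(d)\}$ for each concept name $A$ and, for each role name $s$, $s^{\Imc_C} = \{(d,d') : l(d,d') = s\} \cup \{(d',d) : l(d,d') = s^-\}$ so that edges of $T_C$ labelled with an inverse role contribute reversed tuples. A short check then shows that the identity map is a homomorphism from $T_C$ into $\Imc_C$, so by Lemma \ref{hom} I obtain $\rho_C \in C^{\Imc_C}$.

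For the ``if'' direction, I would assume a tree homomorphism $h : T_D \to T_C$ with $h(\rho_D) = \rho_C$ and fix an arbitrary interpretation $\Imc$ and an arbitrary $d \in C^{\Imc}$. Lemma \ref{hom} supplies a homomorphism $g : T_C \to \Imc$ with $g(\rho_C) = d$, and the composition $g \circ h$ is readily verified to be a homomorphism $T_D \to \Imc$ mapping $\rho_D$ to $d$; Lemma \ref{hom} then yields $d \in D^{\Imc}$, so $\emptyset \models C \sqsubseteq D$. For the ``only if'' direction, the assumption $\emptyset \models C \sqsubseteq D$ together with $\rho_C \in C^{\Imc_C}$ gives $\rho_C \in D^{\Imc_C}$, and Lemma \ref{hom} produces a homomorphism $h : T_D \to \Imc_C$ with $h(\rho_D) = \rho_C$. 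Since the domain of $\Imc_C$ is the node set of $T_C$, $h$ is already a node map into $T_C$, and the remaining task is to show that this map is a tree homomorphism.

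The main obstacle will be that last verification. Concept-name preservation is immediate from $A^{\Imc_C} = \{d : A \in l(d)\}$, but edge-label preservation is subtler when inverse roles occur: a tuple $(h(d), h(d')) \in r^{\Imc_C}$ can in principle be witnessed by a $T_C$-edge in either direction. I would either rely on the convention that an edge labelled $r$ in one direction is identified with an edge labelled $r^-$ in the opposite direction, or argue by induction on the depth of nodes in $T_D$, starting from the root where no directional ambiguity arises, that $h$ can be chosen to be compatible with the orientation of $T_C$. With that step settled, $h$ is a tree homomorphism $T_D \to T_C$, completing the proof.
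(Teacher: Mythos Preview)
Your approach is essentially the same as the paper's: the paper's sketch says precisely that the ``if'' direction follows from Lemma~\ref{hom} and closure of homomorphisms under composition, and that the ``only if'' direction follows by viewing $T_C$ as an interpretation and applying Lemma~\ref{hom}. Your construction of $\Imc_C$ makes this explicit, and your treatment of inverse-role edges (adding reversed pairs to the role-name extension) is the right way to fill in a detail the paper leaves implicit.

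The obstacle you flag is real and is handled in the paper only by convention: the intended reading of ``homomorphism from $T_D$ to $T_C$'' is exactly ``homomorphism from $T_D$ into $\Imc_C$'', i.e.\ an edge $d\to d'$ labelled $r$ in $T_D$ may be matched either by an edge $h(d)\to h(d')$ labelled $r$ in $T_C$ or by an edge $h(d')\to h(d)$ labelled $r^{-}$. Evidence for this is the paper's own use of Lemma~\ref{lem:hom1} to justify parent/child merging, where the ``natural homomorphism'' sends the $r^{-}$-child $d''$ to the $r$-parent $d'$; under the strict oriented reading this would not be a tree homomorphism at all, and indeed for $C=\exists r.\top$, $D=\exists r.\exists r^{-}.\top$ one has $\emptyset\models C\sqsubseteq D$ but no strictly oriented map $T_D\to T_C$. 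So of your two proposed fixes, the first (identifying $l(a,b)=r$ with $l(b,a)=r^{-}$) is the one the paper actually relies on, and once you adopt it there is nothing further to prove: the homomorphism $h:T_D\to\Imc_C$ delivered by Lemma~\ref{hom} \emph{is} the desired tree homomorphism.
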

The `if' direction is essentially a consequence of
Lemma~\ref{hom} and the fact that the composition of two
homomorphisms is again a homomorphism. For the `only if' direction,
one can consider $T_C$ as an interpretation \Imc and apply
Lemma~\ref{hom}.  We again refer to \cite{DBLP:conf/ijcai/BaaderKM99} for
details.

Next, we characterise subsumption in the presence of TBoxes
in terms of homomorphisms.  To achieve this, we make use of the
\emph{canonical model} $\Imc_{C_0,\Tmc}$ of a concept expression~$C_0$ and
a TBox $\Tmc$. %
If $\Tmc=\emptyset$, then we want $\Imc_{C_0,\Tmc}$ to be $T_{C_0}$ viewed
as a tree-shaped interpretation which we denote by $\Imc_{{C_0}}$ rather than by
$\Imc_{{C_0},\Tmc}$. More precisely, the domain of $\Imc_{C_0}$ is the set of
nodes of $T_{{C_0}}$ and
\begin{eqnarray*}
d\in A^{\Imc_{{C_0}}} & \text { iff } & A\in l(d), \text{ for all $d\in \Delta^{\Imc_{{C_0}}}$ and concept names $A$}\\
(d,d')\in r^{\Imc_{{C_0}}} &  \text { iff } & r= l(d,d'), \text{ for all $d,d'\in \Delta^{\Imc_{{C_0}}}$ and roles names $r$}
\end{eqnarray*}
We call the root $\rho_{{C_0}}$ of $T_{{C_0}}$ the \emph{root of $\Imc_{{C_0}}$}.
If $\Tmc\not=\emptyset$, then $\Imc_{{C_0},\Tmc}$ is obtained by extending
$\Imc_{{C_0}}$ so that the CIs in $\Tmc$ are satisfied.  For example, if
$\Tmc=\{A\sqsubseteq \exists r.B\}$ and ${C_0}=A$, then $\Imc_{{C_0}}$ is a
single node $\rho_{{C_0}}$ with $A^{\Imc_{{C_0}}}=\{\rho_{{C_0}}\}$ and
$X^{\Imc_{{C_0}}}=\emptyset$ for all concept and role names $X$ distinct from
$A$. To define $\Imc_{{C_0},\Tmc}$ we add a node $d$ to
$\Delta^{\Imc_{{C_0}}}$ and set $B^{\Imc_{{C_0},\Tmc}}=\{d\}$ and
$r^{\Imc_{{C_0},\Tmc}}=\{(\rho_{{C_0}},d)\}$.  In general, $\Imc_{{C_0},\Tmc}$ is
defined as the limit of a sequence $\Imc_{0}, \Imc_{1},\ldots$
of interpretations, where $\Imc_{0}= \Imc_{{C_0}}$.
For the inductive definition of the sequence, assume that $\Imc_{n}$ has been defined. 
Then obtain $\Imc_{n+1}$ by applying one of the following rules once:
\begin{enumerate}
\item if $C\sqsubseteq D\in \Tmc$ and 
$d\in C^{\Imc_{n}}$ but $d\not\in D^{\Imc_{n}}$, then take the
interpretation $\Imc_{D}$ and add it to $\Imc_{n}$ 
by identifying its root $\rho_{C}$ with $d$. In more detail, assume that 
$\Delta^{\Imc_{n}}\cap \Delta^{\Imc_{C}} =\{ d\}$ and $d=\rho_{C}$ and define $\Imc_{n+1}$ by setting, 
for all concept names $A$ and role names $r$:
$$
\Delta^{\Imc_{n+1}} =  \Delta^{\Imc_{n}}\cup \Delta^{\Imc_{C}}, \quad
A^{\Imc_{n+1}}  =  A^{\Imc_{n}} \cup A^{\Imc_{C}}, \quad
r^{\Imc_{n+1}}  =  r^{\Imc_{n}} \cup r^{\Imc_{C}};
$$

\item if $r \sqsubseteq s\in \Tmc$ and $(d,d')\in r^{\Imc_{n}}$
but $(d,d')\not\in s^{\Imc_{n}}$, then define $\Imc_{n+1}$ as $\Imc_{n}$ except that 
$s^{\Imc_{n+1}}:= s^{\Imc_{n}}\cup \{(d,d')\}$ if $s$ is a role name; otherwise there is a role name 
$s_{0}$ with $s=s_{0}^{-}$ and we define $\Imc_{n+1}$ as $\Imc_{n}$ except that $s_{0}^{\Imc_{n+1}}= s_{0}^{\Imc_{n}}\cup \{(d',d)\}$.

\end{enumerate}
We assume that rule application is fair, that is, if a rule is
applicable in a certain place, then it will indeed eventually be
applied there.  If for some $n>0$ no rule is applicable then we set
$\Imc_{n+1}=\Imc_{n}$. We obtain $\Imc_{C_{0},\Tmc}$ by setting
for all concept names $A$ and role names $r$:
$$
\Delta^{\Imc_{C_{0},\Tmc}} =  \bigcup_{n\geq 0} \Delta^{\Imc_{n}}, \quad
A^{\Imc_{C_{0},\Tmc}}  =  \bigcup_{n\geq 0} A^{\Imc_{n}}, \quad
r^{\Imc_{C_{0},\Tmc}}  =  \bigcup_{n\geq 0}r^{\Imc_{n}}.
$$
Note that the interpretation $\Imc_{C_0,\Tmc}$ obtained in the limit
is tree-shaped and might be infinite.\footnote{The exact shape of
  $\Imc_{C_0,\Tmc}$ depends on the order of rule
  applications. However, all possible resulting interpretations
  $\Imc_{C_0,\Tmc}$ are homomorphically equivalent and, as a
  consequence, the order of rule application is not important 
  for our purposes.} The following example illustrates the definition
of $\Imc_{C_0,\Tmc}$.

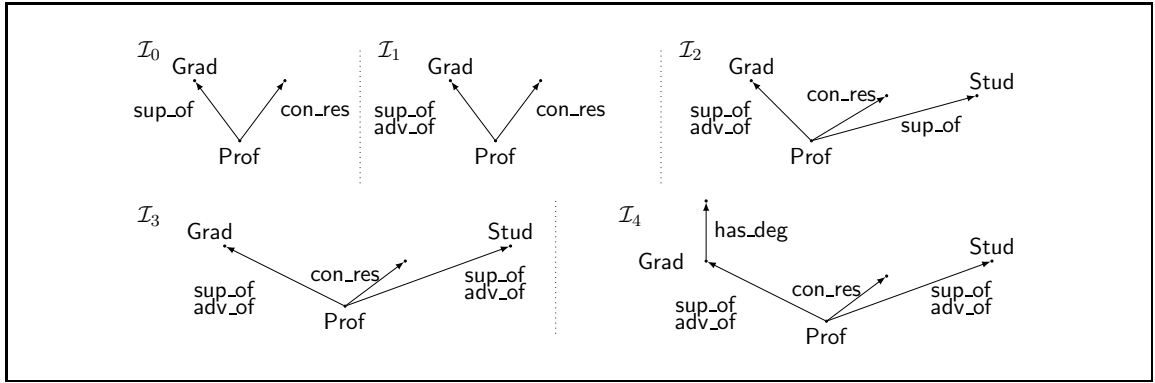
\begin{figure}
\begin{boxedminipage}[h]{\columnwidth}
\vspace*{1em}
\center
\scalebox{.8}{
\begin{tikzpicture}
	\begin{pgfonlayer}{nodelayer}
		\node [style=newstyle1] (0) at (-5.25, 1.5) {};
		\node [style=newstyle1] (1) at (-6, 2.5) {};
		\node [style=newstyle1] (2) at (-4.5, 2.5) {};
		\node [style=none] (3) at (-6.5, 2) {${\sf sup\_of}$};
		\node [style=none] (4) at (-4, 2) {${\sf con\_res}$};
		\node [style=none] (5) at (-5.25, 1.25) {${\sf Prof}$};
		\node [style=none] (6) at (-6.75, 3) {$\Imc_0$};
		\node [style=newstyle1] (7) at (-1, 1.5) {};
		\node [style=newstyle1] (8) at (-1.75, 2.5) {};
		\node [style=none] (9) at (-1, 1.25) {${\sf Prof}$};
		\node [style=none] (10) at (-2.75, 3) {$\Imc_1$};
		\node [style=none] (11) at (0.25, 2) {${\sf con\_res}$};
		\node [style=newstyle1] (12) at (-0.25, 2.5) {};
		\node [style=none] (13) at (-2.5, 2) {${\sf sup\_of}$};
		\node [style=none] (14) at (-2.5, 1.75) {${\sf adv\_of}$};
		\node [style=none] (15) at (4.25, 1.25) {${\sf Prof}$};
		\node [style=none] (16) at (4.75, 2.25) {${\sf con\_res}$};
		\node [style=newstyle1] (17) at (5.5, 2.25) {};
		\node [style=newstyle1] (18) at (4.25, 1.5) {};
		\node [style=none] (19) at (2.75, 2) {${\sf sup\_of}$};
		\node [style=newstyle1] (20) at (3.25, 2.5) {};
		\node [style=none] (21) at (2.25, 3) {$\Imc_2$};
		\node [style=none] (22) at (2.75, 1.75) {${\sf adv\_of}$};
		\node [style=newstyle1] (23) at (7, 2.25) {};
		\node [style=none] (24) at (6.25, 1.75) {${\sf sup\_of}$};
		\node [style=none] (25) at (7.25, 2.5) {${\sf Stud}$};
		\node [style=none] (26) at (1.75, 3) {};
		\node [style=none] (27) at (1.75, 0.75) {};
		\node [style=none] (28) at (-3.25, 3) {};
		\node [style=none] (29) at (-3.25, 0.75) {};
		\node [style=none] (30) at (-6.75, 0.25) {$\Imc_3$};
		\node [style=none] (31) at (0, 0.5) {};
		\node [style=none] (32) at (-0.75, 0) {${\sf Stud}$};
		\node [style=none] (33) at (0, -1.75) {};
		\node [style=none] (34) at (-1, -0.75) {${\sf sup\_of}$};
		\node [style=none] (35) at (-5.5, -1) {${\sf sup\_of}$};
		\node [style=newstyle1] (36) at (-3.5, -1.25) {};
		\node [style=none] (37) at (-3.5, -1.5) {${\sf Prof}$};
		\node [style=newstyle1] (38) at (-0.75, -0.25) {};
		\node [style=newstyle1] (39) at (-2.5, -0.5) {};
		\node [style=none] (40) at (-5.5, -1.25) {${\sf adv\_of}$};
		\node [style=none] (41) at (-3.5, -0.75) {${\sf con\_res}$};
		\node [style=newstyle1] (42) at (-5.5, -0.25) {};
		\node [style=none] (43) at (-1, -1) {${\sf adv\_of}$};
		\node [style=none] (44) at (-6, 2.75) {${\sf Grad}$};
		\node [style=none] (45) at (3.25, 2.75) {${\sf Grad}$};
		\node [style=none] (46) at (-5.75, 0) {${\sf Grad}$};
		\node [style=none] (47) at (-1.75, 2.75) {${\sf Grad}$};
		\node [style=none] (48) at (1.25, 0.25) {$\Imc_4$};
		\node [style=newstyle1] (49) at (4.5, -1.5) {};
		\node [style=newstyle1] (50) at (7.25, -0.5) {};
		\node [style=none] (51) at (6.75, -1.25) {${\sf adv\_of}$};
		\node [style=none] (52) at (1.75, -0.5) {${\sf Grad}$};
		\node [style=none] (53) at (6.75, -1) {${\sf sup\_of}$};
		\node [style=none] (54) at (4.5, -1) {${\sf con\_res}$};
		\node [style=newstyle1] (55) at (2.5, -0.5) {};
		\node [style=none] (56) at (2.5, -1.5) {${\sf adv\_of}$};
		\node [style=none] (57) at (4.5, -1.75) {${\sf Prof}$};
		\node [style=newstyle1] (58) at (5.5, -0.75) {};
		\node [style=none] (59) at (7.25, -0.25) {${\sf Stud}$};
		\node [style=none] (60) at (2.5, -1.25) {${\sf sup\_of}$};
		\node [style=newstyle1] (61) at (2.5, 0.5) {};
		\node [style=none] (62) at (3.25, 0) {${\sf has\_deg}$};
	\end{pgfonlayer}
	\begin{pgfonlayer}{edgelayer}
		\draw [style=newstyle2] (0) to (2);
		\draw [style=newstyle2] (0) to (1);
		\draw [style=newstyle2] (7) to (12);
		\draw [style=newstyle2] (7) to (8);
		\draw [style=newstyle2] (18) to (17);
		\draw [style=newstyle2] (18) to (20);
		\draw [style=newstyle2] (18) to (23);
		\draw [style=simpledotted] (26.center) to (27.center);
		\draw [style=simpledotted] (28.center) to (29.center);
		\draw [style=simpledotted] (31.center) to (33.center);
		\draw [style=newstyle2] (36) to (39);
		\draw [style=newstyle2] (36) to (42);
		\draw [style=newstyle2] (36) to (38);
		\draw [style=newstyle2] (49) to (58);
		\draw [style=newstyle2] (49) to (55);
		\draw [style=newstyle2] (49) to (50);
		\draw [style=newstyle2] (55) to (61);
	\end{pgfonlayer}
\end{tikzpicture}
 }
\vspace*{1em}
\end{boxedminipage}
\caption{Canonical model construction for Example~\ref{ex:canonical}. \label{fig:canonical2}}
\end{figure}
\begin{example}\label{ex:canonical}
Consider the following TBox $\Tmc$:
\begin{eqnarray}
\label{eq:7}\mn{Prof}                     & \sqsubseteq & \exists \mn{supervisor\_of}.\mn{Student}\\
\label{eq:8}\mn{Prof}                     & \sqsubseteq & \exists \mn{conduct\_research}.\top\\
\label{eq:9}\mn{Graduate}                 & \sqsubseteq & \exists \mn{has\_degree}.\top \\
\label{eq:10}\exists \mn{has\_degree}.\top & \sqsubseteq & \mn{Graduate} \\
\label{eq:11}\mn{supervisor\_of}           & \sqsubseteq & \mn{advisor\_of}
\end{eqnarray}
and the concept expression
$$
C_0 = \mn{Prof}\sqcap \exists\mn{conduct\_research}.\top\sqcap \exists\mn{supervisor\_of}.\mn{Graduate}.
$$
Figure~\ref{fig:canonical2} illustrates the steps of the canonical model construction with
$\Imc_0$ being $\Imc_{C_0}$ and 
$\Imc_4$ being the canonical model $\Imc_{C_0,\Tmc}$.
\end{example}
The following lemma provides the announced characterisation of
subsumption in the presence of TBoxes.
\begin{lemma}\label{lem:can1}
  Let $\Tmc$ be a TBox and $C$ a concept expression.  Then
  $\Imc_{C,\Tmc}$ is a model of $\Tmc$ and the following conditions
  are equivalent, for every concept expression $D$:
\begin{enumerate}
\item $\Tmc\models C \sqsubseteq D$;
\item $\rho_{C}\in D^{\Imc_{C,\Tmc}}$;
\item there is a homomorphism from $T_{D}$ to $\Imc_{C,\Tmc}$ that maps $\rho_{D}$ to $\rho_{C}$.
\end{enumerate} 
\end{lemma}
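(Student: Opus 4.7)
The plan is to prove the four assertions in order: first that $\Imc_{C,\Tmc}$ is a model of \Tmc, then $(2) \Leftrightarrow (3)$, then $(1) \Rightarrow (2)$, and finally the harder $(2) \Rightarrow (1)$ via a universal-model argument.

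For the model property, I would argue by fairness. Suppose some CI $C' \sqsubseteq D' \in \Tmc$ were violated at $d \in \Delta^{\Imc_{C,\Tmc}}$, i.e.\ $d \in C'^{\Imc_{C,\Tmc}}$ but $d \notin D'^{\Imc_{C,\Tmc}}$. By Lemma~\ref{hom} there is a homomorphism $h$ from $T_{C'}$ to $\Imc_{C,\Tmc}$ with $h(\rho_{C'}) = d$; since $T_{C'}$ is finite and the construction is monotone (each $\Imc_n \subseteq \Imc_{n+1}$ as interpretations), $h$ already lands inside some $\Imc_n$, so the rule for $C' \sqsubseteq D'$ is applicable at $d$ in $\Imc_n$. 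By fairness, it eventually fires at $d$, forcing $d \in D'^{\Imc_{C,\Tmc}}$, a contradiction. The argument for RIs is analogous and simpler.

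The equivalence $(2) \Leftrightarrow (3)$ is an immediate instance of Lemma~\ref{hom} applied to $\Imc_{C,\Tmc}$. For $(1) \Rightarrow (2)$, observe that the identity on $T_C$ is a homomorphism from $T_C$ into $\Imc_C \subseteq \Imc_{C,\Tmc}$, so by Lemma~\ref{hom} we have $\rho_C \in C^{\Imc_{C,\Tmc}}$; combined with the model property just established and the assumption $\Tmc \models C \sqsubseteq D$, this gives $\rho_C \in D^{\Imc_{C,\Tmc}}$.

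The hard direction is $(2) \Rightarrow (1)$, and the key is the universality property: for every model \Jmc of \Tmc and every $d_0 \in C^{\Jmc}$, there is a homomorphism $g\colon \Imc_{C,\Tmc} \to \Jmc$ with $g(\rho_C) = d_0$. I would build $g$ as the union of a chain $g_0 \subseteq g_1 \subseteq \cdots$, where $g_n\colon \Imc_n \to \Jmc$ is a homomorphism with $g_n(\rho_C) = d_0$. The base case $g_0$ exists by Lemma~\ref{hom} applied to $d_0 \in C^{\Jmc}$ and $T_C = \Imc_0$. For the inductive step, suppose $\Imc_{n+1}$ was produced from $\Imc_n$ by applying a CI rule with $C' \sqsubseteq D' \in \Tmc$ at some node $d$ with $d \in C'^{\Imc_n}$. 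Composing the inclusion $T_{C'} \hookrightarrow \Imc_n$ (witnessing $d \in C'^{\Imc_n}$) with $g_n$ gives $g_n(d) \in C'^{\Jmc}$; since $\Jmc \models C' \sqsubseteq D'$, we have $g_n(d) \in D'^{\Jmc}$, so by Lemma~\ref{hom} there is a homomorphism from $T_{D'}$ into \Jmc mapping $\rho_{D'}$ to $g_n(d)$, which extends $g_n$ to $g_{n+1}$ on the freshly attached copy of $T_{D'}$. The RI case is straightforward since \Jmc satisfies the RI. Given universality, $(2)$ yields a homomorphism from $T_D$ to $\Imc_{C,\Tmc}$ mapping $\rho_D$ to $\rho_C$ by Lemma~\ref{hom}, which we compose with $g$ to get a homomorphism from $T_D$ to \Jmc mapping $\rho_D$ to $d_0$, so $d_0 \in D^{\Jmc}$; since \Jmc and $d_0 \in C^{\Jmc}$ were arbitrary, $\Tmc \models C \sqsubseteq D$. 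The main obstacle is setting up the inductive construction of $g_n$ carefully, particularly verifying that the newly added fresh elements of $T_{D'}$ do not interfere with previously defined values of $g_n$ — this is ensured because the construction attaches a disjoint copy identified only at the node $d$, where $g_{n+1}$ is made to agree with $g_n$.
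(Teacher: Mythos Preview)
Your proposal is correct and follows essentially the same approach as the paper: establish the model property of $\Imc_{C,\Tmc}$ (and $\rho_C \in C^{\Imc_{C,\Tmc}}$) from the construction, get $(2)\Leftrightarrow(3)$ from Lemma~\ref{hom}, deduce $(1)\Rightarrow(2)$ from the model property, and prove the remaining direction by the universality of the canonical model, building the homomorphism into an arbitrary model \Jmc as the union of an increasing chain $g_n\colon \Imc_n \to \Jmc$ extended along each rule application. The paper phrases the hard direction as $(3)\Rightarrow(1)$ rather than $(2)\Rightarrow(1)$, but since $(2)\Leftrightarrow(3)$ is immediate this is the same argument; your write-up in fact supplies more detail (the finiteness/monotonicity argument for the model property, and the disjointness remark for extending $g_n$) than the paper's high-level sketch.
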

The proof is completely standard (see, for example,
\citep{DBLP:conf/rweb/Krotzsch12}), we only give a high-level overview. Using the
construction of $\Imc_{C,\Tmc}$, it is not hard to show that
$\Imc_{C,\Tmc}$ is a model of \Tmc and that $\rho_C \in
C^{\Imc_{C,\Tmc}}$. This implies `1 $\Rightarrow$ 2' and `2
$\Rightarrow$ 3' follows from Lemma~\ref{hom}. For `3
$\Rightarrow$ 1', one can show that for any model \Imc of \Tmc and any
$d \in C^\Imc$, there is a homomorphism from $\Imc_{C,\Tmc}$ to \Imc
that maps $\rho_C$ to $d$. In fact, one constructs a homomorphism to
\Imc from each of the interpretations $\Imc_0,\Imc_1,\dots$ built
during the construction of $\Imc_{C,\Tmc}$, which is not hard by
analysing the rules applied during that construction. The homomorphism
built for each $\Imc_{n+1}$ extends that for $\Imc_n$ and thus we can
take the unions of all those homomorphisms to obtain a homomorphism
from $\Imc_{C,\Tmc}$ to \Imc. It remains to compose homomorphisms and
apply Lemma~\ref{hom}.

\smallskip

We are only going to use canonical models and Lemma~\ref{lem:can1} in
the context of \ourDLLite and \ourDLLitehorn. Next, we identify a more
subtle property of canonical models of \ourDLLite and \ourDLLitehorn TBoxes that we
need later on. Very roughly speaking, it states a form of locality
which is due to the fact that existential restrictions on the left-hand side of CIs in
\ourDLLitehorn are unqualified. Assume that $d\in (\exists r.D)^{\Imc_{C,\Tmc}}$
for the canonical model $\Imc_{C,\Tmc}$ of a concept expression $C$ and TBox $\Tmc$.
Assume $d\in \Delta^{{\Imc}_{C}}$. We know that there
exists a homomorphism $h$ from $T_{\exists r.D}$ to $\Imc_{C,\Tmc}$ mapping $\rho_{\exists r.D}$ to $d$.
Then either $h$ maps some elements of $T_{D}$ into $\Delta^{\Imc_{C}}$ or it maps the whole tree
$T_{D}$ into $\Delta^{\Imc_{C,\Tmc}}\setminus \Delta^{\Imc_{C}}$. We are interested in the latter case.
The following lemma states that if $\Tmc$ is a \ourDLLite TBox, then there is a basic concept $B$ with 
$d\in B^{\Imc_{C}}$ such that $\Tmc\models B\sqsubseteq \exists r.D$. Thus, the question whether 
$d\in (\exists r.D)^{\Imc_{C,\Tmc}}$ only depends on the concept names $A$ with $d\in A^{\Imc_{C}}$ 
and the roles $r$ with $d\in (\exists r.\top)^{\Imc_{C}}$. If $\Tmc$ is a \ourDLLitehorn TBox, it might
not be sufficient to take a single basic concept but at least a set of basic concepts 
suffices (corresponding to the
fact that \ourDLLitehorn admits conjunctions on the left-hand side of CIs). This observation does not hold for
$\mathcal{EL}$ TBoxes and is ultimately the reason for the fact that one cannot polynomially
learn $\mathcal{EL}$ TBoxes. The following example illustrates this observation.

\begin{example}
Consider the $\mathcal{EL}$ TBox 
$\Tmc = \{\exists r.A \sqsubseteq A, A \sqsubseteq \exists r.B\}$ and let $C = \exists r.\exists r.A$.
Then $\rho_{C}\in (\exists r.B)^{\Imc_{C,\Tmc}}$ since $\Tmc\models C \sqsubseteq \exists r.B$. 
We therefore find a homomorphism $h$ from $T_{\exists r.B}$
to $\Imc_{C,\Tmc}$ mapping $\rho_{\exists r.B}$ to $\rho_{C}$. This homomorphism maps $T_{B}$ (which has a single 
node only) into $\Delta^{\Imc_{C}}\setminus \Delta^{\Imc_{C,\Tmc}}$. The only basic concept $B$ with
$\rho_{C}\in B^{{\Imc}_{C}}$ is $\exists r.\top$ but clearly 
$\Tmc\not\models \exists r.\top \sqsubseteq \exists r.B$ and so the observation we sketched above 
does not hold for $\mathcal{EL}$. 
\end{example}
We present this result in a more formal way.
Let $T_{1}$ and $T_{2}$ be trees with
labelling functions $l_{1}$ and $l_{2}$, respectively. We call $T_{1}$
a \emph{subtree of} $T_{2}$ if the following conditions hold:
$T_{1}\subseteq T_{2}$, $l_{1}$ is the restriction of $l_{2}$ to
$T_{1}$, and if $d\in T_{1}$ and $d'$ is a successor of $d$ in
$T_{2}$, then $d'$ is a successor of $d$ in $T_{1}$ as well.  The
\emph{one-neigbourhood $N_{\Imc_{C}}(d)$ of $d\in \Delta^{\Imc_{C}}$}
is the set of concept names $A$ with $d\in A^{\Imc_{C}}$ and basic
concepts $\exists r.\top$ such that there exists $d'\in
\Delta^{\Imc_{C}}$ with $(d,d')\in r^{\Imc_{C}}$.
\begin{lemma}\label{lem:can2}
Let $\Tmc$ be a \ourDLLitehorn TBox, $D=\exists r.D'$ and assume $h: T_{D} \rightarrow \Imc_{C,\Tmc}$ is such that 
$h(\rho_{D})=d\in \Delta^{\Imc_{C}}$ and the image of the subtree $T_{D'}$ of
$T_{D}$ under $h$ is included in $\Delta^{\Imc_{C,\Tmc}}\setminus \Delta^{\Imc_{C}}$. Then there exists $I\subseteq N_{\Imc_{C}}(d)$ such that 
$\Tmc\models \bigsqcap_{E\in I}E\sqsubseteq D$.
Moreover, if $\Tmc$ is a \ourDLLite TBox, then there exists such a set $I\subseteq N_{\Imc_{C}}(d)$ with a single concept.
\end{lemma}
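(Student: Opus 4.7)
The plan is to set $I := N_{\Imc_{C}}(d)$ and $E^{*} := \bigsqcap_{E \in I} E$, and use Lemma~\ref{lem:can1} to reduce the goal $\Tmc \models E^{*} \sqsubseteq D$ to producing a homomorphism $h' \colon T_{D} \to \Imc_{E^{*},\Tmc}$ with $h'(\rho_{D}) = \rho_{E^{*}}$. I would obtain $h'$ as $g \circ h$, where $g$ is a homomorphism from a suitable substructure of $\Imc_{C,\Tmc}$ into $\Imc_{E^{*},\Tmc}$ that sends $d$ to $\rho_{E^{*}}$; note that since $D = \exists r.D'$, the root $\rho_{D}$ of $T_{D}$ carries no concept-name label, so the only condition imposed at $\rho_{D}$ by the homomorphism property is to preserve the outgoing $r$-edge.

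To pin down the domain of $g$, let $\Delta_{d}$ consist of $d$ together with every element reachable from $d$ along a path whose non-initial vertices lie in $\Delta^{\Imc_{C,\Tmc}} \setminus \Delta^{\Imc_{C}}$. A short analysis of the two rules defining $\Imc_{C,\Tmc}$ shows that a fresh element's edges go only to elements born in the same rule~1 application, and that rule~2 merely relabels existing pairs; hence the substructure $\mathcal{J}$ of $\Imc_{C,\Tmc}$ on $\Delta_{d}$ is self-contained. Starting from $h(\rho_{D}) = d$ and using that $h(T_{D'}) \cap \Delta^{\Imc_{C}} = \emptyset$, the fresh element $h(\rho_{D'})$ must have been created by a rule~1 application at $d$ (its only incoming edges go to elements produced by that same application), and inductively every further vertex of $h(T_{D})$ stays in $\Delta_{d}$.

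The main step is to build $g$ by induction along the stages $\Imc_{0}, \Imc_{1}, \dots$ defining $\Imc_{C,\Tmc}$, in parallel with a matched stage-wise construction of $\Imc_{E^{*},\Tmc}$, maintaining the invariant that at each $e$ in the current domain of $g$ every basic concept holding at $e$ also holds at $g(e)$. In the base case, by the very definition of $I$, the elements $d$ and $\rho_{E^{*}}$ satisfy exactly the same basic concepts in $\Imc_{C}$ and $\Imc_{E^{*}}$; the key point is that basic concepts test only the existence of an $r$-successor, so the fact that $d$ may have several $r$-successors via the same role while $\rho_{E^{*}}$ has only one is irrelevant. In the inductive step, whenever rule~1 fires at some $e \in \Delta_{d}$ with a $\ourDLLitehorn$ CI $B_{1} \sqcap \cdots \sqcap B_{n} \sqsubseteq F$, the invariant tells us that every $B_{i}$ also holds at $g(e)$, so the same CI is applicable at $g(e)$ in $\Imc_{E^{*},\Tmc}$; we extend $g$ by sending each newly created child of $e$ to the corresponding newly created child of $g(e)$, and rule~2 applications are handled analogously. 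I expect the main technical burden to lie in the careful bookkeeping that shows the invariant continues to hold at every rule application deep in the tree attached at $d$.

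For the moreover part, when $\Tmc$ is a $\ourDLLite$ TBox, every CI has a single basic concept as its LHS, and a preparatory induction on $n$ establishes: whenever $d \in B^{\Imc_{n}}$ for a basic concept $B$, some single $E \in N_{\Imc_{C}}(d)$ already satisfies $\Tmc \models E \sqsubseteq B$. The base case is immediate, and in the inductive step any new basic concept gained by $d$ at stage $n+1$ is produced either by a rule~1 application whose atomic LHS is, by induction hypothesis, entailed by a single $E$, or by a role inclusion traced back analogously. Now $h(T_{D})$ lies in $\{d\} \cup \mathcal{S}$, where $\mathcal{S}$ is the subtree of $\mathcal{J}$ rooted at the single direct new successor $d_{1} = h(\rho_{D'})$ of $d$; $d_{1}$ was created by a rule~1 application with some CI $B \sqsubseteq F$, so picking $E \in N_{\Imc_{C}}(d)$ with $\Tmc \models E \sqsubseteq B$ ensures that the same CI eventually fires at $\rho_{E}$ in $\Imc_{E,\Tmc}$. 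Since further rule applications inside $\mathcal{S}$ depend only on the basic concepts of the fresh elements there (which are identical to those in the analogous subtree inside $\Imc_{E,\Tmc}$), the homomorphism construction restricted to $\{d\} \cup \mathcal{S}$ and targeted at $\Imc_{E,\Tmc}$ goes through unchanged and yields the singleton $I = \{E\}$.
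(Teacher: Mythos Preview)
Your proposal is correct and follows essentially the same approach as the paper: both arguments take $I = N_{\Imc_C}(d)$, set $N = \bigsqcap_{E \in I} E$, and establish $\Tmc \models N \sqsubseteq D$ by exploiting that the tree-shaped part of $\Imc_{C,\Tmc}$ attached at $d$ outside $\Delta^{\Imc_C}$ depends only on the one-neighbourhood of $d$ (because left-hand sides of \ourDLLitehorn CIs are conjunctions of basic concepts), so the given homomorphism $h$ can be redirected into $\Imc_{N,\Tmc}$. The paper gives this as a brief sketch appealing directly to the ``locality'' of the canonical model construction, whereas you spell out the redirection via an explicit stage-wise simulation $g$; your treatment of the \ourDLLite case via a single generating basic concept $E$ is likewise the same idea as the paper's one-line remark, made concrete.
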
  
\begin{proof}(sketch) This property of canonical models for \ourDLLitehorn has been proved implicitly in many papers, for example
\citep{DBLP:journals/jair/ArtaleCKZ09}. We give a sketch. Let $N$ be the conjunction of all $E\in N_{\Imc_{C}}(d)$ and assume 
$\Tmc\not\models N\sqsubseteq D$. Consider the canonical model $\Imc_{N,\Tmc}$. 
By definition, the one-neighbourhoods of $\rho_{N}$ in $\Imc_{N,\Tmc}$ and $d$ in $\Imc_{C,\Tmc}$ coincide.
Now observe that the canonical model of any concept expression $C_{0}$ and TBox $\Tmc$ is obtained 
from $\Imc_{C_{0}}$ by hooking tree-shaped interpretations $\Imc_{d}$ with root $d$ to every $d$ in $\Imc_{C_{0}}$. 
As in \ourDLLitehorn the concept expressions on the left-hand side of CIs are basic concepts, the interpretations 
$\Imc_{d}$ only depend on the one-neighbourhood $N_{\Imc_{C_{0}}}(d)$ of $d$ in $\Imc_{C_{0}}$. Thus, the tree-shaped
interpretations hooked to $\rho_{N}$ in $\Imc_{N,\Tmc}$ and to $d$ in $\Imc_{C,\Tmc}$ coincide and
the homomorphism $h$ given in Lemma~\ref{lem:can2} provides a homomorphism $h: T_{D} \rightarrow \Imc_{N}$ 
such that $h(\rho_{D})=\rho_{N}$. By Lemma~\ref{lem:can1}, $\Tmc\models N \sqsubseteq D$. We have derived a 
contradiction.
For \ourDLLite one only requires a single member of $N_{\Imc_{C}}(d)$ since the left-hand side of
CIs in \ourDLLite consists of a single basic concept only. 
\end{proof}
We close the introduction of description logics with some comments
about the choice of our languages. In the DL literature it is not
uncommon to consider the weaker variant DL-Lite$_{\mathcal{R}}$ of
\ourDLLite in which only basic concepts are admitted on the right-hand
side of CIs, but compound concepts are not. This is often without loss
of generality since every \ourDLLite TBox can be expressed in
DL-Lite$_{\mathcal{R}}$ by using additional role names; in this way,
standard \ourDLLite reasoning tasks such as subsumption and
conjunctive query answering can be reduced in polynomial time to the
corresponding tasks for DL-Lite$_{\mathcal{R}}$.
Such a reduction
is not possible in the framework of exact learning that we are
concerned with in this paper. In fact, in contrast to \ourDLLite
TBoxes, TBoxes in DL-Lite$_{\mathcal{R}}$ are trivially polynomial
time learnable using either membership queries only or
equivalence queries only as there are only polynomially many CIs and
RIs over a given signature.

\subsection{Exact Learning}
We introduce the relevant notation for exact learning.
A \emph{learning framework} $\Fmf$ is a triple $(X, \Lmc, \mu)$, where $X$ 
is a set of \emph{examples} (also called \emph{domain} or \emph{instance space}), 
$\Lmc$ is a set of \emph{concepts},\footnote{The
  similarity of this name to `concept expression' is 
  accidental and should not be taken to mean that these two notions
  are closely related. Both is standard terminology in the respective
  area.} %
and $\mu$ is a mapping from $\Lmc$
to $2^X$.   
We say that $x\in X$ is a \emph{positive example} for $l \in \Lmc$ if $x\in \mu(l)$ and a
\emph{negative example} for $l$ if $x\not\in \mu(l)$. 

We give a formal definition of polynomial query and time learnability within a learning framework. 
Let $\Fmf = (X, \Lmc, \mu)$ be a learning framework.  We are interested in the
exact identification of a \emph{target}   concept representation $l\in\Lmc$
by posing queries to oracles.
Let  ${\sf MEM}_{\Fmf,l}$ be the oracle that takes as input some $x \in X$ and
returns `yes' if $x \in \mu(l)$ and `no' otherwise. 
A \emph{membership query} is a call to the oracle ${\sf MEM}_{\Fmf,l}$.
Similarly, for every $l \in \Lmc$, we denote by ${\sf EQ}_{\Fmf,l}$ the oracle
that takes as input a \emph{hypothesis} concept representation $h \in \Lmc$ and
returns `yes' if $\mu(h) = \mu(l)$ and a \emph{counterexample} 
$x \in \mu(h) \oplus \mu(l)$ otherwise, where $\oplus$ denotes the symmetric
set difference.
There is no assumption regarding which counterexample in $\mu(h) \oplus \mu(l)$
is chosen by the oracle.
An \emph{equivalence query} is a call to the oracle ${\sf EQ}_{\Fmf,l}$.

A \emph{learning algorithm} for \Fmf is a deterministic algorithm that
takes no input, is allowed to make queries to ${\sf MEM}_{\Fmf,l}$ and
${\sf EQ}_{\Fmf,l}$ (without knowing what the target $l$ to be learned
is), and that eventually halts and outputs some $h\in\Lmc$ with
$\mu(h) = \mu(l)$.
We say that \Fmf is \emph{exact learnable} if there is a learning
algorithm for \Fmf and that \Fmf is \emph{polynomial query learnable}
if it is exact learnable by an algorithm $A$ such that at every step
the sum of the sizes of the inputs to membership and equivalence
queries made by $A$ up to that step is bounded by a polynomial
$p(|l|,|x|)$, where $l$ is the target and $x \in X$ is the largest
counterexample seen so far \citep{arias2004exact}.  Finally, \Fmf is
\emph{polynomial time learnable} if it is exact learnable by an
algorithm $A$ such that at every step (we count each call to an oracle
as one step of computation) of computation the time used by $A$ up to
that step is bounded by a polynomial $p(|l|,|x|)$, where $l\in \Lmc$
is the target and $x \in X$ is the largest counterexample seen so far.
Clearly, a learning framework $\Fmf$ that is polynomial time learnable
is also polynomial query learnable.

The aim of this paper is to study learnability of description logic
TBoxes. In this context, each DL $L$ gives rise to a learning
framework $(X, \Lmc, \mu)$, as follows: $\Lmc$ is the set of all
TBoxes formulated in $L$, $X$ is the set of all CIs and RIs formulated
in $L$, and $\mu(\Tmc)=\{\alpha \in X\mid \Tmc\models \alpha\}$ for
every $\Tmc \in \Lmc$. Observe that $\mu(\Tmc)=\mu(\Tmc')$ iff $\Tmc\equiv \Tmc'$, for all TBoxes $\Tmc$ and $\Tmc'$.
We say that \emph{$L$ TBoxes are polynomial query learnable} if the learning framework defined by $L$ is
polynomial query learnable, and likewise for polynomial time
learnability. What does not show up directly in this representation is
our assumption that the signature of the target TBox is known to the
learner. Note that this is a standard assumption. For example, when
learning propositional Horn formulas, it is common to assume that the
variables in the target formula are known to the learner.

\section{Learning \ourDLLite TBoxes}\label{sec:dllite}
We prove that \ourDLLite TBoxes are polynomial query learnable. If
inverse roles are disallowed in CIs and RIs of the target TBox then
our algorithm runs in polynomial time and thus shows that TBoxes in
this restricted language are polynomial time learnable. Without this
restriction, polynomial time learnability remains open.

To simplify the presentation, we make two minor assumptions about the
target TBox~$\Tmc$. We will show later how these assumptions can be
overcomed. First, we assume that $\Tmc$ does not entail non-trivial
role equivalences, that is, there do not exist distinct roles $r$ and
$s$ such that $\Tmc\models r\equiv s$.  This allows us to avoid
dealing with classes of equivalent roles, simplifying notation. The
second requirement is a bit more subtle. A concept inclusion is in
\emph{reduced form} if it is between basic concepts or its left-hand
side is a concept name. A TBox \Tmc is in \emph{named form} if all CIs
in it are in reduced form and it contains a concept name $A_{r}$ such
that $A_{r}\equiv \exists r.\top \in \Tmc$, for each role $r$. We
assume that the target TBox is in named form and that all CIs
considered by the learner are in reduced form. In particular,
counterexamples returned by the oracle are immediately converted into
this form. %

\begin{algorithm}[t]
\begin{algorithmic}[1]
\Require A \ourDLLite TBox $\Tmc$ in named form given to the oracle; $\Sigma_\Tmc$ given to the learner.
\Ensure  TBox $\Hmc$, computed by the learner, such that $\Tmc\equiv\Hmc$.
\setstretch{1.1} 
\State Compute   
$\mathcal{H}_{basic}  =  \{r\sqsubseteq s \mid 
\Tmc \models r\sqsubseteq s\}  \cup  
  \{B_{1} \sqsubseteq B_{2} \mid 
\Tmc \models B_{1}\sqsubseteq B_{2}, \mbox{ $B_{1},B_{2}$ basic}\}$ 
\State Set $\mathcal{H}_{add}=\emptyset$
\While{$\mathcal{H}_{basic}\cup \mathcal{H}_{add}\not\equiv\Tmc$}
\State Let $A \sqsubseteq C$ be the returned positive counterexample
 for $\mathcal{T}$ relative
to $\mathcal{H}_{basic}\cup \mathcal{H}_{add}$ 
\If {there is $A\sqsubseteq C'\in \mathcal{H}_{add}$}
\State Replace $A \sqsubseteq C'$ by $A \sqsubseteq C\sqcap C'$ in $\mathcal{H}_{add}$
\Else
\State Add $A \sqsubseteq C$ to $\mathcal{H}_{add}$
\EndIf
\EndWhile
\State \textbf{return} $\Hmc = \mathcal{H}_{basic} \cup \mathcal{H}_{add}$
\end{algorithmic}
\caption{Na\"\i{}ve learning algorithm for \ourDLLite\label{alg:dl-lite-naive}}
\end{algorithm}

\begin{example}\label{ex:named-form}
  Although the TBox $\Tmc$ from Example~\ref{ex:canonical} does not
  entail role equivalences and all its CIs are in
  reduced form,
  it is not in named form.  To fix
  this, we introduce concept names $A_{\mn{supervisor\_of}}$,
  $A_{\mn{conduct\_research}}$ and $A_{\mn{advisor\_of}}$ and extend
  \Tmc with the following equivalences:
{
\allowdisplaybreaks
\begin{eqnarray}
\label{eq:12} A_{\mn{supervisor\_of}}    & \equiv & \exists \mn{supervisor\_of}.\top\\
\label{eq:13} A_{\mn{conduct\_research}} & \equiv & \exists \mn{conduct\_research}.\top\\
\label{eq:14} A_{\mn{advisor\_of}}       & \equiv & \exists \mn{advisor\_of}.\top.
\end{eqnarray}
}
Notice that $\mn{Graduate}$ acts as a name for $\exists \mn{has\_degree}.\top$ so
no new definition is needed for the role $\mn{has\_degree}$.
The TBox $\Tmc' = \Tmc\cup\{(\ref{eq:12}), (\ref{eq:13}), (\ref{eq:14})\}$ is in named form.
\end{example}

To develop the learning algorithm it is instructive to start with a
na\"\i{}ve version that does not always terminate but which can be
refined to obtain the desired algorithm.  This version is presented as
Algorithm~\ref{alg:dl-lite-naive}.  Given the signature $\Sigma_\Tmc$
of the target TBox $\Tmc$, the learner starts with computing the set
$\Hmc_{basic}$ by posing to the oracle the membership query
`$\Tmc\models r \sqsubseteq s$?'  for all $r,s \in \Sigma_{\Tmc}$ and
`$\Tmc\models B_{1}\sqsubseteq B_{2}$?'  for all basic concept
$B_1,B_2$ over~$\Sigma_{\Tmc}$.  Observe that
$\Tmc\models\Hmc_{basic}$. Then it enters the main \textbf{while}
loop.  Note that the condition `$\Hmc_{basic}\cup \Hmc_{add}
\not\equiv \Tmc$?' in Line~3 is implemented using an equivalence query
to the oracle, and that $A \sqsubseteq C$ in Line~4 refers to the
counterexample returned by the oracle in the case that equivalence
does not hold. The counterexample must be positive since we maintain
the invariant $\Tmc\models \Hmc_{basic}\cup\Hmc_{add}$ throughout the
run of the algorithm.  If there is no CI of the form $A\sqsubseteq C'$ in
$\Hmc_{add}$ then $A\sqsubseteq C$ is added to $\Hmc_{add}$, otherwise
$A \sqsubseteq C \sqcap C'$ is (Lines~6 and~8).  The algorithm terminates when $
\Hmc_{basic}\cup\Hmc_{add} \equiv \Tmc$, implying that the target TBox
has been
learned. %

\begin{example}\label{ex:naive-run}
{\em For the TBox $\Tmc'$ from Example~\ref{ex:named-form}, Algorithm~\ref{alg:dl-lite-naive} 
first computes $\Hmc_{basic}$ which coincides with $\Tmc'$ except that
$\mn{Prof} \sqsubseteq  \exists \mn{supervisor\_of}.\mn{Student}$ is not included since
the concept $\exists \mn{supervisor\_of}.\mn{Student}$ is not basic.
 In the main loop the only counterexamples to $\Hmc_{basic}\cup\Hmc_{add}\equiv \Tmc'$
are (up to logical equivalence modulo $\Hmc_{basic}$) the CIs
$$
\mn{Prof} \sqsubseteq  \exists \mn{supervisor\_of}.\mn{Student}, \quad
\mn{Prof} \sqsubseteq  \exists \mn{advisor\_of}.\mn{Student}.
$$
If the oracle returns the first CI in the first iteration, the algorithm terminates immediately having learned $\Tmc'$. 
Otherwise the oracle first returns the second CI and then returns the first CI in the second iteration.
The algorithm terminates with 
$$
\Hmc_{add} = \{ \mn{Prof} \sqsubseteq  \exists \mn{supervisor\_of}.\mn{Student} \sqcap
\exists \mn{advisor\_of}.\mn{Student}\}
$$
which is equivalent to $\Tmc'$.
}
\end{example}
We now consider five examples on which this na\"\i{}ve algorithm fails
to terminate after polynomially many steps (or at all), each example
motivating a different \emph{modification step} that is added to
Algorithm~\ref{alg:dl-lite-naive} after Lines~4 and 5. The final,
corrected algorithm is given as Algorithm~\ref{alg:dl-lite} below.
Each modification step takes as input a counterexample $A \sqsubseteq
C$ against the equivalence $ \Hmc_{basic}\cup\Hmc_{add} \equiv \Tmc$
and modifies it by posing membership queries to the oracle to obtain a
CI $A'\sqsubseteq C'$ which is still a counterexample and has
additional desired properties.  CIs satisfying all five additional
properties will be called \emph{$\Tmc$-essential}. The five
modification steps are of three different types:
\begin{enumerate}
\item two \emph{saturations steps}: the underlying tree of $T_{C}$ is left unchanged but the labelling is modified
by adding concept names to node labels or replacing roles in edge labels;
\item two \emph{merging steps}: nodes in the tree $T_{C}$ are merged,
  resulting in a tree with fewer nodes;
\item a \emph{decomposition step}: $T_{C}$ is replaced by a subtree or a subtree is removed
from~$T_{C}$, and the concept name $A$ on the left-hand side might be replaced.
\end{enumerate}
The saturation and merging steps do not change the left-hand side
$A$
of the CI $A
\sqsubseteq C$ and result in a logically stronger CI $A \sqsubseteq
C'$ in the sense that $\emptyset \models C' \sqsubseteq
C$. In contrast, the decomposition step can be regarded as a reset
operation in which also the left-hand side can change and which is
logically not related to $A
\sqsubseteq
C$. We start with an example which motivates the first saturation
step.
\begin{example}\label{ex:csat}
Let 
$$
    \Tmc=\{A\sqsubseteq \exists r.A\} \cup \Tmc_{\textsc{nf}},
$$ 
where $\Tmc_\textsc{nf} = \{A_r\equiv \exists r.\top\}$ ensures that $\Tmc$ is in named form.
First, Algorithm~\ref{alg:dl-lite-naive} computes $\mathcal{H}_{basic}$.
Afterwards the oracle can provide for the $n$-th equivalence query in the while
loop the positive counterexample $A \sqsubseteq \exists r^{n+1}.\top$, for any $n \geq 1$ (here we set inductively
$\exists r^{m+1}.\top= \exists r.\exists r^{m}.\top$ and $\exists r^{1}.\top =
\exists r.\top$). Thus, the algorithm does not terminate.
\end{example}
Informally, the problem for the learner in Example~\ref{ex:csat} is
that the concepts $\exists r^{n}.\top$ used in the counterexamples $A
\sqsubseteq \exists r^{n}.\top$ get larger and larger, but still none
of the counterexamples implies $A \sqsubseteq \exists r.A$.  We
address this problem by saturating $T_{C}$ with implied concept
names. For the following discussion, recall that we do not distinguish
between the concept expression $C$ and its tree representation
$T_{C}$. For example, if we say that $C'$ is obtained from $C$ by
adding a concept name $B$ to the label of node $d$ in $C$, then this
stands for: $C'$ is the concept expression corresponding to the tree
obtained from $T_{C}$ by adding $B$ to the label of $d$ in $T_{C}$.
\begin{definition}[Concept saturation for $\Tmc$]\label{def:csat}
  Let $A\sqsubseteq C$ be a CI with $\Tmc\models A\sqsubseteq C$.  A
  CI $A\sqsubseteq C'$ is \emph{obtained from $A\sqsubseteq C$ by
    concept saturation for} $\Tmc$ if $\Tmc\models A\sqsubseteq
  C'$ and $C'$ is obtained from $C$ by adding a concept name to the
  label of some node of $C$.  We say that $A\sqsubseteq C$ is
  \emph{concept saturated for $\Tmc$} if there is no $A \sqsubseteq
  C'$ with $C \neq C'$ that can be obtained from $A \sqsubseteq C$ by
  concept saturation.
\end{definition}
Observe that the learner can compute a concept saturated $A\sqsubseteq
C'$ from a counterexample $A\sqsubseteq C$ by posing polynomially many
membership queries to the oracle: it simply asks for any node $d$ in $T_{C}$ and concept name
$E\in \Sigma_{\Tmc}$ whether $\Tmc\models A \sqsubseteq C^{E,d}$,
where $C^{E,d}$ is obtained from $C$ by adding $E$ to the label of
$d$. If the answer is positive, it replaces $C$ by $C^{E,d}$ and
proceeds.
\begin{example}[Example~\ref{ex:csat} continued]\label{ex:csat-cont}
  The CIs $A \sqsubseteq \exists r^{n}.\top$ are not concept saturated
  for $\Tmc$. For example, for $n=2$, the concept
  saturation of $A \sqsubseteq \exists r^{2}.\top$ is
$A \sqsubseteq A \sqcap A_{r} \sqcap \exists r.(A \sqcap A_{r}
  \sqcap \exists r.(A_{r}\sqcap A))$.  Now observe that if the CI $A
  \sqsubseteq C$ returned by the oracle to the first equivalence query
  is transformed by the learner into a concept saturated CI (after
  Line~4), then the TBox $\Tmc=\{A\sqsubseteq \exists r.A\} \cup
  \Tmc_{\textsc{nf}}$ is learned in one step: the only possible
  counterexamples returned by the oracle to the equivalence query
  $\Hmc_{basic}\equiv \Tmc$ are of the form $A \sqsubseteq C_{1}
  \sqcap \exists r.C_{2}$ for some concepts $C_{1}$ and $C_{2}$.
  Concept saturation results in a concept of the form $C_{1}'\sqcap
  \exists r.(A \sqcap C_{2}')$ and $\{A\sqsubseteq C_{1}'\sqcap
  \exists r.(A \sqcap C_{2}')\} \models A \sqsubseteq \exists r.A$.
\end{example}

The following example motivates the second saturation step.
Here and in the subsequent examples we do not transform the TBoxes into named form
as this does not effect the argument and simplifies presentation.
\begin{example}\label{ex:1role}
Consider
for $n \geq 1$ the TBoxes
\begin{eqnarray*}
\Tmc_{n}   & = & \{A \sqsubseteq \exists e_1.\exists e_2. \ldots \exists e_n.\top\} \cup \{e_i \sqsubseteq r_i,e_i \sqsubseteq s_i \mid 1\leq i \leq n\}.
\end{eqnarray*}
For $M\subseteq \{1,\ldots,n\}$, set $C_{M}= \exists t_1.\exists t_2. \ldots \exists t_n.\top$, where 
$t_i = r_i$ if $i \in M$ and $t_i = s_i$ if $i \notin M$.
Then %
for the first $2^{n}$ equivalence queries in the while loop the oracle can provide
a positive counterexample $A \sqsubseteq C_{M}$
by always choosing a fresh set $M\subseteq \{1,\ldots,n\}$.
\end{example}
Intuitively, the problem for the learner in Example~\ref{ex:1role} is
that there are exponentially many logically incomparable CIs that are
entailed by $\Tmc_{n}$ but do not entail $A \sqsubseteq \exists
e_1.\exists e_2. \ldots \exists e_n.\top$.  A step towards resolving
this problem is to replace the roles $r_{i}$ and $s_{i}$ by the roles
$e_{i}$ in the counterexamples $A \sqsubseteq C_{M}$.
\begin{definition}[Role saturation for $\Tmc$]\label{def:rsat}
  Let $A\sqsubseteq C$ be a CI with $\Tmc\models A\sqsubseteq C$.  A
  CI $A\sqsubseteq C'$ is \emph{obtained from $A \sqsubseteq C$ by
    role saturation for} $\Tmc$ if $\Tmc\models A \sqsubseteq C'$ and
  $C'$ is obtained from $C$ by replacing in some edge label a role $r$
  by a role $s$ with $\Tmc\models s\sqsubseteq r$. We say that
  $A\sqsubseteq C$ is \emph{role saturated for $\Tmc$} if there is no
$A \sqsubseteq C'$ with $C \neq C'$ that can be obtained from $A
\sqsubseteq C$ by role saturation.
\end{definition}
Similarly to concept saturation, the learner can compute a role
saturated $A\sqsubseteq C'$ from a counterexample $A\sqsubseteq C$ by
posing polynomially many membership queries.  Observe that in
Example~\ref{ex:1role} the role saturation of any $A \sqsubseteq
C_{M}$ is $A \sqsubseteq \exists e_1.\exists e_2. \ldots \exists
e_n.\top$. Thus, if the counterexample $A \sqsubseteq C_{M}$ returned
by the first equivalence query is transformed into a role saturated
CI, then the algorithm terminates after one step. We now introduce and
motivate our two merging rules.
\begin{example}\label{ex:1merge}
{\em
Consider the TBox 
$$\Tmc=\{A \sqsubseteq \exists r.\top \sqcap \exists s.\top \sqcap \exists e.B\}.$$
and fix an $n \geq 1$. 
For $M\subseteq \{1,\ldots,n\}$, 
set $C_{M}= \exists t_1.\exists t_1^{-}.\exists t_2.\exists t_2^{-}. 
\ldots \exists t_n.\exists t_n^{-}.\exists e.\top$, where 
$t_i = r$ if $i \in M$ and $t_i = s$ if $i \notin M$. %
Figure \ref{fig:parent-child} (left) illustrates the concept expression
$C_{\{1,3\}}$, assuming $n=3$.
\begin{figure}[t]
 \begin{boxedminipage}[h]{\columnwidth}
 \centering
\begin{tikzpicture}
	\begin{pgfonlayer}{nodelayer}
		\node [style=newstyle1] (0) at (-9.25, 1.5) {};
		\node [style=newstyle1] (1) at (-9.25, 2) {};
		\node [style=newstyle1] (2) at (-9.25, 2) {};
		\node [style=newstyle1] (3) at (-9.25, 2.5) {};
		\node [style=newstyle1] (4) at (-9.25, 2.5) {};
		\node [style=newstyle1] (5) at (-9.25, 3) {};
		\node [style=newstyle1] (6) at (-9.25, 3) {};
		\node [style=newstyle1] (7) at (-9.25, 3.5) {};
		\node [style=none] (8) at (-9.5, 1.75) {$r$};
		\node [style=none] (9) at (-9, 2.25) {$r^-$};
		\node [style=none] (10) at (-9.5, 2.75) {$s$};
		\node [style=none] (11) at (-9, 3.25) {$s^-$};
		\node [style=none] (12) at (-9.5, 3.75) {$r$};
		\node [style=newstyle1] (13) at (-9.25, 3.5) {};
		\node [style=newstyle1] (14) at (-9.25, 4) {};
		\node [style=newstyle1] (15) at (-9.25, 3.5) {};
		\node [style=newstyle1] (16) at (-9.25, 4.5) {};
		\node [style=none] (17) at (-9, 4.25) {$r^-$};
		\node [style=newstyle1] (18) at (-9.25, 4) {};
		\node [style=newstyle1] (19) at (-9.25, 4.5) {};
		\node [style=newstyle1] (20) at (-9.25, 4.5) {};
		\node [style=newstyle1] (21) at (-9.25, 4.5) {};
		\node [style=none] (22) at (-9.5, 4.75) {$e$};
		\node [style=newstyle1] (23) at (-9.25, 4.5) {};
		\node [style=newstyle1] (24) at (-9.25, 5) {};
		\node [style=newstyle1] (25) at (-9.25, 4.5) {};
		\node [style=newstyle1] (26) at (-9.25, 5) {};
		\node [style=newstyle1] (27) at (-6, 2.25) {};
		\node [style=newstyle1] (28) at (-6.75, 2.75) {};
		\node [style=newstyle1] (29) at (-6.25, 3.5) {};
		\node [style=newstyle1] (30) at (-5.5, 4.25) {};
		\node [style=none] (31) at (-9, 1.5) {};
		\node [style=none] (32) at (-6, 2) {};
		\node [style=none] (33) at (-9, 2) {};
		\node [style=none] (34) at (-7, 2.75) {};
		\node [style=none] (35) at (-6.75, 2.5) {$r$};
		\node [style=none] (36) at (-6.25, 3) {$s$};
		\node [style=none] (37) at (-5.75, 3.75) {$e$};
		\node [style=none] (38) at (-9, 2.5) {};
		\node [style=none] (39) at (-9, 3) {};
		\node [style=none] (40) at (-7.75, 3) {$\vdots$};
		\node [style=none] (41) at (-7.75, 4) {$h$ };
		\node [style=none] (42) at (-6.25, 2.25) {};
		\node [style=none] (43) at (-9, 5) {};
		\node [style=none] (44) at (-5.75, 4.25) {};
		\node [style=none] (45) at (-6.25, 2.5) {};
		\node [style=none] (46) at (-9, 4.5) {};
		\node [style=none] (47) at (-6.5, 3.5) {};
		\node [style=none] (48) at (-5.25, 4.25) {$B$};
	\end{pgfonlayer}
	\begin{pgfonlayer}{edgelayer}
		\draw [style=newstyle2] (0) to (1);
		\draw [style=newstyle2] (2) to (3);
		\draw [style=newstyle2] (4) to (6);
		\draw [style=newstyle2] (5) to (7);
		\draw [style=newstyle2] (15) to (14);
		\draw [style=newstyle2] (18) to (16);
		\draw [style=newstyle2] (25) to (24);
		\draw [style=newstyle2] (27) to (28);
		\draw [style=newstyle2] (27) to (29);
		\draw [style=newstyle2] (27) to (30);
		\draw [style=newstyle6] (31.center) to (32.center);
		\draw [style=newstyle6] (33.center) to (34.center);
		\draw [style=newstyle6] (38.center) to (42.center);
		\draw [style=newstyle6] (43.center) to (44.center);
		\draw [style=newstyle6] (46.center) to (45.center);
		\draw [style=newstyle6] (39.center) to (47.center);
	\end{pgfonlayer}
\end{tikzpicture}  \end{boxedminipage}
 \caption{Tree representation of %
 $C_{\{1,3\}}$ and homomorphism to $\exists r.\top \sqcap \exists s.\top \sqcap \exists e.B$.}
 \label{fig:parent-child}
\end{figure}
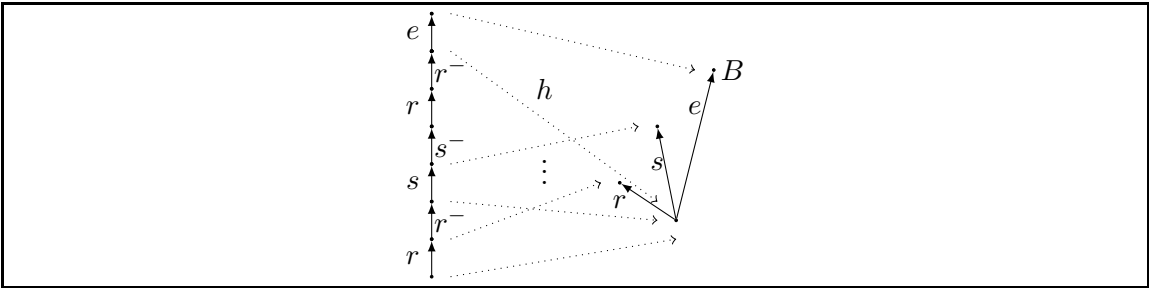
By Lemma~\ref{lem:hom1}, $\Tmc\models A \sqsubseteq C_{M}$ since there is a
homomorphism $h$
from $T_{C_M}$ to the labelled tree that corresponds to $
\exists r.\top \sqcap \exists s.\top \sqcap \exists e.B$, as shown in Figure
\ref{fig:parent-child}. 
Thus, the oracle can provide for the first $2^{n}$ equivalence queries
a positive counterexample $A \sqsubseteq C_{M}$ by always choosing a
fresh set $M\subseteq \{1,\ldots,n\}$.
}
\end{example}
The problem for the learner in Example~\ref{ex:1merge} is similar to
that in Example~\ref{ex:1role}: there are exponentially many logically
incomparable CIs that are entailed by $\Tmc$ but do not entail $A
\sqsubseteq \exists r.\top \sqcap \exists s.\top \sqcap \exists e.B$.
A step towards solving this problem is to merge the predecessor and
successor nodes of a node if the edge labels are inverse to each other
and the resulting CI is still implied by the TBox.  
\begin{definition}[Parent/Child Merging for $\Tmc$]
\label{def:parchimer}
  A concept $C'$ is \emph{obtained from a concept $C$ by parent/child
    merging} if $C'$ is obtained from $C$ by choosing nodes $d,d',d''$
  such that $d$ is an $r$-successor of $d'$, and $d''$ is an
  $r^{-}$-successor of $d$, for some role $r$, and then removing
  $d''$, setting $l(d')=l(d')\cup l(d'')$, and making every
  $s$-successor $e$ of $d''$ in $C$ an $s$-successor of $d'$, for any
  role $s$.

  Let $A\sqsubseteq C$ be a CI with $\Tmc\models A\sqsubseteq C$.  A
  CI $A \sqsubseteq C'$ is \emph{obtained from $A \sqsubseteq C$ by
    parent/child merging} if $\Tmc \models A \sqsubseteq C'$ and $C'$
  is obtained from $C$ by parent/child merging.  We say that
  $A\sqsubseteq C$ is \emph{parent/child merged for $\Tmc$} if
  there is no $A \sqsubseteq C'$ with $C \neq C'$ that can be obtained
  from $A \sqsubseteq C$ by parent/child merging.
\end{definition}
Note that when $C'$ is obtained from $C$ by parent/child merging with
$d'$ and $d''$ as in Definition~\ref{def:parchimer}, then $\emptyset
\models C'\sqsubseteq C$. To show this, one can use
Lemma~\ref{lem:hom1} and the natural homomorphism $h$ from $T_{C}$ to
$T_{C'}$, that is, the identity except that 
$h(d'')=d'$.

Similarly to the saturation operations, the learner can compute a
parent/child merged $A\sqsubseteq C'$ by posing polynomially many
membership queries.  In Example~\ref{ex:1merge} the parent/child
merging of any $A \sqsubseteq C_{M}$ with
$\emptyset \neq M \neq \{1,\dots,n\}$ is
$A \sqsubseteq \exists r.\top \sqcap \exists s.\top \sqcap \exists
e.\top$,
as illustrated in Figure~\ref{fig:parent-child2}: in the first step
the nodes $d_0$ and $d_2$ are merged, two additional merging steps
give $\exists r.\top \sqcap \exists s.\top \sqcap \exists e.\top$.
The following example motivates the second merging operation.

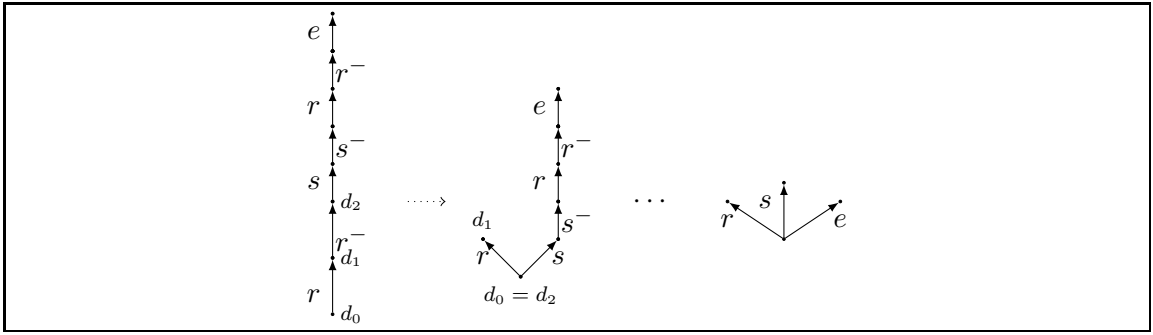
\begin{figure}[t]
 \begin{boxedminipage}[h]{\columnwidth}
 \centering
\begin{tikzpicture}
	\begin{pgfonlayer}{nodelayer}
		\node [style=newstyle1] (0) at (-6.75, 1.5) {};
		\node [style=newstyle1] (1) at (-6.75, 2.25) {};
		\node [style=newstyle1] (2) at (-6.75, 2.25) {};
		\node [style=newstyle1] (3) at (-6.75, 3) {};
		\node [style=newstyle1] (4) at (-6.75, 3) {};
		\node [style=newstyle1] (5) at (-6.75, 3.5) {};
		\node [style=newstyle1] (6) at (-6.75, 3.5) {};
		\node [style=newstyle1] (7) at (-6.75, 4) {};
		\node [style=none] (8) at (-7, 1.75) {$r$};
		\node [style=none] (9) at (-6.5, 2.5) {$r^-$};
		\node [style=none] (10) at (-7, 3.25) {$s$};
		\node [style=none] (11) at (-6.5, 3.75) {$s^-$};
		\node [style=none] (12) at (-7, 4.25) {$r$};
		\node [style=newstyle1] (13) at (-6.75, 4) {};
		\node [style=newstyle1] (14) at (-6.75, 4.5) {};
		\node [style=newstyle1] (15) at (-6.75, 4) {};
		\node [style=newstyle1] (16) at (-6.75, 5) {};
		\node [style=none] (17) at (-6.5, 4.75) {$r^-$};
		\node [style=newstyle1] (18) at (-6.75, 4.5) {};
		\node [style=newstyle1] (19) at (-6.75, 5) {};
		\node [style=newstyle1] (20) at (-6.75, 5) {};
		\node [style=newstyle1] (21) at (-6.75, 5) {};
		\node [style=none] (22) at (-7, 5.25) {$e$};
		\node [style=newstyle1] (23) at (-6.75, 5) {};
		\node [style=newstyle1] (24) at (-6.75, 5.5) {};
		\node [style=newstyle1] (25) at (-6.75, 5) {};
		\node [style=newstyle1] (26) at (-6.75, 5.5) {};
		\node [style=none] (27) at (-6.5, 1.5) { {\scalefont{0.7}{{$d_0$}}}};
		\node [style=none] (28) at (-6.5, 3) {};
		\node [style=none] (29) at (-6.5, 3.5) {};
		\node [style=none] (30) at (-6.5, 5.5) {};
		\node [style=none] (31) at (-6.5, 5) {};
		\node [style=newstyle1] (32) at (-3.75, 4.5) {};
		\node [style=newstyle1] (33) at (-4.75, 2.5) {};
		\node [style=newstyle1] (34) at (-3.75, 4) {};
		\node [style=none] (35) at (-4.75, 2.25) {$r$};
		\node [style=none] (36) at (-4, 3.25) {$r$};
		\node [style=newstyle1] (37) at (-3.75, 3) {};
		\node [style=newstyle1] (38) at (-3.75, 4) {};
		\node [style=newstyle1] (39) at (-3.75, 4.5) {};
		\node [style=newstyle1] (40) at (-3.75, 3.5) {};
		\node [style=newstyle1] (41) at (-3.75, 3) {};
		\node [style=none] (42) at (-3.5, 2.75) {$s^-$};
		\node [style=none] (43) at (-3.5, 2.5) {};
		\node [style=none] (44) at (-3.5, 4.5) {};
		\node [style=none] (45) at (-3.75, 2.25) {$s$};
		\node [style=newstyle1] (46) at (-3.75, 4) {};
		\node [style=newstyle1] (47) at (-3.75, 4) {};
		\node [style=newstyle1] (48) at (-4.25, 2) {};
		\node [style=newstyle1] (49) at (-3.75, 3) {};
		\node [style=newstyle1] (50) at (-3.75, 4) {};
		\node [style=newstyle1] (51) at (-3.75, 4) {};
		\node [style=none] (52) at (-3.5, 4) {};
		\node [style=none] (53) at (-3.5, 3.75) {$r^-$};
		\node [style=newstyle1] (54) at (-4.75, 2.5) {};
		\node [style=newstyle1] (55) at (-3.75, 2.5) {};
		\node [style=newstyle1] (56) at (-3.75, 3.5) {};
		\node [style=newstyle1] (57) at (-4.25, 2) {};
		\node [style=newstyle1] (58) at (-3.75, 2.5) {};
		\node [style=none] (59) at (-4, 4.25) {$e$};
		\node [style=none] (60) at (-5.75, 3) {};
		\node [style=none] (61) at (-5.25, 3) {};
		\node [style=none] (62) at (-2.5, 3) {$\cdots$};
		\node [style=newstyle1] (63) at (-0.75, 2.5) {};
		\node [style=newstyle1] (64) at (-1.5, 3) {};
		\node [style=newstyle1] (65) at (-0.75, 3.25) {};
		\node [style=newstyle1] (66) at (0, 3) {};
		\node [style=none] (67) at (-1.5, 2.75) {$r$};
		\node [style=none] (68) at (-1, 3) {$s$};
		\node [style=none] (69) at (0, 2.75) {$e$};
		\node [style=none] (70) at (-6.5, 3) { {\scalefont{0.7}{{$d_2$}}}};
		\node [style=none] (71) at (-6.5, 2.25) { {\scalefont{0.7}{{$d_1$}}}};
		\node [style=none] (72) at (-4.25, 1.75) { {\scalefont{0.7}{{$d_0=d_2$}}}};
		\node [style=none] (73) at (-4.75, 2.75) { {\scalefont{0.7}{{$d_1$}}}};
	\end{pgfonlayer}
	\begin{pgfonlayer}{edgelayer}
		\draw [style=newstyle2] (0) to (1);
		\draw [style=newstyle2] (2) to (3);
		\draw [style=newstyle2] (4) to (6);
		\draw [style=newstyle2] (5) to (7);
		\draw [style=newstyle2] (15) to (14);
		\draw [style=newstyle2] (18) to (16);
		\draw [style=newstyle2] (25) to (24);
		\draw [style=newstyle2] (55) to (49);
		\draw [style=newstyle2] (48) to (54);
		\draw [style=newstyle2] (38) to (32);
		\draw [style=newstyle2] (41) to (40);
		\draw [style=newstyle2] (56) to (51);
		\draw [style=newstyle2] (57) to (58);
		\draw [style=newstyle6] (60.center) to (61.center);
		\draw [style=newstyle2] (63) to (64);
		\draw [style=newstyle2] (63) to (65);
		\draw [style=newstyle2] (63) to (66);
	\end{pgfonlayer}
\end{tikzpicture}  \end{boxedminipage}
 \caption{Parent/Child Merging of $C_{\{1,3\}}$ for $\Tmc$ }
 \label{fig:parent-child2}
\end{figure}

\begin{example}\label{ex:2merge}
Define concept expressions $C_{i}$ by induction as follows: 
$$
C_{1}= \exists r.\top\sqcap\exists s.\top, \quad
C_{i+1}= C_{1} \sqcap \exists e.C_{i}
$$
and let 
$$
    \Tmc_{n}=\{ A\sqsubseteq \exists e.C_{n}\}.
$$ 
For $M\subseteq \{1,\ldots,n\}$, set $C_{1}^{M}= \exists r.\top$ if $1\in M$ and $C_{1}^{M}=\exists s.\top$ if
$1\notin M$. Also, let $C_{i+1}^{M}=   \exists r.\top \sqcap \exists e.C_{i}^{M}$ if
$i+1\in M$ and $C_{i+1}^{M}=  \exists s.\top \sqcap \exists e.C_{i}^{M}$ if $i+1 \notin M$, 
$1 \leq i <n$.   Figure \ref{fig:sibling} illustrates concept expressions of the form 
$C_n$ and $C_{n}^{M}$. 
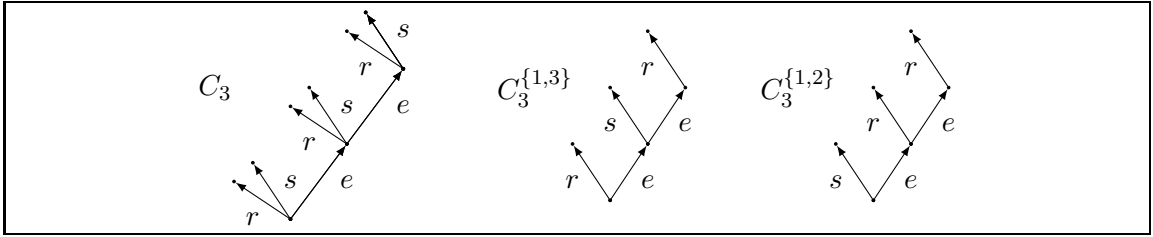
\begin{figure}[t]
 \begin{boxedminipage}[h]{\columnwidth}
 \centering
\begin{tikzpicture}
	\begin{pgfonlayer}{nodelayer}
		\node [style=none] (0) at (-4.75, 4.75) {$C_3$};
		\node [style=newstyle1] (1) at (0, 4) {};
		\node [style=newstyle1] (2) at (3.5, 4) {};
		\node [style=none] (3) at (5, 4.25) {$e$};
		\node [style=none] (4) at (0.5, 4.25) {$s$};
		\node [style=newstyle1] (5) at (4.5, 5.5) {};
		\node [style=newstyle1] (6) at (1, 4) {};
		\node [style=none] (7) at (4.5, 5) {$r$};
		\node [style=newstyle1] (8) at (4.5, 4) {};
		\node [style=none] (9) at (3, 4.75) {$C^{\{1,2\}}_3$};
		\node [style=newstyle1] (10) at (0.5, 3.25) {};
		\node [style=none] (11) at (4.5, 3.5) {$e$};
		\node [style=newstyle1] (12) at (4, 4.75) {};
		\node [style=newstyle1] (13) at (1.5, 4.75) {};
		\node [style=newstyle1] (14) at (0.5, 4.75) {};
		\node [style=newstyle1] (15) at (1, 4) {};
		\node [style=newstyle1] (16) at (1, 5.5) {};
		\node [style=newstyle1] (17) at (5, 4.75) {};
		\node [style=none] (18) at (3.5, 3.5) {$s$};
		\node [style=none] (19) at (4, 4.25) {$r$};
		\node [style=newstyle1] (20) at (4, 3.25) {};
		\node [style=none] (21) at (0, 3.5) {$r$};
		\node [style=none] (22) at (1, 5) {$r$};
		\node [style=newstyle1] (23) at (4.5, 4) {};
		\node [style=newstyle1] (24) at (0.5, 3.25) {};
		\node [style=none] (25) at (-0.5, 4.75) {$C^{\{1,3\}}_3$};
		\node [style=none] (26) at (1.5, 4.25) {$e$};
		\node [style=newstyle1] (27) at (4, 3.25) {};
		\node [style=none] (28) at (1, 3.5) {$e$};
		\node [style=newstyle1] (29) at (-3.5, 4.75) {};
		\node [style=none] (30) at (-3.5, 4) {$r$};
		\node [style=newstyle1] (31) at (-3, 4) {};
		\node [style=none] (32) at (-4.25, 3) {$r$};
		\node [style=newstyle1] (33) at (-2.25, 5) {};
		\node [style=newstyle1] (34) at (-2.75, 5.75) {};
		\node [style=newstyle1] (35) at (-3.75, 3) {};
		\node [style=none] (36) at (-3, 3.5) {$e$};
		\node [style=newstyle1] (37) at (-2.25, 5) {};
		\node [style=newstyle1] (38) at (-3, 4) {};
		\node [style=newstyle1] (39) at (-3, 4) {};
		\node [style=newstyle1] (40) at (-2.75, 5.75) {};
		\node [style=newstyle1] (41) at (-3.75, 3) {};
		\node [style=newstyle1] (42) at (-3, 4) {};
		\node [style=newstyle1] (43) at (-3.75, 3) {};
		\node [style=newstyle1] (44) at (-4.25, 3.75) {};
		\node [style=none] (45) at (-3, 4.5) {$s$};
		\node [style=newstyle1] (46) at (-3.75, 3) {};
		\node [style=newstyle1] (47) at (-4.5, 3.5) {};
		\node [style=none] (48) at (-3.75, 3.5) {$s$};
		\node [style=none] (49) at (-2.25, 4.5) {$e$};
		\node [style=newstyle1] (50) at (-3.75, 4.5) {};
		\node [style=newstyle1] (51) at (-2.25, 5) {};
		\node [style=none] (52) at (-2.75, 5) {$r$};
		\node [style=none] (53) at (-2.25, 5.5) {$s$};
		\node [style=newstyle1] (54) at (-2.75, 5.75) {};
		\node [style=newstyle1] (55) at (-2.25, 5) {};
		\node [style=newstyle1] (56) at (-2.25, 5) {};
		\node [style=newstyle1] (57) at (-3, 5.5) {};
		\node [style=newstyle1] (58) at (-2.25, 5) {};
	\end{pgfonlayer}
	\begin{pgfonlayer}{edgelayer}
		\draw [style=newstyle2] (10) to (1);
		\draw [style=newstyle2] (24) to (15);
		\draw [style=newstyle2] (6) to (13);
		\draw [style=newstyle2] (6) to (14);
		\draw [style=newstyle2] (13) to (16);
		\draw [style=newstyle2] (20) to (2);
		\draw [style=newstyle2] (27) to (23);
		\draw [style=newstyle2] (8) to (17);
		\draw [style=newstyle2] (8) to (12);
		\draw [style=newstyle2] (17) to (5);
		\draw [style=newstyle2] (41) to (44);
		\draw [style=newstyle2] (46) to (38);
		\draw [style=newstyle2] (31) to (37);
		\draw [style=newstyle2] (31) to (50);
		\draw [style=newstyle2] (37) to (40);
		\draw [style=newstyle2] (35) to (47);
		\draw [style=newstyle2] (43) to (39);
		\draw [style=newstyle2] (42) to (33);
		\draw [style=newstyle2] (42) to (29);
		\draw [style=newstyle2] (33) to (34);
		\draw [style=newstyle2] (55) to (57);
		\draw [style=newstyle2] (58) to (54);
	\end{pgfonlayer}
\end{tikzpicture}  \end{boxedminipage}
 \caption{Tree representation of the concept expressions $C_3$,
 $C^{\{1,3\}}_3$ and
 $C^{\{1,2\}}_3$.}
 \label{fig:sibling}
\end{figure}

As an answer to the first $2^{n}$ equivalence queries the oracle can
compute a positive counterexample $A \sqsubseteq \exists
e.C_{n}^{M}$ by always choosing a fresh set $M\subseteq
\{1,\ldots,n\}$. %
\end{example}
To deal with this example we introduce a modification step that
identifies siblings in $C$ rather than a parent and a child.
\begin{definition}[Sibling Merging for $\Tmc$]\label{def:sib}
  A concept $C'$ is \emph{obtained from a concept $C$ by sibling
    merging} if $C'$ is obtained from $C$ by choosing nodes $d,d',d''$
  such that $d'$ and $d''$ are $r$-successors of $d$, for some role
  $r$, and then removing $d''$, setting $l(d')=l(d')\cup l(d'')$, and
  making every $s$-successor $e$ of $d''$ in $T_{C}$ an $s$-successor
  of $d'$, for any role $s$.

  Let $A\sqsubseteq C$ be a CI with $\Tmc\models A\sqsubseteq C$.  A
  CI $A \sqsubseteq C'$ is \emph{obtained from $A \sqsubseteq C$ by
    sibling merging} if $\Tmc \models A \sqsubseteq C'$ and $C'$
  is obtained from $C$ by sibling merging.  We say that
  $A\sqsubseteq C$ is \emph{sibling merged for $\Tmc$} if there
  is no $A \sqsubseteq C'$ with $C \neq C'$ that can be obtained from
  $A \sqsubseteq C$ by sibling merging.
\end{definition}
It can be verified that when $C'$ is obtained from $C$ by sibling
merging, then $\emptyset \models C'\sqsubseteq C$.

In Example~\ref{ex:2merge} the counterexamples $A \sqsubseteq
C_{n}^{M}$ are actually sibling merged for $\Tmc_{n}$.  Thus,
producing a sibling merged $A \sqsubseteq C'$ directly from the
counterexamples returned by the oracle does not overcome the problem
illustrated by the example. Instead, we apply sibling merging after
Line~5 of the algorithm: instead of adding $A \sqsubseteq C \sqcap C'$
to $\Hmc_{add}$, the learner computes a sibling merged $A \sqsubseteq
D$ from this CI and adds it to $\Hmc_{add}$. For
Example~\ref{ex:2merge}, this is illustrated in
Figure~\ref{fig:sibling2}. Clearly, after at most $n+1$
counterexamples, the learner has added $A \sqsubseteq \exists e. C_n$, as required.

\begin{figure}[h]
 \begin{boxedminipage}[h]{\columnwidth}
 \centering
\begin{tikzpicture}
	\begin{pgfonlayer}{nodelayer}
		\node [style=none] (0) at (-1.25, 4.75) {$C^{\{1,2\}}_5$};
		\node [style=none] (1) at (-4.25, 3.5) {$r$};
		\node [style=none] (2) at (-2.75, 4.25) {$e$};
		\node [style=newstyle1] (3) at (-3.25, 5.5) {};
		\node [style=none] (4) at (-3.25, 3.5) {$e$};
		\node [style=newstyle1] (5) at (-3.75, 3.25) {};
		\node [style=none] (6) at (-3.25, 5) {$r$};
		\node [style=newstyle1] (7) at (-4.25, 4) {};
		\node [style=newstyle1] (8) at (-3.25, 4) {};
		\node [style=none] (9) at (-4.75, 4.75) {$C^{\{1,3\}}_5$};
		\node [style=newstyle1] (10) at (-2.75, 4.75) {};
		\node [style=newstyle1] (11) at (-3.25, 4) {};
		\node [style=none] (12) at (-3.75, 4.25) {$s$};
		\node [style=newstyle1] (13) at (-3.75, 3.25) {};
		\node [style=newstyle1] (14) at (-3.75, 4.75) {};
		\node [style=none] (15) at (0.75, 4.25) {$e$};
		\node [style=none] (16) at (-0.75, 3.5) {$s$};
		\node [style=newstyle1] (17) at (-0.25, 4.75) {};
		\node [style=newstyle1] (18) at (0.25, 4) {};
		\node [style=newstyle1] (19) at (-0.25, 3.25) {};
		\node [style=none] (20) at (0.25, 5) {$r$};
		\node [style=none] (21) at (-0.25, 4.25) {$r$};
		\node [style=newstyle1] (22) at (0.25, 5.5) {};
		\node [style=newstyle1] (23) at (-0.75, 4) {};
		\node [style=none] (24) at (0.25, 3.5) {$e$};
		\node [style=newstyle1] (25) at (-0.25, 3.25) {};
		\node [style=newstyle1] (26) at (0.75, 4.75) {};
		\node [style=newstyle1] (27) at (0.25, 4) {};
		\node [style=none] (28) at (1.75, 4) {};
		\node [style=none] (29) at (2.25, 4) {};
		\node [style=none] (30) at (5, 4.5) {$e$};
		\node [style=newstyle1] (31) at (4.5, 5.75) {};
		\node [style=newstyle1] (32) at (4.5, 4.25) {};
		\node [style=none] (33) at (4, 4.25) {$r$};
		\node [style=newstyle1] (34) at (3.25, 4) {};
		\node [style=none] (35) at (4.5, 3.75) {$e$};
		\node [style=newstyle1] (36) at (3.75, 4.75) {};
		\node [style=newstyle1] (37) at (3.75, 3.25) {};
		\node [style=none] (38) at (3.75, 3.75) {$s$};
		\node [style=newstyle1] (39) at (3.75, 3.25) {};
		\node [style=newstyle1] (40) at (4.5, 4.25) {};
		\node [style=newstyle1] (41) at (5, 5) {};
		\node [style=none] (42) at (5, 4.5) {$e$};
		\node [style=newstyle1] (43) at (4.5, 5.75) {};
		\node [style=newstyle1] (44) at (4.5, 4.25) {};
		\node [style=none] (45) at (3.25, 3.25) {$r$};
		\node [style=newstyle1] (46) at (3, 3.75) {};
		\node [style=newstyle1] (47) at (4, 5) {};
		\node [style=newstyle1] (48) at (3.75, 3.25) {};
		\node [style=none] (49) at (4.5, 4.75) {$s$};
		\node [style=newstyle1] (50) at (3.75, 3.25) {};
		\node [style=newstyle1] (51) at (4.5, 4.25) {};
		\node [style=newstyle1] (52) at (5, 5) {};
		\node [style=none] (53) at (4.5, 5.25) {$r$};
	\end{pgfonlayer}
	\begin{pgfonlayer}{edgelayer}
		\draw [style=newstyle2] (13) to (7);
		\draw [style=newstyle2] (5) to (8);
		\draw [style=newstyle2] (11) to (10);
		\draw [style=newstyle2] (11) to (14);
		\draw [style=newstyle2] (10) to (3);
		\draw [style=newstyle2] (19) to (23);
		\draw [style=newstyle2] (25) to (18);
		\draw [style=newstyle2] (27) to (26);
		\draw [style=newstyle2] (27) to (17);
		\draw [style=newstyle2] (26) to (22);
		\draw [style=newstyle2] (39) to (34);
		\draw [style=newstyle2] (37) to (40);
		\draw [style=newstyle2] (32) to (41);
		\draw [style=newstyle2] (32) to (36);
		\draw [style=newstyle2] (41) to (31);
		\draw [style=newstyle2] (50) to (46);
		\draw [style=newstyle2] (48) to (51);
		\draw [style=newstyle2] (44) to (52);
		\draw [style=newstyle2] (44) to (47);
		\draw [style=newstyle2] (52) to (43);
		\draw [style=newstyle6] (28.center) to (29.center);
	\end{pgfonlayer}
\end{tikzpicture}  \end{boxedminipage}
 \caption{Sibling Merging $C^{\{1,3\}}_3$ and
 $C^{\{1,2\}}_3$ for $\Tmc$.}
 \label{fig:sibling2}
\end{figure}

Finally, we need a decomposition rule. The following variant of
Example~\ref{ex:csat} illustrates that the four modification steps
introduced so far do not yet lead to a polynomial learning algorithm
even if they are applied both after Line~4 and after Line~5 in
Algorithm~\ref{alg:dl-lite-naive}.
\begin{example}\label{ex:csatmore}
Let 
$$
    \Tmc=\{A\sqsubseteq B, B \sqsubseteq \exists r.B\}.
$$ 
The oracle can provide for the $n$-th equivalence query the positive counterexample
$A\sqsubseteq 
C_{B,n}$, where $C_{B,n}= A \sqcap D_{B,n}$ and, inductively, $D_{B,0}= B$ and $D_{B,n+1}= B \sqcap \exists r.D_{B,n}$,
for any $n \geq 0$.
The algorithm does not terminate even with the four modification steps introduced above applied after Lines~4 and 5:
the CIs $A \sqsubseteq C_{B,n}$ are concept and role saturated and they are parent/child and sibling merged.
\end{example}
The problem illustrated in Example~\ref{ex:csatmore} is that so far the
learning algorithm attempts to learn $\Tmc$ without ever considering
to add to $\Hmc_{add}$ a CI whose left-hand side is $B$ (rather than
$A$). To deal with this problem we introduce a `reset step' that, in
contrast to the previous modification steps, can lead to a different
left-hand side and also to a CI that does not imply the original CI
given \Tmc, as in all previous modification steps.
\begin{definition}[Decomposed CI for $\Tmc$]\label{def:decom}
  Let $A\sqsubseteq C$ be a CI with $\Tmc\models A\sqsubseteq C$.  We
  say that $A\sqsubseteq C$ is \emph{decomposed for} $\Tmc$ if for
  every non-root node $d$ in $C$, every concept name $A' \in l(d)$,
  and every $r$-successor $d'$ of $d$ in $C$, we have $\Tmc
  \not\models A' \sqsubseteq \exists r.C'$ where $C'$ corresponds to
  the subtree of $C$ rooted at $d'$.
\end{definition}
In contrast to the previous four modification steps, the membership
queries used by the learner to obtain a decomposed CI do not only depend on $\Tmc$
but also on the hypothesis $\Hmc_{add} \cup \Hmc_{basic}$ computed up
to that point: starting from CI $A \sqsubseteq C$, the learner takes a
non-root node $d$ in $C$, a concept name $A' \in l(d)$, and an
$r$-successor $d'$ of $d$ in $C$, and then checks using a membership
query whether $\Tmc\models A' \sqsubseteq \exists r.C'$, where $C'$ is
the subtree rooted at $d'$ in $C$. If the check succeeds, 
$A \sqsubseteq C$
is replaced by
\begin{itemize}
\item[(a)] $A'\sqsubseteq \exists r.C'$ if $\mathcal{H}_{basic}\cup \mathcal{H}_{add}\not\models A'\sqsubseteq \exists r.C'$; and otherwise by
\item[(b)] $A \sqsubseteq C|^-_{d'\downarrow}$, where $C|^-_{d'\downarrow}$ is obtained from $C$ by
removing the subtree rooted in $d'$ from~$C$.
\end{itemize}  
Note that
$\{ A \sqsubseteq C|^-_{d'\downarrow}, A' \sqsubseteq \exists r.C'
\}\models A \sqsubseteq C.  $
Thus, one of the CIs $A \sqsubseteq C|^-_{d'\downarrow}$ and
$A' \sqsubseteq \exists r.C'$ is not entailed by
$\mathcal{H}_{basic}\cup \mathcal{H}_{add}$, and this is the CI that
replaces the original CI.

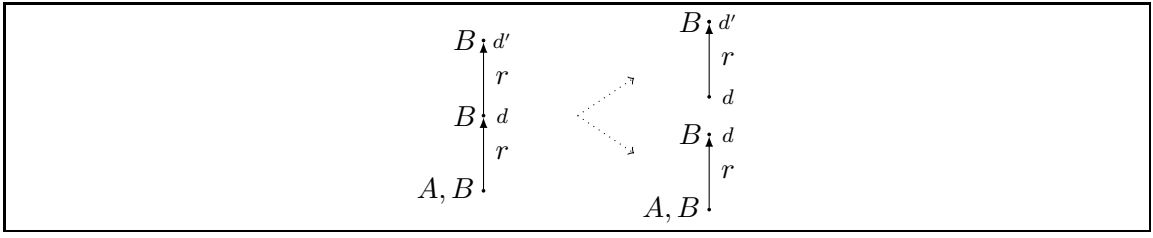
\begin{figure}
\begin{boxedminipage}[t]{\columnwidth}
\center
\begin{tikzpicture}
	\begin{pgfonlayer}{nodelayer}
		\node [style=newstyle1] (0) at (-2.5, 3.5) {};
		\node [style=newstyle1] (1) at (-2.5, 1.5) {};
		\node [style=newstyle1] (2) at (-2.5, 2.5) {};
		\node [style=none] (3) at (-2.5, 3.5) {};
		\node [style=none] (4) at (-2.25, 2) {$r$};
		\node [style=none] (5) at (-2.75, 2.5) {$B$};
		\node [style=none] (6) at (-3, 1.5) {$A, B$};
		\node [style=none] (7) at (-2.75, 3.5) {$B$};
		\node [style=none] (8) at (-2.25, 3.5) {\scalefont{0.7}{$d'$}};
		\node [style=none] (9) at (-2.25, 2.5) {\scalefont{0.7}{$d$}};
		\node [style=none] (10) at (-1.25, 2.5) {};
		\node [style=none] (11) at (-0.5, 3) {};
		\node [style=none] (12) at (0.75, 2.75) {\scalefont{0.7}{$d$}};
		\node [style=newstyle1] (13) at (0.5, 2.75) {};
		\node [style=none] (14) at (0.5, 3.75) {};
		\node [style=newstyle1] (15) at (0.5, 3.75) {};
		\node [style=none] (16) at (0.75, 3.25) {$r$};
		\node [style=none] (17) at (0.5, 3.75) {};
		\node [style=newstyle1] (18) at (0.5, 2.25) {};
		\node [style=none] (19) at (0.25, 2.25) {$B$};
		\node [style=none] (20) at (0.75, 2.25) {\scalefont{0.7}{$d$}};
		\node [style=none] (21) at (0, 1.25) {$A, B$};
		\node [style=newstyle1] (22) at (0.5, 1.25) {};
		\node [style=none] (23) at (0.25, 3.75) {$B$};
		\node [style=none] (24) at (0.5, 3.75) {};
		\node [style=none] (25) at (0.75, 3.75) {\scalefont{0.7}{$d'$}};
		\node [style=newstyle1] (26) at (0.5, 3.75) {};
		\node [style=none] (27) at (-2.25, 3) {$r$};
		\node [style=none] (28) at (0.75, 1.75) {$r$};
		\node [style=none] (29) at (-0.5, 2) {};
	\end{pgfonlayer}
	\begin{pgfonlayer}{edgelayer}
		\draw [style=newstyle2] (2) to (0);
		\draw [style=newstyle3] (10.center) to (11.center);
		\draw [style=newstyle2] (13) to (15);
		\draw [style=newstyle2] (1) to (2);
		\draw [style=newstyle2] (22) to (18);
		\draw [style=newstyle3] (10.center) to (29.center);
	\end{pgfonlayer}
\end{tikzpicture}
 \end{boxedminipage}
\caption{Illustration of decomposition of CIs. Here $C|^-_{d'\downarrow} = A\sqcap B\sqcap \exists r.B$ and 
$C' = B$.}
\label{fig:mini}
\end{figure}
In Example~\ref{ex:csatmore}, assume that the oracle returns $A \sqsubseteq
C$ with $C=A \sqcap B \sqcap \exists r.(B \sqcap \exists r.B)$ as the
first counterexample. The tree $T_{C}$ corresponding to $C$ is shown
on the left-hand side of Figure~\ref{fig:mini}. This CI is %
not decomposed for $\Tmc$: the label of node
$d$ contains $B$, the concept $C'$ rooted in $d'$ in $C$ is $B$ and
$\Tmc\models B\sqsubseteq \exists r.B$.  Since $\Hmc\cup
\Hmc_{add}\not\models B\sqsubseteq \exists r.B$, Case (a) applies and
$A \sqsubseteq B$ is replaced by $B\sqsubseteq \exists r.B$.

\smallskip This finishes the description of the modification steps. It
turns out that they cure all problems with the initial version of the
algorithm and enable polynomial query learnability. 
\begin{definition}
A \ourDLLite CI is \emph{$\Tmc$-essential} if it is concept saturated, role saturated,
parent/child merged, sibling merged, and decomposed for $\Tmc$.
\end{definition}
After Lines~4 and~5 of Algorithm~\ref{alg:dl-lite-naive}, we need to make
the CI currently considered $\Tmc$-essential, by exhaustively
applying the modification steps described above in all possible orders. The resulting refined
version of the learning algorithm is shown as
Algorithm~\ref{alg:dl-lite}. We next analyse the properties of this
algorithm.

\newcommand{\hai}{\hskip\algorithmicindent}
\begin{algorithm}[t]
\begin{algorithmic}[1]
\Require A  \ourDLLite TBox $\Tmc$ in named form given to the oracle; $\Sigma_\Tmc$ given to the learner.
\Ensure  TBox $\Hmc$, computed by the learner, such that $\Tmc\equiv\Hmc$.
\State Compute   
$\mathcal{H}_{basic}  =  \{r\sqsubseteq s \mid 
\Tmc \models r\sqsubseteq s\}  \cup  
  \{B_{1} \sqsubseteq B_{2} \mid 
\Tmc \models B_{1}\sqsubseteq B_{2}, \mbox{ $B_{1},B_{2}$ basic}\}$ 
\State Set $\mathcal{H}_{add}=\emptyset$
\While{$\mathcal{H}_{basic}\cup \mathcal{H}_{add}\not\equiv\Tmc$}
\State Let $A \sqsubseteq C$ be the returned positive counterexample
 for $\mathcal{T}$ relative
to $\mathcal{H}_{basic}\cup \mathcal{H}_{add}$ 
\State Find a $\Tmc$-essential CI $A' \sqsubseteq C'$ such that
  $\mathcal{H}_{basic}\cup \mathcal{H}_{add}\not\models A' \sqsubseteq C'$ \hfill 
\If {there is $A'\sqsubseteq C''\in \mathcal{H}_{add}$}
\State Find $\Tmc$-essential CI $A' \sqsubseteq C^{\ast}$ such that 
  $\emptyset \models C^{\ast}\sqsubseteq C'' \sqcap C'$ \hfill 
\State Replace $A' \sqsubseteq C''$ by $A' \sqsubseteq C^{\ast}$ in $\mathcal{H}_{add}$
\Else
\State Add $A' \sqsubseteq C'$ to $\mathcal{H}_{add}$
\EndIf
\EndWhile
\State \textbf{return} $\Hmc=\mathcal{H}_{basic} \cup \mathcal{H}_{add}$
\end{algorithmic}
\caption{The learning algorithm for \ourDLLite\label{alg:dl-lite}}
\end{algorithm}

\subsection*{Polynomial Query Bound on the Algorithm}
If Algorithm \ref{alg:dl-lite} terminates, then it obviously has found a TBox
$\mathcal{H}_{basic}\cup \mathcal{H}_{add}$ that is logically equivalent to
$\Tmc$.  It thus remains to show that the algorithm terminates after
polynomially many polynomial size queries. Observe that $\mathcal{H}_{add}$
contains at most one CI $A\sqsubseteq C$ for each concept name $A$.  At each
step in the while loop, either some $A'\sqsubseteq C'$ is added to $\Hmc_{add}$
such that no CI with $A'$ on the left-hand side existed in $\Hmc_{add}$ before
(Line~10) or an existing CI $A' \sqsubseteq C''$ in $\Hmc_{add}$ is replaced by
a fresh CI $A'\sqsubseteq C^{\ast}$ with $\emptyset\models C^{\ast} \sqsubseteq
C''$.

We start with showing that Lines~5 and~7 can be implemented with polynomially many 
membership queries.
The next lemma addresses Line~5.
\begin{lemma}\label{lem:t-essential0}
Given a positive counterexample $A \sqsubseteq C$
for $\Tmc$ relative to 
$\Hmc_{basic}\cup \Hmc_{add}$, one can construct a $\Tmc$-essential 
counterexample $A' \sqsubseteq C'$
using only polynomially many polynomial size membership queries in $|C|+|\Tmc|$.
\end{lemma}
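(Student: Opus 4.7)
The plan is to transform $A \sqsubseteq C$ into a $\Tmc$-essential counterexample by exhaustively applying the five modification steps---concept saturation, role saturation, parent/child merging, sibling merging, and decomposition---in any order, maintaining as an invariant that the current CI is still a positive counterexample against $\Hmc_{basic}\cup\Hmc_{add}$.

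First I would show that each step preserves this invariant. For the two saturation rules and the two merging rules, the new CI $A \sqsubseteq D$ satisfies $\emptyset\models D\sqsubseteq C$, so if $\Hmc_{basic}\cup\Hmc_{add}\models A \sqsubseteq D$ then $\Hmc_{basic}\cup\Hmc_{add}\models A \sqsubseteq C$, contradicting the invariant. For decomposition case~(a), the replacement $A'\sqsubseteq\exists r.C'$ is chosen precisely when it is not entailed by $\Hmc_{basic}\cup\Hmc_{add}$; for case~(b), the text following Definition~\ref{def:decom} observes that $\{A\sqsubseteq C|^-_{d'\downarrow},\, A'\sqsubseteq\exists r.C'\}\models A\sqsubseteq C$, and since case~(b) is triggered only when $A'\sqsubseteq\exists r.C'$ is entailed by $\Hmc_{basic}\cup\Hmc_{add}$ while $A\sqsubseteq C$ is not, necessarily $\Hmc_{basic}\cup\Hmc_{add}\not\models A\sqsubseteq C|^-_{d'\downarrow}$.

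Next I would bound the total number of modifications. The two merging rules and decomposition each strictly reduce the number of nodes in the underlying tree of the CI (in decomposition, case~(a) replaces the CI by one built from a proper subtree, and case~(b) deletes a subtree), so together they fire at most $|C|$ times. Between such reductions, concept saturation can add at most $|\Sigma_\Tmc|$ concept names per node, and role saturation at each edge traverses a strictly descending chain under $\Tmc\models\cdot\sqsubseteq\cdot$ of length at most $|\Sigma_\Tmc|$. Hence the total number of modification steps is polynomial in $|C|+|\Tmc|$, and the size of the CI stays in $O(|C|\cdot|\Sigma_\Tmc|)$ throughout.

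Finally, each single modification step is realised by polynomially many polynomial-size membership queries: saturations iterate over $O(|C|\cdot|\Sigma_\Tmc|)$ candidate node/concept or edge/role replacements; mergings over the $O(|C|^2)$ candidate node pairs; and decomposition over $O(|C|^2\cdot|\Sigma_\Tmc|)$ candidate triples $(d,A',d')$, with the subsequent choice between cases~(a) and~(b) performed by the learner locally, without additional queries to the oracle. Multiplying these two polynomial bounds yields the desired complexity. The main obstacle I anticipate is controlling the interaction between decomposition---which may reset the left-hand side and modify the tree---and the other rules that might need to be re-applied afterwards; this is handled by noticing that every decomposition strictly shrinks the tree, so no infinite interleaving is possible and the global resource bound remains polynomial.
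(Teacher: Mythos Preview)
Your approach is essentially the paper's: exhaustively apply the five modification steps, argue that each preserves the counterexample invariant, and bound the total number of steps by observing that merging and decomposition strictly shrink the tree while the saturations are bounded per node or edge.

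One small correction: for role saturation you claim $\emptyset\models D\sqsubseteq C$, but that is not true in general---replacing an edge label $r$ by $s$ with $\Tmc\models s\sqsubseteq r$ yields $\{s\sqsubseteq r\}\models D\sqsubseteq C$, not $\emptyset\models D\sqsubseteq C$. The fix is exactly what the paper does: use $\Hmc_{basic}\models D\sqsubseteq C$ instead (recall $\Hmc_{basic}$ contains all RIs entailed by $\Tmc$), which still gives $\Hmc_{basic}\cup\Hmc_{add}\not\models A\sqsubseteq D$ from $\Hmc_{basic}\cup\Hmc_{add}\not\models A\sqsubseteq C$. With that adjustment your argument goes through and matches the paper's proof.
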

\begin{proof}
Let $A \sqsubseteq C$ be a positive counterexample for $\Tmc$ relative to $\Hmc_{basic}\cup \Hmc_{add}$
and assume the five modification steps introduced above are applied exhaustively
by posing membership queries to the oracle. Observe that the number of applications
of modifications steps is bounded polynomially in $|C|\times |\Tmc|$. To show this, let $n_{C}$ be the number of 
nodes in $T_{C}$. Then $n_{C''}=n_{C'}$ if $A'' \sqsubseteq C''$ 
is obtained from $A' \sqsubseteq C'$ by a concept or role saturation step and $n_{C''}<n_{C'}$ if 
$A''\sqsubseteq C''$ is obtained from $A'\sqsubseteq C'$ by a merging or decomposition
step. Thus, the number of applications of merging and decomposition steps is bounded by $n_{C}$
and the number of applications of concept and role saturated steps is bounded by 
$|\Sigma_{\Tmc}|\times n_{C}$ and $|\Sigma_{\Tmc}|\times n_{C}^{2}$, respectively.
Thus, after at most $n_{C} + |\Sigma_{\Tmc}|\times n_{C} + |\Sigma_{\Tmc}|\times n_{C}^{2}$
steps no modification step is applicable and the final CI is $\Tmc$-essential. 
We verify that it is also a positive counterexample for \Tmc relative to
$\mathcal{H}_{basic}\cup \mathcal{H}_{add}$. It suffices to show that
the CI resulting from each single modification step is entailed by
\Tmc, but not by $\mathcal{H}_{basic}\cup \mathcal{H}_{add}$. The
former has been shown when we introduced the modification steps. Regarding the
latter, in the first four modification steps we have
$\Hmc_{basic}\models C' \sqsubseteq C$ if $A \sqsubseteq C$ is
replaced by $A\sqsubseteq C'$.  Hence
$\mathcal{H}_{basic}\cup \mathcal{H}_{add}\not\models A\sqsubseteq
C'$.
For the decomposition step, we have already argued,
after Definition~\ref{def:decom}, that the added CI
is not entailed by $\mathcal{H}_{basic}\cup \mathcal{H}_{add}$. 
\end{proof}

The following lemma addresses Line~7. %
\begin{lemma}\label{lem:dllite0}
Assume that $A\sqsubseteq C_{1}$ and $A\sqsubseteq C_{2}$ are $\Tmc$-essential.
Then one can construct a $\Tmc$-essential $A \sqsubseteq C$ such that $\emptyset \models C\sqsubseteq C_{1}
\sqcap C_{2}$ using polynomially many polynomial size membership queries in $|C_{1}|+|C_{2}|$.
\end{lemma}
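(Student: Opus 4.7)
The plan is to start with the natural candidate $A \sqsubseteq C_1 \sqcap C_2$, whose tree representation is obtained by identifying the roots of $T_{C_1}$ and $T_{C_2}$ and so has at most $|C_1|+|C_2|$ nodes. This CI is entailed by $\Tmc$ because $\Tmc \models A \sqsubseteq C_1$ and $\Tmc \models A \sqsubseteq C_2$, and trivially $\emptyset \models C_1 \sqcap C_2 \sqsubseteq C_1 \sqcap C_2$, so both properties required of the final CI already hold at the outset.

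From this starting point I would follow the same recipe as in the proof of Lemma~\ref{lem:t-essential0} and exhaustively apply the five modification steps, each implemented with polynomially many membership queries to $\Tmc$. For concept saturation, role saturation, parent/child merging, and sibling merging this is immediate: all four preserve the left-hand side $A$ and, as noted when they were introduced, yield a concept $C'$ with $\emptyset \models C' \sqsubseteq C$, so by induction the invariant $\emptyset \models C \sqsubseteq C_1 \sqcap C_2$ is maintained. The counting argument in the proof of Lemma~\ref{lem:t-essential0}, with $n_C$ replaced by the initial node-count $|C_1|+|C_2|$, still bounds the total number of modification-step applications polynomially, and each application uses only polynomial-size membership queries.

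The delicate case is decomposition, which can either relabel the left-hand side (case~a) or delete a subtree (case~b); to keep $A$ on the left I plan to always take case~(b). The main obstacle is to show that this deletion is safe, i.e.\ that $\emptyset \models C|^-_{d'\downarrow} \sqsubseteq C_1 \sqcap C_2$ whenever the trigger $\Tmc \models A' \sqsubseteq \exists r.C'$ fires at a non-root $d$ with $A' \in l(d)$ and $r$-successor $d'$. My intended argument is a direct homomorphism-redirection: given witnessing homomorphisms $h_i \colon T_{C_i} \to T_C$ for $\emptyset \models C \sqsubseteq C_1 \sqcap C_2$, use the fact that $A \sqsubseteq C_i$ is itself decomposed together with Lemma~\ref{lem:can2} to show that any $T_{C_i}$-node whose $h_i$-image lies in the subtree rooted at $d'$ can be redirected to a node that survives in $T_{C|^-_{d'\downarrow}}$, exploiting that the existence of the subtree at $d'$ is already forced by the local neighbourhood of $d$ and hence by $\Tmc$ alone. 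Once this is established, the algorithm terminates with a $\Tmc$-essential $A \sqsubseteq C^\ast$ satisfying $\emptyset \models C^\ast \sqsubseteq C_1 \sqcap C_2$, produced by polynomially many polynomial-size membership queries.
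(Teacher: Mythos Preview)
Your decomposition case is where the proposal breaks down. You propose to always take case~(b), deleting the subtree at $d'$, and then to show $\emptyset \models C|^-_{d'\downarrow} \sqsubseteq C_1 \sqcap C_2$ by ``redirecting'' the homomorphisms $h_i\colon T_{C_i}\to T_C$. But deleting a subtree \emph{weakens} $C$, so the implication you need goes the hard way; there is in general nowhere in $T_{C|^-_{d'\downarrow}}$ to redirect a $T_{C_i}$-node whose $h_i$-image lay in the deleted subtree. Your appeal to Lemma~\ref{lem:can2} does not help: that lemma concerns canonical models $\Imc_{C,\Tmc}$ and produces a \emph{$\Tmc$-entailment} from a basic-concept neighbourhood, whereas here you must establish an \emph{empty-TBox} entailment, i.e.\ a homomorphism into the purely syntactic tree $T_{C|^-_{d'\downarrow}}$. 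Likewise, decomposedness of $A\sqsubseteq C_i$ constrains $T_{C_i}$, not $T_C$: the label $A'$ that triggers decomposition lives in $l(d)\subseteq T_C$ and need not appear in the $h_i$-preimage of $d$, so you cannot transport the violation back to $C_i$.

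The paper sidesteps this entirely by observing that $A\sqsubseteq C_1\sqcap C_2$ is \emph{already} concept saturated, role saturated, parent/child merged, and decomposed for $\Tmc$: every non-root node of $T_{C_1\sqcap C_2}$ lies in exactly one of $T_{C_1}$, $T_{C_2}$, so any violation of these four properties would be a violation in the corresponding $C_i$, contradicting $\Tmc$-essentiality of $A\sqsubseteq C_i$. Hence only sibling merging ever applies, and one checks that sibling merging preserves the four properties. This makes the decomposition case vacuous and keeps the invariant $\emptyset\models C\sqsubseteq C_1\sqcap C_2$ for free (sibling merging strengthens $C$). Your approach would work if you replaced the redirection argument by this observation; as written, the key step is a genuine gap.
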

\begin{proof}
  We start with $A\sqsubseteq C_{1}\sqcap C_{2}$. Using the fact that
  $C_{1}$ and $C_{2}$ are both \Tmc-essential, one can show that this
  CI is (i)~\erightcondition, (ii)~\erolecondition, (iii) parent/child
  merged for $\Tmc$, and (iv) decomposed for $\Tmc$. Assume, for example, that $A \sqsubseteq C_{1}\sqcap C_{2}$
  is not concept saturated. Then one can add a new concept name $A'$ to the label $l(d)$ for
  some node $d$ in $T_{C_{1}\sqcap C_{2}}$ and $\Tmc\models A\sqsubseteq C'$ for the resulting concept $C'$.
  Clearly $d$ is a node in $T_{C_{1}}$ or in $T_{C_{2}}$. Assume without loss of generality that 
  $d$ is in $T_{C_{1}}$ and let $C_{1}'$ be the concept obtained from $C_{1}$ by adding $A'$ to $l(d)$.
  Then $\Tmc\models A \sqsubseteq C_{1}'$ since $\Tmc\models A \sqsubseteq C'$ which contradicts the assumption
  that $A \sqsubseteq C_{1}$ is concept saturated. The remaining three modification steps are considered 
  similarly. We now exhaustively apply the modification
  step \esiblingrule and use the resulting CI as the desired
  $A\sqsubseteq C$. Similarly to the argument above one can show that a CI with properties
  (i)--(iv) still has those properties after applying sibling merging.
  Thus, $A \sqsubseteq C$ is $\Tmc$-essential. We have argued in the proof of 
  Lemma~\ref{lem:t-essential0} already that the number of applications of a sibling merging step to
  a CI of the form $A\sqsubseteq D$ is bounded by the number of nodes in $T_{D}$. Thus, the number of
  modification steps is bounded polynomially in $|C_{1}|+|C_{2}|$.
\end{proof}

To analyse the algorithm further we first prove a polynomial upper
bound on the size of $\Tmc$-essential CIs. To this end, we require the
notion of an isomorphic embedding and an auxiliary lemma. A
homomorphism $h:T_{C} \rightarrow \Imc$ is an \emph{isomorphic
  embedding for $\Tmc$} if it is injective, $A\in l(d)$ if
$h(d)\in A^{\Imc}$ for all concept names $A$, and for $r=l(d,d')$ it
holds that $\Tmc\models r\sqsubseteq s$ for all
$(h(d),h(d'))\in s^{\Imc}$.  The following lemma shows that for
$\Tmc$-essential CIs $A \sqsubseteq C$ and any $D$ that interpolates
between $A$ and $C$ (meaning that $\Tmc\models A \sqsubseteq D$ and
$\Tmc\models D \sqsubseteq C$) the homomorphism $h$ from $T_{C}$ to
$\Imc_{D,\Tmc}$ that witnesses $\rho_{D}\in C^{\Imc_{D,\Tmc}}$ (see
Lemma~\ref{lem:can1}) is an isomorphic embedding.
\begin{lemma}\label{lem:isoemb}
Assume the $A \sqsubseteq C$ is $\Tmc$-essential, $\Tmc\models A \sqsubseteq D$ and $\Tmc \models D \sqsubseteq C$. 
Then any homomorphism $h: T_{C}\rightarrow \Imc_{D,\Tmc}$ that maps $\rho_{C}$ to $\rho_{D}$ is an isomorphic embedding.
\end{lemma}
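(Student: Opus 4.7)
The plan is to prove the lemma by contrapositive: assuming $h$ fails to be an isomorphic embedding, I exhibit a violation of one of the five conditions defining $\Tmc$-essentiality. There are three ways $h$ can fail: \textbf{(a)} $h$ is not injective; \textbf{(b)} there exist a node $d$ and a concept name $A$ with $h(d)\in A^{\Imc_{D,\Tmc}}$ but $A\notin l(d)$; or \textbf{(c)} there is an edge $d\to d'$ in $T_C$ with $l(d,d')=r$ and a role $s$ with $(h(d),h(d'))\in s^{\Imc_{D,\Tmc}}$ but $\Tmc\not\models r\sqsubseteq s$.

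Case \textbf{(b)} is easy: let $C'$ be $C$ with $A$ added to $l(d)$; then $h$ is still a homomorphism $T_{C'}\to\Imc_{D,\Tmc}$ taking $\rho_{C'}$ to $\rho_D$, so Lemma~\ref{lem:can1} yields $\Tmc\models D\sqsubseteq C'$, and chaining with $\Tmc\models A\sqsubseteq D$ gives $\Tmc\models A\sqsubseteq C'$, contradicting concept saturation. Case \textbf{(c)} relies on a structural property of the canonical model that I would first establish by induction on the construction of $\Imc_{D,\Tmc}$: every directed edge $(x,y)$ of $\Imc_{D,\Tmc}$ has a unique \emph{origin} role $r_0$---either the label of the corresponding edge of $T_D$, or the role on the right-hand side of the CI that introduced $y$ as a fresh successor of $x$, possibly inverted for the reverse direction---such that $(x,y)\in t^{\Imc_{D,\Tmc}}$ iff $\Tmc\models r_0\sqsubseteq t$. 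Applying this to $(h(d),h(d'))$, both $r$ and $s$ are super-roles of its origin $r_0$; since $\Tmc\not\models r\sqsubseteq s$ we have $r_0\neq r$, so relabelling the edge $(d,d')$ in $C$ with $r_0$ yields a CI still entailed by $\Tmc$ (witnessed by the same $h$) that constitutes a genuine role saturation step, contradicting $\Tmc$-essentiality.

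Case \textbf{(a)} is the main obstacle and exploits tree-shapedness of $\Imc_{D,\Tmc}$. I pick $d_1\neq d_2$ with $h(d_1)=h(d_2)$ at \emph{minimum} distance $k$ in $T_C$. A distance of $1$ would produce a self-loop in the tree-shaped $\Imc_{D,\Tmc}$, which is impossible, and $k>2$ would force the image of the unique $T_C$-path $d_1=e_0,\ldots,e_k=d_2$ to be a non-trivial simple cycle in the underlying tree of $\Imc_{D,\Tmc}$---any intermediate coincidence $h(e_i)=h(e_j)$ with $\{i,j\}\neq\{0,k\}$ would contradict the minimal choice of $d_1,d_2$. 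Hence $k=2$, so $d_1,d_2$ share a common neighbour $d$ in $T_C$, leading to a sibling subcase (both children of $d$) or a parent/child subcase (say $d_1$ is the parent of $d$ and $d_2$ is a child of $d$, the reverse being symmetric).

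In the sibling subcase with labels $r_1,r_2$: if $r_1=r_2$, sibling merging of $d_1,d_2$ is applicable, with the naturally adapted homomorphism witnessing entailment of the merged CI by $\Tmc$, contradicting the sibling merged condition; if $r_1\neq r_2$, the image edge $(h(d),h(d_1))$ carries both labels, so its origin $r_0$ satisfies $\Tmc\models r_0\sqsubseteq r_1$ and $\Tmc\models r_0\sqsubseteq r_2$ and differs from at least one of them, enabling a role saturation step. The parent/child subcase is analogous: from $(h(d_1),h(d))\in r_{12}^{\Imc_{D,\Tmc}}\cap(r_{23}^{-})^{\Imc_{D,\Tmc}}$ either $r_{23}=r_{12}^{-}$ and parent/child merging applies, or they differ and a common origin role enables role saturation. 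In every branch $\Tmc$-essentiality is violated, completing the proof. The decomposition condition is not needed for this lemma.
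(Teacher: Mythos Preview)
Your proof is correct and follows the same overall strategy as the paper: each way in which $h$ could fail to be an isomorphic embedding is shown to contradict one of the $\Tmc$-essentiality conditions. Your cases \textbf{(b)} and \textbf{(c)} are exactly the paper's second paragraph, where a tree $T'$ is built by replacing each node label with $\{A : h(d)\in A^{\Imc_{D,\Tmc}}\}$ and each edge label with the unique $\Tmc$-minimal role on the image edge; your ``origin role'' claim is precisely what underlies the paper's phrase ``the unique role $r$ with $\Tmc\models r\sqsubseteq s$ for all $s$ with $(h(d),h(d'))\in s^{\Imc_{D,\Tmc}}$''.

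The one genuine organisational difference is in the injectivity case. The paper asserts in one line that non-injectivity yields a parent/child or sibling merging $C'$ through which $h$ factors---but this is only immediate once one knows the relevant edge labels in $T_C$ already equal the origin roles, i.e.\ after the role-saturation part has been established. You instead treat injectivity first and handle it in a self-contained way: you argue via minimal distance that a collision must occur at distance~$2$, and then branch---if the two edge labels already match you get a merging directly, and if not you fall back to a role-saturation violation via the common origin role. This makes your argument more explicit (and shows that the paper's two paragraphs are best read in the reverse order), at the cost of a longer case analysis. Both routes work; yours is more detailed, the paper's is terser.
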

\begin{proof}
  Assume first that $h$ is not injective. Then there is a parent/child
  or sibling merging $C'$ of $C$ and a
  homomorphism $f:T_{C'} \rightarrow \Imc_{D,\Tmc}$ such that
  $h=f\circ g$ for the natural homomorphism
  $g:T_{C}\rightarrow T_{C'}$.  By Lemma~\ref{lem:can1},
  $\Tmc \models D \sqsubseteq C'$. Thus,
  $\Tmc\models A \sqsubseteq C'$ and we have derived a contradiction
  to the assumption that $C$ is parent/child and sibling merged.  

  Now let $T'$ be the following labelled tree: the nodes in $T'$ are
  the same as in $T$, $A\in l'(d)$ iff $h(d)\in A^{\Imc_{D,\Tmc}}$,
  and for any two nodes $d,d'$ with $d'$ a successor of $d$:
  $l'(d,d')=r$ for the unique role $r$ with
  $\Tmc\models r\sqsubseteq s$ for all $s$ with
  $(d,d')\in s^{\Imc_{D,\Tmc}}$. Let $C'$ be the concept
  expression that corresponds to $T'$. Then
  $\rho_{D}\in {C'}^{\Imc_{D,\Tmc}}$ and so, by
  Lemma~\ref{lem:can1}, $\Tmc\models D \sqsubseteq C'$.  Thus
  $\Tmc\models A \sqsubseteq C'$. But then $A\sqsubseteq C'$ can be
  obtained from $A \sqsubseteq C$ by concept and role saturation
  steps.  As $A\sqsubseteq C$ is concept and role saturated already,
  $C=C'$ and so $h$ is an isomorphic embedding.
\end{proof}
We are now able to prove that $\Tmc$-essential CIs are of polynomial
size in $\Tmc$. Let $n_C$ denote the number of nodes in the tree
representation $T_{C}$ of $C$ and let
$$
A^{\Tmc}= \{A\} \cup \{ B \mid A\sqsubseteq B\in \mathcal{H}_{basic}\} \cup \{D\mid \Tmc\models A\sqsubseteq B, B\sqsubseteq D\in \Tmc\}.
$$
\begin{lemma}\label{lem:size}
If $A \sqsubseteq C$ is $\Tmc$-essential, then $\displaystyle n_C \leq \!\!\sum_{D \in A^{\Tmc}} n_D$.
\end{lemma}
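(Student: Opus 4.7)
The plan is to bound $n_C$ by an injective embedding of $T_C$ into a small, explicit portion of the canonical model $\Imc_{A,\Tmc}$. Since $A\sqsubseteq C$ is $\Tmc$-essential we have $\Tmc\models A\sqsubseteq C$, so Lemma~\ref{lem:can1} yields a homomorphism $h\colon T_C\to \Imc_{A,\Tmc}$ with $h(\rho_C)=\rho_A$; applying Lemma~\ref{lem:isoemb} with $D=A$ promotes $h$ to an isomorphic embedding, so in particular $h$ is injective and $n_C\le |h(T_C)|$.

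Let $J\subseteq \Delta^{\Imc_{A,\Tmc}}$ denote the ``first-level region'' of $\Imc_{A,\Tmc}$, consisting of $\rho_A$ together with the non-root nodes of every copy of $\Imc_D$ hooked directly at $\rho_A$ during the canonical model construction, one per CI $B\sqsubseteq D\in\Tmc$ with $\Tmc\models A\sqsubseteq B$. A direct count gives $|J|\le 1+\sum_{B\sqsubseteq D\in\Tmc,\ \Tmc\models A\sqsubseteq B}(n_D-1)\le \sum_{D\in A^{\Tmc}} n_D$, using that $A\in A^{\Tmc}$ itself contributes the leading $1$ (since $n_A=1$). It therefore suffices to prove the key claim $h(T_C)\subseteq J$.

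I would prove the claim by induction on the tree-distance of $d\in T_C$ from $\rho_C$; the base case $d=\rho_C$ is immediate. For the induction step, let $d_p$ be the parent of $d$, $r=l(d_p,d)$, and suppose $h(d_p)\in J$. Because $\Imc_{A,\Tmc}$ is tree-shaped, $h(d)$ is a tree-neighbour of $h(d_p)$, and a short case analysis of the tree-parent and tree-children of $h(d_p)$ shows that the only way $h(d)$ can escape $J$ is that $h(d_p)$ is a non-root node of some first-level $\Imc_D$ and $h(d)$ lies in a fresh (second-level) attachment $\Imc_{D'}$ at $h(d_p)$, triggered by a basic concept $B'$ with $h(d_p)\in (B')^{\Imc_{A,\Tmc}}$ via some $B'\sqsubseteq D'\in\Tmc$. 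The named-form assumption makes $B'$ $\Tmc$-equivalent to a concept name $A''$ (either $B'=A''$ itself, or $A_s$ when $B'=\exists s.\top$), so $h(d_p)\in (A'')^{\Imc_{A,\Tmc}}$, and the isomorphic-embedding property yields $A''\in l(d_p)$.

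To close the induction I would derive $\Tmc\models A''\sqsubseteq \exists r.C_d$, where $C_d$ is the subtree of $T_C$ rooted at $d$; together with $d_p$ being a non-root of $T_C$ and $A''\in l(d_p)$, this contradicts the decomposition of $A\sqsubseteq C$ for $\Tmc$. By injectivity of $h$ and tree-shapedness of $\Imc_{A,\Tmc}$, the image $h(T_{C_d})$ sits in the subtree rooted at $h(d)$, and the \ourDLLite{} locality packaged in Lemma~\ref{lem:can2} transports this image into the analogous subtree sitting in $\Imc_{A'',\Tmc}$ (hanging off the $\Imc_{D'}$-copy attached at $\rho_{A''}$ by the same CI $B'\sqsubseteq D'$), with the whole image landing in $\Delta^{\Imc_{A'',\Tmc}}\setminus \Delta^{\Imc_{A''}}$. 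The single-concept version of Lemma~\ref{lem:can2} applied to $\Imc_{A'',\Tmc}$ with $N_{\Imc_{A''}}(\rho_{A''})=\{A''\}$ then delivers $\Tmc\models A''\sqsubseteq\exists r.C_d$, as required. The main obstacle is precisely this transport step: the downward structure at $h(d_p)$ in $\Imc_{A,\Tmc}$ may strictly contain that at $\rho_{A''}$ in $\Imc_{A'',\Tmc}$, so one must verify via \ourDLLite{} locality that the part of $\Imc_{A,\Tmc}$ actually used by $h|_{T_{C_d}}$ depends only on the local type generated by $B'$.
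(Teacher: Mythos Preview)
Your overall plan is right, but the step you flag as ``the main obstacle'' is a genuine gap, not a detail, and the way you try to close it is circular. To conclude $\Tmc\models A''\sqsubseteq\exists r.C_d$ you propose to apply Lemma~\ref{lem:can2} to $\Imc_{A'',\Tmc}$; but the hypothesis of that lemma is precisely a homomorphism $T_{\exists r.C_d}\to\Imc_{A'',\Tmc}$ sending the root to $\rho_{A''}$---which is exactly the transported homomorphism you are trying to produce. What you actually need is a locality statement \emph{stronger} than Lemma~\ref{lem:can2}: that the subtree of $\Imc_{A,\Tmc}$ rooted at $h(d)$ (inside the second-level $\Imc_{D'}$-attachment) depends only on the triggering CI $B'\sqsubseteq D'$ and $\Tmc$, not on where the attachment was made. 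This is true for \ourDLLite, but Lemma~\ref{lem:can2} as stated only covers the case where the root of the subtree maps into $\Delta^{\Imc_C}$, and your $h(d_p)$ lies outside $\Delta^{\Imc_A}=\{\rho_A\}$.

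The paper sidesteps the transport entirely by changing the base of the canonical model: instead of $\Imc_{A,\Tmc}$ it uses $\Imc_{D_0,\Tmc}$ with $D_0:=\bigsqcap_{D\in A^{\Tmc}}D$. Then your region $J$ becomes exactly $\Delta^{\Imc_{D_0}}$, and the first node $d'$ whose image leaves $\Delta^{\Imc_{D_0}}$ has its parent's image $h(d)\in\Delta^{\Imc_{D_0}}$---so Lemma~\ref{lem:can2} applies \emph{directly} (with $C=D_0$) and yields a single basic concept $B\in N_{\Imc_{D_0}}(h(d))$ with $\Tmc\models B\sqsubseteq\exists r.C''$. Named form turns $B$ into a concept name $E$, and the isomorphic embedding puts $E\in l(d)$; if $h(d)\neq\rho_{D_0}$ this contradicts decomposition, while if $h(d)=\rho_{D_0}$ the triggering CI $B_0\sqsubseteq D$ already has $D\in A^{\Tmc}$, so $D$ is a conjunct of $D_0$ and no fresh $\Imc_D$ is attached at the root---a separate contradiction. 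The switch from $A$ to $D_0$ costs nothing (Lemma~\ref{lem:isoemb} still applies since $\Tmc\models A\sqsubseteq D_0$ and $\Tmc\models D_0\sqsubseteq C$), and it eliminates the need for any subtree-isomorphism argument.
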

\begin{proof}
  Assume $A \sqsubseteq C$ is $\Tmc$-essential.  Let
  $D_0:= \bigsqcap_{D \in A^{\Tmc}}D$ and $\Imc_{D_0,\Tmc}$ be the
  canonical model of $D_0$ and \Tmc. By Lemma~\ref{lem:can1}, there is
  a homomorphism $h: T_{C} \rightarrow \Imc_{D_{0},\Tmc}$ mapping
  $\rho_{C}$ to $\rho_{D_{0}}$. By Lemma~\ref{lem:isoemb}, $h$ is
  an isomorphic embedding.  Using that $A\sqsubseteq C$ is decomposed
  for $\Tmc$ we now show that $h$ maps $T_{C}$ into the restriction of
  $\Imc_{D_{0}}$ to $\Delta^{\Imc_{D_{0}}}$ from which the lemma
  follows since $h$ is injective.  

  For a proof by contradiction, assume that there exists $d'$ in
  $T_{C}$ with $h(d')\not\in \Delta^{\Imc_{D_0}}$.  As
  $h(\rho_{C})\in \Delta^{\Imc_{D_{0}}}$, we may assume that all $d'$
  on the path from $\rho_{C}$ to $d'$ are mapped to
  $\Delta^{\Imc_{D_0}}$.  In particular, the parent $d$ of $d'$ in
  $T_{C}$ is mapped into $\Delta^{\Imc_{D_0}}$. Let $l(d,d')= r$.
  Observe that the whole subtree rooted in $d'$ must be mapped into
  $\Delta^{\Imc_{D_0,\Tmc}}\setminus\Delta^{\Imc_{D_0}}$ since
  otherwise $h$ would not be injective.

  Let $C'= \exists r.C''$, where $C''$ corresponds to the subtree
  rooted in $d'$ in $C$.  By Lemma~\ref{lem:can2} %
  there exists a basic concept $B$ such that
  $h(d)\in B^{\Imc_{D_0,\Tmc}}$ and $\Tmc\models B \sqsubseteq C'$. As
  $\Tmc$ is in named form there exists a concept name $E$ with
  $\Tmc\models E \equiv B$. Thus $h(d)\in E^{\Imc_{D_0,\Tmc}}$ and
  $\Tmc\models E \sqsubseteq C'$. As $h$ is an isomorphic embedding,
  $E\in l(d)$. We make a case distinction:
\begin{itemize}
\item $h(d)\not=\rho_{D_0}$. Then $A \sqsubseteq C$ is not decomposed for $\Tmc$ since $C$ contains an edge $(d,d')$
such that $E$ is in the node label of $d$, $l(d,d')=r$, and $\Tmc\models E \sqsubseteq \exists r.C''$. We have derived
a contradiction.
\item $h(d)=\rho_{D_0}$.
  As $h(d')\in \Delta^{\Imc_{D_0,\Tmc}}\setminus \Delta^{\Imc_{D_0}}$,
  by construction of $\Imc_{D_0,\Tmc}$, there exists a CI
  $B_{0} \sqsubseteq D\in \Tmc$ with $B_{0}$ a basic concept such that
  $\rho_{D_0}\in B_{0}^{\Imc_{D_0,\Tmc}}$ and $h(d')$ is in the copy
  of the tree-shaped interpretation $\Imc_{D}$ which was attached to
  $\rho_{D_0}$ in the construction of $\Imc_{D_0,\Tmc}$. But since
  $\Tmc\models A \sqsubseteq B_{0}$ we have $D\in A^{\Tmc}$ and so
  $\rho_{D_{0}}\in D^{\Imc_{D_{0}}}$. But then, by the
  construction of $\Imc_{D_{0},\Tmc}$, no fresh $\Imc_{D}$ was
  attached to $\rho_{D_0}$ because $D$ is already satisfied in
  $\Imc_{D_{0}}$ and we have derived a contradiction.
\end{itemize}
\end{proof}
We are now in the position to prove that the learning algorithm terminates after posing a polynomial number of queries.
\begin{lemma}
\label{lem:newpolylem}
For every concept name $A$, the number of replacements of a CI $A
\sqsubseteq C$ in $\Hmc_{add}$ by a CI of the form $A \sqsubseteq C'$
is bounded polynomially in $|\Tmc|$.
\end{lemma}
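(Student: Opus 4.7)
The plan is to show that each replacement of $A \sqsubseteq C$ in $\Hmc_{add}$ strictly increases a polynomially bounded numerical measure $\mu$, so the chain of replacements for a fixed $A$ has polynomial length. Recall that a replacement (Line~8) turns $A \sqsubseteq C$ into $A \sqsubseteq C^{\ast}$ with $\emptyset \models C^{\ast} \sqsubseteq C \sqcap C'$, where $A \sqsubseteq C'$ is the counterexample obtained in Line~5.

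First I would argue that $\emptyset \not\models C \sqsubseteq C^{\ast}$: otherwise $C$ and $C^\ast$ are logically equivalent, so the updated hypothesis is logically equivalent to the old one, and yet it now entails $A \sqsubseteq C'$, which was a counterexample for the old hypothesis---contradiction. By Lemma~\ref{lem:hom1} there is therefore a homomorphism $g\colon T_C \to T_{C^\ast}$ mapping root to root, but no homomorphism in the reverse direction.

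Next I would show that $g$ is injective. Suppose $g(d_1)=g(d_2)$ for distinct $d_1,d_2\in T_C$. Since $T_{C^\ast}$ is a tree and parents are unique, iterating the observation that two nodes mapped to the same target must have parents mapped to the common parent, one descends (from the least common ancestor of $d_1,d_2$) to siblings $e_1\ne e_2$ in $T_C$ with $g(e_1)=g(e_2)$ and the same edge label to their parent. Sibling merging of $e_1,e_2$ yields a concept $C''\ne C$, and $g$ factors through the merged tree, giving a homomorphism $T_{C''}\to T_{C^\ast}$. By Lemma~\ref{lem:hom1}, $\emptyset\models C^\ast\sqsubseteq C''$, so $\Tmc\models A\sqsubseteq C''$, contradicting that $A\sqsubseteq C$ is sibling merged for $\Tmc$. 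With $g$ injective in hand, I would pick $\mu(C) := n_C + \sum_{d\in T_C}|l(d)|$. Since $g$ is a homomorphism with $l(d)\subseteq l(g(d))$ pointwise, injectivity gives $\mu(C^\ast)\ge \mu(C)$; equality forces $g$ to be a bijection on nodes with all node labels equal and, because trees on $n$ nodes have exactly $n{-}1$ edges and homomorphisms preserve edge labels, with edge labels preserved under the bijection as well. Then $T_C\cong T_{C^\ast}$ as labelled trees, so $C=C^\ast$, contradicting $\emptyset\not\models C\sqsubseteq C^\ast$. Hence $\mu$ strictly increases at every replacement.

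To finish, I would use Lemma~\ref{lem:isoemb} with $D=D_0:=\bigsqcap_{D\in A^{\Tmc}}D$: every $\Tmc$-essential $A\sqsubseteq C$ produced during the run admits an isomorphic embedding of $T_C$ into $\Imc_{D_0}$, and by (the proof of) Lemma~\ref{lem:size} this embedding lands in the $D_0$-part, so $n_C \le n_{D_0} \le \sum_{D\in A^\Tmc}n_D$, which is polynomial in $|\Tmc|$. Therefore $\mu(C)\le n_{D_0}(1+|\Sigma_{\Tmc}|)$ is polynomial in $|\Tmc|$, and the number of replacements of $A\sqsubseteq C$ is bounded by this polynomial.

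The main obstacle is the injectivity argument: getting from ``$g$ collapses two nodes'' to ``a non-trivial sibling merging of $C$ is $\Tmc$-entailed'' requires the LCA trick together with the sibling-merged normal form to rule out collapsing at every level. Once injectivity is secured, combining $\mu$ with the polynomial size bound from Lemma~\ref{lem:size} is routine.
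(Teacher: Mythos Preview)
Your proof is correct, and the overall strategy (find a strictly increasing, polynomially bounded measure) is the same as the paper's. The execution differs in an interesting way, though.

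The paper proves the stronger Claim that $T_C$ is obtained from $T_{C^\ast}$ by removing at least one subtree, and then simply uses $n_C$ as the measure. To get this, it does not argue directly with the tree homomorphism $g\colon T_C\to T_{C^\ast}$; instead it views $g$ as a homomorphism into the canonical model $\Imc_{C^\ast,\Tmc}$ and invokes Lemma~\ref{lem:isoemb}, which uses \emph{all four} saturation/merging properties of $\Tmc$-essential CIs to conclude that $g$ is an isomorphic embedding (injective, node labels matched exactly, edge labels matched up to $\Tmc$-equivalence). Restricting back to $\Imc_{C^\ast}$ then gives that $T_C$ sits inside $T_{C^\ast}$ as a subtree, so $n_{C^\ast}>n_C$.

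You instead stay entirely at the tree level. Because a root-preserving tree homomorphism preserves depth, any collision under $g$ can be pushed up to a pair of siblings with the same parent and the same incoming edge label, and then sibling-mergedness of $A\sqsubseteq C$ alone yields a contradiction. So you get injectivity of $g$ from a single normal-form condition rather than the full package in Lemma~\ref{lem:isoemb}. The price is that you do not immediately get exact label preservation, which is why you need the slightly coarser measure $\mu(C)=n_C+\sum_d|l(d)|$ instead of just $n_C$. (In fact, concept saturation of $A\sqsubseteq C$ would give you $l(d)=l(g(d))$ as well, so you could also recover the paper's measure $n_C$ if you wished.) A minor imprecision: from the labelled-tree isomorphism you should conclude $\emptyset\models C\equiv C^\ast$ rather than literal equality $C=C^\ast$, but that is exactly what you need for the contradiction. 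Both approaches appeal to Lemma~\ref{lem:size} for the polynomial upper bound, so they converge at the end.
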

\begin{proof}
  All CIs $A \sqsubseteq C$ ever added to $\Hmc_{add}$ are
  \Tmc-essential. We show that when $A\sqsubseteq C$ is replaced with
  $A\sqsubseteq C'$, then the number of nodes in the tree
  representation of $C'$ is strictly larger than the number of nodes
  in the tree representation of $C$. By Lemma~\ref{lem:size}, the
  number of replacements is thus bounded by $\sum_{D \in A^{\Tmc}}
  n_D$, which is polynomial in $|\Tmc|$.
  
  Note that when $A \sqsubseteq C$ is replaced with $A\sqsubseteq C'$,
  then $\emptyset \models C'\sqsubseteq C$ and $\emptyset \not\models
  C \sqsubseteq C'$. It thus suffices to
  establish the following.

\medskip 
\noindent {\bf Claim}. If $A \sqsubseteq C$ and $A\sqsubseteq C'$ are
$\Tmc$-essential, $\emptyset \models C'\sqsubseteq C$ and $\emptyset
\not\models C \sqsubseteq C'$, then $T_{C}$ is obtained from $T_{C'}$
by removing at least one subtree.

\medskip
\noindent
We prove the claim. Since $\emptyset \models C' \sqsubseteq C$, by Lemma~\ref{lem:can1} there is a homomorphism $h$ from $T_{C}$ to 
the canonical model $\Imc_{C'}$ that maps $\rho_{C}$ to $\rho_{C'}$. Then $h$ is also a homomorphism into the canonical model 
$\Imc_{C',\Tmc}$ and thus, by Lemma~\ref{lem:isoemb}, $h$ is an isomorphic embedding into $\Imc_{C',\Tmc}$. Then, trivially,
$h$ is also an isomorphic embedding into $\Imc_{C'}$ which means that
$T_{C}$ is obtained from $T_{C'}$ by removing subtrees. Since $\emptyset
\not\models C \sqsubseteq C'$, at least one subtree must in fact have
been
removed.
\end{proof}
We have obtained the following main result of this section.
\begin{theorem}\label{DL-Lite}
\ourDLLite TBoxes are polynomial query learnable using membership and equivalence queries.
Moreover, \ourDLLite TBoxes without inverse roles can be learned in polynomial time using
membership and equivalence queries. 
\end{theorem}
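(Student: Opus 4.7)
The plan is to assemble the polynomial query bound from the lemmas already proved about Algorithm~\ref{alg:dl-lite}, and then to upgrade it to a polynomial time bound in the inverse-free case by observing that every subsumption check that the algorithm performs can then be done in polynomial time. I will also need to discharge the two standing assumptions, namely that $\Tmc$ is in named form and that $\Tmc$ entails no non-trivial role equivalences.

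For polynomial query learnability, I would start by bounding the number of iterations of the while loop. Since $\Hmc_{add}$ contains at most one CI with a given concept name on the left-hand side and $\Sigma_\Tmc$ is fixed, the only way the loop can be entered repeatedly is either by adding a fresh left-hand side (at most $|\Sigma_\Tmc|$ times) or by replacing an existing $A \sqsubseteq C''$ by $A \sqsubseteq C^\ast$ with $\emptyset \models C^\ast \sqsubseteq C''$. Lemma~\ref{lem:newpolylem} bounds the latter by $\sum_{D \in A^\Tmc} n_D$, which is polynomial in $|\Tmc|$. Hence the total number of iterations is polynomial in $|\Tmc|$. Each iteration performs one equivalence query of polynomial size (the current hypothesis, whose size is bounded since each $A \sqsubseteq C \in \Hmc_{add}$ satisfies the bound of Lemma~\ref{lem:size}), together with the membership queries needed in Lines~1, 5 and~7. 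Line~1 clearly uses only polynomially many polynomial size queries, and Lemmas~\ref{lem:t-essential0} and~\ref{lem:dllite0} handle Lines~5 and~7. Combining these bounds gives polynomial query learnability for target TBoxes in named form without role equivalences.

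To remove the two assumptions I would have the learner simulate the algorithm for a suitable extension $\Tmc^+$ of $\Tmc$. Given $\Sigma_\Tmc$, the learner first introduces, for every role $r$ that can be built over $\Sigma_\Tmc$, a fresh concept name $A_r$ and silently works with the signature $\Sigma_\Tmc \cup \{A_r \mid r \text{ a role}\}$, using the membership oracle to mimic the equivalences $A_r \equiv \exists r.\top$ by rewriting every query on the fly (a query mentioning $A_r$ is answered by querying $\Tmc$ on the result of replacing $A_r$ with $\exists r.\top$, and counterexamples returned by the oracle are translated in the opposite direction and converted into reduced form). Role equivalences are handled analogously by computing equivalence classes of roles in Line~1 using membership queries and always working with a chosen representative; the total number of such queries is polynomial in $|\Sigma_\Tmc|$. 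The learned hypothesis is then translated back into a TBox over $\Sigma_\Tmc$ by resubstituting $\exists r.\top$ for each $A_r$. This blow-up is polynomial, so polynomial query learnability is preserved.

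For the moreover part, suppose $\Tmc$ contains no inverse roles. Then subsumption in the fragment of \ourDLLite without inverse roles is in \textsc{PTime}, as noted in the preliminaries, and so is subsumption checking in the hypothesis $\Hmc_{basic} \cup \Hmc_{add}$ that the learner maintains. Every non-oracle computation performed by Algorithm~\ref{alg:dl-lite} reduces to such subsumption checks: the exhaustive application of the five modification steps that produce a \Tmc-essential CI in Lines~5 and~7 requires, for each candidate modification, a membership query to $\Tmc$ and a subsumption check against $\Hmc_{basic} \cup \Hmc_{add}$ (in the decomposition step, to decide between cases (a) and (b)); all other bookkeeping is clearly polynomial. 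Since the number of modification steps is polynomial (as argued in the proof of Lemma~\ref{lem:t-essential0}) and each such check takes polynomial time, and since by the first part there are only polynomially many iterations, the total running time is polynomial. The main obstacle I anticipate is precisely ensuring that the named-form reduction does not introduce inverse roles or blow up the signature non-polynomially, but since the added axioms $A_r \equiv \exists r.\top$ do not introduce inverses beyond those already present in $\Sigma_\Tmc$, this reduction respects the inverse-free fragment.
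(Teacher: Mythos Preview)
Your proposal is correct and follows essentially the same approach as the paper: you bound the number of iterations via Lemma~\ref{lem:newpolylem}, control the size of the hypothesis via Lemma~\ref{lem:size}, handle Lines~5 and~7 via Lemmas~\ref{lem:t-essential0} and~\ref{lem:dllite0}, and discharge the named-form and role-equivalence assumptions by on-the-fly translation, exactly as the paper does. For the inverse-free case the paper likewise observes that the only non-oracle cost is the subsumption checks in the decomposition step, which are polynomial because inverse-free \ourDLLite is a fragment of \EL with role inclusions.
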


\begin{proof}
  Recall that our algorithm requires the target TBox to be in named
  form. We first show Theorem~\ref{DL-Lite} under that assumption and
  then argue that the assumption can be dropped.

  In each iteration of Algorithm~\ref{alg:dl-lite}, either a CI is
  added to $\Hmc_{add}$ or a CI is replaced in $\Hmc_{add}$. Since the
  number of times the former happens is bounded by $|\Sigma_\Tmc|$ and
  (by Lemma~\ref{lem:newpolylem}) the number of times the latter
  happens is polynomial in $|\Tmc|$, the number
  of %
  iterations of Algorithm~\ref{alg:dl-lite} is polynomial in $|\Tmc|$.
  For polynomial query learnability of \ourDLLite TBoxes, it remains
  to show that
in each iteration Algorithm~\ref{alg:dl-lite} makes only polynomially many 
polynomial size queries in $|\Tmc|$ and the size of the largest counterexample 
seen so far.  We start with equivalence queries, made only in Line~3.
We have already argued that the number of iterations is polynomial in
$|\Tmc|$ and thus so is the number of equivalence queries made.
Regarding their size, we observe that there are at most
$|\Sigma_\Tmc|^2$ CIs in $\Hmc_{basic}$ and at most $|\Sigma_\Tmc|$
CIs in $\Hmc_{add}$, that the size of CIs in $\Hmc_{basic}$ is
constant and by Lemma~\ref{lem:size} the size of CIs in $\Hmc_{add}$
is polynomial in $|\Tmc|$.  Membership queries are made only in
Lines~5 and 7 for which it suffices to invoke
Lemmas~\ref{lem:t-essential0} and~\ref{lem:dllite0}.

Now for the ``moreover'' part of Theorem~\ref{DL-Lite}. Observe that
since each (membership or equivalence) query counts as one step of
computation, the only potentially costly step of
Algorithm~\ref{alg:dl-lite} is the implementation of the decomposition
step in Line 5, which relies on making subsumption checks of the form
$\Hmc_{basic} \cup \Hmc_{add}\models A\sqsubseteq C$. As discussed in
Section~\ref{sect:prelim}, deciding subsumption in \ourDLLite is
\NP-complete while in \EL with role inclusions subsumption is in
\PTime.  As \ourDLLite without inverse roles is a fragment of \EL with
role inclusions, we obtain polynomial time learnability for TBoxes in
this case.

To drop the requirement that the target TBox is in named form, we show
that any polynomial (query or time) learning algorithm for TBoxes in
named form can be transformed into the same kind of algorithm for
unrestricted target TBoxes. In fact, the learner can use at most
$\Omc(|\Sigma_\Tmc|^{2})$ membership queries ``Does $\Tmc$ entail $r
\sqsubseteq s$?'' to compute for every role $r$ the class $[r]_{\Tmc}$
of roles $s$ with $\Tmc\models s\equiv r$ and choose
a representative $r_{\Tmc}$ for this class. Then whenever some $s\in [r]_\Tmc$
is used in any counterexample returned by the oracle, it gets replaced with
$r_\Tmc$.  Likewise, whenever $\Tmc$ does not have a name for some $\exists
r.\top$, the algorithm still uses the concept name $A_r$ in its internal
representations (although they are no longer included in the signature
$\Sigma_\Tmc$ of the target TBox) and replaces $\exists r. \top$ with $A_r$ in
the counterexamples returned by the oracle.  It also replaces each $A_r$ with
$\exists r. \top$ in membership queries to the oracle and in the hypothesis
used for posing equivalence queries. 
\end{proof}

\section{Learning \ourDLLitehorn TBoxes}
\label{sec:elih-rhs-horn-subsumption}
 
We study exact learnability of TBoxes in \ourDLLitehorn, the extension
of \ourDLLite that admits conjunctions of basic concepts on the
left-hand side of CIs.
This language is a generalisation of both $\ourDLLite$ and   propositional Horn
logic. 
In fact, the algorithm we present combines the classical algorithms
for propositional Horn logic
\citep{DBLP:journals/ml/AngluinFP92,DBLP:conf/icml/FrazierP93} with
the algorithm for $\ourDLLite$ presented in Section~\ref{sec:dllite}.
The resulting algorithm is quite subtle and indeed this is the reason
why we treated the \ourDLLite case separately in Section~\ref{sec:dllite}.

To simplify the presentation, we make the same assumptions as in
Section~\ref{sec:dllite} about the target TBox \Tmc with signature
$\Sigma_\Tmc$. 
In particular, we assume that $\Tmc$ is in named form, suitably
generalised to \ourDLLitehorn: there are no distinct roles $r$ and $s$
such that $\Tmc\models r\equiv s$, for each role $r$ the TBox $\Tmc$
contains an equivalence $A_r\equiv \exists r.\top$ and all CIs of
$\Tmc$ are either CIs between basic concepts or contain no concept
expressions of the form $\exists r.\top$ on the left-hand side, for
any role $r$.  Denote by ${\sf lhs}(\alpha)$ the set of concept names
that occur as conjuncts on the left-hand side of a CI $\alpha$ and
denote by ${\sf rhs}(\alpha)$ the set of concept expressions that
occur as top-level conjuncts on the right-hand side of $\alpha$ (that
is, they are not nested inside restrictions).
We often do not distinguish between the set ${\sf lhs}(\alpha)$ and the
conjunction over all its concept expressions, and similarly for
${\sf rhs}(\alpha)$. For example, if $\alpha_1 =C_1\sqsubseteq D_1$ and $\alpha_2=C_2\sqsubseteq D_2$
then ${\sf lhs}(\alpha_1)\sqsubseteq {\sf rhs}(\alpha_2)$ stands for
$C_1\sqsubseteq D_2$. Also, if ${\sf lhs}(\alpha_1) =\{A_1, A_2, A_3\}$ and
${\sf lhs}(\alpha_2) = \{A_2, A_3, A_4\}$ then
${\sf lhs}(\alpha_1) \cap {\sf lhs}(\alpha_2)\sqsubseteq D$
stands for $A_2\sqcap A_3\sqsubseteq D$.

The algorithm for learning \ourDLLitehorn TBoxes is shown as
Algorithm~\ref{alg:dl-lite-horn}.
Like Algorithm~\ref{alg:dl-lite},  Algorithm~\ref{alg:dl-lite-horn} first
determines the set $\Hmc_{basic}$ that contains all CIs $B_1\sqsubseteq B_2$ with
$B_{1},B_{2}$ basic concepts such that $\Tmc\models B_1\sqsubseteq B_2$ and all
RIs $r\sqsubseteq s$ such that $\Tmc\models r\sqsubseteq s$. 
The hypothesis \Hmc is the union of $\Hmc_{basic}$  and $\Hmc_{add}$. In
contrast to Algorithm~\ref{alg:dl-lite}, $\Hmc_{add}$ is an \emph{ordered list}
of CIs rather than a set. We write $\alpha_i$ to denote the CI $\alpha$ at
position $i$ in the list $\Hmc_{add}$.  In the learning algorithm, working with
an ordered list of CIs allows the learner to pick the first $\alpha_{i}$ in
$\Hmc_{add}$ with a certain property and merge it with a new CI, a technique we
adopt from~\citep{DBLP:journals/ml/AngluinFP92,DBLP:conf/icml/FrazierP93}.
As in Algorithm~\ref{alg:dl-lite}, $\Tmc\models \Hmc$
is a loop invariant, thus, $\gamma$ is necessarily positive. 
The algorithm terminates when $\Hmc\equiv\Tmc$.

\begin{algorithm}[t]
\begin{algorithmic}[1]
\Require A \ourDLLitehorn TBox $\Tmc$ in named form given to the oracle; $\Sigma_\Tmc$ given to the learner.
\Ensure  TBox $\Hmc$, computed by the learner, such that $\Tmc\equiv\Hmc$.
\State Compute   
$\mathcal{H}_{basic}  =  \{r\sqsubseteq s \mid 
\Tmc \models r\sqsubseteq s\}  \cup  
  \{B_{1} \sqsubseteq B_{2} \mid 
\Tmc \models B_{1}\sqsubseteq B_{2}, \mbox{ $B_{1},B_{2}$ basic}\}$  
\State Set $\Hmc_{add}$ to be the empty list and $\Hmc = \Hmc_{basic}   \cup \Hmc_{add}$
\While {$\Hmc \not\equiv \Tmc$} \label{alg:line-main-loop}
\State Let $\gamma$ be the returned  positive counterexample for
\Tmc and \Hmc \label{alg:line-pce}
\State Find a \Tmc-essential $\gamma'$ with $\Hmc\not\models\gamma'$
and  $|\{\exists r.F\mid \exists r.F\in
{\sf rhs}(\gamma')\}|\leq 1$
\label{line:at-most-one}
\State Left saturate $\gamma'$ for \Hmc \label{alg:line-t-essential1}
\If{there is  $A\in \NC$ 
such that  
$\Tmc \models {\sf lhs}(\gamma')\sqsubseteq A$ and $\Hmc \not\models
{\sf lhs}(\gamma')\sqsubseteq A$}\label{alg:line-condition2}
\State $\Hmc:=$\Call{CN-Refine}{$\Hmc,{\sf lhs}(\gamma')\sqsubseteq A$}
\Else
\State $\Hmc:=$\Call{$\exists$-Refine}{$\Hmc,\gamma'$}
\label{alg:line-exists}
\EndIf
\State Set $\Hmc = \Hmc_{basic} \cup  \Hmc_{add}$
\EndWhile 
\State \textbf{return} $\Hmc$
\end{algorithmic}
\caption{The learning algorithm for \ourDLLitehorn TBoxes\label{alg:dl-lite-horn}}
\end{algorithm} 

\begin{algorithm} [h]
\begin{algorithmic}[1]
\If{there is  $A\in \NC$ and $\alpha_i \in \Hmc_{add}$
such that  $\Tmc\models{\sf lhs}(\alpha_i)\cap  {\sf lhs}(\gamma)
\sqsubseteq A$, \ 
\\
$\quad\quad\quad$
 and $\Hmc \not\models {\sf lhs}(\alpha_i)\cap  {\sf lhs}(\gamma)
\sqsubseteq A$}\label{alg:line-condition2}
\State Concept saturate $\gamma'={\sf lhs}(\alpha_i)\cap  {\sf lhs}(\gamma)
\sqsubseteq A$ for \Tmc 
\State Replace the first such $\alpha_i$ in $\Hmc_{add}$ by
$\gamma'$ \label{alg:line-replacement1}
\Else
\State Concept saturate $ \gamma $ for \Tmc
\State Append $ \gamma $ to the list $\Hmc_{add}$ \label{alg:line-append0}
\EndIf
\State \textbf{return} $\Hmc$
\end{algorithmic}
\caption{\textbf{Function} CN-Refine($\Hmc,\gamma$) \label{alg:refine-left}}
\end{algorithm}

\begin{algorithm} [h]
\begin{algorithmic}[1]
\If{there is  $C\in {\sf rhs}(\gamma)$ of the form $\exists r.D$ and $\alpha_i \in \Hmc_{add}$
such that  $\Tmc \models  {\sf lhs}(\alpha_i)\cap  {\sf lhs}(\gamma) 
\sqsubseteq C$
\\
$\quad\quad\quad$
 and $\Hmc \not\models  {\sf lhs}(\alpha_i)\cap  {\sf lhs}(\gamma) 
\sqsubseteq C$ }
\If{$\Tmc \models {\sf lhs}(\alpha_i)\cap  {\sf lhs}(\gamma) 
\sqsubseteq C \sqcap {\sf rhs}(\alpha_i)$}\label{alg:line-condition22} 
\State Find a \Tmc-essential ${\sf lhs}(\alpha_i)\cap  {\sf lhs}(\gamma)  \sqsubseteq D^{\ast}$ 
with $\emptyset \models D^{\ast} \sqsubseteq   C \sqcap {\sf rhs}(\alpha_i)$ \label{alg:line-t-essential2} 
\State Replace the first such $\alpha_i$ in $\Hmc_{add}$ by
${\sf lhs}(\alpha_i)\cap  {\sf lhs}(\gamma)  \sqsubseteq D^{\ast}$ 
\label{alg:line-replacement2}
\Else
\State Concept saturate $\gamma'={\sf lhs}(\alpha_i)\cap
{\sf lhs}(\gamma)\sqsubseteq C$ for \Tmc 
\State Replace the first such $\alpha_i$ in $\Hmc_{add}$ by
$\gamma'$\label{alg:line-replacement3}
\EndIf
\Else
\State Append  $\gamma$ to the list $\Hmc_{add}$ \label{alg:line-append}
\EndIf
\State \textbf{return} $\Hmc$
\end{algorithmic}
\caption{\textbf{Function} $\exists$-Refine($\Hmc,\gamma$)
\label{alg:refine-exists}}
\end{algorithm}
Algorithm~\ref{alg:dl-lite-horn} uses membership queries to compute a
\Tmc-essential counterexample $\gamma$ such that ${\sf rhs}(\gamma)$ contains at most one 
concept expression of the form $\exists r.F$
(Line~\ref{line:at-most-one}) and
which is `Left saturated for \Hmc' (Line~\ref{alg:line-t-essential1});
here, 
a CI is left saturated for \Hmc if its left-hand side contains all subsuming concept names
w.r.t.~$\Hmc$ (Definition~\ref{def:left-sat} below) and \Tmc-essential
if it satisfies the conditions for $\Tmc$-essential CIs from Section~\ref{sec:dllite}, appropriately
modified for CIs with conjunctions of concept names on the left-hand
side (Definition~\ref{def:t-essential-horn} below). 
Then, the algorithm checks whether there is
a concept name $A$ such that ${\sf lhs}(\gamma)\sqsubseteq A$ is a
positive counterexample. If so, then it calls Function
CN-Refine (Algorithm~\ref{alg:refine-left}) and  updates
the hypothesis 
either by refining some $\alpha_i$ in $\Hmc_{add}$
or by appending
a new CI to $\Hmc_{add}$. %
The number of replacements of any given $\alpha_{i}$ in $\Hmc_{add}$
in CN-Refine is bounded by $|\Sigma_\Tmc|$ since whenever $\alpha_i$ is
replaced in CN-Refine$(\Hmc,\gamma)$, then
${\sf lhs}(\alpha_i)\cap {\sf lhs}(\gamma)\subsetneq {\sf
  lhs}(\alpha_i)$.\footnote{This
  is a consequence of the fact that $\Hmc_{add}$ only contains concept
  saturated CIs (defined essentially as in the previous section, see
  Definition~\ref{def:t-essential-horn}
  below): ${\sf lhs}(\alpha_i)\cap {\sf lhs}(\gamma)= {\sf
      lhs}(\alpha_i)$
    and
    $\Tmc \models {\sf lhs}(\alpha_i)\cap {\sf lhs}(\gamma)\sqsubseteq
    A$
    implies $A \in \mn{rhs}(\alpha_i)$ by concept saturatedness, thus
    contradicting
    $\Hmc \not\models {\sf lhs}(\alpha_i)\cap {\sf lhs}(\gamma)
    \sqsubseteq A$.}

If there is no concept name $A$ such that ${\sf lhs}(\gamma)\sqsubseteq A$ is a
positive counterexample then Algorithm~\ref{alg:dl-lite-horn} calls
Function $\exists$-Refine (Algorithm~\ref{alg:refine-exists}). In
this case one considers the existential restrictions that occur on top-level
on the right-hand side of $\gamma$.
Note that $\exists$-Refine can be viewed as a variation of the body of
the while loop in Algorithm~\ref{alg:dl-lite} in which one considers
sets of concept names on the left-hand side of CIs rather than a
single concept name. Recall that in Algorithm~\ref{alg:dl-lite}, the
new CI $\gamma$ and a CI $\alpha$ in $\Hmc_{add}$ are merged if they
have the same concept name on the left-hand side.  In contrast, now
they are merged if the intersection of their left-sides is still
subsumed by some existential restriction $C$ from ${\sf rhs}(\gamma)$
(Lines~1 and 2). There are two cases: if the intersection is also
subsumed by ${\sf rhs}(\alpha)$ (checked in
Line~\ref{alg:line-condition22}), then in the next line a
\Tmc-essential counterexample is computed and the first such
$\alpha_{i}$ is replaced by the new CI. Otherwise it follows that
${\sf lhs}(\alpha_i)\cap {\sf lhs}(\gamma)\subsetneq {\sf
  lhs}(\alpha_i)$
and the first such $\alpha_{i}$ is replaced by the CI computed in
Line~7. Note that the latter can happen at most $|\Sigma_\Tmc|$ times
for each CI in $\Hmc_{add}$ (and the former can happen at most
$|\Tmc|$ times for each CI in $\Hmc_{add}$, see
Lemma~\ref{lem:horn-replacements} below).
If no CI can be refined with $\gamma$ then $\exists$-Refine appends $\gamma$ to $\Hmc_{add}$.   

We now define the step `left-saturate $\gamma$ for $\Hmc$' %
used in Line~\ref{alg:line-t-essential1} of Algorithm~\ref{alg:dl-lite-horn}.
Observe that this step is meaningless for \ourDLLite.
\begin{definition}[Left saturation for \Hmc]\label{def:left-sat}
A CI $\gamma'$ is obtained from a CI $\gamma$ by \emph{left saturation for} $\Hmc$ 
if ${\sf rhs}(\gamma')={\sf rhs}(\gamma)$ and 
${\sf lhs}(\gamma')= \{ A\in \Sigma_{\Tmc}\mid \Hmc \models {\sf lhs}(\gamma) \sqsubseteq A\}$.
A CI $\gamma$ is \emph{left saturated for $\Hmc$} if it coincides with its left saturation for $\Hmc$.
\end{definition}
One can clearly left saturate any CI $\gamma$ for $\Hmc$ by checking whether $\Hmc\models {\sf lhs}(\gamma)\models A$
for every $A\in \Sigma_{\Tmc}$. The following example
shows that Line~\ref{alg:line-t-essential1} is necessary for
Algorithm~\ref{alg:dl-lite-horn} to be polynomial. A similar step is
also necessary in Frazier et al.'s algorithm learning propositional
Horn logic from entailments~\citep{DBLP:conf/icml/FrazierP93}.
\begin{example}\upshape
Assume Line~\ref{alg:line-t-essential1} of Algorithm~\ref{alg:dl-lite-horn}
 is omitted. 
Let for $n\geq 2$,
\begin{eqnarray*}
\Tmc_{n}   & = & \{E_1\sqcap \cdots \sqcap E_n\sqsubseteq A\} \cup \{A_i \sqsubseteq E_i,B_i \sqsubseteq E_i \mid 1\leq i\leq n\}.
\end{eqnarray*}
For $M\subseteq \{1,\ldots,n\}$, set $C_{M}=\bigsqcap_{i\leq n}C_i$,
where $C_i = A_i$ if $i \in M$ and $C_i = B_i$ if $i \notin M$.
Then the oracle can provide for the first $2^{n}$ equivalence queries
in the while loop of Algorithm~\ref{alg:dl-lite-horn} a positive
counterexample $C_{M} \sqsubseteq A$ by always choosing a fresh set
$M\subseteq \{1,\ldots,n\}$.
\end{example}
For the refinements on the right-hand side, we extend to \ourDLLitehorn the notion of 
$\mathcal{T}$-essential CIs introduced in the previous section:
\begin{enumerate}
\item (Concept saturation for $\Tmc$)
A CI $\gamma'$ is \emph{obtained from a CI $\gamma$ by concept saturation for}~$\Tmc$ 
if ${\sf lhs}(\gamma)={\sf lhs}(\gamma')$, ${\sf rhs}(\gamma')$ is obtained from 
${\sf rhs}(\gamma)$ by adding a
concept name to the label of some node of ${\sf rhs}(\gamma)$, and 
$\Tmc\models \gamma'$.
A CI $\gamma$ is \emph{concept saturated for $\Tmc$} if $\Tmc\models
\gamma$ and there is no $\gamma'$ with $\gamma \neq \gamma'$ that can
be obtained from $\gamma$ by concept saturation for $\Tmc$.
\item (Role saturation for $\Tmc$)
A CI $\gamma'$ is \emph{obtained from $\gamma$ by role saturation for} $\Tmc$
if ${\sf lhs}(\gamma)={\sf lhs}(\gamma')$, ${\sf rhs}(\gamma')$ is obtained from ${\sf rhs}(\gamma)$
by replacing in some edge label a role $r$ by a role 
$s$ with $\Tmc\models s\sqsubseteq r$, and $\Tmc\models \gamma'$.
A CI $\gamma$ is \emph{role saturated for $\Tmc$} if $\Tmc\models
\gamma$ and 
there is no $\gamma'$ with $\gamma \neq \gamma'$ that can be obtained
from $\gamma$ by role saturation for $\Tmc$.
 \item (Parent/child merged for \Tmc) 
A CI $\gamma'$ is \emph{obtained from a CI $\gamma$ by parent/child merging for} $\Tmc$ if
${\sf lhs}(\gamma)={\sf lhs}(\gamma')$, ${\sf rhs}(\gamma')$ is obtained from ${\sf rhs}(\gamma')$ by parent/child
merging (as in Definition~\ref{def:parchimer}), and $\Tmc\models \gamma'$.  
A CI $\gamma$ is \emph{parent/child merged for $\Tmc$} if $\Tmc\models
\gamma$ and 
there is no $\gamma'$ with $\gamma \neq \gamma'$ that can be obtained
from
$\gamma$ by parent/child merging for~$\Tmc$.

\item (Sibling merged for \Tmc) A CI $\gamma'$ is \emph{obtained from
    a CI $\gamma$ by sibling merging for} $\Tmc$ if
  ${\sf lhs}(\gamma)={\sf lhs}(\gamma')$, ${\sf rhs}(\gamma')$ is
  obtained from ${\sf rhs}(\gamma')$ by sibling merging (as in
  Definition~\ref{def:sib}), and $\Tmc\models \gamma'$.  A CI $\gamma$
  is \emph{sibling merged for $\Tmc$} if $\Tmc\models \gamma$ and
  there is no $\gamma'$ with $\gamma \neq \gamma'$ that can be
  obtained
  from $\gamma$ by sibling merging for $\Tmc$.
\item (Decomposed CI for \Tmc) A CI $\gamma$ is \emph{decomposed for} $\Tmc$ if
$\Tmc\models\gamma$ and
for every non-root node $d$ in ${\sf rhs}(\gamma)$, every role $r$, and every $r$-successor $d'$ of $d$ in
${\sf rhs}(\gamma)$ we have 
$\Tmc \not\models l(d) \sqsubseteq \exists r.C'$, 
where
$C'$ corresponds to the subtree
of ${\sf rhs}(\gamma)$ rooted at $d'$.
\end{enumerate}

\begin{definition}\label{def:t-essential-horn}
A \ourDLLitehorn CI %
is \emph{$\mathcal{T}$-essential} if 
it is concept saturated, role saturated, parent/child merged, sibling merged,
and decomposed for $\Tmc$.
\end{definition}
The saturation, merging and decomposition steps defined above are straightforward generalisations of 
Definitions~\ref{def:csat} to \ref{def:decom} to CIs with conjunctions on the left-hand side. 
One can easily generalise the arguments from \ourDLLite to show that for any CI 
$\gamma$ with $\Hmc\not\models\gamma$ one can compute a 
$\Tmc$-essential $\gamma'$ with $\Hmc\not\models\gamma'$ using 
polynomially many membership queries and entailment checks relative to $\Hmc$.
For the analysis of the learning algorithm it is crucial that
all CIs in the ordered list $\Hmc_{add}$ are $\Tmc$-essential at all times,  
which we prove next.
\begin{lemma}\label{lem:always-t-essential}
At any point in the execution of Algorithm~\ref{alg:dl-lite-horn}, all CIs in $\Hmc_{add}$ are
$\Tmc$-essential. 
\end{lemma}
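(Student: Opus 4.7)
The proof proceeds by induction on the number of iterations of the main while loop of Algorithm~\ref{alg:dl-lite-horn}. The base case is vacuous since $\Hmc_{add}$ starts empty, so I would concentrate on the inductive step: it suffices to verify that every CI added to or replacing an entry of $\Hmc_{add}$ during a single iteration is $\Tmc$-essential, as the untouched entries inherit $\Tmc$-essentiality automatically (the property depends only on $\Tmc$, which is fixed).

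A preliminary observation I would use throughout is that the left-saturation step in Line~\ref{alg:line-t-essential1} preserves $\Tmc$-essentiality of $\gamma'$. Left-saturation enlarges the left-hand side from $L_0$ to $L \supseteq L_0$ by adding concept names $A$ with $\Hmc \models L_0 \sqsubseteq A$, and since $\Tmc \models \Hmc$ this gives $\Tmc \models L_0 \sqsubseteq L$. Then any candidate RHS-modification $R \rightsquigarrow R'$ witnessed by $\Tmc \models L \sqsubseteq R'$ would satisfy $\Tmc \models L_0 \sqsubseteq R'$ by transitivity, contradicting the saturation, merging, or concept-saturation property already holding for $L_0 \sqsubseteq R$; and the decomposition condition does not mention the left-hand side at all. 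Hence $\gamma'$ is still $\Tmc$-essential when passed to CN-Refine or $\exists$-Refine.

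The straightforward sub-cases are then as follows. In both branches of CN-Refine, the CI inserted into $\Hmc_{add}$ has a right-hand side consisting of a single root node whose label is a set of concept names (obtained by concept-saturating a concept name); such a tree has no edges and no non-root nodes, so it is trivially role saturated, parent/child merged, sibling merged, and decomposed, and combined with explicit concept saturation it is $\Tmc$-essential. In $\exists$-Refine, Line~\ref{alg:line-t-essential2} produces a $\Tmc$-essential CI by direct construction, and Line~\ref{alg:line-append} appends the already $\Tmc$-essential $\gamma'$.

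The main obstacle is the sub-case of $\exists$-Refine ending at Line~\ref{alg:line-replacement3}, where $\alpha_i$ is replaced by the concept-saturated version of $L \sqsubseteq C$ with $L = {\sf lhs}(\alpha_i) \cap {\sf lhs}(\gamma')$ and $C = \exists r.D$ a top-level conjunct of ${\sf rhs}(\gamma')$. My plan is to prove two things: that $L \sqsubseteq C$ already inherits role saturation, parent/child and sibling merging, and decomposition from $\gamma'$; and that the preceding concept-saturation step is effectively a no-op. The inheritance argument proceeds by contradiction: any role-saturation or merging modification of the subtree $C$ to $C'$ with $\Tmc \models L \sqsubseteq C'$ lifts, via $L \subseteq {\sf lhs}(\gamma')$ and hence $\Tmc \models {\sf lhs}(\gamma') \sqsubseteq L \sqsubseteq C'$, to a corresponding modification of ${\sf rhs}(\gamma')$ with the other top-level conjuncts left unchanged and still entailed, contradicting the analogous $\Tmc$-essentiality property of $\gamma'$. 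Decomposition transfers directly, since its definition refers only to the node labels and subtrees of the RHS tree, all of which are common to $C$ and the corresponding portion of ${\sf rhs}(\gamma')$. The same lifting trick shows the concept-saturation step cannot add any fresh concept name: any candidate $A$ added to $l(d)$ would yield $\Tmc \models {\sf lhs}(\gamma') \sqsubseteq$~(RHS of $\gamma'$ with $A$ added to the corresponding node), contradicting concept saturation of $\gamma'$. This completes the verification that $L \sqsubseteq C$ is $\Tmc$-essential and thereby closes the induction.
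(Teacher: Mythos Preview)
Your overall structure matches the paper's: both arguments do a case analysis over the places where $\Hmc_{add}$ is modified, and both dispose of the CN-Refine branches by noting that a concept-saturated CI whose right-hand side lives entirely at the root is trivially role saturated, merged, and decomposed. Your treatment of Line~\ref{alg:line-t-essential2} and Line~\ref{alg:line-append} of $\exists$-Refine is also fine, and your explicit check that left saturation preserves $\Tmc$-essentiality is a nice addition that the paper leaves implicit.

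The gap is in your handling of Line~\ref{alg:line-replacement3}. Your plan relies on showing that the concept-saturation step applied to $L \sqsubseteq C$ is ``effectively a no-op'', but this is false at the root. Here $C = \exists r.D$, so the root of $T_C$ carries the empty label, whereas the root of $T_{{\sf rhs}(\gamma')}$ carries the label $\{E_1,\dots,E_k\}$ of top-level concept-name conjuncts. Your lifting argument only shows that any $A$ added by concept saturation to a node $d$ of $T_C$ must already occur in the label of the \emph{corresponding node of $T_{{\sf rhs}(\gamma')}$}; for non-root $d$ the two labels coincide and you are done, but for $d = \rho_C$ this conclusion says nothing about $l(\rho_C)$ itself. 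Concretely, since $\Tmc$ is in named form and $\Tmc \models L \sqsubseteq \exists r.\top$, concept saturation will add at least $A_r$ to the root. Hence $L \sqsubseteq C$ is \emph{not} concept saturated, and your final sentence (``$L \sqsubseteq C$ is $\Tmc$-essential'') does not follow.

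The repair is short: drop the no-op claim and instead argue directly about the saturated CI $L \sqsubseteq C^{\sf sat}$. Your lifting argument already shows that concept saturation changes only the root label; the tree structure and all non-root labels of $C^{\sf sat}$ coincide with those of $C$. Now observe that any role-saturation or merging step on $C^{\sf sat}$ decomposes as the same step on $C$ conjoined with the (entailed) root-label conjuncts, so a violation for $L \sqsubseteq C^{\sf sat}$ yields one for $L \sqsubseteq C$, which you have already excluded. Decomposition transfers verbatim because it mentions only non-root nodes. This is exactly what the paper asserts (``the concept saturation of ${\sf lhs}(\gamma) \sqsubseteq C$ is $\Tmc$-essential''), just with the smaller left-hand side $L$ in place of ${\sf lhs}(\gamma)$.
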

\begin{proof}
If a CI $\gamma$ is of the form $A_1 \sqcap \cdots \sqcap A_n\sqsubseteq A$ with $A$ a concept name,
then the set ${\sf rhs}(\gamma')$ of the concept saturation $\gamma'$ of $\gamma$ for $\Tmc$ contains concept names
only. Thus, $\gamma'$ is $\Tmc$-essential.
It follows that the CIs added to $\Hmc_{add}$ in Lines~\ref{alg:line-replacement1}
and~\ref{alg:line-append0} of CN-Refine are \Tmc-essential. 
Also, it is easy to see that if $\gamma$ is \Tmc-essential
and $C \in {\sf rhs}(\gamma)$ then the concept saturation of 
${\sf lhs}(\gamma)\sqsubseteq C$ for $\Tmc$ is \Tmc-essential as well.
Thus, the CI $\gamma'$ in Line~\ref{alg:line-replacement3} of $\exists$-Refine is \Tmc-essential.
\end{proof}

\subsection*{Polynomial Query Bound on the Algorithm}

As in the previous section it is immediate that upon termination the
algorithm has found a TBox
$\Hmc=\mathcal{H}_{basic}\cup \mathcal{H}_{add}$ that is logically
equivalent to the target TBox $\Tmc$.  It thus
remains to show that it issues only polynomially many queries of
polynomial size. We first discuss how Lines~\ref{line:at-most-one} and
\ref{alg:line-t-essential1} of Algorithm~\ref{alg:dl-lite-horn} and
Line~\ref{alg:line-t-essential2} of
$\exists$-Refine %
can be implemented.  The next lemma addresses
Lines~\ref{line:at-most-one} and \ref{alg:line-t-essential1} of
Algorithm~\ref{alg:dl-lite-horn}.
\begin{lemma}\label{lem:t-essential}
Given a positive counterexample $\gamma$
for $\Tmc$ relative to $\Hmc$, one can construct 
with polynomially many polynomial size membership queries in  $|\gamma|$ and $|\Tmc|$, a  
counterexample $\gamma'$ that is left saturated for $\Hmc$,
$\Tmc$-essential and such that $|\{\exists r.F\mid \exists r.F\in
{\sf rhs}(\gamma')\}|\leq 1$. 
\end{lemma}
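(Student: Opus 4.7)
The plan is to produce $\gamma'$ from $\gamma$ in three phases: first reduce the top-level structure of ${\sf rhs}(\gamma)$ to contain a single conjunct, then exhaustively apply the five modification steps defining $\Tmc$-essentialness using membership queries, and finally left-saturate w.r.t.\ $\Hmc$.

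Phase 1 (single top-level conjunct). Write $\gamma$ as ${\sf lhs}(\gamma)\sqsubseteq C_1\sqcap\cdots\sqcap C_n$, where each $C_i$ is a top-level conjunct. Since $\Hmc\not\models\gamma$, there exists some $j$ with $\Hmc\not\models {\sf lhs}(\gamma)\sqsubseteq C_j$; the learner identifies such a $j$ using only entailment checks w.r.t.\ the known $\Hmc$, without membership queries to $\Tmc$. Replace $\gamma$ by ${\sf lhs}(\gamma)\sqsubseteq C_j$; this preserves $\Tmc\models\gamma$ and $\Hmc\not\models\gamma$ and leaves at most one top-level existential in ${\sf rhs}$.

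Phase 2 (make $\Tmc$-essential). Exhaustively apply concept saturation, role saturation, parent/child merging, sibling merging, and decomposition for $\Tmc$ in analogy with the proof of Lemma~\ref{lem:t-essential0}, with the obvious adaptation that the left-hand side is now a conjunction of concept names rather than a single one. The counter\-example status is preserved for the same reasons as in the \ourDLLite case: saturations and mergings yield logically stronger CIs so $\Hmc$-non-entailment propagates, while decomposition explicitly chooses between cases (a) and (b) so that the retained CI is not $\Hmc$-entailed. The number of modifications is bounded polynomially in $|\gamma|+|\Tmc|$ by the same node-counting argument as in Lemma~\ref{lem:t-essential0}, so polynomially many polynomial-size membership queries suffice. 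The crucial additional point is that the ``at most one top-level existential in ${\sf rhs}$'' invariant is maintained: concept and role saturation never add existentials; parent/child and sibling merging only remove ${\sf rhs}$ nodes; and decomposition either replaces ${\sf rhs}$ by a single $\exists r.C'$ (case (a)) or removes an entire subtree (case (b)).

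Phase 3 (left saturation). Extend the left-hand side by adding every concept name $A\in\Sigma_\Tmc$ with $\Hmc\models{\sf lhs}\sqsubseteq A$, using entailment checks against $\Hmc$ only. The RHS is unchanged, so the single-existential invariant still holds, and the result is by construction left saturated for $\Hmc$. It remains to verify that $\Tmc$-essentialness survives this step; this is the delicate point. Let $L$ be the LHS after Phase~2 and $L^\ast$ its left saturation. Since $\Hmc\models L\sqsubseteq L^\ast$ and $\Tmc\models\Hmc$, we have $\Tmc\models L\sqsubseteq L^\ast$, so any applicable concept saturation, role saturation, or merging step for $L^\ast\sqsubseteq{\sf rhs}$ would also be applicable for $L\sqsubseteq{\sf rhs}$, contradicting Phase 2. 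Decomposedness depends only on non-root nodes of ${\sf rhs}$ and on entailments of the form $\Tmc\models l(d)\sqsubseteq\exists r.C'$, which are untouched by changes to the LHS.

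The main obstacle is getting the order of operations right: decomposition in Phase 2 can itself change the LHS (replacing it by $l(d)$ for a non-root node $d$), which is why left saturation is postponed to the very end; doing it earlier would force re-saturation after every decomposition and would also entangle the $\Hmc$-entailment checks for left saturation with the $\Tmc$-essentialness invariants, making the argument for polynomial bookkeeping more fragile.
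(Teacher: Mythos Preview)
Your Phase~2 invariant claim is false for parent/child merging, and this breaks the argument. Consider the situation after Phase~1 where ${\sf rhs}=\exists r.D$ with a single top-level existential. Suppose the root $\rho$ has the unique $r$-child $d$, and $d$ has an $r^{-}$-child $d''$ which in turn has several children $e_1,\dots,e_k$ via roles $s_1,\dots,s_k$. Parent/child merging with $d'=\rho$ removes $d''$ and reattaches all of $e_1,\dots,e_k$ as children of the root; now the root has $k+1$ top-level existentials (the original $\exists r.\cdot$ plus the new $\exists s_i.\cdot$). Your sentence ``parent/child and sibling merging only remove ${\sf rhs}$ nodes'' is true of the node count but does not control the out-degree of the root, which is what the invariant is about. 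Sibling merging and decomposition are fine; it is specifically parent/child merging at the root that fails.

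The paper avoids this by reversing your order: it first makes the CI $\Tmc$-essential (where multiple top-level existentials are allowed), and only \emph{afterwards} selects a single top-level conjunct that is still a counterexample. The key observation is that if $\gamma$ is $\Tmc$-essential and $C\in{\sf rhs}(\gamma)$, then the concept saturation of ${\sf lhs}(\gamma)\sqsubseteq C$ is again $\Tmc$-essential (role saturation, both mergings, and decomposedness are inherited from the larger tree, and concept saturation is re-established explicitly). This gives the single-existential property as a one-shot post-processing step rather than an invariant that must survive all five modification rules. Your Phases~1 and~3 are individually sound; the fix is to swap Phase~1 to occur after Phase~2 and add this preservation argument.
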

The proof of Lemma~\ref{lem:t-essential} is a straightforward extension of the proof of Lemma~\ref{lem:t-essential0}
and uses the observation that a left-saturated $\gamma'$ for $\Hmc$ can be computed from $\gamma$ by adding all
concept names $A\in \Sigma_{\Tmc}$ with $\Hmc\models {\sf
lhs}(\gamma)\sqsubseteq A$ to ${\sf lhs}(\gamma)$.
This lemma also requires that $|\{\exists r.F\mid \exists r.F\in
{\sf rhs}(\gamma')\}|\leq 1$.
If there is $A \in {\sf rhs}(\gamma')$
such that $\Hmc \not\models {\sf lhs}(\gamma') \sqsubseteq A$ 
then we can simply drop  all conjuncts of the
form  $\exists r.F$ from ${\sf rhs}(\gamma')$. 
Otherwise, we can satisfy the condition by  simply choosing a
conjunct $\exists r.F \in {\sf rhs}(\gamma')$   such that
$\Hmc \not\models {\sf lhs}(\gamma') \sqsubseteq \exists r.F$ and
then   apply    `Concept saturation for \Tmc' to
${\sf lhs}(\gamma') \sqsubseteq \exists r.F$.
The resulting $\gamma'$ is left saturated for \Hmc, \Tmc-essential and has at most one
conjunct of the form $\exists r.F$ in ${\sf rhs}(\gamma')$. 

\smallskip

The following lemma addresses Line~\ref{alg:line-t-essential2} of $\exists$-Refine.
\begin{lemma}\label{lem:dllite00}
Assume that $\alpha$ and 
$\gamma$ are $\Tmc$-essential and there is $C \in {\sf rhs}(\gamma)$
such that $\Tmc\models {\sf lhs}(\alpha)\cap
{\sf lhs}(\gamma) \sqsubseteq {\sf rhs}(\alpha)\sqcap C$. Then one can construct, with polynomially many polynomial 
size membership queries 
in $|{\sf rhs}(\alpha)|$ and $|C|$,
a $\Tmc$-essential ${\sf lhs}(\alpha)\cap
{\sf lhs}(\gamma) \sqsubseteq D^\ast$ 
such that
$\emptyset \models D^\ast\sqsubseteq {\sf rhs}(\alpha)\sqcap C$.
\end{lemma}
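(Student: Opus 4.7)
The plan is to adapt the proof of Lemma~\ref{lem:dllite0} to the \ourDLLitehorn setting. Set $L := {\sf lhs}(\alpha) \cap {\sf lhs}(\gamma)$ and begin with $\beta := L \sqsubseteq {\sf rhs}(\alpha) \sqcap C$; by hypothesis $\Tmc \models \beta$. I will first show that $\beta$ is already concept saturated, role saturated, parent/child merged, and decomposed for $\Tmc$, and then exhaustively apply sibling merging for $\Tmc$ to obtain the desired $\beta^\ast = L \sqsubseteq D^\ast$.

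For the first claim, the key structural observation is that $T_{{\sf rhs}(\alpha) \sqcap C}$ is obtained from $T_{{\sf rhs}(\alpha)}$ and $T_C$ by identifying their roots, so every non-root node lies in exactly one of the two subtrees. Any single-step violation of properties (i)--(iv) is local and therefore restricts to either $T_{{\sf rhs}(\alpha)}$ or $T_C$: a concept-name addition, a role replacement, a parent/child-merging triple (whose three nodes lie in a single subtree by the ancestor--descendant structure), or a decomposition witness at a non-root node (whose node, $r$-successor, and whole successor subtree all lie in the same subtree). Using $L \subseteq {\sf lhs}(\alpha)$ and $L \subseteq {\sf lhs}(\gamma)$, $\Tmc$-entailment of the modified combined CI projects to $\Tmc$-entailment of the corresponding modified $\alpha$ or $\gamma$, contradicting the $\Tmc$-essentiality of that CI.

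For sibling merging, the learner greedily tries each candidate pair of siblings (children of a common parent via the same incoming edge label) and applies the merging whenever a single membership query confirms $\Tmc$-entailment of the merged CI. Since each merging strictly reduces the node count of the right-hand side, at most $|{\sf rhs}(\alpha)|+|C|$ applications occur, each preceded by $O((|{\sf rhs}(\alpha)|+|C|)^2)$ membership queries of size $O(|{\sf rhs}(\alpha)|+|C|)$, giving the promised polynomial bound. Preservation of properties (i)--(iv) under sibling merging is argued as in Lemma~\ref{lem:dllite0}: any putative violating modification $\beta^{\ast\ast}$ of $\beta^\ast$ is lifted back to a modification of $\beta$ via the natural homomorphism $T_\beta \to T_{\beta^\ast}$ and contradicts the corresponding property of $\beta$ by the ``stronger implies weaker'' direction of entailment; modifications at non-merged nodes lift literally, while modifications at the merged node lift to simultaneous modifications of the two pre-merge siblings, which yield a CI at least as strong as the corresponding single-node modification of $\beta$. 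Finally, $\emptyset \models D^\ast \sqsubseteq {\sf rhs}(\alpha) \sqcap C$ follows from Lemma~\ref{lem:hom1} applied to the natural homomorphism $T_{{\sf rhs}(\alpha) \sqcap C} \to T_{D^\ast}$ induced by the composition of sibling mergings.

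The main technical obstacle is verifying preservation of the Horn-style decomposition condition (which uses the full conjunction of node labels) under sibling merging at the merged node, where the label strictly grows and can in principle enable new $\Tmc$-entailments absent at either pre-merge sibling. This is handled by the case analysis above: each successor subtree in $\beta^\ast$ still descends entirely from one of $T_{{\sf rhs}(\alpha)}$ or $T_C$ in $\beta$, and the combination of the enlarged label with the folded subtrees is traced back through the homomorphism to a structure whose decomposition witness was already ruled out for $\alpha$ or $\gamma$.
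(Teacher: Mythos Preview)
Your proposal follows essentially the same approach as the paper's proof: start from $L \sqsubseteq {\sf rhs}(\alpha) \sqcap C$, argue that only sibling-mergedness can fail, apply sibling merging exhaustively, and claim the result remains $\Tmc$-essential. The paper is much terser (it simply says ``similar to Lemma~\ref{lem:dllite0}''); you supply more of the details.

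One caveat about your final paragraph: the claim that ``each successor subtree in $\beta^\ast$ still descends entirely from one of $T_{{\sf rhs}(\alpha)}$ or $T_C$'' is not true once sibling merging has mixed the two sides, so that line of argument does not go through. The clean way to handle preservation of Horn-decomposition at the merged node (which neither you nor the paper spells out) is to observe that concept saturation of the pre-merge CI forces mergeable siblings to carry \emph{identical} labels: if $d',d''$ are $r$-siblings and the sibling-merged CI is $\Tmc$-entailed, then so is the CI obtained by merely adding $l(d'')$ to $l(d')$ (there is a homomorphism from the latter tree to the former), whence concept saturation gives $l(d'')\subseteq l(d')$ and symmetrically $l(d')\subseteq l(d'')$. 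Thus the merged node's label does not grow, and the decomposition condition at the merged node for each successor $e$ coincides with the pre-merge decomposition condition at whichever of $d',d''$ originally had $e$ as a successor.
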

\begin{proof} 
Assume $\Tmc\models {\sf lhs}(\alpha)\cap
{\sf lhs}(\gamma) \sqsubseteq {\sf rhs}(\alpha)\sqcap C$.
Then, similar to Lemma~\ref{lem:dllite0}, one can show that
 the only
property of \Tmc-essential CIs that can fail is being sibling merged
for \Tmc and that after applying the
step `Sibling merging for \Tmc' to ${\sf lhs}(\alpha)\cap
{\sf lhs}(\gamma) \sqsubseteq {\sf rhs}(\alpha)\sqcap C$ the resulting
CI is \Tmc-essential, as required.
\end{proof}

We also have to show that the number of CIs in $\Hmc_{add}$ is bounded polynomially in
$|\Tmc|$ and %
for each position of $\Hmc_{add}$ the number of replacements is bounded polynomially in
$|\Tmc|$. These properties follow from the following lemma.
\begin{lemma}
\label{lem:term200}
Let $\Hmc_{add}$ be a ordered list of CIs computed at some point of an 
execution of Algorithm~\ref{alg:dl-lite-horn}.  
Then 
\begin{enumerate}
\item [(i)] the length of $\Hmc_{add}$ is bounded by the number of CIs in $\Tmc$ and
\item [(ii)] The number of replacements of an existing  CI $\alpha \in \Hmc_{add}$ is
bounded polynomially in $ |\Tmc|$.
\end{enumerate}
\end{lemma}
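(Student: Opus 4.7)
\smallskip
\noindent
\textbf{Plan for Lemma~\ref{lem:term200}.} The plan is to establish (i) by an injection from positions in $\Hmc_{add}$ to CIs of $\Tmc$, and (ii) by classifying replacements at a fixed position into two types and bounding each separately.

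For (i), I would associate to each CI $\gamma$ ever appended to $\Hmc_{add}$ a \emph{witness} CI $\beta(\gamma) \in \Tmc$ and show the association is injective. The natural choice is to let $\beta(\gamma)$ be a CI whose lhs is implied by ${\sf lhs}(\gamma)$ under $\Tmc$ and which is essentially responsible for ${\sf rhs}(\gamma)$ becoming true at the root of the canonical model $\Imc_{{\sf lhs}(\gamma),\Tmc}$; Lemma~\ref{lem:can2} (the locality property of canonical models of \ourDLLitehorn TBoxes) guarantees that such a $\beta$ exists with lhs a conjunction of basic concepts entailed by ${\sf lhs}(\gamma)$ under $\Tmc$. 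For injectivity, suppose $\gamma_1$ is appended first and $\gamma_2$ is appended later with $\beta(\gamma_1)=\beta(\gamma_2)=\beta$. Let $\alpha_1$ be the CI currently at $\gamma_1$'s slot when $\gamma_2$ is being processed. Since CIs in $\Hmc_{add}$ are always $\Tmc$-essential (Lemma~\ref{lem:always-t-essential}) and replacements only shrink the lhs or strengthen the rhs, one can trace that ${\sf lhs}(\beta)$ is still entailed (under $\Tmc$) by ${\sf lhs}(\alpha_1)\cap{\sf lhs}(\gamma_2)$, and that the corresponding conjunct on the rhs of $\beta$ provides an $A$ (in the CN-Refine case) or an $\exists r.D$ (in the $\exists$-Refine case) satisfying the refinement condition in Algorithm~\ref{alg:refine-left} resp.~Algorithm~\ref{alg:refine-exists}, contradicting the assumption that $\gamma_2$ was appended rather than merged with $\alpha_1$.

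For (ii), replacements of a CI at a fixed position come in two kinds. \textbf{Type A (lhs-shrinking)} occurs in CN-Refine Line~3, in $\exists$-Refine Line~7, and in $\exists$-Refine Line~5 when ${\sf lhs}(\alpha_i)\cap{\sf lhs}(\gamma)\subsetneq{\sf lhs}(\alpha_i)$. For $\exists$-Refine Line~7, if ${\sf lhs}(\alpha_i)\cap{\sf lhs}(\gamma)={\sf lhs}(\alpha_i)$ then, because $\alpha_i\in\Hmc$, we would have $\Hmc\models {\sf lhs}(\alpha_i)\cap{\sf lhs}(\gamma)\sqsubseteq{\sf rhs}(\alpha_i)$ and hence $\Tmc\models{\sf lhs}(\alpha_i)\cap{\sf lhs}(\gamma)\sqsubseteq C\sqcap{\sf rhs}(\alpha_i)$, contradicting the condition that Line~7 was entered; so Type A strictly shrinks ${\sf lhs}$ and can occur at most $|\Sigma_\Tmc|$ times per slot. \textbf{Type B (lhs-preserving, rhs-strengthening)} occurs only in $\exists$-Refine Line~5 when ${\sf lhs}(\alpha_i)\cap{\sf lhs}(\gamma)={\sf lhs}(\alpha_i)$; then the new rhs $D^\ast$ satisfies $\emptyset\models D^\ast\sqsubseteq C\sqcap{\sf rhs}(\alpha_i)$ and $\emptyset\not\models{\sf rhs}(\alpha_i)\sqsubseteq D^\ast$ (otherwise $\Hmc$ would entail ${\sf lhs}(\alpha_i)\cap{\sf lhs}(\gamma)\sqsubseteq C$, contradicting the outer condition in Algorithm~\ref{alg:refine-exists}). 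Following Lemma~\ref{lem:newpolylem}, I would prove the analog of Lemma~\ref{lem:size} for \ourDLLitehorn, stating that the number of nodes in the rhs tree of any $\Tmc$-essential CI with a fixed lhs $L$ is bounded by $\sum_{D\in L^{\Tmc}}n_D$, with $L^{\Tmc}$ defined as in Section~\ref{sec:dllite} but using the conjunction $L$ in place of $A$; this adaptation is straightforward because the argument of Lemma~\ref{lem:size} relies on Lemma~\ref{lem:can2}, which applies to \ourDLLitehorn. Combined with the ``claim'' from the proof of Lemma~\ref{lem:newpolylem}, showing that each Type B replacement strictly increases the number of nodes of the rhs tree, this bounds Type B replacements per slot polynomially in~$|\Tmc|$.

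The main obstacle is the uniqueness portion of (i): while the existence of a witness $\beta\in\Tmc$ is a routine consequence of Lemma~\ref{lem:can2}, showing injectivity requires carefully tracking, across a possibly long sequence of intermediate replacements at $\gamma_1$'s slot, that the rhs of the corresponding target CI $\beta$ remains available to license a merge with $\gamma_2$; the $\Tmc$-essential and left-saturated invariants plus the greedy ``first such $\alpha_i$'' choice in both refinement subroutines are what make this go through.
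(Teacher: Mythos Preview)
Your plan is correct and, for part~(ii), matches the paper exactly: the paper's Lemma~\ref{lem:horn-replacements} splits replacements into those that strictly shrink ${\sf lhs}(\alpha_i)$ (bounded by $|\Sigma_\Tmc|$) and those in Line~\ref{alg:line-replacement2} of $\exists$-Refine with unchanged lhs, and bounds the latter via the \ourDLLitehorn analogue of Lemma~\ref{lem:size} (Lemma~\ref{lem:size2}) together with the same tree-growth claim you cite from Lemma~\ref{lem:newpolylem}.

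For part~(i), the paper takes the same overall route---an injection from positions of $\Hmc_{add}$ into CIs of a (w.l.o.g.\ rhs-primitive) $\Tmc$---but frames it differently, and the difference matters. Instead of fixing a witness $\beta(\gamma)$ at \emph{appending time} and tracing it through later replacements, the paper defines a notion of \emph{target} (Definition~\ref{def:map}): $\alpha$ has target $\beta\in\Tmc$ if ${\sf lhs}(\beta)\subseteq{\sf lhs}(\alpha)$ syntactically and some $D\in{\sf rhs}(\alpha)\setminus{\sf lhs}(\alpha)$ satisfies $\emptyset\models{\sf rhs}(\beta)^{\sf sat}\sqsubseteq D$. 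It then shows (Lemma~\ref{lem:has-target}) that every $\Tmc$-essential CI has some target, and maintains as an \emph{invariant over iterations} (Lemma~\ref{lem:target}, via two short replacement lemmas) that if $\alpha_j$ has target $\beta$ then ${\sf lhs}(\beta)\not\subseteq{\sf lhs}(\alpha_i)$ for all $i<j$. The reason this framing is cleaner than yours is precisely the obstacle you flag: after intermediate replacements, ${\sf lhs}$ at $\gamma_1$'s slot may have shrunk below ${\sf lhs}(\beta(\gamma_1))$, so your ``trace'' does not preserve the original witness; the paper's invariant sidesteps this by allowing the target of the \emph{current} CI at a slot to change over time while still guaranteeing that distinct current positions have distinct targets. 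Your identified ingredients (left saturation, $\Tmc$-essentiality, the first-such-$\alpha_i$ rule) are exactly what the paper uses to push the invariant through.
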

The rest of the section is devoted to proving Lemma~\ref{lem:term200}.
We first show Point~(ii) of Lemma~\ref{lem:term200} and start by generalising
Lemma~\ref{lem:size} on the size of $\Tmc$-essentials CIs. For any conjunction $C$ of concept names we set
\begin{eqnarray*}
C^{\Tmc} & = & \{D \mid \Tmc \models  
C \sqsubseteq A_1 \sqcap \cdots \sqcap A_k \text{ and }
A_1 \sqcap \cdots \sqcap A_k \sqsubseteq D \in \Tmc\} \cup \\
    & &  \{B \mid \Tmc\models C \sqsubseteq B, B \text{ basic concept over $\Sigma_{\Tmc}$}\}
\end{eqnarray*}
Recall that for any concept expression $C$ we denote by $n_{C}$ the number of nodes in the 
tree $T_{C}$ corresponding to $C$.
\begin{lemma}\label{lem:size2}
If $\alpha$ is \Tmc-essential, then $n_{{\sf rhs}(\alpha)} \leq \sum_{D \in {\sf lhs}(\alpha)^\Tmc} n_D$.
\end{lemma}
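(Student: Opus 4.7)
The plan is to follow the structure of Lemma~\ref{lem:size}, substituting the \ourDLLite version of Lemma~\ref{lem:can2} with its \ourDLLitehorn counterpart (which yields a \emph{set} of basic concepts rather than a single one), and exploiting the extended definition of ${\sf lhs}(\alpha)^\Tmc$ that also collects right-hand sides of CIs whose conjunctive left-hand sides are triggered by ${\sf lhs}(\alpha)$.

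First, I would set $D_0 = \bigsqcap_{D \in {\sf lhs}(\alpha)^\Tmc} D$ and verify $\Tmc \models {\sf lhs}(\alpha) \equiv D_0$: every $A \in {\sf lhs}(\alpha)$ is a basic concept belonging to ${\sf lhs}(\alpha)^\Tmc$ and so appears as a conjunct of $D_0$, while every conjunct of $D_0$ is entailed by ${\sf lhs}(\alpha)$ under $\Tmc$ by construction. Combined with $\Tmc \models \alpha$ (a consequence of \Tmc-essentiality), this gives $\Tmc \models D_0 \sqsubseteq {\sf rhs}(\alpha)$, and Lemma~\ref{lem:can1} yields a homomorphism $h : T_{{\sf rhs}(\alpha)} \to \Imc_{D_0, \Tmc}$ mapping $\rho_{{\sf rhs}(\alpha)}$ to $\rho_{D_0}$. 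The proof of Lemma~\ref{lem:isoemb} carries over to \ourDLLitehorn verbatim, using only the four saturation and merging conditions of Definition~\ref{def:t-essential-horn}, so I would conclude that $h$ is an isomorphic embedding.

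The core step is to show that $h$ actually maps $T_{{\sf rhs}(\alpha)}$ into $\Delta^{\Imc_{D_0}}$; injectivity then yields $n_{{\sf rhs}(\alpha)} \leq n_{D_0} \leq \sum_{D \in {\sf lhs}(\alpha)^\Tmc} n_D$. Arguing by contradiction, I would pick a node $d'$ of minimal depth with $h(d') \notin \Delta^{\Imc_{D_0}}$, let $d$ be its parent, $r = l(d, d')$, and $C''$ the subtree rooted at $d'$; injectivity forces the entire subtree at $d'$ to map outside $\Delta^{\Imc_{D_0}}$. Applying the \ourDLLitehorn version of Lemma~\ref{lem:can2} provides $I \subseteq N_{\Imc_{D_0}}(h(d))$ with $\Tmc \models \bigsqcap_{E \in I} E \sqsubseteq \exists r.C''$; using named form to rewrite any $\exists r_0.\top \in I$ as $A_{r_0}$ and invoking the isomorphic-embedding property to place every such concept name into $l(d)$, I would conclude $\Tmc \models l(d) \sqsubseteq \exists r.C''$.

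A final case split closes the argument. If $d$ is not the root of $T_{{\sf rhs}(\alpha)}$, this directly violates $\alpha$ being decomposed for $\Tmc$. If $d = \rho_{{\sf rhs}(\alpha)}$ and hence $h(d) = \rho_{D_0}$, then by construction of $\Imc_{D_0, \Tmc}$ some rule application at $\rho_{D_0}$ for a CI $B_1 \sqcap \cdots \sqcap B_m \sqsubseteq D \in \Tmc$ (with each $B_i$ a concept name, by named form) produced the copy of $\Imc_D$ containing $h(d')$; then $\rho_{D_0} \in B_i^{\Imc_{D_0, \Tmc}}$ for all $i$ gives $\Tmc \models D_0 \sqsubseteq B_1 \sqcap \cdots \sqcap B_m$ by Lemma~\ref{lem:can1}, whence $\Tmc \models {\sf lhs}(\alpha) \sqsubseteq B_1 \sqcap \cdots \sqcap B_m$ and $D \in {\sf lhs}(\alpha)^\Tmc$. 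Hence $D$ is a conjunct of $D_0$ with $\rho_{D_0} \in D^{\Imc_{D_0}}$ already, so the rule for that CI is never triggered at $\rho_{D_0}$ --- contradiction. The main obstacle will be this root case: ruling it out is precisely what forces ${\sf lhs}(\alpha)^\Tmc$ to include the triggered CI right-hand sides, and it relies essentially on the set-valued conclusion of the Horn version of Lemma~\ref{lem:can2}, which the single-concept \ourDLLite version would not deliver.
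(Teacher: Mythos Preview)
Your proposal is correct and follows essentially the same approach as the paper's proof, which simply states that one argues ``in almost the same way as in the proof of Lemma~\ref{lem:size}'' using the \ourDLLitehorn version of Lemma~\ref{lem:can2}. You have in fact filled in considerably more detail than the paper provides: the paper does not spell out the root versus non-root case split or the verification that $\Tmc\models {\sf lhs}(\alpha)\equiv D_0$, whereas you do. One minor remark on your closing commentary: the set-valued conclusion of Lemma~\ref{lem:can2} is what makes the \emph{non-root} case go through (since the Horn notion of ``decomposed'' tests $\Tmc\not\models l(d)\sqsubseteq \exists r.C'$ for the full label $l(d)$ rather than a single concept name), while the root case is handled, as you correctly argue, by the enlarged definition of ${\sf lhs}(\alpha)^\Tmc$ and the canonical-model construction directly.
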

\begin{proof}
The proof is almost the same as the proof of Lemma~\ref{lem:size}.
Assume $\alpha$ is $\Tmc$-essential. 
Let $D_0:= \bigsqcap_{D \in {\sf lhs}(\alpha)^{\Tmc}}D$ and let $\Imc_{D_0,\Tmc}$ be the canonical model of 
$D_0$ and \Tmc. Now one can prove in almost the same way as in the proof of Lemma~\ref{lem:size}
that the homomorphism $h$ from $T_{{\sf rhs}(\alpha)}$ into $\Imc_{D_{0},\Tmc}$
mapping $\rho_{{\sf rhs}(\alpha)}$ to $\rho_{D_{0},\Tmc}$ is an injective mapping into $\Imc_{D_{0}}$ (using
Lemma~\ref{lem:can2} for \ourDLLitehorn instead of \ourDLLite).
\end{proof}
We are now in the position to prove Point~(ii) of Lemma~\ref{lem:term200}.

\begin{lemma}\label{lem:horn-replacements}
The number of replacements of an existing  CI 
$\alpha \in \Hmc_{add}$ is
bounded polynomially in $ |\Tmc|$. %
\end{lemma}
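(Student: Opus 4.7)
The plan is to split the replacements of the CI at a fixed position $i$ of $\Hmc_{add}$ into two categories and bound each polynomially in $|\Tmc|$. Inspecting Algorithms~\ref{alg:refine-left} and~\ref{alg:refine-exists} shows that whenever $\alpha_i$ is replaced by a fresh CI $\alpha_i'$, the left-hand side satisfies ${\sf lhs}(\alpha_i') = {\sf lhs}(\alpha_i)\cap{\sf lhs}(\gamma) \subseteq {\sf lhs}(\alpha_i)$ for the current counterexample $\gamma$ (recall that concept, role, parent/child, and sibling saturation do not alter the left-hand side). Hence the left-hand sides appearing at position $i$ form a weakly decreasing sequence of subsets of $\Sigma_{\Tmc}$, and the number of steps in which the inclusion is strict --- call these \emph{shrinking} steps --- is trivially bounded by $|\Sigma_{\Tmc}|$.

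It remains to bound the number of \emph{non-shrinking} replacements between two consecutive shrinking ones. I first plan to show that such a step must come from Line~\ref{alg:line-replacement2} of $\exists$-Refine. For CN-Refine, the trigger $\Tmc\models{\sf lhs}(\alpha_i)\cap{\sf lhs}(\gamma)\sqsubseteq A$ together with non-shrinking gives $\Tmc\models{\sf lhs}(\alpha_i)\sqsubseteq A$; by Lemma~\ref{lem:always-t-essential} the CI $\alpha_i$ is concept saturated for $\Tmc$, so $A\in{\sf rhs}(\alpha_i)$, contradicting the companion trigger $\Hmc\not\models{\sf lhs}(\alpha_i)\cap{\sf lhs}(\gamma)\sqsubseteq A$ (since $\alpha_i\in\Hmc$). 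An analogous calculation using $\Tmc\models\alpha_i$ and the outer condition of $\exists$-Refine rules out the else-branch at Line~\ref{alg:line-replacement3}: it would force $\Tmc\models{\sf lhs}(\alpha_i)\sqsubseteq C\sqcap{\sf rhs}(\alpha_i)$, contradicting the negation asserted at Line~\ref{alg:line-condition22}.

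For a non-shrinking step taken in Line~\ref{alg:line-replacement2}, the new right-hand side $D^{\ast}$ satisfies $\emptyset\models D^{\ast}\sqsubseteq{\sf rhs}(\alpha_i)\sqcap C$, so in particular $\emptyset\models D^{\ast}\sqsubseteq{\sf rhs}(\alpha_i)$. I would then verify strictness: if $\emptyset\models{\sf rhs}(\alpha_i)\sqsubseteq D^{\ast}$ held, then combined with $\emptyset\models D^{\ast}\sqsubseteq C$ and $\alpha_i\in\Hmc$ we would obtain $\Hmc\models{\sf lhs}(\alpha_i)\sqsubseteq C$, contradicting the trigger of that branch. Both $\alpha_i$ and its replacement are \Tmc-essential (Lemma~\ref{lem:always-t-essential}) with identical left-hand side, so the Claim from the proof of Lemma~\ref{lem:newpolylem} --- applied through its obvious conjunctive-lhs analogue, which goes through via a Horn version of Lemma~\ref{lem:isoemb} based on Lemma~\ref{lem:can2} rather than its single-basic-concept specialisation --- yields that $T_{{\sf rhs}(\alpha_i)}$ is obtained from $T_{D^{\ast}}$ by removing at least one subtree. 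Thus each non-shrinking replacement at position $i$ strictly increases the node count of the right-hand side.

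Lemma~\ref{lem:size2} then caps the node count of any \Tmc-essential CI at position $i$ by $\sum_{D\in L^{\Tmc}}n_D$, which is polynomial in $|\Tmc|$. Hence between any two consecutive shrinking steps there are polynomially many non-shrinking ones, giving an overall polynomial bound of $|\Sigma_{\Tmc}|\cdot\mathrm{poly}(|\Tmc|)=\mathrm{poly}(|\Tmc|)$ replacements at position $i$, as required. The only non-routine step I anticipate is checking that the node-count Claim of Lemma~\ref{lem:newpolylem} and the isomorphic-embedding Lemma~\ref{lem:isoemb} transfer cleanly to conjunctive left-hand sides; this amounts to rerunning those proofs with the canonical model $\Imc_{L,\Tmc}$ for a conjunction $L$ in place of a single concept name, and invoking the full strength of Lemma~\ref{lem:can2} at the step where the \ourDLLite argument needed only a single basic concept in the one-neighbourhood.
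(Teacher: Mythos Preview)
Your proposal is correct and follows essentially the same approach as the paper: split replacements into those that strictly shrink the left-hand side (bounded by $|\Sigma_\Tmc|$) and those that do not, show the latter can only arise in Line~\ref{alg:line-replacement2} of $\exists$-Refine, and then use the conjunctive-lhs analogue of the Claim in Lemma~\ref{lem:newpolylem} together with Lemma~\ref{lem:size2} to bound the non-shrinking replacements. Your write-up is in fact somewhat more explicit than the paper's, since you spell out the contradictions ruling out CN-Refine and Line~\ref{alg:line-replacement3} in the non-shrinking case (the paper just asserts these, referring back to remarks made after the algorithm description) and you explicitly flag the need to lift Lemma~\ref{lem:isoemb} to conjunctive left-hand sides via the full Lemma~\ref{lem:can2}, which the paper also relies on but leaves implicit.
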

\begin{proof}
A CI $\alpha \in \Hmc_{add}$ can be replaced 
in Line~\ref{alg:line-replacement1} of
CN-Refine or in Lines~\ref{alg:line-replacement2}
or~\ref{alg:line-replacement3} of $\exists$-Refine. 
If $\alpha$ is replaced by $\alpha'$ in Line~\ref{alg:line-replacement1} of
CN-Refine or in Line~\ref{alg:line-replacement3} of
$\exists$-Refine
then ${\sf lhs}(\alpha')\subsetneq {\sf lhs}(\alpha)$, 
so the number of replacements is bounded by $|\Sigma_\Tmc|$.
If $\alpha$ is replaced by $\alpha'$ in Line~\ref{alg:line-replacement2}
of $\exists$-Refine, then either ${\sf lhs}(\alpha')\subsetneq {\sf lhs}(\alpha)$
or ${\sf lhs}(\alpha') = {\sf lhs}(\alpha)$. For the latter case one can
show as in the proof of Lemma~\ref{lem:newpolylem} for \ourDLLite, the following 

\medskip
\noindent
{\bf Claim}. If $A_1\sqcap \cdots \sqcap A_n \sqsubseteq C$ and
$A_1\sqcap \cdots \sqcap A_n\sqsubseteq C'$ are $\Tmc$-essential, 
and $\emptyset \models C'\sqsubseteq C$, then $T_{C}$ is obtained from $T_{C'}$ by removing subtrees.

\smallskip

Thus, each time $\alpha \in \Hmc_{add}$ is
replaced in Line~\ref{alg:line-replacement2} 
of $\exists$-Refine without decreasing
the number of concept names in ${\sf lhs}(\alpha)$, 
the number $n_{{\sf rhs}(\alpha)}$ of nodes in the tree representation of ${\sf rhs}(\alpha)$
strictly increases. By Lemma~\ref{lem:size2}, $n_{{\sf rhs}(\alpha)}$
is bounded polynomially in $|\Tmc|$ and the lemma follows.
\end{proof}
We now come to the proof of Point~(i) of Lemma~\ref{lem:term200}. To formulate an upper bound on
the length of $\Hmc_{add}$ in terms of $\Tmc$ it is convenient to assume that the right-hand side of every
CI in $\Tmc$ is \emph{primitive}, that is, either a concept name or a concept expression of the 
form $\exists r.D$. This assumption is w.l.o.g. since one can equivalently transform every CI $C\sqsubseteq D_{1}\sqcap D_{2}$ 
into two CIs $C \sqsubseteq D_{1}$ and $C \sqsubseteq D_{2}$. We call such a TBox \emph{rhs-primitive}.
Note that CIs in \Hmc may still have multiple concepts on the right-hand side. 

A concept $C$ is called \emph{concept saturated for \Tmc} if
$\Tmc \models C\sqsubseteq C'$ whenever $C'$ results from $C$ by
adding a new concept name $A'$ to the label of some node in
$T_C$. Denote by $C^{\sf sat}$ the (unique) concept 
obtained from $C$ by adding concept names to the node labels of
$T_{C}$ until it is concept saturated for \Tmc.  The following definition
enables us to link the CIs in $\Hmc_{add}$ to the CIs in $\Tmc$.
\begin{definition}\label{def:map}
Let $\Tmc$ be rhs-primitive. We say that a CI $\alpha$ \emph{has target} $\beta \in \Tmc$ if 
\begin{enumerate}
\item ${\sf lhs}(\beta) \subseteq {\sf lhs}(\alpha)$ and 
\item there exists $D \in {\sf rhs}(\alpha)\setminus {\sf lhs}(\alpha)$ such that 
$\emptyset\models {\sf rhs}(\beta)^{\sf sat} \sqsubseteq
D$.
\end{enumerate}
\end{definition}
     
We aim to show that Algorithm \ref{alg:dl-lite-horn} maintains the invariant 
that 
\begin{itemize}

\item[(iii)] every $\alpha  \in \Hmc_{add}$ has some target $\beta  \in \Tmc$ and

\item[(iv)] every $\beta  \in \Tmc$  is the target of at most one $\alpha  \in \Hmc_{add}$.

\end{itemize}
Then Point~(i) of Lemma~\ref{lem:term200} clearly follows. 

\begin{example} \upshape To illustrate Definition~\ref{def:map},
  suppose that
  $$\Tmc=\{A_1 \sqcap A_4\sqsubseteq A_2, \ A_2 \sqsubseteq \exists
  r.A_3, \ A_3\sqsubseteq A_4, \ A_r \equiv \exists r.\top\}$$
  is the target TBox. $\Tmc$ is rhs-primitive.  To simplify notation,
  we use $\beta_i$ to denote the $i$-th CI occurring in \Tmc
  above. Assume $\Hmc_{basic}=\{\beta_3,\beta_4\}$ and
  $\Hmc_{add}=\emptyset$.  Let
  $\alpha_1 = A_1 \sqcap A_3 \sqsubseteq A_2$. Then there is no
  $\beta_i \in\Tmc$ such that $\alpha_1$ has target
  $\beta_i$. However, by applying left saturation for \Hmc to
  $\alpha_1$ we obtain
  $\alpha'_1 = A_1 \sqcap A_3\sqcap A_4 \sqsubseteq A_2$ and since
  ${\sf lhs}(\beta_1) \subseteq {\sf lhs}(\alpha'_1)$ and
  $A_2 \not\in {\sf lhs}(\alpha'_1)$, $\alpha'_1$ has target
  $\beta_1$.  For
  $\alpha_2 = A_1 \sqcap A_4 \sqsubseteq \exists r.A_3$, there is no
  $\beta_i \in \Tmc$ such that $\alpha_2$ has target~$\beta_i$. But
  $\alpha_2$ is not \Tmc-essential and making it \Tmc-essential
  results in
  $\alpha'_2 = A_1 \sqcap A_4 \sqsubseteq A_r\sqcap A_1\sqcap
  A_2\sqcap A_4 \sqcap \exists r.(A_3\sqcap A_4)$
  which again has target $\beta_1$.
Finally, let $\alpha_3 = A_2  \sqsubseteq 
\exists r.A_4$.
As ${\sf lhs}(\beta_2) \subseteq {\sf lhs}(\alpha_3)$ 
and $ \emptyset\models A_r \sqcap \exists r.(A_3\sqcap A_4) \sqsubseteq
\exists r.A_4$, 
\ $\alpha_3$ has target $\beta_2$. Note that $\alpha_3$ is not
\Tmc-essential, but the result of making it $\Tmc$-essential also has
target $\beta_2$.
\end{example}
Point~(iii) is a consequence of the following lemma. 
\begin{lemma}\label{lem:has-target}
Let $\Tmc$ be rhs-primitive and let $\gamma$ be a \Tmc-essential CI
such that $\emptyset\not\models \gamma$. Then $\gamma$ has some target $\beta \in \Tmc$.
¡“%
\end{lemma}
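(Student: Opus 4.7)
My plan is to locate, in the canonical model construction of $\Imc:=\Imc_{{\sf lhs}(\gamma),\Tmc}$, a CI $\beta^*\in\Tmc$ that is applied at the root $\rho$, and then show that $\beta^*$ is a target of $\gamma$. Using the named-form equivalence $A_r\equiv\exists r.\top$ I assume without loss of generality that ${\sf lhs}(\gamma)$ consists only of concept names, so $\rho$ has no outgoing edges in $\Imc_0:=\Imc_{{\sf lhs}(\gamma)}$.

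Since $\Tmc\models\gamma$ but $\emptyset\not\models\gamma$, I pick a top-level conjunct $D_0\in{\sf rhs}(\gamma)$ with $\emptyset\not\models{\sf lhs}(\gamma)\sqsubseteq D_0$, so in particular $D_0\notin{\sf lhs}(\gamma)$. Let $\Imc_0,\Imc_1,\ldots$ be the construction sequence. Because $\rho\in D_0^\Imc\setminus D_0^{\Imc_0}$, at least one CI of $\Tmc$ must be applied at $\rho$ during the construction: rules at other nodes create successors of those nodes only, and RIs cannot extend edges out of $\rho$ since there are none. Let $\beta^*=C_\beta\sqsubseteq E^*$ be the first such CI. Just before $\beta^*$ fires, $\rho$'s one-neighbourhood coincides with its initial one in $\Imc_0$, so ${\sf lhs}(\beta^*)$ is satisfied at $\rho$ in $\Imc_0$; combined with $\rho$ having no outgoing edges in $\Imc_0$, this forces ${\sf lhs}(\beta^*)$ to be a conjunction of concept names drawn from ${\sf lhs}(\gamma)$, yielding the first condition ${\sf lhs}(\beta^*)\subseteq{\sf lhs}(\gamma)$ in the definition of target.

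For the second condition I split on the primitive shape of $E^*={\sf rhs}(\beta^*)$. If $E^*=A^*$ is a concept name, then $\beta^*$ adds $A^*$ to $\rho$'s label, and since $A^*$ was absent before firing we have $A^*\notin{\sf lhs}(\gamma)$; concept saturation of $\gamma$ at the root of $T_{{\sf rhs}(\gamma)}$ places $A^*\in{\sf rhs}(\gamma)$, and taking $D:=A^*$ yields $\emptyset\models{\sf rhs}(\beta^*)^{\sf sat}\sqsubseteq A^*$ trivially. If $E^*=\exists r.E$, then $\beta^*$ creates a fresh $r$-successor $d_\rho$ of $\rho$ with an attached copy of $\Imc_E$; I use the \ourDLLitehorn analogue of Lemma~\ref{lem:isoemb} (together with Lemma~\ref{lem:can2}) to obtain an isomorphic embedding $h$ of $T_{{\sf rhs}(\gamma)}$ into $\Imc$, and I order the rule applications so that $\beta^*$ is the first one whose fresh successor lies in the image of $h$. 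The corresponding child $d_1$ of the root of $T_{{\sf rhs}(\gamma)}$ then satisfies $h(d_1)=d_\rho$, and role saturation of $\gamma$ together with the named-form ban on non-trivial role equivalences forces the edge label to coincide with $r$. Setting $D:=\exists r.F_{d_1}$, with $F_{d_1}$ the subtree of $T_{{\sf rhs}(\gamma)}$ at $d_1$, we obtain $D\in{\sf rhs}(\gamma)\setminus{\sf lhs}(\gamma)$; the entailment $\emptyset\models{\sf rhs}(\beta^*)^{\sf sat}\sqsubseteq D$ then follows from relating ${\sf rhs}(\beta^*)^{\sf sat}$ to the canonical model $\Imc_{\exists r.E,\Tmc}$ via Lemma~\ref{lem:can2}.

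The main obstacle is the existential sub-case: justifying the ordering of rule applications that makes $\beta^*$ produce an image-used successor, and providing a concept-name fall-back (for example the name $A_r$, which is implied by $\exists r.E$ in $\Tmc$ thanks to named form) in case no such ordering exists. Both steps rely on the interplay between the $\Tmc$-essentiality of $\gamma$---especially concept saturation, role saturation, and decomposedness---and the local structure of the canonical model near $\rho$.
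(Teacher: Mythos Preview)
Your concept-name sub-case is correct and mirrors the first step of the paper's inductive argument. The existential sub-case, however, has a genuine gap that neither of your two proposed remedies closes. The $A_r$ fall-back fails whenever $A_r\in{\sf lhs}(\gamma)$, which is not excluded (for instance if ${\sf lhs}(\gamma)$ already contains $A_r$ and every CI applicable at $\rho$ in $\Imc_0$ creates an $r$-edge). The re-ordering idea collides with your earlier use of ``first CI at $\rho$'': once you instead pick $\beta^*$ so that its fresh successor lies in the image of $h$, you may lose ${\sf lhs}(\beta^*)\subseteq{\sf lhs}(\gamma)$, because the image-used CI can become applicable only after earlier rule applications have enlarged $\rho$'s label beyond ${\sf lhs}(\gamma)$ (e.g.\ an existential fires, then $\exists r.\top\sqsubseteq A_r$ adds $A_r$, and only then does a CI with $A_r$ on its left-hand side apply). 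Neither Lemma~\ref{lem:isoemb} nor Lemma~\ref{lem:can2} recovers Condition~1 in that situation.

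The paper resolves this tension by arguing by contradiction and first proving a closure property (its Claim~1): if $\gamma$ has no target, then every concept name $A$ with $\Tmc\models{\sf lhs}(\gamma)\sqsubseteq A$ already lies in ${\sf lhs}(\gamma)$. The proof is an induction over the canonical-model stages that absorbs precisely the multi-step chains your direct argument cannot handle. With Claim~1 available the existential case is clean: the Lemma~\ref{lem:size}-style embedding of $T_{\exists r.F}$ into $\Delta^{\Imc_{D_0}}$ produces a $\beta\in\Tmc$ with $\Tmc\models{\sf lhs}(\gamma)\sqsubseteq{\sf lhs}(\beta)$ and $\emptyset\models{\sf rhs}(\beta)^{\sf sat}\sqsubseteq\exists r.F$, and Claim~1 upgrades the former entailment to ${\sf lhs}(\beta)\subseteq{\sf lhs}(\gamma)$. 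Your direct approach would need an equivalent of Claim~1 to go through.
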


\begin{proof}
Assume $\gamma$ is $\Tmc$-essential and $\emptyset\not\models\gamma$. Assume for a proof by contradiction
that $\gamma$ has no target in $\Tmc$. We first show the following

\smallskip
\noindent
{\bf Claim 1.} If $\gamma$ has no target in $\Tmc$ and $\Tmc\models
{\sf lhs}(\gamma)\sqsubseteq A$ then $A\in {\sf lhs}(\gamma)$, for all $A \in \NC$.

\smallskip
For the proof of Claim~1, consider the canonical model $\Imc_{{\sf lhs}(\gamma),\Tmc}$ 
of ${\sf lhs}(\gamma)$ and $\Tmc$. Recall that $\rho_{{\sf lhs}(\gamma),\Tmc}$ denotes the
root of $\Imc_{{\sf lhs}(\gamma),\Tmc}$. By Lemma~\ref{lem:can1}, $\rho_{{\sf lhs}(\gamma),\Tmc}\in D^{\Imc_{{\sf lhs}(\gamma),\Tmc}}$
iff $\Tmc\models {\sf lhs}(\gamma)\sqsubseteq D$, for any concept $D$. Thus, it suffices to prove that
$\rho_{{\sf lhs}(\gamma),\Tmc}\in A^{\Imc_{{\sf lhs}(\gamma),\Tmc}}$ implies $A\in {\sf lhs}(\gamma)$, for all concept names $A$.
The proof is by induction over the sequence $\Imc_{0},\ldots$ used to construct $\Imc_{{\sf lhs}(\gamma),\Tmc}$,
where $\Imc_{0}=\Imc_{{\sf lhs}(\gamma)}$. For $\Imc_{{\sf lhs}(\gamma)}$ this is the case by definition. 
Now suppose the claim holds for $\Imc_{n}$ and 
$\rho_{{\sf lhs}(\gamma),\Tmc}\in A^{\Imc_{n+1}}\setminus A^{\Imc_{n}}$. Then there either exist concept names $A_{1},\ldots,A_{k}$
with $A_{1}\sqcap \cdots \sqcap A_{k}\sqsubseteq A\in \Tmc$ and $\rho_{{\sf lhs}(\gamma),\Tmc}\in (A_{1}\sqcap \cdots \sqcap A_{k})^{\Imc_{n}}$
or there exists $\exists r.\top$ with $\exists r.\top\sqsubseteq A\in \Tmc$ and $\rho_{{\sf lhs}(\gamma),\Tmc}\in 
(\exists r.\top)^{\Imc_{n}}$. In the first case, we have $\{A_{1},\ldots,A_{k}\}\subseteq {\sf lhs}(\gamma)$ by induction hypothesis
and so $A\in {\sf lhs}(\gamma)$ because otherwise $A_{1}\sqcap \cdots \sqcap A_{k}\sqsubseteq A$ would be a target of 
$\gamma$.
In the second case there must be an $\Imc_{m}$ with $m<n$ such that there are 
$E_{1}\sqcap \cdots \sqcap E_{k}\sqsubseteq \exists s.D\in \Tmc$
and $s\sqsubseteq r\in \Tmc$ with $\rho_{{\sf lhs}(\gamma),\Tmc}\in (E_{1}\sqcap \cdots \sqcap E_{k})^{\Imc_{m}}$ (the case $s=r$ is 
similar and omitted). %
It follows that $A\in {\lhs}(\gamma)$ because otherwise $E_{1}\sqcap \cdots \sqcap E_{k}\sqsubseteq \exists s.D$ would be a target of 
$\gamma$ since, by induction hypothesis, $\{E_{1},\ldots,E_{k}\}\subseteq {\sf lhs}(\gamma)$ and 
$A\in (\exists s.D)^{\sf sat}$. This finishes the proof of Claim~1.

By Claim~1, as $\emptyset\not\models \gamma$, there is a conjunct of the form $\exists r.F$ in ${\sf rhs}(\gamma)$.
Let $({\sf lhs}(\alpha))^{\Tmc}$ be as above and $\Imc_{ D_0,\Tmc}$ be the canonical model of
$D_0 = \bigsqcap_{D\in ({\sf lhs}(\alpha))^{\Tmc}}D$ and \Tmc.  
As $\exists r.F \in {\sf rhs}(\gamma)$ and 
$\gamma$ is \Tmc-essential one can show in the same way
as in the proof of Lemma~\ref{lem:size} that there is an injective homomorphism 
from the labelled tree $T_{\exists r.F}$ corresponding to $\exists r.F$
into the restriction of $\Imc_{ D_0,\Tmc}$ to $\Delta^{\Imc_{D_0}}$
mapping the root of $T_{\exists r.F}$ to the root $\rho_{D_{0},\Tmc}$ of $\Imc_{ D_0,\Tmc}$.
Thus, by definition of $({\sf lhs}(\alpha))^{\Tmc}$, there is 
$\beta \in \Tmc$ such that $\Tmc\models {\sf lhs}(\alpha)\sqsubseteq 
{\sf lhs}(\beta)$ and $\emptyset\models {\sf rhs}(\beta)^{\sf sat}\sqsubseteq \exists r.F$. 
By Lemma~\ref{lem:can1}, $\rho_{{\sf lhs}(\gamma),\Tmc} \in
A^{\Imc_{{\sf lhs}(\gamma),\Tmc}}$, for all $A \in {\sf lhs}(\beta)$. Hence, by Claim~1 and again Lemma~\ref{lem:can1},
${\sf lhs}(\beta)\subseteq {\sf lhs}(\gamma)$. We have shown that $\gamma$ has target $\beta$ and so 
derived a contradiction.  
\end{proof}

Point~(iii) is a direct consequence of Lemma~\ref{lem:has-target} and the fact that all CIs in $\Hmc_{add}$
are \Tmc-essential (Lemma~\ref{lem:always-t-essential}).
To prove Point~(iv), we first establish the following intermediate
Lemmas~\ref{lem:replace01} and~\ref{lem:replace02}.   

\begin{lemma}\label{lem:replace01}
Let $\Tmc$ be rhs-primitive and let $\Hmc, \gamma$ be inputs to CN-Refine.
Let $\alpha_{i}\in \Hmc_{add}$, $\beta \in \Tmc$, and concept name $A\not\in {\sf lhs}(\gamma)$ satisfy the following conditions:
(a) ${\sf lhs}(\beta)\subseteq {\sf lhs}(\gamma)$; (b) $\Tmc\models{\sf lhs}(\beta)\sqsubseteq A$; (c) ${\sf lhs}(\beta) \subseteq {\sf lhs}(\alpha_i)$.
Then there is some $j \leq i$ such that $\alpha_j$ is replaced
in Line~\ref{alg:line-replacement1} of CN-Refine. 
\end{lemma}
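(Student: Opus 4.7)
The plan is to verify that the pair $(\alpha_i, A)$ satisfies the condition in the ``If'' branch of CN-Refine (Line~1 of Algorithm~\ref{alg:refine-left}). Once this is confirmed, the algorithm replaces ``the first such $\alpha_i$'' in the ordered list $\Hmc_{add}$, which must have index $j \leq i$, yielding the conclusion.

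First I would establish the $\Tmc$-entailment $\Tmc \models {\sf lhs}(\alpha_i)\cap {\sf lhs}(\gamma) \sqsubseteq A$. Hypotheses (a) and (c) give ${\sf lhs}(\beta)\subseteq {\sf lhs}(\gamma)$ and ${\sf lhs}(\beta)\subseteq {\sf lhs}(\alpha_i)$, and hence ${\sf lhs}(\beta)\subseteq {\sf lhs}(\alpha_i)\cap {\sf lhs}(\gamma)$. Combined with (b), the entailment follows by monotonicity of the left-hand side.

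Next I would establish the non-entailment $\Hmc\not\models {\sf lhs}(\alpha_i)\cap{\sf lhs}(\gamma)\sqsubseteq A$. The key observation is that whenever CN-Refine is invoked from Algorithm~\ref{alg:dl-lite-horn} at Line~8, its second argument is a CI of the form ${\sf lhs}(\gamma')\sqsubseteq A'$ with ${\sf lhs}(\gamma')$ having just been left saturated for $\Hmc$ at Line~\ref{alg:line-t-essential1}. Thus ${\sf lhs}(\gamma)$ is left saturated for $\Hmc$, so the assumption $A\notin {\sf lhs}(\gamma)$ implies $\Hmc\not\models {\sf lhs}(\gamma)\sqsubseteq A$. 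Were $\Hmc\models {\sf lhs}(\alpha_i)\cap {\sf lhs}(\gamma)\sqsubseteq A$ to hold, then from ${\sf lhs}(\alpha_i)\cap {\sf lhs}(\gamma)\subseteq {\sf lhs}(\gamma)$ and monotonicity we would conclude $\Hmc\models {\sf lhs}(\gamma)\sqsubseteq A$, a contradiction.

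With both conditions verified, the pair $(\alpha_i, A)$ witnesses the ``If'' branch of CN-Refine, so the replacement in Line~\ref{alg:line-replacement1} fires on some $\alpha_j$ with $j \leq i$. There is no real obstacle: the argument reduces to propositional monotonicity of the left-hand side combined with the left saturation invariant ensured by Line~\ref{alg:line-t-essential1} of Algorithm~\ref{alg:dl-lite-horn}.
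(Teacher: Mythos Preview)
Your proof is correct and follows essentially the same approach as the paper: both establish $\Tmc\models{\sf lhs}(\alpha_i)\cap{\sf lhs}(\gamma)\sqsubseteq A$ from (a)--(c) and ${\sf lhs}(\beta)\subseteq{\sf lhs}(\alpha_i)\cap{\sf lhs}(\gamma)$, then use left saturation of ${\sf lhs}(\gamma)$ for $\Hmc$ together with $A\notin{\sf lhs}(\gamma)$ to obtain $\Hmc\not\models{\sf lhs}(\gamma)\sqsubseteq A$ and hence $\Hmc\not\models{\sf lhs}(\alpha_i)\cap{\sf lhs}(\gamma)\sqsubseteq A$, so the If-branch of CN-Refine fires at some $\alpha_j$ with $j\leq i$. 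Your version spells out the monotonicity steps a bit more explicitly, but the argument is the same.
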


\begin{proof}
Assume $\Hmc$, $\gamma$, $\alpha_{i}$, $\beta$, and $A$ satisfy the conditions of the lemma.
If CN-Refine replaces some $\alpha_j$ with $j < i$
then we are done. Suppose this does not happen. Then we need to show that $\alpha_i$ is replaced. 
By Conditions (a), (b), and (c), $\Tmc \models   {\sf lhs}(\gamma) \cap {\sf lhs}(\alpha_i) \sqsubseteq
A$. As $\gamma$ is \eleftcondition, $A \not\in {\sf lhs}(\gamma)$ implies
that $\Hmc \not\models {\sf lhs}(\gamma) \sqsubseteq A$. 
So $\Hmc \not\models {\sf lhs}(\gamma) \cap {\sf lhs}(\alpha_i) \sqsubseteq A$. 
Then, the condition in Lines~1 and~2 of  CN-Refine is satisfied and $\alpha_i$ is replaced. 
\end{proof}

\begin{lemma}\label{lem:replace02}
Let $\Tmc$ be rhs-primitive and let $\Hmc,\gamma$ be inputs to $\exists$-Refine.
If $\gamma$ has target $\beta \in \Tmc$ 
and $\alpha_i \in \Hmc_{add}$ satisfies ${\sf lhs}(\beta)
\subseteq {\sf lhs}(\alpha_i)$, then there is some $j \leq i$ such that $\alpha_j$ is replaced
in Line~\ref{alg:line-replacement2}
or~\ref{alg:line-replacement3} of
$\exists$-Refine.
\end{lemma}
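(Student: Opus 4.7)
My plan is to show that $\alpha_i$ itself already satisfies the guard condition in Lines~1--2 of Algorithm~\ref{alg:refine-exists} ($\exists$-Refine); since $\exists$-Refine replaces the \emph{first} qualifying $\alpha_j$ in $\Hmc_{add}$, this produces some $j\leq i$ such that $\alpha_j$ is replaced in either Line~\ref{alg:line-replacement2} or Line~\ref{alg:line-replacement3}. The argument parallels Lemma~\ref{lem:replace01} but is more delicate because the witness on the right-hand side can be an existential restriction rather than just a concept name.

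Since $\gamma$ has target $\beta$, fix a top-level conjunct $D^{\ast}\in{\sf rhs}(\gamma)\setminus{\sf lhs}(\gamma)$ with $\emptyset\models{\sf rhs}(\beta)^{\sf sat}\sqsubseteq D^{\ast}$. The first main step is to argue that $D^{\ast}$ must have the form $\exists r.F$. If $D^{\ast}$ were a concept name, then Lemma~\ref{lem:hom1} together with the definition of $(\cdot)^{\sf sat}$ would place $D^{\ast}$ in the root label of $T_{{\sf rhs}(\beta)^{\sf sat}}$, giving $\Tmc\models{\sf rhs}(\beta)\sqsubseteq D^{\ast}$ and hence $\Tmc\models{\sf lhs}(\beta)\sqsubseteq D^{\ast}$; using ${\sf lhs}(\beta)\subseteq{\sf lhs}(\gamma)$ this becomes $\Tmc\models{\sf lhs}(\gamma)\sqsubseteq D^{\ast}$. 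But $\exists$-Refine is called only when the test in Line~\ref{alg:line-condition2} of Algorithm~\ref{alg:dl-lite-horn} fails, so every concept name $\Tmc$-entailed from ${\sf lhs}(\gamma)$ is already $\Hmc$-entailed; combined with $\gamma$ being left saturated for $\Hmc$, this forces $D^{\ast}\in{\sf lhs}(\gamma)$, contradicting $D^{\ast}\notin{\sf lhs}(\gamma)$. Hence $D^{\ast}=\exists r.F$ for some role $r$ and concept $F$, and by the cap enforced in Line~\ref{line:at-most-one} of Algorithm~\ref{alg:dl-lite-horn}, $D^{\ast}$ is in fact the unique conjunct of that shape in ${\sf rhs}(\gamma)$.

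Setting $C:=D^{\ast}$, I then verify the two guard conditions of $\exists$-Refine at $\alpha_i$. The subsumption $\Tmc\models{\sf lhs}(\alpha_i)\cap{\sf lhs}(\gamma)\sqsubseteq C$ follows from ${\sf lhs}(\beta)\subseteq{\sf lhs}(\alpha_i)\cap{\sf lhs}(\gamma)$ together with $\Tmc\models{\sf lhs}(\beta)\sqsubseteq{\sf rhs}(\beta)^{\sf sat}\sqsubseteq C$. For the non-entailment $\Hmc\not\models{\sf lhs}(\alpha_i)\cap{\sf lhs}(\gamma)\sqsubseteq C$, by inclusion it suffices to show $\Hmc\not\models{\sf lhs}(\gamma)\sqsubseteq C$. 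From $\Hmc\not\models\gamma$, at least one top-level conjunct of ${\sf rhs}(\gamma)$ is not $\Hmc$-entailed from ${\sf lhs}(\gamma)$; reapplying the argument above, every concept-name top-level conjunct of ${\sf rhs}(\gamma)$ must lie in ${\sf lhs}(\gamma)$ and is thus trivially $\Hmc$-entailed, so the unentailed conjunct can only be the unique existential, namely $C$. The two guard conditions then hold, so $\exists$-Refine replaces the first qualifying $\alpha_j$, and this index is at most $i$.

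The main obstacle, used twice in the argument, is the failure of the CN-Refine trigger in Line~\ref{alg:line-condition2} of Algorithm~\ref{alg:dl-lite-horn}: once to rule out that $D^{\ast}$ is a concept name, and once to pin down that the conjunct of ${\sf rhs}(\gamma)$ not $\Hmc$-entailed from ${\sf lhs}(\gamma)$ is precisely the unique existential restriction $C$. Everything else is a straightforward manipulation using Lemma~\ref{lem:hom1}, the definition of having a target, and left saturation for $\Hmc$.
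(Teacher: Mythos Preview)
Your proof is correct and follows essentially the same approach as the paper's. Both arguments show that the guard in Lines~1--2 of $\exists$-Refine is satisfied at $\alpha_i$ by (i) using the failure of the test in Line~\ref{alg:line-condition2} of Algorithm~\ref{alg:dl-lite-horn} together with left saturation to deduce $\NC\cap{\sf rhs}(\gamma)\subseteq{\sf lhs}(\gamma)$, hence the target-witnessing conjunct must be the unique existential restriction $C$ by Line~\ref{line:at-most-one}; and (ii) verifying $\Tmc\models{\sf lhs}(\alpha_i)\cap{\sf lhs}(\gamma)\sqsubseteq C$ via ${\sf lhs}(\beta)\subseteq{\sf lhs}(\alpha_i)\cap{\sf lhs}(\gamma)$, and $\Hmc\not\models{\sf lhs}(\gamma)\sqsubseteq C$ via $\Hmc\not\models\gamma$ and the fact that all concept-name conjuncts of ${\sf rhs}(\gamma)$ are already in ${\sf lhs}(\gamma)$.
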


\begin{proof} 
Let $\Hmc$, $\gamma$, $\beta$, and $\alpha_{i}$ satisfy the conditions of the lemma.
If $\exists$-Refine replaces
some $\alpha_j$ with
$j < i$ then we are done. Suppose this does not happen. 
We need to show that  $\alpha_i$ is replaced. 
We first show that there is a concept $C$ of the form $\exists r.F$ in ${\sf rhs}(\gamma)$ such
that ${\sf lhs}(\gamma) \sqsubseteq C$ has target $\beta$.
Note that if Algorithm~\ref{alg:dl-lite-horn} calls 
$\exists$-Refine then
there is no concept name $A$ such that $\Tmc\models {\sf
lhs}(\gamma)\sqsubseteq A$ and $\Hmc\not\models {\sf
lhs}(\gamma)\sqsubseteq A$ (Line~\ref{alg:line-exists}). As $\gamma$ is left saturated for \Hmc and $\Tmc$-essential,
this implies $\NC \cap {\sf rhs}(\gamma)\subseteq {\sf lhs}(\gamma)$.
But then any $C \in {\sf rhs}(\gamma)\setminus {\lhs}(\gamma)$
with
$\emptyset\models {\sf rhs}(\beta)^{\sf sat}\sqsubseteq C$ is compound.
As $\gamma$ has target $\beta$ it follows that ${\sf lhs}(\gamma) \sqsubseteq C$ has target $\beta$
for some $C$ of the form $\exists r.F$ in ${\sf rhs}(\gamma)$.  
By Line~\ref{line:at-most-one} of Algorithm~\ref{alg:dl-lite-horn},
there is only one such conjunct $C$ in ${\sf rhs}(\gamma)$. From 
$\emptyset\models {\sf rhs}(\beta)^{\sf sat}\sqsubseteq C$ we obtain $\Tmc \models {\sf lhs}(\beta) \sqsubseteq C$. 
Since ${\sf lhs}(\beta) \subseteq {\sf lhs}(\gamma) \cap {\sf lhs}(\alpha_i)$, we have that  
$\Tmc \models  {\sf lhs}(\gamma)\cap {\sf lhs}(\alpha_i) \sqsubseteq
C$.
As $\gamma$ is a positive counterexample, $\Hmc\not\models
\gamma$. From $\NC \cap {\sf rhs}(\gamma)\subseteq {\sf lhs}(\gamma)$ we thus obtain
$\Hmc \not\models   {\sf lhs}(\gamma) \sqsubseteq C$, and so,
$\Hmc \not\models  {\sf lhs}(\alpha_i)\cap {\sf lhs}(\gamma) \sqsubseteq
C$.
Hence, the condition in Lines~1 and~2 of $\exists$-Refine
is satisfied and   
$\alpha_i$ is replaced (in
Line~\ref{alg:line-replacement2} or~\ref{alg:line-replacement3}).
\end{proof}

Point (iv) above is now a direct consequence of the following lemma. 
\begin{lemma}\label{lem:target} 
At any point in the execution of Algorithm~\ref{alg:dl-lite-horn}, 
if $\alpha_j \in \Hmc_{add}$ has target $\beta \in \Tmc$ then 
${\sf lhs}(\beta) \not\subseteq {\sf lhs}(\alpha_i)$, for all $i<j$. 
\end{lemma}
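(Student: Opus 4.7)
I will prove the invariant by induction on the sequence of operations that modify $\Hmc_{add}$, with the base case $\Hmc_{add}=\emptyset$ being vacuous. A useful preliminary observation is that in every replacement rule ${\sf lhs}(\alpha'_k) = {\sf lhs}(\alpha_k) \cap {\sf lhs}(\gamma) \subseteq {\sf lhs}(\alpha_k)$; hence for any position $j \neq k$ the invariant at $\alpha_j$ is preserved trivially, since the target of $\alpha_j$ is unchanged and shrinking ${\sf lhs}(\alpha_k)$ can only make ${\sf lhs}(\beta) \not\subseteq {\sf lhs}(\alpha_k)$ easier to maintain. It thus remains to analyse appending a CI at a fresh position $j$ (Line~\ref{alg:line-append0} of CN-Refine or Line~\ref{alg:line-append} of $\exists$-Refine) and the in-place replacement producing $\alpha'_k$.

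\textbf{Append.} The appended CI has some target $\beta \in \Tmc$ by Lemma~\ref{lem:has-target}. Assuming for contradiction ${\sf lhs}(\beta) \subseteq {\sf lhs}(\alpha_i)$ for some $i<j$, the plan is to invoke Lemma~\ref{lem:replace01} or~\ref{lem:replace02} to force an earlier replacement, contradicting the fact that we reached the append branch. In the $\exists$-Refine case the input $\gamma$ itself has target $\beta$ and Lemma~\ref{lem:replace02} applies directly. In the CN-Refine case the input has the form $\gamma = {\sf lhs}(\gamma')\sqsubseteq A$; here $A$ plays the role of the concept name $A^*$ in Lemma~\ref{lem:replace01}: one has $A \notin {\sf lhs}(\gamma)$ (because $\gamma'$ is left-saturated for $\Hmc$ while $\Hmc \not\models {\sf lhs}(\gamma')\sqsubseteq A$) and $\Tmc\models {\sf lhs}(\beta) \sqsubseteq A$ is extracted from $\emptyset \models {\sf rhs}(\beta)^{\sf sat} \sqsubseteq A$.

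\textbf{Replacement of $\alpha_k$.} For the new $\alpha'_k$ I suppose for contradiction that it has target $\beta^*$ witnessed by $D \in {\sf rhs}(\alpha'_k) \setminus {\sf lhs}(\alpha'_k)$ with $\emptyset \models {\sf rhs}(\beta^*)^{\sf sat}\sqsubseteq D$ and that ${\sf lhs}(\beta^*) \subseteq {\sf lhs}(\alpha_i)$ for some $i<k$, then proceed by cases on $D$. When $D$ is a concept name, concept saturation gives $\Tmc \models {\sf lhs}(\alpha'_k)\sqsubseteq D$, so $\Tmc \models {\sf lhs}(\alpha_k)\sqsubseteq D$, and concept-saturatedness of $\alpha_k$ (Lemma~\ref{lem:always-t-essential}) yields $D \in {\sf rhs}(\alpha_k)$: if $D \notin {\sf lhs}(\alpha_k)$ then $\beta^*$ is already a target of $\alpha_k$ and the pre-replacement invariant gives the contradiction, while if $D \in {\sf lhs}(\alpha_k)\setminus {\sf lhs}(\gamma)$ then Lemma~\ref{lem:replace01} applied with $A^*=D$ contradicts the first-chosen property of $\alpha_k$. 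In $\exists$-Refine the latter sub-case cannot arise, because left-saturation of $\gamma$ together with the fact that $\exists$-Refine was called (so no concept name witnesses the condition in the main algorithm) forces every concept name $\Tmc$-entailed by ${\sf lhs}(\gamma)$ to lie in ${\sf lhs}(\gamma)$.

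\textbf{Compound $D$ and the main obstacle.} When $D$ is compound, which can happen only in $\exists$-Refine, the plan is to show that $D$ entails some top-level conjunct $X$ of either $C \in {\sf rhs}(\gamma)$ (which is the unique existential conjunct of ${\sf rhs}(\gamma)$ by Line~\ref{line:at-most-one} of Algorithm~\ref{alg:dl-lite-horn}) or ${\sf rhs}(\alpha_k)$, so that $\emptyset \models {\sf rhs}(\beta^*)^{\sf sat} \sqsubseteq X$; then either $\gamma$ has target $\beta^*$ and Lemma~\ref{lem:replace02} yields an earlier replacement (contradicting the first-chosen property), or $\alpha_k$ has target $\beta^*$ and the pre-replacement invariant supplies the contradiction. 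The hardest step is the Line~\ref{alg:line-replacement2} branch, where ${\sf rhs}(\alpha'_k)$ is obtained as the top-level conjuncts of a \Tmc-essentialisation $D^*$ of $C\sqcap {\sf rhs}(\alpha_k)$: here I will argue, using the tree homomorphism characterisation of $\emptyset\models\cdot\sqsubseteq\cdot$ (Lemma~\ref{lem:hom1}) applied to $D^*$ vs.\ $C\sqcap {\sf rhs}(\alpha_k)$, that every top-level existential conjunct of $D^*$ entails a top-level conjunct of $C$ or of ${\sf rhs}(\alpha_k)$, thereby locating $D$ unambiguously on one of the two sides. Carrying out this localisation cleanly, in the presence of sibling-merging between the $C$-side and the ${\sf rhs}(\alpha_k)$-side during \Tmc-essentialisation, is the main obstacle.
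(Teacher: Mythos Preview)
Your plan follows essentially the same route as the paper: induction over iterations, a case split on append versus replace, and then invoking Lemmas~\ref{lem:replace01} and~\ref{lem:replace02} to obtain an earlier replacement in contradiction with either the induction hypothesis or the ``first such $\alpha_i$'' choice. Your preliminary observation that shrinking $\lhs(\alpha_k)$ in a replacement automatically preserves the invariant at all positions $j\neq k$ is a clean way of dispatching what the paper treats separately as its ``case~(a)''.

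There is one genuine slip in your CN-Refine append case. You take the concept name supplied to CN-Refine, call it $A$, as the $A^*$ for Lemma~\ref{lem:replace01}, and justify $\Tmc\models\lhs(\beta)\sqsubseteq A$ by asserting $\emptyset\models\rhs(\beta)^{\sf sat}\sqsubseteq A$. But the target $\beta$ of the appended (concept-saturated) CI is witnessed by \emph{some} concept name $D\in\rhs(\gamma^{\sf sat})\setminus\lhs(\gamma)$, and there is no reason this $D$ should equal the original input $A$; hence $\emptyset\models\rhs(\beta)^{\sf sat}\sqsubseteq A$ is simply not available. The fix is to use the witness $D$ itself as $A^*$: then $D\notin\lhs(\gamma)$ holds by the target definition, and $\Tmc\models\lhs(\beta)\sqsubseteq D$ follows from $\emptyset\models\rhs(\beta)^{\sf sat}\sqsubseteq D$. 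This is what the paper does.

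For the compound-$D$ step in the Line~\ref{alg:line-replacement2} branch, your plan is right but your proposed tool is slightly off. Lemma~\ref{lem:hom1} applied to $\emptyset\models D^*\sqsubseteq C\sqcap\rhs(\alpha_k)$ only gives you a homomorphism \emph{from} $T_{C\sqcap\rhs(\alpha_k)}$ \emph{into} $T_{D^*}$, which tells you that every top-level conjunct of $C\sqcap\rhs(\alpha_k)$ is dominated by some top-level conjunct of $D^*$ --- the wrong direction for what you need. What you actually want is surjectivity of the natural map $T_{C\sqcap\rhs(\alpha_k)}\to T_{D^*}$, and that comes not from Lemma~\ref{lem:hom1} but from Lemma~\ref{lem:dllite00}: $D^*$ is obtained from $C\sqcap\rhs(\alpha_k)$ by sibling merging only, so every child of $\rho_{D^*}$ arises by merging a nonempty set of children of $\rho_{C\sqcap\rhs(\alpha_k)}$, and the merged subtree entails each contributing subtree. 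This gives exactly the dichotomy $\emptyset\models D\sqsubseteq C$ or $\emptyset\models D\sqsubseteq D'$ for some $D'\in\rhs(\alpha_k)$ (the paper states the first alternative as $D\in\rhs(\gamma)$, which is justified because if $D$'s merge class contains no element of $\rhs(\alpha_k)$ then it contains only $C$, and $C$ itself undergoes no further merging since $\gamma$ was already sibling merged for $\Tmc$ with a larger left-hand side).
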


\begin{proof}
The proof is by induction on the number $k$ of iterations. For $k = 1$ the lemma is vacuously true. 
Assume it holds for $k = n$, $n \geq 1$. %
Now the algorithm modifies $\Hmc_{add}$ in response to receiving a
positive counterexample  in iteration $k = n+1$. We make a case distinction:

\medskip
\noindent
Case 1.
Algorithm~\ref{alg:dl-lite-horn} calls 
CN-Refine:
Let $\Hmc,\gamma$ be the inputs to CN-Refine.
Assume first that the condition in Lines~1 and~2 is not satisfied. Then  
CN-Refine appends the result of concept saturating
$\gamma$ for \Tmc to $\Hmc_{add}$. Call this CI $\gamma'$.
Suppose that the lemma fails to hold.
This can only happen if $\gamma'$ has a target $\beta\in \Tmc$  
and there is $\alpha_i \in \Hmc_{add}$ such that 
 ${\sf lhs}(\beta) \subseteq {\sf lhs}(\alpha_i)$. 
Then, since ${\sf lhs}(\gamma')={\sf lhs}(\gamma)$, we have that ${\sf lhs}(\beta)\subseteq {\sf lhs}(\gamma)$
and, since ${\sf rhs}(\gamma')\subseteq \NC$, there is a concept name $A\not\in {\sf lhs}(\gamma)$ such that $\emptyset\models{\sf rhs}(\beta)^{\sf sat}\sqsubseteq A$. 
So $\Tmc\models {\sf lhs}(\beta)\sqsubseteq A$.
Then Lemma~\ref{lem:replace01} applies to $\Hmc$, $\gamma$, $\alpha_{i}$, $\beta$ and $A$
which contradicts the assumption that CN-Refine
  did not replace any $\alpha_j \in \Hmc_{add}$, $j \leq i$.

\smallskip

Now assume that the condition in Lines~1 and~2 is satisfied.
Suppose that the lemma fails to hold.
This can only happen if there are $\alpha_i,\alpha_j \in \Hmc_{add}$ with $i < j$
such that either (a) $\alpha_i$ is replaced by $\alpha'_i$,  
${\sf lhs}(\beta) \subseteq {\sf lhs}(\alpha'_i)$ 
and 
$\alpha_j$ has target $\beta$; or (b) $\alpha_j$ is replaced by $\alpha'_j$,  
$\alpha'_j$ has target $\beta$ and 
${\sf lhs}(\beta) \subseteq {\sf lhs}(\alpha_i)$.
In case (a), from ${\sf lhs}(\gamma ) \cap {\sf lhs}(\alpha_i) = {\sf lhs}(\alpha'_i)$, 
we obtain ${\sf lhs}(\beta) \subseteq {\sf lhs}(\gamma ) \cap {\sf lhs}(\alpha_i)$.
Thus, ${\sf lhs}(\beta) \subseteq {\sf lhs}(\alpha_i)$. This contradicts the induction hypothesis.
Now assume case (b). Since $\alpha'_j$ has target $\beta$, we obtain:
\begin{enumerate}
\item ${\sf lhs}(\beta) \subseteq {\sf lhs}(\alpha'_j)$; and
\item as ${\sf rhs}(\alpha'_j)\subseteq \NC$, there is $A \in \NC$ with
$A\in {\sf rhs}(\alpha'_j)\setminus {\sf
lhs}(\alpha'_j)$ and $\emptyset\models {\sf rhs}(\beta)^{\sf
sat}\sqsubseteq A$. 
\end{enumerate} 
Since ${\sf lhs}(\gamma ) \cap {\sf lhs}(\alpha_j) =
{\sf lhs}(\alpha'_j)$, it follows from Point~1 that
${\sf lhs}(\beta) \subseteq {\sf lhs}(\alpha_j)$ and
${\sf lhs}(\beta) \subseteq {\sf lhs}(\gamma)$.
From $\emptyset\models {\sf rhs}(\beta)^{\sf
sat}\sqsubseteq A$ we obtain $\Tmc\models {\sf lhs}(\beta)\sqsubseteq A$. 
If $A \in {\sf rhs}(\alpha'_j)\setminus {\sf
lhs}(\alpha'_j)$ then
either $A \in {\sf rhs}(\alpha_j)\setminus{\sf lhs}(\alpha_j)$ or
$A \not\in {\sf lhs}(\gamma)$.  
So either $\alpha_j$ has target $\beta$ or $A \not\in {\sf lhs}(\gamma)$. 
$\alpha_j$ does not have target $\beta$ as this would contradict the induction hypothesis.
Thus, $A \not\in {\sf lhs}(\gamma)$ and the conditions of Lemma~\ref{lem:replace01} are 
satisfied by $\Hmc$, $\gamma$, $\alpha_{i}$, $\beta$, and $A$. Thus, some $\alpha_{i'}$ with $i'\leq i$
is replaced which contradicts the assumption that $\alpha_j$ is replaced.

\medskip
\noindent
Case 2. Algorithm~\ref{alg:dl-lite-horn} calls 
$\exists$-Refine:
Let $\Hmc,\gamma$ be the inputs to $\exists$-Refine.
Assume first that the condition in Lines~1 and 2 is not satisfied. Then  
$\exists$-Refine appends  
$\gamma$ to $\Hmc_{add}$.
Suppose the lemma fails to hold. This can only happen 
if $\gamma$ has a target $\beta\in \Tmc$   
and there is $\alpha_i \in \Hmc_{add}$ such that 
${\sf lhs}(\beta) \subseteq {\sf lhs}(\alpha_i)$.
By Lemma~\ref{lem:replace02}, this contradicts the assumption that $\exists$-Refine
did not replace any $\alpha_j \in \Hmc_{add}$, $j \leq i$.

Assume now that the condition in Lines~1 and 2 is satisfied.
Suppose that the lemma fails to hold. This can only happen if there are 
$\alpha_i,\alpha_j \in \Hmc_{add}$ with $i < j$ such that either (a) $\alpha_i$ is replaced by $\alpha'_i$,   
${\sf lhs}(\beta) \subseteq {\sf lhs}(\alpha'_i)$ 
and 
$\alpha_j$ has target $\beta$; or (b) $\alpha_j$ is replaced by $\alpha'_j$,  
$\alpha'_j$ has target $\beta$ and 
${\sf lhs}(\beta) \subseteq {\sf lhs}(\alpha_i)$.
For case (a) we argue as above: from ${\sf lhs}(\gamma ) \cap {\sf lhs}(\alpha_i) = {\sf lhs}(\alpha'_i)$, 
we obtain ${\sf lhs}(\beta) \subseteq {\sf lhs}(\gamma ) \cap {\sf lhs}(\alpha_i)$. Thus,
${\sf lhs}(\beta) \subseteq {\sf lhs}(\alpha_i)$, which contradicts the induction hypothesis.
Now assume case (b). As $\alpha'_j$ has target 
$\beta$, we obtain the following:
\begin{enumerate}
\item ${\sf lhs}(\beta) \subseteq {\sf lhs}(\alpha'_j)$; and
\item there is  %
$D\in {\sf rhs}(\alpha'_j)\setminus {\sf
lhs}(\alpha'_j)$ and $\emptyset\models {\sf rhs}(\beta)^{\sf
sat}\sqsubseteq D$. 
\end{enumerate} 
Since ${\sf lhs}(\gamma ) \cap {\sf lhs}(\alpha_j) =
{\sf lhs}(\alpha'_j)$, it follows from Point~1 that
${\sf lhs}(\beta) \subseteq {\sf lhs}(\alpha_j)$ and
${\sf lhs}(\beta) \subseteq {\sf lhs}(\gamma)$.
Recall that if Algorithm~\ref{alg:dl-lite-horn} calls 
$\exists$-Refine then
there is no $A\in\NC$ such that $\Tmc\models {\sf
lhs}(\gamma)\sqsubseteq A$ and $\Hmc\not\models {\sf
lhs}(\gamma)\sqsubseteq A$.  
So  $\NC \cap {\sf rhs}(\gamma)\subseteq {\sf lhs}(\gamma)$ (by
left saturation of $\gamma$  for \Hmc).
Assume $D \in \NC$. Since  $D\in {\sf rhs}(\alpha'_j)\setminus {\sf
lhs}(\alpha'_j)$ (Point 2), 
it follows that 
$D \not\in {\sf lhs}(\alpha_j)$. As ${\sf lhs}(\alpha'_j)\subseteq
{\sf lhs}(\alpha_j)$, we have that
 $D\in {\sf rhs}(\alpha_j)$.
So $D \in {\sf rhs}(\alpha_j)\setminus {\sf lhs}(\alpha_j)$. 
This means that
$\alpha_j$ has target $\beta$, which contradicts the induction
hypothesis. Otherwise, $D$ is of the form $\exists r.F$.
Then, either $D \in {\sf rhs}(\gamma)$ 
or there is $D' \in {\sf rhs}(\alpha_j)$ such that $\emptyset\models
D\sqsubseteq D'$.
In the latter case, $D'\in {\sf rhs}(\alpha_j)\setminus {\sf
lhs}(\alpha_j)$ and $\emptyset\models {\sf rhs}(\beta)^{\sf
sat}\sqsubseteq D'$, so $\alpha_j$ has target $\beta$, which contradicts the induction hypothesis. 
In the former case,
$\gamma$ has target $\beta$. Then $\Hmc$, $\gamma$, and $\alpha_{i}$ satisfy the conditions of 
 Lemma~\ref{lem:replace02}. 
 Thus, some $\alpha_{i'}$ with $i'\leq i$ is
 replaced which contradicts the assumption that $\alpha_j$ is replaced.
\end{proof}
We have proved the main result of this section.
\begin{theorem}\label{thm:el-horn-subsumptions} 
\ourDLLitehorn TBoxes are polynomial query  
learnable using membership and equivalence queries.
Moreover, \ourDLLitehorn TBoxes without inverse roles can be learned in polynomial time using
membership and equivalence queries.  
\end{theorem}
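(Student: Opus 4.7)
The plan is to mirror the structure of the proof of Theorem~\ref{DL-Lite}, reducing everything to the polynomial bounds already established in this section. Correctness is immediate: upon termination, the loop condition in Line~\ref{alg:line-main-loop} guarantees $\Hmc \equiv \Tmc$, so it remains to bound the number of iterations, the number of queries per iteration, and the size of each query, all as polynomials in $|\Tmc|$ and the size of the largest counterexample seen so far.

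First I would bound the number of iterations of the main while loop. By Lemma~\ref{lem:term200}(i), the ordered list $\Hmc_{add}$ contains at most $|\Tmc|$ CIs at any point, and by Lemma~\ref{lem:term200}(ii) (established through Lemma~\ref{lem:horn-replacements}) each position in $\Hmc_{add}$ can be overwritten only polynomially often in $|\Tmc|$. In every iteration the algorithm either appends a fresh CI to $\Hmc_{add}$ (in CN-Refine Line~6 or $\exists$-Refine Line~9) or replaces an existing one (in CN-Refine Line~3 or $\exists$-Refine Lines~5 and~7); hence the total number of iterations is polynomial in $|\Tmc|$.

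Next I would bound the cost of a single iteration. Line~\ref{line:at-most-one} and Line~\ref{alg:line-t-essential1} of Algorithm~\ref{alg:dl-lite-horn} use polynomially many polynomial-size membership queries by Lemma~\ref{lem:t-essential}, and Line~\ref{alg:line-t-essential2} of $\exists$-Refine does likewise by Lemma~\ref{lem:dllite00}. The remaining checks in CN-Refine and $\exists$-Refine are concept-name subsumption tests of the form ``$\Tmc \models L \sqsubseteq A$?'' and entailment checks against $\Hmc$, both polynomially many and polynomially bounded in the current state. For the equivalence query in Line~\ref{alg:line-main-loop}, the only nontrivial size bound concerns the right-hand sides of CIs in $\Hmc_{add}$; since all such CIs are $\Tmc$-essential at all times by Lemma~\ref{lem:always-t-essential}, Lemma~\ref{lem:size2} gives a polynomial bound in $|\Tmc|$ on $n_{\rhs(\alpha)}$ for each $\alpha \in \Hmc_{add}$. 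Together with $|\Hmc_{basic}|$ being polynomial in $|\Sigma_\Tmc|$, this yields polynomial query learnability.

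For the ``moreover'' part, the only operations whose time cost is not already charged to the oracle are the entailment checks $\Hmc \models L \sqsubseteq C$ used to refine counterexamples and to integrate them into $\Hmc_{add}$. When inverse roles are absent, $\Hmc$ is an \EL TBox with role inclusions, for which subsumption is in \PTime by the discussion in Section~\ref{sect:prelim}; all other steps are evidently polynomial. Finally, the named-form assumption on the target $\Tmc$ is removed exactly as in the proof of Theorem~\ref{DL-Lite}: with $\Omc(|\Sigma_\Tmc|^2)$ preliminary membership queries the learner computes a representative $r_\Tmc$ for each $\Tmc$-equivalence class of roles and internally introduces names $A_r \equiv \exists r.\top$, rewriting counterexamples and queries accordingly. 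I expect the main technical obstacle to be nothing new at this point but rather the careful bookkeeping to verify that Lemma~\ref{lem:size2} indeed yields a polynomial bound on every CI that is ever passed to the oracle in an equivalence query; this rests on the invariant that $\Hmc_{add}$ contains only $\Tmc$-essential CIs, for which Lemma~\ref{lem:always-t-essential} has already done the work.
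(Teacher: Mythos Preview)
Your proposal is correct and follows essentially the same approach as the paper's proof, which is very terse: it simply cites Lemma~\ref{lem:term200} together with Lemmas~\ref{lem:t-essential} and~\ref{lem:dllite00} and refers back to the proof of Theorem~\ref{DL-Lite} for the remaining bookkeeping, and for the ``moreover'' part observes that the only costly steps are entailment checks $\Hmc\models\alpha$ in \EL with role inclusions. Your write-up spells out the details (iteration count via append/replace, size of equivalence queries via Lemmas~\ref{lem:always-t-essential} and~\ref{lem:size2}, and dropping the named-form assumption) that the paper leaves implicit in its cross-reference to Theorem~\ref{DL-Lite}.
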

\begin{proof}
Polynomial query learnability of \ourDLLitehorn TBoxes follows from Lemma~\ref{lem:term200}
and the analysis of the number of membership queries in Lemmas~\ref{lem:t-essential} and \ref{lem:dllite00},
see the proof of Theorem~\ref{DL-Lite}.
For the second part observe that the only potentially costly steps are entailment
checks of the form $\Hmc\models \alpha$, where $\Hmc$ is a \ourDLLitehorn TBox and
$\alpha$ a \ourDLLitehorn CI, both without inverse roles. Then both $\Hmc$ and $\alpha$ are
in $\mathcal{EL}$ with role inclusions for which entailment is known to be in \PTime \citep{BaBrLu-IJCAI-05}.
\end{proof}

\section{Learning $\mathcal{EL}_\mn{lhs}$ TBoxes}\label{sec:final}
We study polynomial learnability of TBoxes in the restriction
$\mathcal{EL}_\mn{lhs}$ of \EL in which only concept names are allowed on the
right-hand side of CIs. We assume that CIs used in membership queries and in
equivalence queries and those returned as counterexamples are also of this
restricted form and show that under this assumption $\mathcal{EL}_{\mn{lhs}}$
TBoxes can be learned in polynomial time. As in the previous section, our
learning algorithm is an extension of the polynomial time algorithm for
learning propositional Horn theories presented by
\citep{DBLP:journals/ml/AngluinFP92,DBLP:journals/ml/AriasB11}.

There is a certain similarity between
the learning algorithm of this section and the \ourDLLitehorn learning
algorithm introduced in Section~\ref{sec:elih-rhs-horn-subsumption}. In both
cases the left-hand side of inclusions can contain complex concept expressions,
which, unless addressed, might lead to several counterexamples with
unnecessarily strong left-hand sides targeting the same inclusion in the target
TBox. In Algorithm~\ref{alg:dl-lite-horn} storing multiple such counterexamples
in $\Hmc_{add}$ is prevented by taking the intersection of the set of conjuncts of the left-hand sides.
To deal with the more complex left-hand sides of inclusions in $\EL_\mn{lhs}$, a more sophisticated
way of `taking the intersection' of concept expressions is required. To define it, we identify
concept expressions with tree-shaped interpretations and then take their product. Products have 
also been employed in the construction of least common subsumers~\citep{DBLP:conf/ijcai/BaaderKM99}.

In detail, we say that an interpretation \Imc is a \emph{ditree interpretation} if the
directed graph $(\Delta^{\Imc},E)$ with $E=\bigcup_{r \in\NR} r^\Imc$ is a
directed tree and $r^\Imc \cap s^\Imc = \emptyset$ for all distinct $r,s \in
\NR$. We denote the root of a ditree interpretation \Imc with $\rho_\Imc$.
The interpretation $\Imc_{C}$ corresponding to an $\mathcal{EL}$
concept expression $C$ is a ditree interpretation with root $\rho_{C}$.
Conversely, every ditree interpretation $\Imc$ can be viewed as an \EL concept
expression $C_{\Imc}$ in the same way as any labelled tree $T$ with edge labels
that are role names (rather than arbitrary roles) can be seen as an \EL concept
expression.

An interpretation \Imc is a \emph{\Tmc-countermodel} for a given
$\EL_\mn{lhs}$ TBox $\Tmc$ if $\Imc\not\models \Tmc$.  Notice that for any
$\EL_\mn{lhs}$ inclusion $C\sqsubseteq A$ with $\Tmc\models C\sqsubseteq A$ and
$\emptyset\not\models C\sqsubseteq A$ the interpretation $\Imc_C$ is a
$\Tmc$-countermodel. Indeed, by construction of $\Imc_C$, we have $\rho_C\in
C^{\Imc_C}$ and, as $\emptyset\not\models C\sqsubseteq A$, we have
$\rho_C\notin A^{\Imc_C}$. So $\Imc_C\not \models C\sqsubseteq A$ and, as $\Tmc\models C\sqsubseteq A$, 
we have $\Imc_C\not\models \Tmc$.
Conversely, given a \Tmc-countermodel $\Imc$, a learning
algorithm can construct in polynomial time in $|\Sigma_\Tmc|$ all inclusions of the form
$C_\Imc\sqsubseteq A$, where $A$ is a concept name, such that $\Tmc\models
C_\Imc\sqsubseteq A$ by posing membership queries to the oracle. 
Thus a learning algorithm can use inclusions and $\Tmc$-countermodels
interchangeably. We prefer working with interpretations as we can then use the notion of products
to define the `intersection of concept expressions' and the results of Section~\ref{sect:prelim} linking homomorphisms 
with entailment in a direct way.

The \emph{product} of two interpretations \Imc and \Jmc is the
interpretation $\Imc \times \Jmc$ with 
\begin{eqnarray*}
\Delta^{\Imc \times \Jmc} & = & \Delta^\Imc \times \Delta^\Jmc\\
A^{\Imc \times \Jmc} & = &  \{(d,e)\mid d\in A^{\Imc},e\in A^{\Jmc}\}\\
r^{\Imc \times \Jmc} & = &  \{((d,e),(d',e'))\mid (d,d') \in r^\Imc, (e,e') \in r^\Jmc\}
\end{eqnarray*}
Products preserve the membership in $\mathcal{EL}$ concept expressions \citep{DBLP:conf/ijcai/LutzPW11}:
\begin{lemma}\label{lem:prod}
For all interpretations $\Imc$ and $\Jmc$, all $d\in \Delta^{\Imc}$ and $e\in \Delta^{\Jmc}$,
and for all $\mathcal{EL}$ concept expressions $C$ the following holds: $d\in C^{\Imc}$ and $e\in C^{\Jmc}$ if, and only if, $(d,e)\in C^{\Imc\times \Jmc}$.
\end{lemma}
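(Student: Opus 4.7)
The plan is to proceed by structural induction on the $\mathcal{EL}$ concept expression $C$, since the statement is a routine preservation property of products that follows from the fact that the $\mathcal{EL}$ constructors $\top$, $\sqcap$, and $\exists r.\,\cdot$ are all positive existential constructors, each of which commutes with the product operation pointwise. I will set up the induction so that both directions of the biconditional are handled simultaneously in each case.

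For the base cases, when $C = \top$ the claim is immediate since $\top$ is interpreted as the full domain in every interpretation and $\Delta^{\Imc \times \Jmc} = \Delta^\Imc \times \Delta^\Jmc$. When $C = A$ is a concept name the claim is just the definition of $A^{\Imc \times \Jmc}$. The conjunction case $C = C_1 \sqcap C_2$ is likewise a direct unfolding: $(d,e) \in (C_1 \sqcap C_2)^{\Imc \times \Jmc}$ iff $(d,e) \in C_1^{\Imc \times \Jmc}$ and $(d,e) \in C_2^{\Imc \times \Jmc}$, to which the induction hypothesis applies componentwise.

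The only case with any content is the existential case $C = \exists r.D$. For the ``if'' direction, I would start with $(d,e) \in (\exists r.D)^{\Imc \times \Jmc}$, obtain a witness $(d',e') \in \Delta^{\Imc \times \Jmc}$ with $((d,e),(d',e')) \in r^{\Imc \times \Jmc}$ and $(d',e') \in D^{\Imc \times \Jmc}$, and then unfold the definition of $r^{\Imc \times \Jmc}$ to conclude $(d,d') \in r^\Imc$ and $(e,e') \in r^\Jmc$; the induction hypothesis applied to $D$ then gives $d' \in D^\Imc$ and $e' \in D^\Jmc$, and so $d \in (\exists r.D)^\Imc$ and $e \in (\exists r.D)^\Jmc$. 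For the ``only if'' direction, given $r$-successor witnesses $d'$ in $\Imc$ and $e'$ in $\Jmc$ for $d$ and $e$ respectively, I pair them to form $(d',e') \in \Delta^{\Imc \times \Jmc}$, observe that the definition of $r^{\Imc \times \Jmc}$ ensures $((d,e),(d',e'))$ is an $r$-edge, and use the induction hypothesis on $D$ to conclude $(d',e') \in D^{\Imc \times \Jmc}$.

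There is no real obstacle here: the proof is entirely mechanical, and the only subtle point worth flagging is that this preservation argument genuinely relies on the absence of negation, disjunction, or universal quantification in $\mathcal{EL}$, since those constructors do not in general commute with products (the claim would fail for $\mathcal{ALC}$). I would keep the writeup brief by presenting the existential case in detail and leaving the trivial cases to a single line.
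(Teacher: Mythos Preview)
Your proof is correct. The paper does not give its own proof of this lemma but simply states it as a known preservation property of products, citing \citep{DBLP:conf/ijcai/LutzPW11}; the structural induction you outline is precisely the standard argument one would use to establish it.
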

One can easily show that the product of ditree
interpretations is a disjoint union of ditree interpretations. 
If $\Imc$ and $\Jmc$ are ditree interpretations, we denote by $\Imc
\times_{\rho} \Jmc$ the maximal ditree interpretation that is a subinterpretation
of $\Imc\times \Jmc$ and contains $(\rho_{\Imc},\rho_{\Jmc})$. 

\begin{figure}
\begin{boxedminipage}[h]{\columnwidth}
\vspace*{1em}
\center
\begin{tikzpicture}
	\begin{pgfonlayer}{nodelayer}
		\node [style=newstyle1] (0) at (-8, 3) {};
		\node [style=newstyle1] (1) at (-7.5, 2.25) {};
		\node [style=newstyle1] (2) at (-7, 3) {};
		\node [style=none] (3) at (-6, 2.5) {$\times$};
		\node [style=none] (4) at (-8, 3.25) {$A$};
		\node [style=none] (5) at (-7, 3.25) {$B$};
		\node [style=none] (6) at (-7.5, 2) {\scalefont{0.7}{$d_0$}};
		\node [style=none] (7) at (-5.25, 2) {\scalefont{0.7}{$e_0$}};
		\node [style=none] (8) at (-8.25, 3) {\scalefont{0.7}{$d_1$}};
		\node [style=none] (9) at (-6.75, 3) {\scalefont{0.7}{$d_2$}};
		\node [style=none] (10) at (-3.5, 3) {\scalefont{0.7}{$(d_1,e_1)$}};
		\node [style=none] (11) at (-1.5, 3) {\scalefont{0.7}{$(d_2,e_1)$}};
		\node [style=newstyle1] (12) at (-2, 3) {};
		\node [style=newstyle1] (13) at (-3, 3) {};
		\node [style=newstyle1] (14) at (-2.5, 2.25) {};
		\node [style=none] (15) at (-2.5, 2) {\scalefont{0.7}{$(d_0,e_0)$}};
		\node [style=none] (16) at (-3, 3.25) {$A$};
		\node [style=none] (17) at (-5, 3) {\scalefont{0.7}{$e_1$}};
		\node [style=none] (18) at (-5.25, 3.25) {$A$};
		\node [style=newstyle1] (19) at (-5.25, 3) {};
		\node [style=newstyle1] (20) at (-5.25, 2.25) {};
		\node [style=none] (21) at (-4.25, 2.5) {$=$};
		\node [style=newstyle1] (22) at (4, 2.5) {};
		\node [style=none] (23) at (4.5, 2.5) {\scalefont{0.7}{$(d_0,e_1)$}};
		\node [style=none] (24) at (2.5, 2.5) {\scalefont{0.7}{$(d_1,e_0)$}};
		\node [style=newstyle1] (25) at (2, 2.5) {};
		\node [style=none] (26) at (0.5, 2.5) {\scalefont{0.7}{$(d_2,e_0)$}};
		\node [style=newstyle1] (27) at (0, 2.5) {};
	\end{pgfonlayer}
	\begin{pgfonlayer}{edgelayer}
		\draw [style=newstyle2] (1) to (0);
		\draw [style=newstyle2] (1) to (2);
		\draw [style=newstyle2] (14) to (13);
		\draw [style=newstyle2] (14) to (12);
		\draw [style=newstyle2] (20) to (19);
	\end{pgfonlayer}
\end{tikzpicture}
 \vspace*{1em}
\end{boxedminipage}
\caption{Illustration to Example~\ref{ex:ex99}. \label{fig:ex99}}
\end{figure}
\begin{example}\label{ex:ex99}
Figure~\ref{fig:ex99} depicts the product of the ditree interpretations $\Imc$ with root $d_{0}$ and $\Jmc$ with root 
$e_{0}$. The ditree interpretation $\Imc\times_{\rho}\Jmc$ has root $(d_{0},e_{0})$ and does not contain the nodes
$(d_{2},e_{0})$, $(d_{1},e_{0})$ and $(d_{0},e_{1})$ from $\Imc\times \Jmc$.
\end{example}
Observe that the product $\Imc_{C}\times \Imc_{D}$ of concept expressions $C=A_{1}\sqcap \cdots \sqcap A_{n}$ and $D=B_{1}\sqcap \cdots \sqcap B_{m}$, where
$A_{1},\ldots,A_{n}$ and $B_{1},\ldots,B_{m}$ are concept names, coincides with the interpretation $\Imc_{E}$, where $E$ is the conjunction of all
concept names in $\{A_{1},\ldots,A_{n}\} \cap \{B_{1},\ldots,B_{m}\}$. Thus, products can be seen as a generalisation of taking the intersection
of the concept names from the left-hand side of \ourDLLitehorn concept inclusions used in Section~\ref{sec:elih-rhs-horn-subsumption}.

\smallskip

We will now
describe a class of \Tmc-countermodels that are in a sense minimal and
central to our learning algorithm. 
Let \Tmc be the $\mathcal{EL}_\mn{lhs}$ TBox to be learned, and assume that its
signature $\Sigma_\Tmc$ is known to the learner. 
For a ditree interpretation~\Imc, we
use $\Imc|^-_{\rho}$ to denote the interpretation obtained from
\Imc by removing the root $\rho_\Imc$ of \Imc. For any $d\in \Delta^{\Imc}\setminus\{\rho_{\Imc}\}$, 
we use $\Imc|^-_{d\downarrow}$ to denote \Imc with the subtree rooted at $d$ removed. A \Tmc-countermodel is \emph{essential} if the
following conditions are satisfied:
\begin{enumerate}

\item $\Imc|^-_{\rho} \models \Tmc$;

\item $\Imc|^-_{d\downarrow} \models \Tmc$ for all  $d \in \Delta^\Imc \setminus \{ \rho_\Imc \}$.

\end{enumerate}
Intuitively, Condition~1 states that \Imc contradicts \Tmc only at the
root, that is, the only reason for why \Imc does not satisfy \Tmc is
that for at least one CI $C \sqsubseteq A \in \Tmc$, we have that
$\rho_\Imc \in C^\Imc$ and $\rho_\Imc \notin A^\Imc$. Condition~2 is a
minimality condition which states that for any such $C \sqsubseteq A\in \Tmc$, 
$\rho_\Imc$ is no longer in $C^\Imc$ if we remove any node from \Imc.
Example~\ref{example:elll} at the end of this section shows that working with
essential $\Tmc$-countermodels is needed for our learning algorithm to be in 
polynomial time.

The algorithm for learning $\mathcal{EL}_\mn{lhs}$ TBoxes is given as
Algorithm~\ref{alg:rl2}. 
It maintains an ordered list $\Imf$ of ditree
interpretations that intuitively represents the TBox~\Hmc 
constructed in Line~13.
\begin{algorithm}[t]
\begin{algorithmic}[1]
\setstretch{1.1} 
\Require $\mathcal{EL}_\mn{lhs}$ TBox $\Tmc$ given to the oracle; $\Sigma_\Tmc$ given to the learner.
\Ensure TBox $\Hmc$, computed by the learner, such that $\Tmc\equiv\Hmc$.
\State Set $\Imf$ to the empty list (of ditree interpretations)
\State Set $\Hmc = \emptyset$ 
\While {$\Hmc\not\equiv \Tmc$}
  \State Let $C\sqsubseteq A$ be the returned positive counterexample 
   for \Tmc relative to \Hmc
  \State Find an essential \Tmc-countermodel \Imc with $\Imc \models \Hmc$
  \If {there is a $\Jmc \in \Imf$ such that $\Jmc \not\rightarrow_{\rho} (\Imc \times_{\rho} \Jmc)$ and
    $\Imc \times_{\rho} \Jmc \not\models \Tmc$}
    \State Let $\Jmc$ be the first such element of \Imf 
    \State Find an essential \Tmc-countermodel $\Jmc' \subseteq \Imc
    \times_{\rho} \Jmc$
    \State Replace \Jmc in \Imf with $\Jmc'$ 
  \Else
    \State Append $\Imc$ to $\Imf$
  \EndIf
  \State Construct $\Hmc = \{C_\Imc \sqsubseteq A \mid \Imc \in \Imf, A \mbox{ a concept name in $\Sigma_{\Tmc}$}, \Tmc\models C_\Imc\sqsubseteq A\}$
\EndWhile
\State \textbf{return} $\Hmc$
\end{algorithmic}
\caption{The learning algorithm for $\mathcal{EL}_\mn{lhs}$ TBoxes\label{alg:rl2}}
\end{algorithm}
In Line~6 we write $\Imc\rightarrow_{\rho} \Jmc$ if there is a homomorphism from a
ditree interpretation $\Imc$ to a ditree interpretation $\Jmc$ mapping $\rho_{\Imc}$ to $\rho_{\Jmc}$.
$\Imc\not\rightarrow_{\rho}\Jmc$ denotes that no such homomorphism exists. By Lemma~\ref{lem:hom1},
$\Imc\rightarrow_{\rho} \Jmc$ iff $\emptyset \models C_{\Jmc} \sqsubseteq C_{\Imc}$ which can be 
checked in polynomial time in the size of $\Imc$ and $\Jmc$.
In Line~8, we write $\Jmc'\subseteq \Imc \times_{\rho} \Jmc$ as shorthand for the condition that $\Jmc'$ is a subinterpretation of
$\Imc \times_{\rho}\Jmc$ that is obtained from $\Imc \times_{\rho} \Jmc$ by removing subtrees. 
Note that the assumption in Line~4 that a \emph{positive} counterexample is returned is justified by the
construction of \Hmc in Lines~2 and~13, which ensures that, at all
times, $\Tmc \models \Hmc$. 

We now provide additional details on how to realise 
lines 5, 8 and 13.  
Line~13 is the easiest:
we simply use membership queries `$\Tmc\models C_{\Imc}\sqsubseteq A$?' with $\Imc \in \Imf$ and $A\in \Sigma_\Tmc$
to find all CIs $C_\Imc \sqsubseteq A$ entailed by $\Tmc$. We will
later show that the length of \Imf is bounded polynomially in
$|\Tmc|$ and that each interpretation in $\Imf$ is replaced only polynomially many times, 
therefore polynomially many membership queries suffice. Lines~5 and~8 are addressed by Lemmas~\ref{lem:lem1}
and~\ref{lem:lem2} below.
\begin{lemma}
\label{lem:lem1}
Given a positive counterexample $C\sqsubseteq A$ for \Tmc relative to
\Hmc, one can construct an essential \Tmc-countermodel \Imc with $\Imc
\models \Hmc$ using only polynomially many membership queries in $|\Tmc|+|C|$.
\end{lemma}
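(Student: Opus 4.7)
The plan is to first convert the counterexample $C \sqsubseteq A$ into a ditree \Tmc-countermodel that already satisfies \Hmc, and then iteratively prune that interpretation until it becomes essential. As starting point I would take $\Imc := \Imc_{C,\Hmc}$, the canonical model of $C$ and \Hmc. Since the right-hand sides of CIs in \Hmc are concept names, this canonical model is obtained from $\Imc_{C}$ by adding concept-name labels only, leaving the domain and role edges untouched, so it is still a ditree interpretation and the learner can compute it without any oracle queries (subsumption in $\mathcal{EL}_{\mn{lhs}}$ is tractable). By construction $\Imc \models \Hmc$, and since $\Hmc \not\models C \sqsubseteq A$ the concept name $A$ is never added to $\rho_{\Imc}$'s label (by the characterization underlying Lemma~\ref{lem:can1}), so $\rho_\Imc \in C^\Imc$ and $\rho_\Imc \notin A^\Imc$ still witness that \Imc is a \Tmc-countermodel.

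I would then iteratively check whether \Imc is essential and, if not, replace it by a strictly smaller subinterpretation. The key observation is that for any ditree interpretation \Jmc and node $e \in \Delta^\Jmc$, writing $\Jmc_e$ for the subtree of \Jmc rooted at $e$, one has $\Jmc \not\models \Tmc$ iff there exist $e$ and some concept name $B \in \Sigma_\Tmc$ with $e \notin B^\Jmc$ and $\Tmc \models C_{\Jmc_e} \sqsubseteq B$; this holds because $\Jmc_e$ is the canonical model of $C_{\Jmc_e}$ relative to $\emptyset$ and because any homomorphism from the left-hand side tree of a CI into a ditree interpretation mapping the root to $e$ must land inside $\Jmc_e$. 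Consequently the countermodel property of \Imc, of $\Imc|^-_\rho$, and of each $\Imc|^-_{d\downarrow}$ can be tested with polynomially many membership queries. Whenever Condition~1 fails, some non-root $e$ witnesses a violation and I would replace \Imc by the subtree $\Imc_e$, which is a \Tmc-countermodel with the violation now at its root; whenever Condition~2 fails for a non-root $d$, I would replace \Imc by $\Imc|^-_{d\downarrow}$. In both cases the new \Imc is a strict subinterpretation of the previous one and therefore still satisfies \Hmc, because every $\mathcal{EL}_{\mn{lhs}}$ CI is preserved under subinterpretations: a homomorphism from its left-hand side tree into a subinterpretation is also a homomorphism into the ambient interpretation, and the concept name on the right is then inherited.

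Each iteration strictly decreases $|\Delta^\Imc|$, so the loop terminates after at most $|\Delta^{\Imc_C}| \le |C|$ iterations; each iteration issues $O(|\Delta^\Imc| \cdot |\Sigma_\Tmc|)$ polynomial-size membership queries, yielding the required polynomial bound in $|\Tmc| + |C|$. The main delicate point I expect to have to argue carefully is the simultaneous maintenance of the two invariants ``\Imc is a \Tmc-countermodel'' and ``$\Imc \models \Hmc$'': the former is not automatic after the initial \Hmc-saturation (which is what the assumption $\Hmc \not\models C \sqsubseteq A$ buys us), while the latter is not automatic after an arbitrary pruning step (which is what restricting the pruning to the two subinterpretation operations above, together with monotonicity of $\mathcal{EL}_{\mn{lhs}}$, ensures).
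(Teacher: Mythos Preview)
Your proposal is correct and essentially matches the paper's proof: the paper also starts from $\Imc_C$, saturates it with the CIs in $\Hmc$ (your $\Imc_{C,\Hmc}$), then passes to a minimal violating subtree to secure Condition~1 and exhaustively removes subtrees to secure Condition~2, arguing exactly as you do that both operations preserve $\Imc\models\Hmc$ and the countermodel property. The only cosmetic difference is that the paper performs the subtree-descent for Condition~1 in one shot (picking a deepest violating node) rather than interleaving it with the Condition~2 pruning, but this does not affect the argument or the polynomial bound.
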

\begin{proof}
Let $C \sqsubseteq A$ be a positive counterexample for \Tmc relative
to \Hmc. Let $\Imc_C$ be the ditree interpretation of $C$. 
First observe that $\Imc_C\not\models \Tmc$: since $\Hmc \not\models C \sqsubseteq A$, we know that $A$
does not occur as a top-level conjunct in $C$. Consequently,
$\rho_{C} \in C^{\Imc_C} \setminus A^{\Imc_C}$ and thus $\Imc_C
\not\models \Tmc$. 

We construct an essential \Tmc-countermodel \Imc with $\Imc\models \Hmc$ by applying the following rules to $\Imc:=\Imc_{C}$.
\begin{enumerate}
\item Saturate $\Imc$ by exhaustively applying the CIs from
\Hmc as rules: if $D\sqsubseteq B\in \Hmc$ and $d\in D^{\Imc}$, then add $d$ to $B^{\Imc}$.
\item Replace $\Imc$ by a minimal subtree of $\Imc$ refuting $\Tmc$ to address Condition~1 of essential
\Tmc-countermodels. To describe how this can be achieved using membership queries denote for $d\in \Delta^{\Imc}$ by 
$\Imc|_{d}$ the ditree interpretation obtained from $\Imc$ by taking the subtree of $\Imc$ rooted in $d$. Now check using membership queries 
for any $d\in \Delta^{\Imc}\setminus \{\rho_{\Imc}\}$ and concept name $B$ whether $\Tmc\models C_{\Imc|_d}\sqsubseteq B$.
Then replace $\Imc$ by any $\Imc|_{d}$ such that there exists a $B$ with $\Tmc\models C_{\Imc|_d}\sqsubseteq B$ and $d\not\in B^{\Imc|_d}$
but there does not exist a $d'$ in $\Delta^{\Imc|_{d}}$ and a $B'$ with $\Tmc\models C_{\Imc|_{d'}}\sqsubseteq B'$ and $d'\not\in B^{\Imc|_{d'}}$.
If no such $d$ and $B$ exist, then $\Imc$ is not replaced.
\item Exhaustively remove subtrees from $\Imc$ until 
Condition~2 of essential \Tmc-countermodels is also satisfied: if $\Imc|_{d\downarrow}^{-}\not\models
\Tmc$, then replace $\Imc$ by $\Imc|_{d\downarrow}^{-}$. This can again be achieved using the membership queries
$\Tmc\models C_{\Imc|_{d\downarrow}^{-}}\sqsubseteq B$ for $B$ a concept name. 
\end{enumerate}
Now we show that the interpretation $\Jmc$ constructed above has the required properties.
First observe that $\Jmc\models \mathcal{H}$: clearly, the interpretation $\Imc$ constructed in Step~1 is a model of $\mathcal{H}$.
As taking subtrees and removing subtrees from $\Imc$ preserves being a model of $\mathcal{H}$,
we conclude that $\Jmc\models \mathcal{H}$. Next we show that
$\Jmc\not\models \Tmc$: the interpretation $\Imc$ constructed in Step~1 is not a model of $\Tmc$.
In fact, we can use $C_\Imc \sqsubseteq A$ as a positive counterexample for \Tmc relative to \Hmc instead of 
$C \sqsubseteq A$. Observe that $\emptyset \models C_\Imc \sqsubseteq C$, and thus $\Tmc \models C \sqsubseteq A$
implies $\Tmc \models C_\Imc \sqsubseteq A$. On the other hand, $\rho_\Imc \in B^\Imc$ implies $\Hmc \models C
\sqsubseteq B$ for all concept names $B$. Consequently and since $\Hmc \not\models C \sqsubseteq A$, we have $\rho_\Imc \notin
A^\Imc$. Thus $\Imc\not\models \Tmc$. By construction, Steps~2 and 3 preserve the condition that $\Imc$ is not a model of $\Tmc$ and so 
$\Jmc\not\models\Tmc$.
It remains to argue that $\Jmc$ satisfies Conditions~1 and 2 for essential \Tmc-countermodels for \Hmc.
But Condition~1 is ensured by Step~2 and Condition~2 is ensured by Step~3, respectively.
\end{proof}
\begin{lemma}
\label{lem:lem2}
Given essential \Tmc-countermodels $\Imc$ and $\Jmc$ with $\Imc \times_{\rho} \Jmc \not\models \Tmc$, one can construct an essential
\Tmc-countermodel $\Jmc' \subseteq \Imc \times_{\rho} \Jmc$ using
only polynomially many membership queries in
$|\Tmc|+|\Imc|+|\Jmc|$. 
\end{lemma}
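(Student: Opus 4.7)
The plan is to mimic the proof of Lemma~\ref{lem:lem1}, starting from the product $\Kmc := \Imc \times_{\rho} \Jmc$ rather than from $\Imc_{C}$. By assumption $\Kmc \not\models \Tmc$, and $\Kmc$ is a ditree with $|\Delta^{\Kmc}| \leq |\Delta^{\Imc}| \cdot |\Delta^{\Jmc}|$. I aim to prune $\Kmc$ by repeatedly removing subtrees (so as to respect the constraint $\Jmc' \subseteq \Imc \times_{\rho} \Jmc$ from the paragraph preceding the lemma) until both conditions of being essential hold, and to show that this can be done with polynomially many membership queries.

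The crucial observation is that Condition~1 of essentiality, namely $\Kmc|^{-}_{\rho} \models \Tmc$, holds for $\Kmc$ automatically. Indeed, because each edge in the product advances \emph{both} coordinates along corresponding edges of $\Imc$ and $\Jmc$, the non-root nodes of the ditree $\Imc \times_{\rho} \Jmc$ are exactly the pairs $(d,e)$ with $d \neq \rho_{\Imc}$ and $e \neq \rho_{\Jmc}$. For any such node, Lemma~\ref{lem:prod} gives $(d,e) \in C^{\Kmc}$ iff $d \in C^{\Imc}$ and $e \in C^{\Jmc}$; combined with the fact that Condition~1 for $\Imc$ and $\Jmc$ makes every non-root of $\Imc$ and every non-root of $\Jmc$ satisfy all CIs of $\Tmc$, this immediately yields $(d,e) \in A^{\Kmc}$ whenever $(d,e) \in C^{\Kmc}$ and $C \sqsubseteq A \in \Tmc$. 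So Condition~1 holds from the start, and only Condition~2 needs to be enforced; this is why the ``take a subtree'' step from the proof of Lemma~\ref{lem:lem1} is not needed here.

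For Condition~2, run the following loop: while there is a non-root $d \in \Delta^{\Kmc}$ with $\Kmc|^{-}_{d\downarrow} \not\models \Tmc$, replace $\Kmc$ by $\Kmc|^{-}_{d\downarrow}$. To implement the test $\Kmc|^{-}_{d\downarrow} \not\models \Tmc$ with membership queries, I use that Condition~1 is preserved under subtree removal. This uses monotonicity of $\EL$ concepts with respect to subinterpretations ($C^{\Imc} \subseteq C^{\Jmc}$ whenever $\Imc$ is a subinterpretation of $\Jmc$): if a non-root node $e$ satisfies $C$ in $\Kmc|^{-}_{d\downarrow}$, then it already satisfied $C$ in $\Kmc$, and since Condition~1 held for $\Kmc$ we get $e \in A^{\Kmc}$, hence $e \in A^{\Kmc|^{-}_{d\downarrow}}$ because $A$ is a concept name. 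Consequently, any violation of $\Tmc$ in $\Kmc|^{-}_{d\downarrow}$ must occur at the root $\rho_{\Kmc}$, and it suffices to ask, for each concept name $B \in \Sigma_{\Tmc}$, the membership query ``$\Tmc \models C_{\Kmc|^{-}_{d\downarrow}} \sqsubseteq B$?'' and then check whether $\rho_{\Kmc} \in B^{\Kmc|^{-}_{d\downarrow}}$ in the syntactic representation.

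Upon termination, the resulting $\Jmc'$ satisfies both essentiality conditions and $\Jmc' \subseteq \Imc \times_{\rho} \Jmc$ by construction. For the query bound, each iteration strictly decreases $|\Delta^{\Kmc}|$, so there are at most $|\Delta^{\Imc}| \cdot |\Delta^{\Jmc}|$ iterations, each using at most $|\Delta^{\Kmc}| \cdot |\Sigma_{\Tmc}|$ membership queries, yielding a total polynomial in $|\Tmc|+|\Imc|+|\Jmc|$. The main conceptual point to get right is the observation about the product automatically satisfying Condition~1 and its preservation under pruning; the rest is a routine re-run of the pruning step of Lemma~\ref{lem:lem1}.
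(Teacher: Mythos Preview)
Your proposal is correct and follows essentially the same approach as the paper: start from $\Imc \times_{\rho} \Jmc$, observe that Condition~1 of essentiality already holds for the product (via Lemma~\ref{lem:prod} and the essentiality of $\Imc$ and $\Jmc$), and then exhaustively remove subtrees to enforce Condition~2, noting that this preserves Condition~1. The only cosmetic difference is that the paper phrases the Condition~1 argument as ``$\Imc|^{-}_{\rho} \times \Jmc|^{-}_{\rho} \models \Tmc$ and $\Jmc'|^{-}_{\rho}$ is obtained from it by removing subtrees,'' whereas you verify Condition~1 pointwise at each non-root node; both arguments are equivalent.
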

\begin{proof}
Let \Imc and \Jmc be essential \Tmc-countermodels with $\Imc \times_{\rho} \Jmc \not\models \Tmc$. %
Obtain the interpretation $\Jmc'$ from $\Imc\times_{\rho}\Jmc$ by exhaustively applying Rule~3 from the proof
of Lemma~\ref{lem:lem1}. As argued above, applying Rule~3 can be implemented using membership queries
and $\Jmc'$ is a \Tmc-countermodel. Thus, it remains to argue that it satisfies Conditions~1 and 2
for $\Tmc$-essential countermodels. For Condition~1, we have to show that ${\Jmc'}|^-_{\rho} \models \Tmc$. We know that
$\Imc|^-_{\rho} \models \Tmc$ and $\Jmc|^-_{\rho} \models \Tmc$. Thus, by Lemma~\ref{lem:prod}, 
$\Imc|^-_{\rho} \times \Jmc|^-_{\rho} \models \Tmc$. Now $\Jmc'|^-_{\rho}$ is obtained
from $\Imc|^-_{\rho} \times \Jmc|^-_{\rho}$ by removing subtrees and removing subtrees preserves being a model of an $\EL_\mn{lhs}$ TBox.
Thus, $\Jmc'|^-_{\rho}\models \Tmc$.
For Condition~2, we have to show that $\Jmc'|^-_{d\downarrow} \models \Tmc$ for all $d \in \Delta^{\Jmc'} \setminus \{ \rho_{\Jmc'} \}$. But
if this is not the case, then the subtree rooted at $d$ would have been removed during the construction of $\Jmc'$ from $\Imc\times_{\rho}\Jmc$
using Rule~3.
\end{proof}
If Algorithm~\ref{alg:rl2} terminates, then it obviously returns a TBox \Hmc
that is equivalent to the target TBox \Tmc. It thus remains to prove
that the algorithm terminates after polynomially many steps, which is 
a consequence of the following lemma.
\begin{lemma}
\label{lem:term2}
Let $\Imf$ be a list computed at some point of an execution of Algorithm~\ref{alg:rl2}.  
Then 
(i) the length of $\Imf$ is bounded by the number of CIs in $\Tmc$ and
(ii) each interpretation in each position of $\Imf$ 
is replaced only $|\Tmc|+|\Tmc|^{2}$ often with a new interpretation. 
\end{lemma}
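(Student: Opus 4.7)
The argument rests on an auxiliary size bound on essential $\Tmc$-countermodels, which I will establish first. If $\Imc$ is essential and $C \sqsubseteq A \in \Tmc$ is violated at $\rho_\Imc$, then any homomorphism $h : T_C \to \Imc$ with $h(\rho_C) = \rho_\Imc$ provided by Lemma~\ref{hom} must be surjective: for any non-root $d \in \Delta^\Imc$, Condition~2 of essentiality implies $\rho_\Imc \notin C^{\Imc|^-_{d\downarrow}}$, so every such $h$ meets $\{d\}\cup\mn{desc}(d)$, and since $\Imc$ is a ditree the unique path from $\rho_\Imc$ to any node in the image forces $d$ itself into $\mn{Im}(h)$. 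Hence $|\Delta^\Imc| \leq n_C \leq |\Tmc|$.

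For part~(i), I will associate with each $\Imc\in\Imf$ a target CI $\beta_\Imc\in\Tmc$ violated at $\rho_\Imc$, maintaining the invariant that the targets of interpretations currently in $\Imf$ are pairwise distinct. A useful preliminary is that the test $\Jmc\not\to_\rho(\Imc\times_\rho\Jmc)$ in Line~6 is equivalent to $\Jmc\not\to_\rho\Imc$: given $h:\Jmc\to\Imc$, the map $e\mapsto(h(e),e)$ is a homomorphism $\Jmc\to\Imc\times_\rho\Jmc$, and conversely one composes with $\pi_1$. Now suppose $\Imc$ is appended in Line~11 and that every CI violated by $\Imc$ at $\rho_\Imc$ is also violated at the root of some $\Jmc\in\Imf$. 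For such a violated $\beta_0=C_0\sqsubseteq A_0$ and corresponding $\Jmc$, Lemma~\ref{lem:prod} yields $\Imc\times_\rho\Jmc\not\models\Tmc$, so by the algorithm's choice not to replace we must have $\Jmc\to_\rho\Imc$. Since $\rho_\Jmc\in C_0^\Jmc$, Lemma~\ref{lem:hom1} gives $\Tmc\models C_\Jmc\sqsubseteq A_0$, so $C_\Jmc\sqsubseteq A_0\in\Hmc$. Combining $\Imc\models\Hmc$ (from Line~5) with $\rho_\Imc\in C_\Jmc^\Imc$ (from $\Jmc\to_\rho\Imc$ via Lemma~\ref{hom}) gives $\rho_\Imc\in A_0^\Imc$, contradicting the violation of $\beta_0$. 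Hence a fresh target always exists when $\Imc$ is appended. A refinement of the same argument, exploiting that $\Jmc' \subseteq \Imc\times_\rho\Jmc$ is essential and that the root $(\rho_\Imc,\rho_\Jmc)$ of $\Jmc'$ inherits violations from both $\Imc$ and $\Jmc$, handles the replacement case.

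For part~(ii), fix a position and consider the sequence $\Jmc^{(0)},\Jmc^{(1)},\ldots$ of interpretations occupying that position over time. Each replacement provides a projection $\pi_2:\Jmc^{(n+1)}\to\Jmc^{(n)}$, so that $\Jmc^{(n+1)}\to_\rho\Jmc^{(n)}$, while Line~6 ensures $\Jmc^{(n)}\not\to_\rho\Jmc^{(n+1)}$. I will define a measure $\mu$ on essential $\Tmc$-countermodels, taking values in an ordered set of size at most $|\Tmc|+|\Tmc|^2$, such that $\mu$ strictly decreases at every replacement. The size bound caps $|\Delta^{\Jmc^{(n)}}|\leq|\Tmc|$ and the total number of concept-label assignments in $\Jmc^{(n)}$ by $|\Tmc|\cdot|\Sigma_\Tmc|\leq|\Tmc|^2$, which together account for the overall bound. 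The main obstacle, and the step I expect to be most delicate, is establishing strict decrease of $\mu$. The argument requires a case analysis of the projection $\pi_2$: in the non-surjective case one identifies a node of $\Jmc^{(n)}$ with empty fibre, and in the surjective-but-non-isomorphic case one uses $\Jmc^{(n)}\not\to_\rho\Jmc^{(n+1)}$ to exhibit a concept name $A$ and $e\in\Delta^{\Jmc^{(n)}}$ with $e\in A^{\Jmc^{(n)}}$ but $(d,e)\notin A^{\Jmc^{(n+1)}}$ for every $(d,e)\in\pi_2^{-1}(e)$. The essentiality of both $\Jmc^{(n)}$ and $\Jmc^{(n+1)}$ is crucial here, since it keeps the fibres of $\pi_2$ small and lets the failure of $\Jmc^{(n)}\to_\rho\Jmc^{(n+1)}$ translate into a concrete label or subtree missing in $\Jmc^{(n+1)}$.
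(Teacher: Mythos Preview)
Your opening size bound on essential $\Tmc$-countermodels is correct and matches the paper's Lemma~\ref{lem:small1}. The equivalence $\Jmc\not\to_\rho(\Imc\times_\rho\Jmc)\Leftrightarrow\Jmc\not\to_\rho\Imc$ is also correct and a useful simplification.

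For part~(i), your append argument is right, but the invariant ``pairwise distinct targets'' is too weak to survive replacements. The problem is that a single $\Imc_j$ may violate several CIs of~$\Tmc$ at its root, and when $\Imc_j$ is replaced by $\Jmc'\subseteq\Imc\times_\rho\Imc_j$, the only root-violations of $\Jmc'$ might be CIs that were already the designated targets of other list members. What the paper actually maintains (Lemma~\ref{lem:polylength2}) is the asymmetric positional invariant: if $\Imc_i\not\models^\rho C\sqsubseteq A\in\Tmc$ then $\rho_{\Imc_j}\notin C^{\Imc_j}$ for all $j<i$. This is stronger than distinctness of targets and is what makes the replacement case go through; your ``refinement of the same argument'' would need to discover and prove this.

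The more serious gap is in part~(ii). Your proposed case split on surjectivity of $\pi_2$ does not yield a monotone measure as stated. In the surjective case your label-discrepancy claim (``there is $A$ and $e\in A^{\Jmc^{(n)}}$ with every preimage of $e$ outside $A^{\Jmc^{(n+1)}}$'') is false when $\pi_2$ is non-injective: take $\Tmc=\{\exists r.A_1\sqcap\exists r.A_2\sqsubseteq B\}$, $\Jmc^{(n)}$ a root with one $r$-child labelled $\{A_1,A_2\}$, and $\Imc$ a root with two $r$-children labelled $\{A_1\}$ and $\{A_2\}$ respectively; then $\Jmc^{(n+1)}=\Imc\times_\rho\Jmc^{(n)}$ has $\pi_2$ surjective, $\Jmc^{(n)}\not\to_\rho\Jmc^{(n+1)}$, yet for each label at $e_1$ some preimage carries it. More fundamentally, in your non-surjective case you only ``identify a node with empty fibre'' without saying what decreases, and indeed neither $|\Delta|$ nor the total label count need move in a fixed direction there.

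The insight you are missing is that the set of concept names at the \emph{root} always weakly shrinks along the replacement chain (since the root of $\Jmc^{(n+1)}$ is $(\rho_\Imc,\rho_{\Jmc^{(n)}})$ with the intersection of labels). The paper exploits this: whenever root labels do \emph{not} strictly shrink, essentiality of $\Jmc^{(n)}$ forces the projection $\pi_2$ to be surjective. Indeed, if the image were a proper subtree of $\Jmc^{(n)}$, Condition~2 of essentiality would make that subtree a model of~$\Tmc$; but the image root lies in $C$ (for any $C\sqsubseteq A$ violated by $\Jmc^{(n+1)}$) and, by equality of root labels, not in $A$, a contradiction. So the real three-way split is: root labels strictly shrink (at most $|\Sigma_\Tmc|$ times), else $\pi_2$ surjective and either $|\Delta|$ strictly grows, or $\pi_2$ is bijective and then, as you correctly argue for that subcase, total labels strictly drop. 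Without the root-label observation there is no single measure bounded by $|\Tmc|+|\Tmc|^2$ that is strictly monotone at every step.
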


The rest of this section is devoted to proving Lemma \ref{lem:term2}. 
For easy reference, assume that at each point of the execution of the
algorithm, \Imf has the form $\Imc_0,\dots,\Imc_k$ for some $k \geq
0$. To establish Point~(i) of Lemma~\ref{lem:term2}, we closely follow \citep{DBLP:journals/ml/AngluinFP92}
and show that
\begin{itemize}

\item[(iii)] for every $\Imc_i$, there is a $D_i \sqsubseteq A_i \in
  \Tmc$ with $\Imc_i \not\models D_i \sqsubseteq A_i$ and 

\item[(iv)] if $i \neq j$, then $D_i \sqsubseteq A_i$ and $D_j
  \sqsubseteq A_j$ are not identical.

\end{itemize}
In fact, Point~(iii) is immediate since whenever a new $\Imc_i$ is
added to \Imf in the algorithm, then $\Imc_i$ is a \Tmc-countermodel.
To prove Point~(iv), we first establish the intermediate
Lemma~\ref{lem:polylength1} below. For a ditree interpretation~\Imc and a CI $C \sqsubseteq A$, we
write $\Imc \models^{\rho} C \sqsubseteq A$ if $\rho_{\Imc}\notin
C^\Imc$ or $\rho_\Imc \in A^\Imc$; that is, the CI $C
\sqsubseteq A$ is satisfied at the root of \Imc, but not necessarily
at other points in \Imc. It is easy to see that if some interpretation
\Imc is a \Tmc-countermodel, then there is  $C \sqsubseteq A \in
\Tmc$ such that $\Imc \not\models^{\rho} C \sqsubseteq A$.

The following lemma shows under which conditions  Algorithm \ref{alg:rl2} replaces
an interpretation in the list \Imf. 

\begin{lemma}
\label{lem:polylength1}
If the interpretation \Imc constructed in Line~5 of Algorithm \ref{alg:rl2}
satisfies $\Imc \not \models^{\rho} C \sqsubseteq A \in \Tmc$ and
$\rho_{\Imc_j} \in C^{\Imc_j}$ for some $j$, then $\Jmc=\Imc_i$ is
replaced with $\Jmc'$ in Line~9 for some $i \leq j$.
\end{lemma}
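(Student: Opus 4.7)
My plan is to argue that $\Imc_j$ itself satisfies both conjuncts of the test in Line~6, so that—since the algorithm replaces the \emph{first} such element of $\Imf$—the replacement in Line~9 must target some $\Imc_i$ with $i\le j$. The two conditions I need to verify for $\Jmc=\Imc_j$ are (a) $\Imc\times_\rho\Imc_j\not\models\Tmc$ and (b) $\Imc_j\not\rightarrow_\rho(\Imc\times_\rho\Imc_j)$.

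Part (a) will be a short calculation via Lemma~\ref{lem:prod}: from $\rho_\Imc\in C^\Imc$ and $\rho_{\Imc_j}\in C^{\Imc_j}$ I get $(\rho_\Imc,\rho_{\Imc_j})\in C^{\Imc\times\Imc_j}$, and hence in $C^{\Imc\times_\rho\Imc_j}$, whereas $\rho_\Imc\notin A^\Imc$ forces $(\rho_\Imc,\rho_{\Imc_j})\notin A^{\Imc\times_\rho\Imc_j}$. Since $(\rho_\Imc,\rho_{\Imc_j})$ is the root of $\Imc\times_\rho\Imc_j$, this refutes $C\sqsubseteq A$ and therefore $\Tmc$.

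For part (b), I first reduce to showing $\Imc_j\not\rightarrow_\rho\Imc$: a homomorphism $h:\Imc_j\to\Imc\times_\rho\Imc_j$ fixing the root projects through the first coordinate to a homomorphism $\Imc_j\to\Imc$ fixing the root, while conversely, given such a $g$, the map $d\mapsto(g(d),d)$ lands in the connected component of $(\rho_\Imc,\rho_{\Imc_j})$ in $\Imc\times\Imc_j$. Then I would suppose, toward contradiction, that $g:\Imc_j\to\Imc$ with $g(\rho_{\Imc_j})=\rho_\Imc$ exists. By Lemma~\ref{hom} this gives $\rho_\Imc\in C_{\Imc_j}^\Imc$. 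On the other hand, $\rho_{\Imc_j}\in C^{\Imc_j}$ combined with Lemma~\ref{lem:hom1} (viewing $\Imc_j$ as the tree of $C_{\Imc_j}$) yields $\emptyset\models C_{\Imc_j}\sqsubseteq C$, and composing with $\Tmc\models C\sqsubseteq A$ gives $\Tmc\models C_{\Imc_j}\sqsubseteq A$. Because $\Imc_j\in\Imf$ and $A\in\Sigma_\Tmc$, the construction of $\Hmc$ performed in Line~13 at the end of the previous iteration ensures $C_{\Imc_j}\sqsubseteq A\in\Hmc$. Since Line~5 guarantees $\Imc\models\Hmc$, we conclude $\rho_\Imc\in A^\Imc$, contradicting the hypothesis $\Imc\not\models^\rho C\sqsubseteq A$.

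The main subtlety will be the bookkeeping around $\Hmc$: the contradiction hinges on the invariant that every $\Imc_j$ presently in $\Imf$ has already contributed every entailed CI $C_{\Imc_j}\sqsubseteq A'$ (for $A'\in\Sigma_\Tmc$) to the $\Hmc$ that is in force at Line~5 of the current iteration, which requires tracking how $\Imf$ and $\Hmc$ evolve between iterations and noting that $\Hmc$ was last rebuilt at Line~13 after the last modification of $\Imf$.
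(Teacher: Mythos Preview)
Your proposal is correct and follows essentially the same approach as the paper: show that $\Imc_j$ itself satisfies the test in Line~6 (so the first index $i$ chosen in Line~7 satisfies $i\le j$), establishing $\Imc\times_\rho\Imc_j\not\models\Tmc$ via Lemma~\ref{lem:prod} and deriving a contradiction to $\Imc\models\Hmc$ from the assumption $\Imc_j\rightarrow_\rho(\Imc\times_\rho\Imc_j)$ by exhibiting $C_{\Imc_j}\sqsubseteq A\in\Hmc$ with $\rho_\Imc\in C_{\Imc_j}^{\Imc}\setminus A^{\Imc}$. The only cosmetic difference is that you insert the intermediate equivalence $\Imc_j\rightarrow_\rho(\Imc\times_\rho\Imc_j)\Leftrightarrow\Imc_j\rightarrow_\rho\Imc$ before invoking Lemma~\ref{hom}, whereas the paper stays inside the product and uses Lemma~\ref{lem:prod} to pull $\rho_\Imc\in C_{\Imc_j}^{\Imc}$ back; both routes reach the same contradiction, and your remark about the $\Hmc$/$\Imf$ bookkeeping is exactly what the paper leaves implicit.
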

\begin{proof}
  Assume that the interpretation \Imc constructed in Line~5 of Algorithm \ref{alg:rl2}
   satisfies $\Imc \not \models^{\rho} C \sqsubseteq A \in
  \Tmc$ and that there is some $j$ with $\rho_{\Imc_j} \in
  C^{\Imc_j}$.  If there is some $i < j$ such that $\Imc_i \not\rightarrow_{\rho} (\Imc \times_{\rho} \Imc_i)$ and $\Imc \times_{\rho} \Imc_i
  \not\models \Tmc$, then $\Jmc=\Imc_{i'}$ will be replaced with
  $\Jmc'$ in Line~9 for some ${i'} \leq i$ and we are done. Thus
  assume that there is no such $i$. We aim to show that $\Jmc=\Imc_j$
  is replaced with $\Jmc'$ in Line~9. To this end, it suffices to
  prove that $\Imc_j \not\rightarrow_{\rho} (\Imc \times_{\rho} \Imc_j)$ and $\Imc
  \times_{\rho} \Imc_j \not\models \Tmc$. The latter is a consequence of
  $\Imc \not \models^{\rho} C \sqsubseteq A$ and $\rho_{\Imc_j} \in
  C^{\Imc_j}$.

  Assume to the contrary of what we have to show that
  $\Imc_j \rightarrow_{\rho} (\Imc \times_{\rho} \Imc_j)$.  We establish a
  contradiction against $\Imc \models \Hmc$ (which holds by
  construction of \Imc in the algorithm) by showing that
  \begin{enumerate}

  \item $\Imc \not\models^{\rho} C_{\Imc_j} \sqsubseteq A$ and

  \item $C_{\Imc_j} \sqsubseteq A \in \Hmc$.

  \end{enumerate}
  For Point~1, $\Imc_j \rightarrow_{\rho} (\Imc \times_{\rho} \Imc_j)$ and
  $\rho_{\Imc_j} \in (C_{\Imc_j})^{\Imc_j}$ imply $\rho_{\Imc \times_{\rho}
    \Imc_j} \in (C_{\Imc_j})^{\Imc \times_{\rho} \Imc_j}$, which gives
  $\rho_\Imc \in (C_{\Imc_j})^\Imc$, by Lemma~\ref{lem:prod}. 
  It remains to observe that $\Imc \not \models^{\rho} C \sqsubseteq A$ implies $\rho_{\Imc} \notin
  A^{\Imc}$.

  In view of the construction of \Hmc in the algorithm, Point~2 can be
  established by showing that $\Tmc \models C_{\Imc_j} \sqsubseteq A$.
  Since $C \sqsubseteq A \in \Tmc$, it suffices to prove that
  $\emptyset \models C_{\Imc_j} \sqsubseteq C$. This, however, is an immediate consequence
  of the fact that $\rho_{\Imc_j} \in C^{\Imc_j}$ and the definition of~$C_{\Imc_j}$.
\end{proof}
Now, Point~(iv) above is a consequence of the following.
\begin{lemma}
\label{lem:polylength2}
At any time of the algorithm execution, the following condition holds:
if $\Imc_i \not\models^{\rho} C \sqsubseteq A \in \Tmc$ and $j < i$,
then $\rho_{\Imc_j} \notin C^{\Imc_j}$.
\end{lemma}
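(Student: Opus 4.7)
I would prove the invariant by induction on the number of iterations of the main while loop of Algorithm~\ref{alg:rl2}. Initially $\Imf$ is empty, so the claim holds vacuously. For the inductive step I case split on whether the iteration modifies $\Imf$ via the append step in Line~11 or via the replace step in Line~9. A key algebraic tool throughout is that $\Jmc' \subseteq \Imc \times_{\rho} \Imc_i$ is obtained by removing subtrees, so the inclusion is a homomorphism sending $\rho_{\Jmc'}$ to $\rho_{\Imc \times_{\rho} \Imc_i}$; by Lemma~\ref{hom}, $\rho_{\Jmc'} \in D^{\Jmc'}$ implies $\rho_{\Imc \times_{\rho} \Imc_i} \in D^{\Imc \times_{\rho} \Imc_i}$ for any $\mathcal{EL}$ concept $D$, while for concept names the label of the retained root is unchanged by subtree removal.

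In the append case, suppose $\Imc$ is appended as $\Imc_{m+1}$. Only pairs of the form $(j,m+1)$ are new. If $\Imc_{m+1} \not\models^{\rho} C \sqsubseteq A \in \Tmc$ and, for contradiction, $\rho_{\Imc_j} \in C^{\Imc_j}$ for some $j \leq m$, then the hypotheses of Lemma~\ref{lem:polylength1} are met, forcing the algorithm to replace some $\Imc_{i'}$ with $i' \leq j$ rather than append.

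In the replace case, suppose $\Imc_i$ is replaced by $\Jmc' \subseteq \Imc \times_{\rho} \Imc_i$. Pairs $(j,i')$ with both indices different from $i$ are unaffected. If $\Jmc' \not\models^{\rho} C \sqsubseteq A$, combining Lemma~\ref{lem:prod} with the embedding observation above forces $\rho_{\Imc} \in C^{\Imc}$, $\rho_{\Imc_i} \in C^{\Imc_i}$, and either $\rho_{\Imc} \notin A^{\Imc}$ or $\rho_{\Imc_i} \notin A^{\Imc_i}$. For a new pair $(j,i)$ with $j<i$: if $\rho_{\Imc_i} \notin A^{\Imc_i}$, then $\Imc_i \not\models^{\rho} C \sqsubseteq A$ and the induction hypothesis applied to the old pair $(j,i)$ gives $\rho_{\Imc_j} \notin C^{\Imc_j}$; if instead $\rho_{\Imc} \notin A^{\Imc}$ and some $j<i$ had $\rho_{\Imc_j} \in C^{\Imc_j}$, Lemma~\ref{lem:polylength1} would force the replacement to occur at some index $\leq j < i$, contradicting the fact that Line~7 selects the \emph{first} index satisfying the Line~6 condition (here we replaced $\Imc_i$). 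For a new pair $(i,i')$ with $i'>i$ and $\Imc_{i'} \not\models^{\rho} C \sqsubseteq A$, the induction hypothesis applied to the old pair $(i,i')$ gives $\rho_{\Imc_i} \notin C^{\Imc_i}$, and the product/embedding reasoning above then rules out $\rho_{\Jmc'} \in C^{\Jmc'}$.

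The main obstacle is the sub-case $\rho_{\Imc} \notin A^{\Imc}$ in the replace case: closing it requires using Line~7's first-fit rule precisely, so that ``$\Imc_i$ is replaced'' precludes ``some smaller index satisfies Line~6's condition''. Once that point is isolated, the remaining steps are routine invocations of Lemmas~\ref{hom}, \ref{lem:prod} and~\ref{lem:polylength1}, together with the trivial fact that $\mathcal{EL}$ concept names at a retained root survive subtree removal.
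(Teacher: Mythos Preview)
Your proposal is correct and follows essentially the same approach as the paper: induction on the number of iterations, with the same case split between the append (Line~11) and replace (Line~9) branches, the same two sub-cases in the replace branch (the replaced index is the larger vs.\ the smaller one in the pair), and the same appeals to Lemma~\ref{lem:polylength1}, Lemma~\ref{lem:prod}, and the subtree-removal embedding. The only cosmetic difference is that the paper phrases the product step more compactly as ``$\Jmc' \not\models^{\rho} C \sqsubseteq A$ implies $\Imc \times_\rho \Jmc \not\models^{\rho} C \sqsubseteq A$, hence $\Imc \not\models^{\rho} C \sqsubseteq A$ or $\Jmc \not\models^{\rho} C \sqsubseteq A$'', whereas you unpack this into ``both roots lie in $C$ and at least one fails $A$''; the two formulations are equivalent and lead to the same two-way split.
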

\begin{proof}
  We prove the invariant formulated in Lemma~\ref{lem:polylength2} by
  induction on the number of iterations of the while loop. Clearly,
  the invariant is satisfied before the loop is entered. We now
  consider the two places where \Imf is modified, that is, Line~9 and
  Line~11, starting with the latter. 

\smallskip

In Line~11, $\Imc$ is appended to \Imf. Assume that $\Imc \not\models^{\rho}
C \sqsubseteq A \in \Tmc$. We have to show that, before \Imc was added
to \Imf, there was no $\Imc_i \in \Imf$ with $\rho_{\Imc_i} \in
C^{\Imc_i}$. This, however, is immediate by
Lemma~\ref{lem:polylength1}.

\smallskip

Now assume that $\Jmc$ was replaced in Line~9 with $\Jmc'$. We have
to show two properties:
  \begin{enumerate}

  \item If $\Jmc' = \Imc_i \not\models^{\rho} C \sqsubseteq A \in \Tmc$ and
    $j < i$, then $\rho_{\Imc_j} \notin C^{\Imc_j}$.

    Assume to the contrary that $\rho_{\Imc_j} \in C^{\Imc_j}$.  Since
    $\Jmc'$ is obtained from $\Imc \times \Jmc$ by removing subtrees
    (see Lemma~\ref{lem:lem2}), $\Jmc' \not \models^{\rho} C \sqsubseteq A$
    implies $\Imc \times \Jmc \not\models^{\rho} C \sqsubseteq
    A$. Consequently, $\Imc \not\models^{\rho} C \sqsubseteq A$ or $\Jmc
    \not\models^{\rho} C \sqsubseteq A$. The former and $\rho_{\Imc_j} \in
    C^{\Imc_j}$ yields $i \leq j$ by Lemma~\ref{lem:polylength1}, in
    contradiction to $j < i$. In the latter case, since $\Imc_i=\Jmc$
    before the replacement of \Jmc with $\Jmc'$, we have a
    contradiction against the induction hypothesis.

  \item If $\Jmc'=\Imc_j$ and $\Imc_i \not\models^{\rho} C \sqsubseteq A \in
    \Tmc$ with $i>j$, then $\rho_{\Imc_j} \notin C^{\Imc_j}$.

    Assume to the contrary that $\rho_{\Imc_j} \in C^{\Imc_j}$.  Since
    $\Jmc'$ is obtained from $\Imc \times_{\rho} \Jmc$ by removing subtrees,
    we then have $\rho_{\Imc \times_{\rho} \Jmc} \in C^{\Imc \times_{\rho} \Jmc}$, thus
    $\rho_\Jmc \in C^\Jmc$. Since $\Imc_j=\Jmc$ before the replacement
    of \Jmc with $\Jmc'$, we have a contradiction against the induction
    hypothesis.
    
  \end{enumerate}
\end{proof}
We now turn towards proving Point~(ii) of Lemma~\ref{lem:term2}. It is a consequence of Lemma~\ref{lem:small2} below.
\begin{lemma}
\label{lem:small1}
If \Imc is an essential $\Tmc$-countermodel, then $|\Delta^\Imc| \leq |\Tmc|$.
\end{lemma}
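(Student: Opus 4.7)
The plan is to exploit the ditree structure of $\Imc$ together with both conditions in the definition of essential $\Tmc$-countermodel to show that $\Delta^\Imc$ is entirely covered by the image of a single homomorphism from some concept tree $T_{C_0}$ with $C_0 \sqsubseteq A_0 \in \Tmc$. Since the size of $T_{C_0}$ is bounded by $|C_0| \leq |\Tmc|$, the bound follows.

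First I would argue that every violation of $\Tmc$ in $\Imc$ occurs at the root. Indeed, if some $C \sqsubseteq A \in \Tmc$ is violated at $e \neq \rho_\Imc$, then because $\Imc$ is a ditree, every edge leaving a descendant of $e$ stays within the subtree at $e$, so every homomorphism witnessing $e \in C^\Imc$ maps into $\Imc|^-_\rho$; together with $e \notin A^\Imc$ this contradicts Condition~1. Hence there is $C_0 \sqsubseteq A_0 \in \Tmc$ with $\rho_\Imc \in C_0^\Imc$ and $\rho_\Imc \notin A_0^\Imc$, and by Lemma~\ref{hom} there is a homomorphism $h:T_{C_0} \to \Imc$ with $h(\rho_{C_0}) = \rho_\Imc$.

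Next I would observe that $\mn{Im}(h)$ is closed under taking ancestors up to $\rho_\Imc$: any $e = h(x) \in \mn{Im}(h)$ is reached from $\rho_\Imc$ via the image under $h$ of the unique path from $\rho_{C_0}$ to $x$ in $T_{C_0}$, and since $\Imc$ is a ditree this image coincides with the unique ancestor path of $e$. Then, for any non-root $d \in \Delta^\Imc$, Condition~2 yields $\Imc|^-_{d\downarrow} \models C_0 \sqsubseteq A_0$; since $A_0$ is a concept name, $\rho_\Imc \notin A_0^{\Imc|^-_{d\downarrow}}$, so $\rho_\Imc \notin C_0^{\Imc|^-_{d\downarrow}}$. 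Hence no homomorphism from $T_{C_0}$ to $\Imc$ fixing the root avoids the subtree at $d$; in particular $h$ must hit some node $d'$ that is either $d$ or a descendant of $d$. The ancestor-closure of $\mn{Im}(h)$ then forces $d \in \mn{Im}(h)$.

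Putting these together gives $\Delta^\Imc \subseteq \mn{Im}(h)$, so $|\Delta^\Imc| \leq |\mn{Im}(h)| \leq n_{C_0} \leq |C_0| \leq |\Tmc|$. The main obstacle I foresee is the careful ditree bookkeeping needed to justify the ancestor-closure of $\mn{Im}(h)$ and the passage from ``$h$ uses a descendant of $d$'' to ``$d \in \mn{Im}(h)$''; everything else follows directly from Conditions~1 and~2 and the fact that in $\EL_\mn{lhs}$ the right-hand side is always a concept name (so removing nodes never causes the root to enter $A_0^{\Imc|^-_{d\downarrow}}$).
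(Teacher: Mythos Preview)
Your proposal is correct and follows essentially the same approach as the paper: pick a CI $C_0 \sqsubseteq A_0 \in \Tmc$ violated at the root, take a witnessing homomorphism $h: T_{C_0} \to \Imc$, and use Condition~2 to show $h$ is surjective. The only difference is that you make the ancestor-closure of $\mn{Im}(h)$ explicit, whereas the paper uses it tacitly when asserting that if $d$ lies outside the range of $h$ then $h$ already maps into $\Imc|^-_{d\downarrow}$; your extra bookkeeping is exactly what justifies that implicit step.
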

\begin{proof}
  Let $\Imc$ be an essential $\Tmc$-countermodel. Then $\Imc \not\models \Tmc$, but
  $\Imc|^-_{\rho} \models \Tmc$. It follows that there is a $C
  \sqsubseteq A \in \Tmc$ such that $\rho_\Imc \in C^\Imc \setminus
  A^\Imc$. %
  By Lemma~\ref{hom}, there is a homomorphism $h$ from $\Imc_C$ to $\Imc$ mapping
  $\rho_{\Imc_C}$ to $\rho_\Imc$. We show that $|\Delta^{\Imc}| \leq |C|$, from which
  $|\Delta^{\Imc}| \leq |\Tmc|$ follows. It suffices to show that $h$ is surjective. 
  Assume that this is not the case and let $d \in \Delta^\Imc$ be outside the range of $h$.
  Then $h$ is a homomorphism from $\Imc_C$ to $\Jmc:= \Imc|^-_{d\downarrow}$. 
  Therefore, $\rho^{\Jmc} \in C^\Jmc$ by Lemma~\ref{hom}, which implies $\Jmc \not\models C \sqsubseteq A$.
  But $\Jmc \not\models C \sqsubseteq A$ contradicts the assumption
  that \Imc is an essential \Tmc-countermodel as it violates Condition~2 of being an essential $\Tmc$-countermodel.
\end{proof}
\begin{lemma}
\label{lem:small2}
Let $\Imc_{0},\ldots,\Imc_{n}$ be a list of interpretations such that $\Imc_{i+1}$ replaces $\Imc_{i}$
in Line~9 for all $i<n$. Then $n\leq |\Tmc|+|\Tmc|^{2}$.
\end{lemma}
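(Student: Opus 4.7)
The plan is to exhibit a polynomially bounded integer potential $\mu$ on essential \Tmc-countermodels that strictly decreases along the replacement sequence and is bounded above by $|\Tmc|+|\Tmc|^{2}$, yielding the desired bound on $n$.

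First I would observe that every replacement gives strict descent in the root-preserving homomorphism preorder. Letting $\Imc$ be the essential \Tmc-countermodel picked in Line~5 of the iteration that replaces $\Imc_{i}$ by $\Imc_{i+1}$, the second projection $\pi_{2}:\Imc\times_{\rho}\Imc_{i}\to\Imc_{i}$ is a root-preserving homomorphism by construction of the product, and restricts to $\pi_{2}:\Imc_{i+1}\to_{\rho}\Imc_{i}$ via the inclusion $\Imc_{i+1}\subseteq\Imc\times_{\rho}\Imc_{i}$ (Lemma~\ref{lem:lem2}). Conversely, Line~6 supplies $\Imc_{i}\not\to_{\rho}(\Imc\times_{\rho}\Imc_{i})$, so composing any hypothetical homomorphism $\Imc_{i}\to_{\rho}\Imc_{i+1}$ with the inclusion would contradict this. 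Hence $\Imc_{i+1}$ is strictly below $\Imc_{i}$ in the root-preserving homomorphism preorder on ditree interpretations.

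Second, by Lemma~\ref{lem:small1} we have $|\Delta^{\Imc_{i}}|\leq|\Tmc|$ for every $i$, and since labels are subsets of $\Sigma_{\Tmc}$, the total label-incidence count $t_{i}:=\sum_{d\in\Delta^{\Imc_{i}}}|l(d)|$ satisfies $t_{i}\leq|\Delta^{\Imc_{i}}|\cdot|\Sigma_{\Tmc}|\leq|\Tmc|^{2}$. So any potential combining node count and label-incidence count additively sits in $[0,\,|\Tmc|+|\Tmc|^{2}]$, matching the bound in the lemma.

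The crux is to define $\mu$ and verify $\mu(\Imc_{i+1})<\mu(\Imc_{i})$. The intuition is that for each $(d,e)\in\Delta^{\Imc_{i+1}}$ the label is $l^{\Imc}(d)\cap l^{\Imc_{i}}(e)\subseteq l^{\Imc_{i}}(e)$, so along fibres of $\pi_{2}$ labels can only shrink, and the strict hom-preorder descent of Step~1 precludes the degenerate case in which $\pi_{2}$ would be a label-preserving isomorphism. The main obstacle is that the product of two branching ditrees can have more nodes than either factor (when both $\Imc$ and $\Imc_{i}$ have several $r$-successors of a node for the same role $r$), so $\pi_{2}$ need not be injective and the naive candidate $|\Delta^{\Imc}|+t_{\Imc}$ fails to be monotone. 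The design of $\mu$ must therefore carry out a fibre-wise bookkeeping that charges any extra nodes of $\Imc_{i+1}$ against strict label shrinkage within the corresponding fibres of $\pi_{2}$, and combine this with the essentiality of $\Imc_{i+1}$ (which forbids unnecessary subtrees) to force the strict decrease of $\mu$ at every step.
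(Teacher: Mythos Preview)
Your proposal has a genuine gap: you never actually define the potential $\mu$, and the final paragraph is a description of what such a $\mu$ ``must'' do rather than a construction together with a verification. You correctly identify the obstacle---$\pi_{2}:\Imc_{i+1}\to\Imc_{i}$ need not be injective, so the naive candidate $|\Delta^{\Imc}|+t_{\Imc}$ can increase---but ``fibre-wise bookkeeping that charges extra nodes against strict label shrinkage'' is not an argument; it is a hope. In particular, along a single fibre of $\pi_{2}$ every node of $\Imc_{i+1}$ has label contained in the label of the common image point, but two distinct preimages may carry the \emph{same} label as that image point, so there is no obvious local deficit to charge the extra node against. Essentiality of $\Imc_{i+1}$ only tells you that no subtree of $\Imc_{i+1}$ is redundant for refuting $\Tmc$; it does not by itself bound fibre sizes of $\pi_{2}$.

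The paper avoids this obstacle by a different route. It does not look for a single monotone potential. Instead it proves a dichotomy: at every step either (a) the set of concept names at the root strictly shrinks (this can happen at most $|\Sigma_{\Tmc}|\leq|\Tmc|$ times, since by your $\pi_{2}$ argument it never grows), or (b) the projection $\Imc_{i+1}\to_{\rho}\Imc_{i}$ is \emph{surjective}. Surjectivity is the key point you are missing, and it is obtained from essentiality of $\Imc_{i}$ (not of $\Imc_{i+1}$): the image of $\pi_{2}$ is a subtree-closed subinterpretation of $\Imc_{i}$ that still refutes $\Tmc$ at the root (here one uses that root labels did not change), and Condition~2 of essentiality forces this image to be all of $\Imc_{i}$. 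Once $\pi_{2}$ is surjective, $|\Delta^{\Imc_{i+1}}|\geq|\Delta^{\Imc_{i}}|$, and if equality holds then $\pi_{2}$ is a tree isomorphism, so $\Imc_{i}\not\to_{\rho}\Imc_{i+1}$ forces some $|A^{\Imc_{\cdot}}|$ to drop strictly. Bounding the length of each ``case~(b)'' run by $|\Tmc|^{2}$ via Lemma~\ref{lem:small1} then gives the stated bound. If you want to salvage your potential-function approach, the surjectivity observation is exactly what makes it work within a run.
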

\begin{proof}
Let $\Imc$ and $\Jmc$ be ditree interpretations. We set $\Imc \leq_{\rho} \Jmc$ if $\rho_{\Imc}\in A^{\Imc}$ implies $\rho_{\Jmc}\in A^{\Jmc}$
for all concept names $A$. We first show that for every $i<n$ either
  \begin{itemize}

  \item[(a)] $\Imc_{i} \not\leq_{\rho} \Imc_{i+1}$ or

  \item[(b)] $\Imc_{i+1} \rightarrow_{\rho} \Imc_i$ via a surjective homomorphism.

  \end{itemize}
  For a proof by contradiction assume that there is $i<n$ such that neither (a) nor (b) holds.
  Since $\Imc_{i+1}$ is obtained from some $\Imc \times_{\rho} \Imc_{i}$ by removing subtrees and 
  $(\Imc \times_{\rho} \Imc_{i})\rightarrow_{\rho}
  \Imc_{i}$ we obtain that $\Imc_{i+1} \rightarrow_{\rho} \Imc_{i}$.
  Since $\Imc_{i+1}$ is an essential \Tmc-countermodel, there is
  a $C \sqsubseteq A \in \Tmc$ such that $\Imc_{i+1} \not\models^{\rho} C
  \sqsubseteq A$. Let $\Jmc$ be the subinterpretation of $\Imc_{i}$ determined by the range of the homomorphism $h$
  from $\Imc_{i+1}$ to $\Imc_{i}$ mapping $\rho_{\Imc_{i+1}}$ to $\rho_{\Imc_{i}}$. By Lemma~\ref{hom}, $\rho_{\Jmc}\in C^{\Jmc}$ and so, since $\rho_{\Jmc}
  \not\in A^{\Jmc}$ because (a) does not hold, $\Jmc \not\models^{\rho} C \sqsubseteq A$.
  $\Imc_{i}$ is an essential $\Tmc$-countermodel and so $\Jmc=\Imc$. But then $h$ is surjective
  and we have derived a contradiction.
 
\medskip

In addition to the property stated above, we also have for all $i<n$:
\begin{itemize}
\item[(c)]  $\Imc_{i+1} \leq_{\rho} \Imc_{i}$ and 
\item[(d)] $\Imc_{i} \not\rightarrow_{\rho} \Imc_{i+1}$.
\end{itemize}
It follows that for any $i<n$ with $\Imc_{i} \leq_{\rho} \Imc_{i+1}$ either $|\Delta^{\Imc_{i}}| < |\Delta^{\Imc_{i+1}}|$ or 
$|A^{\Imc_{i}}|< |A^{\Imc_{i+1}}|$ for some concept name $A$.  
By Lemma~\ref{lem:small1} we have $|\Delta^{\Imc_{i}}|\leq |\Tmc|$ for all $i\leq n$. Hence $k-j\leq |\Tmc|^{2}$
for any subsequence $\Imc_{j} \ldots,\Imc_{k}$ of $\Imc_{0},\ldots,\Imc_{n}$ with $\Imc_{i} \leq_{\rho} \Imc_{i+1}$ for all $j\leq i<k$.
It follows that $n\leq |\Tmc|+|\Tmc|^{2}$.
\end{proof}
We have thus established the main result of this section. Note that we obtain a polynomial time learning
algorithm since checking $\Tmc\models \alpha$ is in polynomial times for $\mathcal{EL}$ TBoxes $\Tmc$
and $\mathcal{EL}$ CIs $\alpha$ (as discussed in Section~\ref{sect:prelim}).
\begin{theorem}\label{thm:el}
$\mathcal{EL}_\mn{lhs}$ TBoxes are polynomial time learnable using membership and equivalence queries.
\end{theorem}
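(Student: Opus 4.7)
The plan is to verify that Algorithm~\ref{alg:rl2} meets the three requirements for polynomial time learnability: correctness upon termination, a polynomial bound on the number of iterations of the outer loop, and polynomial time per iteration. Correctness is immediate from the construction: the hypothesis $\Hmc$ in Line~13 is built only from CIs $C_\Imc \sqsubseteq A$ such that $\Tmc\models C_\Imc\sqsubseteq A$, which maintains the invariant $\Tmc\models \Hmc$, so any counterexample returned in Line~4 is necessarily positive; when the while loop terminates, $\Hmc\equiv\Tmc$ holds by the guard condition.

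For the iteration count, each pass through the loop either appends a new ditree interpretation to $\Imf$ (Line~11) or replaces an existing element of $\Imf$ (Line~9). By Lemma~\ref{lem:term2}(i), the length of $\Imf$ is bounded by $|\Tmc|$, so there are at most $|\Tmc|$ append operations over the entire run. By Lemma~\ref{lem:term2}(ii) (which follows from Lemma~\ref{lem:small2}), each position in $\Imf$ undergoes at most $|\Tmc|+|\Tmc|^2$ replacements. Thus the total number of iterations is polynomial in $|\Tmc|$.

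For the per-iteration cost, I would argue that every step is polynomial time in $|\Tmc|$ and the size of the largest counterexample seen so far. The construction of an essential $\Tmc$-countermodel in Line~5 uses only polynomially many membership queries by Lemma~\ref{lem:lem1}, and the rule applications in its proof (saturating with $\Hmc$, passing to a minimal subtree, and trimming) are polynomial time given that subsumption in $\EL$ and hence entailment checks $\Hmc\models\alpha$ are in \PTime~(Section~\ref{sect:prelim}). For Line~6, the homomorphism test $\Jmc\not\rightarrow_\rho (\Imc\times_\rho\Jmc)$ reduces via Lemma~\ref{lem:hom1} to an $\EL$ subsumption check between concepts of polynomial size, and $\Imc\times_\rho\Jmc\not\models\Tmc$ is decided by polynomially many subsumption checks of the form $\Tmc\models C_{\Imc\times_\rho\Jmc}\sqsubseteq A$; these run in polynomial time via the membership oracle. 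Line~8 is handled by Lemma~\ref{lem:lem2}, and Line~13 needs only $|\Imf|\cdot|\Sigma_\Tmc|$ membership queries. The crucial size control is Lemma~\ref{lem:small1}, which bounds every essential $\Tmc$-countermodel by $|\Tmc|$ nodes, keeping products $\Imc\times_\rho\Jmc$ of size $\Omc(|\Tmc|^2)$ and thus all intermediate objects polynomial in $|\Tmc|$.

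The main delicate point is ensuring that the interpretation produced in Line~5 is itself of polynomial size before all subsequent products and queries operate on it; this is secured precisely because Lemma~\ref{lem:lem1} returns an essential $\Tmc$-countermodel, to which Lemma~\ref{lem:small1} applies. Combining the polynomial iteration bound with polynomial per-iteration cost and using the \PTime complexity of $\EL$ subsumption yields polynomial time learnability, completing the proof.
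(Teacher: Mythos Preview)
Your proposal is correct and takes essentially the same approach as the paper, which simply observes that the theorem follows from the preceding lemmas (Lemmas~\ref{lem:lem1}, \ref{lem:lem2}, \ref{lem:term2}, \ref{lem:small1}, \ref{lem:small2}) together with the fact that $\EL$ subsumption is in \PTime. You have spelled out the assembly of these pieces in more detail than the paper does at this point, but the structure is identical; one minor remark is that the check $\Imc\times_\rho\Jmc\not\models\Tmc$ in Line~6 in general requires membership queries for each node of the product (not just the root concept $C_{\Imc\times_\rho\Jmc}$), though here it happens to suffice to test the root because both $\Imc$ and $\Jmc$ are essential and hence $(\Imc\times_\rho\Jmc)|^-_\rho\models\Tmc$ by the argument in the proof of Lemma~\ref{lem:lem2}.
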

The following example shows that Algorithm~\ref{alg:rl2} does not terminate in polynomial time
if in Line~5 it does not transform the given counterexample into an essential $\Tmc$-countermodel.

\begin{example}\label{example:elll}\upshape
Assume that Line~5 of Algorithm~\ref{alg:rl2} does not modify the
counterexample $C \sqsubseteq A$ given in Line~4 
if the second condition for essential $\Tmc$-countermodels ($\Imc_{C}|_{d\downarrow}^{-}\models \Tmc$ for all 
$d\in \Delta^{\Imc_{C}}\setminus\{\rho_{\Imc_{C}}\}$) is satisfied but the first condition ($\Imc_{C}|^{-}_{\rho}\models \Tmc$)
does not hold. Then for the target TBox $\Tmc = \{ \exists r. A \sqsubseteq A\}$ the oracle can return the infinite sequence 
of positive counterexamples $\exists r^n . A \sqsubseteq A$, with $n$ a prime number. 
In fact,  Algorithm~\ref{alg:rl2} would simply construct 
the list \Imf of interpretations $\Imc_{\exists r^{n}.A}$, $n$ a prime number, and would not
terminate. To show this observe that Algorithm~\ref{alg:rl2} would never replace a CI in the list \Imf
by another CI since $\Imc_{\exists r^{n}.A} \times_{\rho} \Imc_{\exists r^{n+m}.A}=\Imc_{\exists r^{n}.\top}$ and 
$\Imc_{\exists r^{n}.\top}\models \Tmc$.

Now assume that  %
Line~5 of Algorithm~\ref{alg:rl2} does not modify the counterexample $C \sqsubseteq A$ given in Line~4
if the first condition for essential $\Tmc$-countermodels is satisfied but the second condition
does not hold. Let $\Tmc$ be a TBox containing $\exists r.A \sqsubseteq A$ and some CIs containing the concept
names $B_{1}$ and $B_{2}$, say, for simplicity, $B_{1} \sqsubseteq B_{1}$ and $B_{2}\sqsubseteq B_{2}$.
Let $\varphi^1=\exists r.(B_1\sqcap B_2)$ and
$\varphi^{n+1}=\exists r.(\varphi^{n}
\sqcap B_1\sqcap B_2)$. 
Then
the oracle can return $n$ positive counterexamples
$\exists r.A \sqcap C_i \sqsubseteq A$,
where the tree $T_{C_i}$ corresponding to $C_{i}$ is the result of identifying 
the $i$-th node of the tree $T_{\varphi^{i}}$ corresponding to $\varphi_{i}$ with the root of the tree corresponding to     
$\exists r.( B_1 \sqcap \varphi^{n})\sqcap
\exists r.( B_2 \sqcap \varphi^{n})$. %
Note that the product of
$\Imc_{C_1},\ldots,\Imc_{C_{n}}$ is an interpretation
with $O(2^n)$ elements. 
Then, at the $n$-th iteration, Algorithm~\ref{alg:rl2} computes
an interpretation of exponential size in $n$. 

\end{example}

\section{Limits of Polynomial Learnability}\label{sec:EL_not_learnable}
The main result of this section is that $\EL$ TBoxes 
are not polynomial query learnable 
using membership and equivalence queries. We also show that \ourDLLite TBoxes 
are not polynomial query learnable using membership or equivalence queries
alone. 
The latter result also holds for $\mathcal{EL}_{\mn{lhs}}$ TBoxes.
In this case, however, it follows already from the fact that propositional Horn logic is not 
polynomial query learnable from entailments using membership or equivalence queries alone
\citep{DBLP:conf/icml/FrazierP93,DBLP:journals/ml/AngluinFP92,Ang}.

We start by proving the non-polynomial query learnability result for $\EL$
TBoxes. On our way, we also prove non-polynomial query learnability of
\ourDLLite TBoxes using membership queries only. Our proof shows that even acyclic $\EL$ TBoxes are 
not polynomial query learnable and, in fact, heavily relies on the additional properties of acyclic TBoxes. 
Recall that an $\mathcal{EL}$ TBox is called \emph{acyclic} if it satisfies the following 
conditions \citep{Textbook,DBLP:journals/jair/KonevL0W12}:
   \begin{itemize}
   \item all CIs and CEs are of the form $A \sqsubseteq C$ or $A \equiv C$, where $A$ is a concept name;
   \item no concept name occurs more than once on the left-hand side of a CI;
   \item there are no cyclic definitions: there is no sequence $\alpha_{0},\ldots,\alpha_{n}$ of CIs
   such that the concept name on the left-hand side of $\alpha_{0}$ occurs in $\alpha_{n}$ and the concept name
   on the left-hand side of $\alpha_{i+1}$
   occurs in the right-hand side of $\alpha_{i}$ for all $i<n$.
 \end{itemize} 

Our non-polynomial query learnability proof is inspired by Angluin's lower bound for the following
abstract learning problem \citep{DBLP:journals/ml/Angluin87}: a learner
aims to identify one of $N$ distinct sets $L_1,\dots,L_N$
which have the property that there exists a set $L_\cap$ for which
$L_i\cap L_j = L_\cap$, for any $i\neq j$. It is assumed that $L_\cap$
is not a valid argument to an equivalence query. The learner can pose
membership queries ``$x\in L$?'' and equivalence queries ``$H=L$?''.
Then in the worst case it takes at least $N-1$ membership and
equivalence queries to exactly identify a hypothesis $L_i$ from
$L_1,\dots,L_N$. The proof proceeds as follows.
At every stage of computation, the oracle (which here should be viewed
as an adversary) maintains a set of hypotheses $S$, which the learner is not able to
distinguish based on the answers given so far.  Initially, $S =
\{L_1,\dots,L_N\}$.  When the learner asks a membership query $x$, the
oracle returns 'Yes' if $x\in L_\cap$ and 'No' otherwise. In the
latter case, the (unique) $L_i$ such that $x\in L_i$ is removed from
$S$. When the learner asks an equivalence query $H$, the oracle
returns `No' and a counterexample $x \in L_\cap\oplus H$ (the
symmetric difference of $L_{\cap}$ and $H$).  This always exists as
$L_\cap$ is not a valid query. If the counterexample $x$ is not a
member of $L_\cap$, (at most one) $L_i\in S$ such that $x\in L_i$ is
eliminated from $S$. In the worst case, the learner has to reduce the
cardinality of $S$ to one to exactly identify a hypothesis, which
takes $N-1$ queries.

Similarly to the method outlined above, in our proof we maintain a set
of acyclic $\mathcal{EL}$ TBoxes whose members the learning
algorithm is not able to distinguish based on the answers obtained so
far.
For didactic purposes, we first present a set of
acyclic TBoxes
 $S_N = \{\Tmc_1,\dots,\Tmc_N\}$, 
where $N$ is
superpolynomial in the size of every TBox $\Tmc_i$,
for which the oracle can respond to
membership queries in the way described above but which is
polynomial time
learnable when equivalence queries
are also allowed.  We then show how the TBoxes can be modified to obtain a
family of acyclic TBoxes that is not polynomial query learnable using membership
and equivalence queries.  

To present the TBoxes in $S_N$, fix two role names $r$ and $s$.
We use the following abbreviation.
For any
sequence $\sigma=\sigma^1\sigma^2\dots\sigma^n\in \{r,s\}^{n}$,
the expression $\exists \sma.C$
stands for $\exists \sigma^1.\exists\sigma^2\dots\exists\sigma^n.C$.
Then for every such sequence $\sma$, of which there are $N = 2^{n}$ many,
consider the acyclic $\mathcal{EL}$ TBox $\Tmc_\sigma$ defined as
$$
\begin{array}{rcl}
\Tmc_\sma &=& \left\{
A\sqsubseteq \exists {\sma}.M\sqcap X_0
\right\}\cup\Tmc_0 \; \text{ with }\\[1mm]
 \Tmc_0 &=& \left\{X_i\sqsubseteq \exists r.X_{i+1}\sqcap \exists s.X_{i+1} \mid 0\leq i < n\right\}.
\end{array}
$$
Observe that the canonical model $\Imc_{X_{0},\Tmc_{0}}$ of $X_{0}$ and $\Tmc_{0}$ 
consists of a full binary tree whose edges are labelled
with the role names $r$ and $s$ and with $X_0$ at the root $\rho_{X_{0}}$, $X_1$ at
level~$1$, and so on. In the canonical model $\Imc_{A,\Tmc_{\sma}}$ of $A$ and $\Tmc_{\sma}$,
the root is labelled by $A$ and $X_{0}$ and, \emph{in addition to} the binary tree, 
there is a path given by the sequence $\sma$ whose endpoint is marked by the concept name $M$.

One can use Angluin's strategy to show that TBoxes from the set $S_N$ of all such
TBoxes $\Tmc_{\sma}$ cannot be learned using polynomially many polynomial size 
membership queries only: notice that for no sequence $\sma'\neq\sma$
of length $n$, we have $\Tmc_\sma\models A\sqsubseteq \exists \sma'.M$.
Thus a membership query of the form $A\sqsubseteq \exists \sma.M$
eliminates at most one TBox from the set of TBoxes that the learner
cannot distinguish.  This observation can be generalised to arbitrary
membership queries $C \sqsubseteq D$ in $\mathcal{EL}$; however,
we instead observe that the TBoxes $\Tmc_{\sma}$ are formulated in
\ourDLLite and prove a stronger result.
The proof, given in the appendix, uses the canonical model construction introduced in Section~\ref{sect:prelim}.
\begin{lemma}\label{lem:dllite-non-learn}
For every $\ourDLLite$ CI $B \sqsubseteq D$ over the signature of $\Tmc_{\sma}$,
\begin{itemize}
\item either $\Tmc_{\sma} \models B \sqsubseteq D$ for every $\Tmc_{\sma}\in S_N$
\item or there is at most one $\Tmc_{\sma}\in S_N$ such that $\Tmc_{\sma}\models B \sqsubseteq D$.
\end{itemize}  
\end{lemma}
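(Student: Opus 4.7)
The plan is to invoke Lemma~\ref{lem:can1} to reduce entailment to the existence of a root-preserving homomorphism $h : T_D \to \Imc_{B,\Tmc_{\sma}}$ and then to dissect the canonical model. $\Imc_{A,\Tmc_{\sma}}$ has root $\rho$ labelled $\{A,X_0\}$, with a full $\{r,s\}$-branching binary tree of depth $n$ attached at $\rho$ (its depth-$i$ nodes labelled $\{X_i\}$), plus a separate chain $\rho \to v_1 \to \cdots \to v_n$ whose edge labels spell out $\sma$, where $v_1,\dots,v_{n-1}$ are unlabelled and $v_n$ is the unique node bearing $\{M\}$. Viewed as an undirected graph this is a tree; this will be the decisive structural fact.

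If $B \neq A$, none of the CIs that produce the $\sma$-chain or introduce $M$ is ever triggered when building $\Imc_{B,\Tmc_{\sma}}$, so this canonical model is independent of $\sma$ and the truth value of $\Tmc_{\sma} \models B \sqsubseteq D$ is the same for every $\sma \in \{r,s\}^n$, giving the first bullet. From now on I take $B = A$.

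Suppose $M$ occurs in $D$. Any homomorphism $h$ must send each $M$-labelled node of $T_D$ to $v_n$, the unique $M$-labelled node in $\Imc_{A,\Tmc_{\sma}}$. Fix such a $d^{\ast}$; the unique $T_D$-path $\rho_D = d_0 \to \cdots \to d_k = d^{\ast}$ has a label sequence $\tau_1\cdots\tau_k \in \{r,s,r^-,s^-\}^k$ that is read off from $D$. Under $h$ it traces a walk from $\rho$ to $v_n$ in the undirected tree $\Imc_{A,\Tmc_{\sma}}$, and any such walk reduces, by cancelling consecutive $\tau\tau^-$ pairs, to the unique simple path, which is labelled exactly by $\sma$. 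So $\tau_1\cdots\tau_k$ reduces to $\sma$; since this reduced word depends only on $D$, $\sma$ is forced by $D$, and at most one $\sma$ can entail $A \sqsubseteq D$. This is the second bullet.

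The remaining case is $B = A$ with $M \notin D$, which I expect to be the main obstacle. The plan is to define a folding $\pi : \Imc_{A,\Tmc_{\sma}} \to \Imc^{\mathrm{bt}}$ onto the binary-tree part $\Imc^{\mathrm{bt}}$, which is $\sma$-independent: $\pi$ is the identity on $\Imc^{\mathrm{bt}}$ and sends each $v_i$ to the depth-$i$ binary-tree node reached by the prefix $\sigma^1\cdots\sigma^i$. A short check shows that $\pi$ preserves every $r$- and $s$-edge (the $\sma$-chain folds cleanly onto a branch of the tree at $\rho$) and every node label except at $v_n$, whose label $\{M\}$ has no counterpart in $\Imc^{\mathrm{bt}}$. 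The key observation is that when $M$ does not occur in $D$, any $T_D$-node mapped to $v_n$ must carry the empty label, which the image in $\Imc^{\mathrm{bt}}$ trivially accommodates; hence $\pi \circ h$ is always a root-preserving homomorphism $T_D \to \Imc^{\mathrm{bt}}$. Since the converse inclusion $\Imc^{\mathrm{bt}} \subseteq \Imc_{A,\Tmc_{\sma}}$ is immediate, entailment becomes equivalent to the $\sma$-independent condition that $T_D \to \Imc^{\mathrm{bt}}$ root-to-root, again placing us in the first bullet. The delicate technical step is the edge-by-edge verification that $\pi$ respects inverse-role edges and the edges at $\rho$ on both sides, and the argument that a defect of $\pi$ at $v_n$ is harmless precisely because $D$ avoids $M$.
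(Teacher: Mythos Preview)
Your proof is correct and complete. It takes a route that is closely related to but organised differently from the paper's argument.

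The paper dismisses the cases $B\neq A$ and ``$M$ does not occur in $D$'' as readily checked and focuses on $B=A$ with $M$ in $D$. For that case, the paper \emph{rewrites the concept}: it exhaustively applies a restricted parent/child merging to $D$ (collapsing any $d_1\xrightarrow{\sigma}d\xrightarrow{\sigma^-}d_2$ triple), producing an $\EL$ concept $D'$ such that every root-preserving homomorphism $T_D\to\Imc_{A,\Tmc_\sma}$ factors through $T_{D'}$; then any root-to-$M$ branch in $T_{D'}$ literally spells out $\sma$.

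Your argument for the same case is the semantic dual of this: instead of merging $\tau\tau^-$ pairs in the concept, you observe that the image walk in the ditree $\Imc_{A,\Tmc_\sma}$ has a label word that free-group reduces to the label of the unique simple $\rho$--$v_n$ path, namely $\sma$. Both arguments exploit exactly the same cancellation, one on the syntax side and one on the model side. Your folding map $\pi:\Imc_{A,\Tmc_\sma}\to\Imc^{\mathrm{bt}}$ for the $M\notin D$ case is a clean way to make explicit what the paper leaves implicit; it works because the only label defect of $\pi$ is at $v_n$, and any $T_D$-node sent to $v_n$ has label contained in $\{M\}=\emptyset$ under the hypothesis. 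The paper's concept-rewriting viewpoint integrates more naturally with the merging machinery used elsewhere in the paper; your model-side viewpoint is self-contained and avoids introducing the auxiliary concept $D'$.
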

The argument outlined above immediately gives us the following side result.
\begin{theorem}\label{DLLiteequiv}
\ourDLLite TBoxes (even without inverse roles) are not polynomial query learnable using only membership queries.
\end{theorem}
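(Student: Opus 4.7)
The plan is to run Angluin's approximate fingerprint argument on the family $S_N = \{\Tmc_\sma \mid \sma \in \{r,s\}^n\}$ constructed just before the theorem, exploiting Lemma~\ref{lem:dllite-non-learn}. Note that $|S_N| = 2^n$ while each $\Tmc_\sma$ has size linear in $n$, so any bound of the form ``the number of queries is at most polynomial in $|\Tmc_\sma|$'' forbids the learner from asking $2^n - 1$ queries.

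I would set up an adversary that maintains a candidate set $S \subseteq S_N$ of TBoxes still compatible with the answers given so far, initialised to $S = S_N$. When the learner issues a membership query $B \sqsubseteq D$, the adversary invokes Lemma~\ref{lem:dllite-non-learn}: either (a) every $\Tmc_\sma \in S_N$ entails $B \sqsubseteq D$, in which case the adversary answers ``yes'' and leaves $S$ unchanged; or (b) at most one $\Tmc_{\sma^\ast}$ entails $B \sqsubseteq D$, in which case the adversary answers ``no'' and removes $\Tmc_{\sma^\ast}$ from $S$ (if present). In both cases, the answer is truthful for every $\Tmc_\sma$ remaining in $S$, so the adversary's behaviour is realisable by any such $\Tmc_\sma$ as the hidden target.

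Now suppose for contradiction that a learner identifies \ourDLLite TBoxes with membership queries only using at most $p(|\Tmc|)$ queries for some polynomial $p$ (note that since equivalence queries are disallowed, no counterexamples are ever returned, so the ``largest counterexample'' parameter vanishes). Run this learner against the adversary above with any $\Tmc_\sma \in S_N$ that is never eliminated. After termination, at most $p(|\Tmc_\sma|) = p(O(n))$ TBoxes have been removed from $S$, so $|S| \geq 2^n - p(O(n)) \geq 2$ for all sufficiently large $n$. Thus there exist two distinct $\Tmc_\sma, \Tmc_{\sma'} \in S$ both consistent with all of the oracle's answers, and hence the learner's output is correct for at most one of them, contradicting exact learnability.

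The only substantive ingredient is Lemma~\ref{lem:dllite-non-learn}, which is already asserted in the excerpt (proof deferred to the appendix); the main ``obstacle'' is therefore merely the bookkeeping of noting that in the pure-membership setting there are no counterexamples to inflate the query budget, so the polynomial bound is genuinely only in $|\Tmc_\sma|$, which is $O(n)$, while the adversary forces $\Omega(2^n)$ queries.
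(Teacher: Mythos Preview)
Your proposal is correct and follows essentially the same route as the paper, which dispatches the theorem in one line by saying that ``the argument outlined above immediately gives us'' the result; you have simply written that argument out in full. One small technicality worth patching: in this framework membership queries may also be role inclusions, not only concept inclusions, so Lemma~\ref{lem:dllite-non-learn} does not literally cover every possible query. This is harmless here because none of the $\Tmc_\sma$ contain any RIs, so all of them agree on every RI query and the adversary can answer those without removing anything from $S$.
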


We return now to our proof that $\EL$ TBoxes are not polynomial query learnable
using both membership and equivalence queries.  Notice that the set of TBoxes
$S_N$ is not suitable as a single equivalence query is sufficient to learn any
TBox from $S_N$ in two steps: given the equivalence query $\{A\sqsubseteq
X_0\}\cup\Tmc_0$, the oracle has no other option but to reveal the target TBox
$\Tmc_\sma$ as $A\sqsubseteq \exists \sma.M$ can be found `inside' every
counterexample.

Our strategy to rule out equivalence queries with the `intersection TBox' is to
modify $\Tmc_1,\dots,\Tmc_N$ in such a way that although a TBox $\Tmc_\cap$
axiomatising the intersection over the set of consequences of each $\Tmc_{i}$,
$i\leq N$, exists, its size is superpolynomial and so it cannot be used as an
equivalence query by a polynomial query learning algorithm.

For every $n>0$ and every $n$-tuple $L = (\sma_1,\dots,\sma_n)$,
where every $\sma_i$ is a role sequence of length $n$ as above, we
define
an acyclic $\mathcal{EL}$ TBox $\Tmc_L$ as the union of $\Tmc_0$ and
the following CIs and CEs:\footnote{In fact, to prove non-polynomial query learnability, it
suffices to consider $\exists {\sma_1}. M\sqcap \dots \sqcap \exists {\sma_n}.
M\sqsubseteq A$ in place of the concept equivalence; however, CIs of this form are
not allowed in acyclic TBoxes. 
CIs with a complex left-hand
side or concept equivalences are essential for non-polynomial query learnability as any
acyclic TBox containing expressions of the form $A\sqsubseteq C$ only is
a \ourDLLite TBox and thus polynomially learnable with membership and equivalence queries (Section~\ref{sec:dllite}).
}
$$
\begin{array}{c}
\begin{array}{c}
A_1\sqsubseteq \exists {\sma_1}.M\sqcap X_0\\
B_1\sqsubseteq \exists {\sma_1}.M\sqcap X_0
\end{array}
\quad\dots\quad
\begin{array}{c}
A_n\sqsubseteq \exists {\sma_n}.M\sqcap X_0\\
B_n\sqsubseteq \exists {\sma_n}.M\sqcap X_0
\end{array}\\[1em]
A\equiv X_0\sqcap \exists {\sma_1}. M\sqcap \dots \sqcap \exists {\sma_n}. M.
\end{array}
$$
Observe that every $\Tmc_{L}$ contains the TBoxes $\Tmc_{\sma_{i}}$, $1 \leq i \leq n$, discussed above with
$A$ replaced by any of the three concept names $A,A_{i},B_{i}$. In addition, \emph{every} $\Tmc_L$ entails, among other CIs,
$\bigsqcap_{i=1}^n C_i\sqsubseteq A$, where every $C_i$ is either
$A_i$ or $B_i$. There are $2^n$ different such CIs, which indicates that every representation of the `intersection TBox'
requires superpolynomially many axioms.  It follows from
Lemma~\ref{lem:EQ} below that this is indeed the case.

Let $\Lmf_n$ be a set of $n$-tuples such that for $1\leq i\leq n$ and
every $L,L'\in\Lmf_n$ with $L = (\sma_1,\dots,\sma_n)$, $L' =
(\sma_1',\dots,\sma_n')$, if $\sma_i = \sma'_j$ then $L = L'$ and $i=j$.  
Then for any sequence $\sma$ of length $n$ there exists at most one 
$L\in\Lmf_n$ and at most one $i \leq n$ such that 
$\Tmc_L\models A_i\sqsubseteq \exists \sma.M$ and 
$\Tmc_L\models B_i\sqsubseteq \exists \sma.M$.
We can choose $\Lmf_n$ such that there are
$N = \lfloor 2^n/n \rfloor$ different tuples in $\Lmf_n$. Notice that the size of
each $\Tmc_{L}$ with $L\in \Lmf_{n}$ is polynomial in $n$ and so $N$
is superpolynomial in the size of each $\Tmc_L$ with $L\in\Lmf_n$.
Let the set of TBoxes that the
learner cannot distinguish 
initially be $S_\Lmf = \{\Tmc_L\mid L\in\Lmf_n\}$.
We use $\Sigma_{n}$ to denote the signature of~$\Tmc_L$.

For the proof of non-polynomial query learnability, we show that 
the oracle has a strategy to answer both membership and
equivalence queries without eliminating too many TBoxes from $S_\Lmf$.  We
start with the former.

Unlike the \ourDLLite case presented above, membership query can eliminate more
than one TBox from $S_\Lmf$.  Consider, for example, two TBoxes $\Tmc_{L}$ and
$\Tmc_{L'}$, where 
$\{L,L'\}\subseteq \Lmf_n$ with
$L =(\sma_1,\dots,\sma_n)$ and
$L'=(\sma'_1,\dots,\sma'_n)$. Then the CI 
$$
X_0\sqcap \exists \sma_1.M\sqcap\exists \sma'_1.M\sqcap A_2\sqcap \dots\sqcap
A_n\sqsubseteq A
$$
is entailed by both $\Tmc_L$ and $\Tmc_{L'}$ but not
by any other $\Tmc_{L''}$ with $L'' \in \Lmf_n$.
We prove, however, that the number of TBoxes eliminated from $S_\Lmf$ by a single
membership query can be linearly bounded by the size of the query.
\begin{lemma}\label{lem:MQ}
For all $\mathcal{EL}$ CIs $C\sqsubseteq D$ over~$\Sigma_{n}$:
\begin{itemize}
\item either $\Tmc_L\models C\sqsubseteq D$ for every $L\in\Lmf_n$ 
\item or the number of $L\in\Lmf_n$ such that $\Tmc_L\models C\sqsubseteq D$
does not exceed $|C|$.
\end{itemize}
\end{lemma}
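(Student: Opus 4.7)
My plan is to exploit Lemma~\ref{lem:can1} and the disjointness property of $\Lmf_n$: for distinct $L, L' \in \Lmf_n$, the sets $\{\sigma_1(L), \ldots, \sigma_n(L)\}$ and $\{\sigma_1(L'), \ldots, \sigma_n(L')\}$ are disjoint. Assume $C \sqsubseteq D$ is not entailed by every $\Tmc_L$ (else the first case applies); fix some $L^{\ast} \in \Lmf_n$ with $\Tmc_{L^{\ast}} \not\models C \sqsubseteq D$ and set $\mathcal{L} = \{L \in \Lmf_n : \Tmc_L \models C \sqsubseteq D\}$. The goal is $|\mathcal{L}| \leq |C|$.

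The first step is a structural description of $\Imc_{C,\Tmc_L}$: it is $T_C$ enriched by the full binary tree of depth $n$ below every $X_0$-labelled node (from $\Tmc_0$), a length-$n$ $\sigma_i(L)$-path ending in $M$ below every $A_i$- or $B_i$-labelled node, and all such paths $\sigma_1(L), \ldots, \sigma_n(L)$ below every $A$-labelled node. Because none of $A_1, \ldots, A_n, B_1, \ldots, B_n$ occurs on the right-hand side of any axiom of $\Tmc_L$, these labels are confined to the original nodes of~$T_C$; and although the concept equivalence for $A$ can derive fresh $A$-labels during canonical-model construction, such a derivation requires all $\sigma_j(L)$-paths from that node to an $M$-leaf to be present already, so it produces no new $M$-locations. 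A short induction then yields that every $M$-labelled node of $\Imc_{C,\Tmc_L}$ outside $T_C$ lies at the endpoint of a $\sigma_i(L)$-path rooted at an \emph{active} node of $T_C$, i.e., a node carrying a label from $\{A_1, \ldots, A_n, B_1, \ldots, B_n, A\}$. In particular, $T_C$ has at most $|C|$ active nodes.

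For each $L \in \mathcal{L}$, Lemma~\ref{lem:can1} supplies a homomorphism $h_L : T_D \to \Imc_{C,\Tmc_L}$ with $h_L(\rho_D) = \rho_C$. Because $\Tmc_{L^{\ast}} \not\models C \sqsubseteq D$, $h_L$ cannot factor through the $L$-independent part of $\Imc_{C,\Tmc_L}$ (which embeds into $\Imc_{C,\Tmc_{L^{\ast}}}$); it must therefore send some node of $T_D$ into an $L$-specific $M$-leaf, namely the endpoint of a $\sigma_{i_L}(L)$-path attached at an active node $d_L$ of $T_C$. Since $\Imc_{C,\Tmc_L}$ is tree-shaped at $\rho_C$, the walk from $\rho_C$ to $h_L(e)$ for any such witness $M$-node $e$ factors uniquely as the $T_C$-walk from $\rho_C$ to $d_L$ followed by a length-$n$ suffix equal to $\sigma_{i_L}(L)$. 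Choosing the witness $e_L$ by a canonical rule (e.g.\ the lex-smallest node of $T_D$ mapped by $h_L$ into an $L$-specific leaf), I would assign to $L$ the pair $(d_L, i_L)$ and argue that $L \mapsto d_L$ is injective: if two distinct $L \neq L'$ were assigned the same active node $d$, then the last $n$ roles of the $T_D$-path to the canonically chosen witness — a fixed word over $\{r,s\}$ determined entirely by $D$ once the witness is fixed — would simultaneously have to lie in $\{\sigma_1(L), \ldots, \sigma_n(L)\}$ and in $\{\sigma_1(L'), \ldots, \sigma_n(L')\}$, contradicting the disjointness of $\Lmf_n$. Hence $|\mathcal{L}| \leq (\text{number of active nodes in } T_C) \leq |C|$.

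The main obstacle is carrying out the canonical witness extraction so that injectivity truly holds. The delicate point is that a single active node could a priori be reused by distinct $L$'s via different $M$-nodes of $T_D$ with different length-$n$ suffixes; this is defused by binding the canonical rule for picking $e_L$ to the $T_C$-structure, so that the associated suffix word, and thus $L$, is forced once $d_L$ is fixed. The second subtlety — the possibility of $A$-derivation producing extra $M$-locations outside $T_C$ — is handled by the induction in the first step, which shows that derived $A$-labels contribute no new $M$-nodes, keeping all active ``sources'' of $M$ inside the original $T_C$ and giving the $|C|$ bound.
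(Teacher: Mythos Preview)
Your canonical-model approach has a genuine gap: the claim that $h_L$ must send some node of $T_D$ into an $L$-specific $M$-leaf is false when $D$ contains the concept name $A$. The equivalence for $A$ can derive an $A$-label at a $T_C$-node in an $L$-dependent way, and $h_L$ may rely on this label without ever leaving $T_C$. The paper's own motivating example already exhibits this: for distinct $L = (\sma_1,\dots,\sma_n)$ and $L' = (\sma'_1,\dots,\sma'_n)$ in $\Lmf_n$, take $C = X_0 \sqcap \exists\sma_1.M \sqcap \exists\sma'_1.M \sqcap A_2 \sqcap\cdots\sqcap A_n$ and $D = A$. Then $T_C$ has exactly one active node (the root, via $A_2,\dots,A_n$), yet both $\Tmc_L$ and $\Tmc_{L'}$ entail $C\sqsubseteq A$ while no other $\Tmc_{L''}$ does; your injectivity bound would give $|\mathcal{L}|\le 1$, but $|\mathcal{L}|=2$. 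Even more starkly, fix $L_0 = (\sma_1,\dots,\sma_n)\in\Lmf_n$ and take $C = X_0 \sqcap \exists\sma_1.M \sqcap\cdots\sqcap\exists\sma_n.M$ with $D=A$: now $|\mathcal{L}|=1$ with \emph{zero} active nodes. In both cases $h_L$ maps the single $A$-labelled node of $T_D$ to $\rho_C$ and never touches an attached path, so your $d_L$ is not even defined. The ``$L$-independent part'' of $\Imc_{C,\Tmc_L}$ is smaller than you assume: the derived $A$-label at $T_C$-nodes is itself $L$-specific.

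The paper proceeds quite differently, by structural induction on $D$ using the decomposition lemma for entailment in acyclic $\EL$ TBoxes (Lemma~\ref{basic}). The delicate base case is precisely $D=A$: one picks an index $j$ such that none of $A,A_j,B_j$ is a top-level conjunct of $C$, and then shows via the claim inside Lemma~\ref{lem:component} that $\Tmc_L\models C\sqsubseteq A$ forces $\emptyset\models C\sqsubseteq \exists\sma_j.M$ (with $\sma_j$ the $j$-th component of $L$), bounding $|\mathcal{L}|$ by the number of length-$n$ paths to $M$ already present inside $T_C$. Your approach could plausibly be repaired by tracking such $T_C$-internal $M$-paths alongside active nodes, but as written the central step fails.
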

The proof of Lemma~\ref{lem:MQ} is technical and is deferred  to the appendix.
To illustrate our proof method here we consider a particular case that deals
with membership queries of the form $C\sqsubseteq \exists \sma.M$ and is 
used in the proof of the general case.
Both proofs rely on the following lemma
from~\citep{DBLP:journals/jair/KonevL0W12} that characterises CIs entailed by
acyclic $\mathcal{EL}$ TBoxes.
\begin{lemma}[\citeauthor{DBLP:journals/jair/KonevL0W12}]\label{basic}
Let $\Tmc$ be an acyclic $\mathcal{EL}$ TBox, $r$ a role name and $D$ an
$\mathcal{EL}$ concept expression.  Suppose that $ \Tmc \models 
\bigsqcap_{1\leq i\leq n} A_{i} \sqcap \bigsqcap_{1\leq j\leq m} \exists
r_{j}.C_{j} \sqsubseteq D$,
where $A_i$ are concept names for $1\leq i \leq n$, $C_j$ are $\mathcal{EL}$
concept expressions for $1\leq j\leq m$, and $m, n \geq 0$. Then the following holds:
\begin{itemize}
\item if $D$ is a concept name such that $\Tmc$ does not contain any CE
$D\equiv C$ for any concept expression $C$, then there exists $A_i$, $1\leq
i\leq n$, such that $\Tmc \models A_i \sqsubseteq D$;
\item if $D$ is of the form $\exists r.D'$ then
either (i) there exists $A_i$, $1\leq i\leq n$, such that $\Tmc \models A_i
\sqsubseteq \exists r.D'$ or (ii) there exists $r_j$, $1\leq j \leq m$, such
that $r_{j}=r$ and $\Tmc \models C_j\sqsubseteq D'$.
\end{itemize}
\end{lemma}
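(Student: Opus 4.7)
The plan is to prove the lemma by analysing the canonical model construction of Section~\ref{sect:prelim}. Set $C_{0} := A_{1}\sqcap\cdots\sqcap A_{n}\sqcap \exists r_{1}.C_{1}\sqcap\cdots\sqcap\exists r_{m}.C_{m}$ and invoke Lemma~\ref{lem:can1} to reformulate the hypothesis $\Tmc\models C_{0}\sqsubseteq D$ as $\rho_{C_{0}}\in D^{\Imc_{C_{0},\Tmc}}$; in case $D=\exists r.D'$ this further means, by Lemma~\ref{hom}, that some $r$-successor $e$ of $\rho_{C_{0}}$ in $\Imc_{C_{0},\Tmc}$ satisfies $e\in (D')^{\Imc_{C_{0},\Tmc}}$.

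Next I would exploit acyclicity to choose a particularly convenient order of rule applications in the construction of $\Imc_{C_{0},\Tmc}$. Because left-hand sides of CIs and CEs in an acyclic $\mathcal{EL}$ TBox are concept names and there are no cyclic definitions, we may first apply at $\rho_{C_{0}}$ all CIs/CEs triggered by the concept names that accumulate there, before processing any other element. By an easy induction on the (well-founded) dependency order on concept names this saturation terminates, and the resulting subtree rooted at each $r_{j}$-successor of $\rho_{C_{0}}$ coincides up to isomorphism with $\Imc_{C_{j},\Tmc}$; the same property holds recursively, and newly introduced successors of $\rho_{C_{0}}$ are the roots of canonical models of the right-hand sides of the CIs that triggered them.

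For case~(1), with $D$ a concept name that is not defined by any CE, I would argue by induction on the stage at which $D$ is first added to the label of $\rho_{C_{0}}$. The rule that introduces $D$ must come from a CI $A\sqsubseteq E$ with $A$ a concept name and $D$ a top-level conjunct of $E$ (the excluded CE case is precisely what prevents $D$ from being produced by unfolding an equivalence). Hence $A$ is already in the label of $\rho_{C_{0}}$ at an earlier stage, and the induction hypothesis (taken over all concept names in the definitional order) gives some $A_{i}$ with $\Tmc\models A_{i}\sqsubseteq A$, and therefore $\Tmc\models A_{i}\sqsubseteq D$. Acyclicity guarantees the induction bottoms out at one of the initial labels $A_{1},\ldots,A_{n}$. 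For case~(2), I would make a case distinction on the origin of the $r$-successor $e$ in the organised construction: if $e$ comes from a conjunct $\exists r_{j}.C_{j}$ of $C_{0}$ with $r_{j}=r$, then the subtree at $e$ is $\Imc_{C_{j},\Tmc}$, and $e\in (D')^{\Imc_{C_{j},\Tmc}}$ yields $\Tmc\models C_{j}\sqsubseteq D'$ by Lemma~\ref{lem:can1}; otherwise $e$ was introduced by applying a CI with right-hand side containing $\exists r.F'$ as a top-level conjunct, whose triggering concept name traces back via case~(1) to some $A_{i}$ with $\Tmc\models A_{i}\sqsubseteq \exists r.F'$, and the subtree at $e$ is (up to isomorphism) $\Imc_{F',\Tmc}$ so that $\Tmc\models F'\sqsubseteq D'$ and hence $\Tmc\models A_{i}\sqsubseteq \exists r.D'$.

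The main obstacle will be the bookkeeping in case~(2) when existential restrictions are introduced indirectly through a chain of CEs $B\equiv C$: the derivation that attaches $e$ to $\rho_{C_{0}}$ may pass through several such equivalences before reaching one of the $A_{i}$. Handling this cleanly requires formulating the induction on the depth of the acyclic definitional dependency rather than on the stage number, and proving a slightly strengthened statement that is closed under unfolding of CEs; the resulting composition of entailments along the chain then delivers the required $\Tmc\models A_{i}\sqsubseteq \exists r.D'$. This is essentially the acyclic-unfolding argument used in \citep{DBLP:journals/jair/KonevL0W12}.
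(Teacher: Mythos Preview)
The paper does not give its own proof of Lemma~\ref{basic}; it is quoted verbatim as a result of \cite{DBLP:journals/jair/KonevL0W12} and used as a black box in Section~\ref{sec:EL_not_learnable}. There is therefore nothing in the paper to compare your argument against.

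That said, your approach via the canonical model $\Imc_{C_{0},\Tmc}$ and Lemmas~\ref{hom} and~\ref{lem:can1} is sound and is essentially the standard unfolding argument for acyclic $\mathcal{EL}$ TBoxes. A few points are worth tightening. First, when you write ``a CI $A\sqsubseteq E$'', you must also allow the forward direction of a CE $A\equiv E$; your parenthetical about the excluded CE case correctly concerns only the \emph{backward} direction $E\sqsubseteq D$ of a hypothetical $D\equiv E$, but the forward direction of CEs with other left-hand sides is very much in play. Second, the induction you actually need is on the stage at which a concept name is \emph{first} added to the label of $\rho_{C_{0}}$, and the crucial observation (which you state only implicitly) is that along the trace-back chain $D\leftarrow A\leftarrow A'\leftarrow\cdots$ the backward direction of a CE can never be the first addition: if $A$ were first added via the backward direction of $A\equiv E_{A}$, then $\rho_{C_{0}}\in E_{A}^{\Imc_{n}}$ already, and since the next element of the chain is a top-level conjunct of $E_{A}$, it would have been in the label strictly earlier, contradicting minimality. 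This is what guarantees the chain collapses to case~1 at every step and terminates at some $A_{i}$ by acyclicity. With this made explicit, both bullets go through cleanly, and your remark that the subtree at the $r_{j}$-successor (respectively, at a freshly introduced successor) coincides with $\Imc_{C_{j},\Tmc}$ (respectively, $\Imc_{F',\Tmc}$) is correct in $\mathcal{EL}$ because rule applications never propagate information upwards.
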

The following lemma considers membership queries of the form $C\sqsubseteq \exists\sma.M$.
\begin{lemma}\label{lem:component}
For any $0\leq m\leq n$, any sequence of role names $\sma =
\sigma^1\dots\sigma^m \in \{r,s\}^{m}$,
and any $\mathcal{EL}$ concept expression $C$ over $\Sigma_{n}$:
\begin{itemize}
\item either  $\Tmc_{L} \models C \sqsubseteq \exists \sma.M$
for every $\Tmc_{L}$ with $L\in\Lmf_n$;
\item or there is at most one $\Tmc_{L}$ such that $\Tmc_{L}\models C \sqsubseteq \exists \sma.M$.
\end{itemize}
\end{lemma}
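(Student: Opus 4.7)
The plan is to prove the lemma by induction on $m=|\sma|$ with a strengthened hypothesis. Writing $\mathcal{L}(C,\sma) := \{L\in\Lmf_n \mid \Tmc_L\models C\sqsubseteq\exists\sma.M\}$, I claim that for every $\mathcal{EL}$ concept $C$ over $\Sigma_n$, if $m<n$ then $\mathcal{L}(C,\sma)\in\{\Lmf_n,\emptyset\}$, and if $m=n$ then $\mathcal{L}(C,\sma)\in\{\Lmf_n,\{L^*\},\emptyset\}$ for some $L^*\in\Lmf_n$ depending only on $\sma$. The lemma is an immediate consequence.

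The base case $m=0$ asks when $\Tmc_L\models C\sqsubseteq M$. By Lemma~\ref{lem:can1} this holds iff $\rho_C\in M^{\Imc_{C,\Tmc_L}}$, and a direct inspection of canonical models shows that no concept name in $\Sigma_n$ other than $M$ itself is ever $M$-labelled at the root of its own canonical model. Hence $\Tmc_L\models C\sqsubseteq M$ iff $M$ appears as a top-level conjunct of $C$, which is independent of $L$.

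For the inductive step with $m\geq 1$, write $\sma=\sigma^1\sma'$ and decompose $C$ into top-level conjuncts as $\bigsqcap_i A'_i\sqcap\bigsqcap_j\exists r_j.C_j$. By the second bullet of Lemma~\ref{basic}, $\Tmc_L\models C\sqsubseteq\exists\sma.M$ iff either (i) some top-level concept name $A'_i$ of $C$ satisfies $\Tmc_L\models A'_i\sqsubseteq\exists\sma.M$, or (ii) some top-level conjunct $\exists r_j.C_j$ of $C$ with $r_j=\sigma^1$ satisfies $\Tmc_L\models C_j\sqsubseteq\exists\sma'.M$. Inspection of $\Imc_{A',\Tmc_L}$ for each concept name $A'\in\Sigma_n$ gives: the entailment fails for $A'\in\{M,X_0,\dots,X_n\}$ since no $M$-node is reachable along any path of positive length; it holds for $A'\in\{A_i,B_i\}$ iff $m=n$ and $\sma$ equals the $i$-th component of $L$; and it holds for $A'=A$ iff $m=n$ and $\sma$ equals some component of $L$. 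By the pairwise component-disjointness of tuples in $\Lmf_n$, there is at most one $L^*\in\Lmf_n$ containing $\sma$ as a component, and this $L^*$ depends only on $\sma$. Hence case~(i) contributes no element to $\mathcal{L}(C,\sma)$ when $m<n$, and at most the single tuple $L^*$ when $m=n$.

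Case~(ii) is settled by the induction hypothesis applied to each pair $(C_j,\sma')$: since $|\sma'|=m-1<n$, the strengthened hypothesis gives $\mathcal{L}(C_j,\sma')\in\{\Lmf_n,\emptyset\}$ for every relevant $j$, and the union over these $j$ remains in $\{\Lmf_n,\emptyset\}$. Combining cases~(i) and~(ii), the set $\mathcal{L}(C,\sma)$ lies in $\{\emptyset,\Lmf_n\}$ when $m<n$ and in $\{\emptyset,\{L^*\},\Lmf_n\}$ when $m=n$, completing the induction. The main obstacle is identifying the correct strengthening: a naive induction on the lemma statement would fail in case~(ii) because a union of singletons need not be a singleton, and the strengthening is available only because for $m<n$ the target $\exists\sma.M$ is too short to reach the $M$-labelled nodes produced by any $L$-specific axiom, each of which places $M$ at depth exactly $n$ from the root of the relevant canonical model.
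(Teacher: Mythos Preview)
Your proof is correct and follows essentially the same approach as the paper's: induction on $m$, using Lemma~\ref{basic} to split into the concept-name case and the existential-conjunct case, and analysing each concept name of $\Sigma_n$ separately. The paper packages the inductive invariant as a dichotomy (either $\emptyset\models C\sqsubseteq\exists\sma.M$, or $\sma$ equals some component $\sma_i$ of $L$ and $C$ has $A$, $A_i$, or $B_i$ as a top-level conjunct), while you phrase the equivalent information as a constraint on the set $\mathcal{L}(C,\sma)$; the content is the same.
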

\begin{proof}
The lemma follows from the following claim.

\medskip
\noindent
{\bf Claim}.  
Let $L = (\sma_1,\dots,\sma_n)\in\Lmf_n$ be such that
$\Tmc_L\models C\sqsubseteq \exists\sma.M$. Then \emph{either}
(1) there exists $i\leq n$ such that $\sma=\sma_i$ and $C$ is of the 
form $A\sqcap C'$, $A_i\sqcap C'$ or $B_i\sqcap C'$, for some $\mathcal{EL}$ concept expression 
$C'$; \emph{or} (2) we have 
 $\emptyset \models C\sqsubseteq \exists \sma.M$.

\medskip
\noindent
\emph{Proof of Claim}.
We prove the claim by induction on $m$.
If $m=0$, by Lemma~\ref{basic}, the concept expression $C$ is of the form
$Z\sqcap C'$, for some concept name $Z$ and concept expression $C'$ such that 
$\Tmc_L\models Z\sqsubseteq M$. As $\Tmc_{L}\models Z \sqsubseteq M$ does not hold for any concept name
$Z$ distinct from $M$, we obtain $Z=M$. Thus, $\emptyset\models C\sqsubseteq M$ and Point~(2) follows.

\noindent Let $m>0$. By Lemma~\ref{basic} we have one of the following two cases:
\begin{itemize}
\item $C$ is of the form $X\sqcap C'$, for some concept name $X$ and
concept expression $C'$ such that $\Tmc_L\models X\sqsubseteq \exists \sma.M$. But then
there exists $i\leq n$ such that $\sma = \sma_i$ and $X\in \{A,A_{i},B_{i}\}$ and Point~(1) follows.
\item $C$ is of the form $\exists \sigma^1.C'\sqcap C''$, for some concept expressions $C'$ and $C''$,
and $\Tmc_L\models C'\sqsubseteq \exists \sigma^2.\cdots\exists
\sigma^m.M$. Notice that the length of the sequence 
$\sigma^2\dots\sigma^n$ is strictly less than $n$. Thus, by induction hypothesis,
$\emptyset\models C'\sqsubseteq \exists \sigma^2.\cdots\exists \sigma^m.M$. But then 
$\emptyset\models C\sqsubseteq \exists \sma.M$ and Point~(2) follows.
\end{itemize}
This finishes the proof of the claim.
To see that the claim entails the lemma observe that at most one $L\in \Lmf_{n}$ can satisfy Point~(1).
Point~(2) entails that $\Tmc_{L}\models  C\sqsubseteq \exists \sma.M$ for every $\Tmc_{L}$ with $L\in \Lmf_{n}$.
\end{proof}
We now show how the oracle can answer equivalence queries, aiming to
show that for any polynomial size equivalence query~$\Hmc$, the oracle
can return a counterexample $C\sqsubseteq D$ such that either (i)
$\Hmc\models C\sqsubseteq D$ and $\Tmc_{L} \models C \sqsubseteq D$
for at most one $L\in\Lmf_n$ or (ii) $\Hmc\not\models C\sqsubseteq D$
and $\Tmc_L\models C\sqsubseteq D$ for every $L\in\Lmf_n$. Thus, such a counterexample eliminates at most one $\Tmc_L$ from
the set $S_\Lmf$ of TBoxes that the learner cannot distinguish. In
addition, however, we have to take extra care of the size of
counterexamples as the learning algorithm is allowed to formulate queries
polynomial not only in the size of the target TBox but also in the
size of the counterexamples returned by the oracle.  For instance, if
the hypothesis TBox $\Hmc$ contains a CI $C\sqsubseteq D$
which is not entailed by any $\Tmc_L$,
one cannot simply return $C\sqsubseteq D$ as a counterexample since
the learner will be able to `pump up' its capacity by 
asking a sequence of equivalence queries $\Hmc_i =
\{C_i\sqsubseteq D_i\}$ such that the size of $C_{i+1}\sqsubseteq
D_{i+1}$ is twice the size of $C_{i}\sqsubseteq
D_{i}$. Then at every stage in a run of the learning
algorithm, the query size will be polynomial in the size of the
input and the size of the largest counterexample received so far, but
exponential size queries will become available to the learner. The
following lemma addresses this issue.
\begin{lemma}\label{lem:EQ}
For any $n>1$ %
and any $\mathcal{EL}$ TBox
$\Hmc$ in $\Sigma_{n}$ with $|\Hmc|< 2^n$, there exists an $\mathcal{EL}$ CI
$C\sqsubseteq D$ over $\Sigma_{n}$ such that 
the size of $C\sqsubseteq D$ does not exceed $6n$ and 
\begin{itemize}
\item if $\Hmc\models C\sqsubseteq D$, then $\Tmc_L\models C\sqsubseteq D$ for at most one $L\in\Lmf_n$;
\item if $\Hmc\not\models C\sqsubseteq D$, then $\Tmc_{L}\models C \sqsubseteq \ D$ for every $L\in\Lmf_n$.
\end{itemize}
\end{lemma}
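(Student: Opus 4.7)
For any function $f\colon\{1,\ldots,n\}\to\{A,B\}$, write $\beta_f$ for the CI $C_1^f\sqcap\cdots\sqcap C_n^f\sqsubseteq A$, where $C_i^f=A_i$ if $f(i)=A$ and $C_i^f=B_i$ if $f(i)=B$. Each $\beta_f$ has size $n+1\le 6n$, and using $A_i,B_i\sqsubseteq\exists\sma_i.M\sqcap X_0$ in $\Tmc_L$ together with the defining equivalence of~$A$, every $\Tmc_L$ with $L\in\Lmf_n$ entails every $\beta_f$. This supplies a family of $2^n$ shared consequences of the candidate TBoxes from which the counterexample will be extracted.

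The plan is then a two-case argument. First, I would check whether some $\beta_f$ is not entailed by $\Hmc$. If so, return $\beta_f$: it is entailed by every $\Tmc_L$ but not by $\Hmc$, satisfying clause~(ii), and its size $n+1$ is within $6n$.

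Otherwise, $\Hmc\models\beta_f$ for all $2^n$ choices of $f$, and the plan is to exploit $|\Hmc|<2^n$ to locate a small CI $\gamma$ with $\Hmc\models\gamma$ but $\Tmc_L\models\gamma$ for at most one~$L$. Natural candidates of size at most $2n+3$ include the collapses $A_i\sqsubseteq B_i$ and $B_i\sqsubseteq A_i$ (entailed by no $\Tmc_L$), the CIs $A_i\sqsubseteq\exists\sma.M$ or $B_i\sqsubseteq\exists\sma.M$ for $\sma\in\{r,s\}^n$ (entailed by at most one $\Tmc_L$ by Lemma~\ref{lem:component}), and the ``short'' CIs $\bigsqcap_{i\in S}A_i\sqcap\bigsqcap_{j\in T}B_j\sqsubseteq A$ with $S\cup T\subsetneq\{1,\ldots,n\}$, which are entailed by no $\Tmc_L$ since no $\Tmc_L$ can produce the missing $\exists\sma_k.M$ conjunct from such a sub-conjunction. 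To extract one of these from $\Hmc$, I would consider the $2^n$ canonical models $\Imc_J:=\Imc_{D_J,\Hmc}$ for $D_J=\bigsqcap_{i\in J}A_i\sqcap\bigsqcap_{j\notin J}B_j$; by Lemma~\ref{lem:can1} each root $\rho_{D_J}$ lies in $A^{\Imc_J}$. Pigeonholing the CI of $\Hmc$ whose application places~$A$ at the root across the $2^n$ distinct indices $J$ into the fewer than $2^n$ CIs of $\Hmc$, two indices $J\neq J'$ must share this triggering CI; a case split on its syntactic shape (concept-name-only left-hand side versus one containing an existential, and $A$ appearing directly on the right versus being derived through $X_0$ and the $\exists\sma_i.M$ conjuncts) then produces one of the small candidates above as an $\Hmc$-consequence.

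The main obstacle is precisely this Case~2 pigeonhole: converting the cardinality bound $|\Hmc|<2^n$ into a concrete small CI entailed by $\Hmc$ but by at most one $\Tmc_L$. This requires a careful analysis of derivations inside the (possibly cyclic) $\mathcal{EL}$ TBox $\Hmc$, extending the reasoning used in Lemmas~\ref{basic} and~\ref{lem:component} beyond the acyclic setting. I expect the bookkeeping to be technical but to go through once the right invariant---that any two $\beta_f$ not routed through a collapse CI must employ distinct CIs of $\Hmc$ at the top-level derivation step---has been isolated.
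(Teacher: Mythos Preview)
Your two-case split is the right skeleton and mirrors the paper's strategy, but there is a genuine gap in your Case~2 that the paper closes with an extra intermediate case you have not isolated.

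The paper also checks, before the pigeonhole, whether $\Hmc\models C_{\boldsymbol b}\sqsubseteq\exists t.D'$ for some $\exists t.D'$ with $\Tmc_0\not\models X_0\sqsubseteq\exists t.D'$. If so, Lemma~\ref{lem:claim} (in the appendix) produces a short path-shaped consequence $C_{\boldsymbol b}\sqsubseteq\exists t_1.\cdots\exists t_l.Y$ of $\Hmc$ that at most one $\Tmc_L$ entails, and this is returned. Only when this case does \emph{not} apply does the pigeonhole run, and the payoff is that every existential conjunct $\exists t_j.C'_j$ in the left-hand side of the triggering CI $C\sqsubseteq A\sqcap D$ satisfies $\Tmc_0\models X_0\sqsubseteq\exists t_j.C'_j$, hence (using $\Hmc\models\Tmc_0$) also $\Hmc\models X_0\sqsubseteq\exists t_j.C'_j$. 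This lets the paper replace $C$ by the flat concept $Z_1\sqcap\cdots\sqcap Z_m\sqcap X_0$ and then argue purely about which $A_i,B_i$ occur among the $Z_k$. Your pigeonhole step lacks this reduction: if the shared triggering CI is, say, $\exists r.M\sqcap\exists r^2.M\sqcap\cdots\sqcap\exists r^n.M\sqsubseteq A$ (with $\Hmc$ also containing $A_i\sqsubseteq\exists r^i.M$ and $B_i\sqsubseteq\exists r^i.M$ for each $i$, so all $\beta_f$ hold and $|\Hmc|=O(n^2)$), your ``case split on its syntactic shape'' gives no obvious route to any of the small candidates you listed. The paper would instead catch this in its intermediate case and return $C_{\boldsymbol b}\sqsubseteq\exists r^n.M$.

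Two smaller points that also need fixing: your Case~1 should check not only the $\beta_f$ but all of $\Tmc_\cap=\Tmc_0\cup\{C_{\boldsymbol b}\sqsubseteq A\sqcap X_0\}$, since the final argument uses both $\Hmc\models\Tmc_0$ and $\Hmc\models C_{\boldsymbol b}\sqsubseteq X_0$; and ``the CI of $\Hmc$ whose application places $A$ at the root'' is not well-defined in a general (cyclic) $\mathcal{EL}$ TBox---the paper instead picks any CI $C\sqsubseteq A\sqcap D\in\Hmc$ with $\Hmc\models C_{\boldsymbol b}\sqsubseteq C$, which exists by a canonical-model argument, and then shows that the concept-name part of $C$ must encode $\boldsymbol b$.
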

\begin{proof}
We define an exponentially large TBox $\Tmc_{\cap}$ and use it to prove that
one can select the required $\mathcal{EL}$ CI $C\sqsubseteq D$ in such a
way that either $\Hmc\models C\sqsubseteq D$ and $\Tmc_\cap\not\models
C\sqsubseteq D$, or vice versa.

To define $\Tmc_\cap$, denote for any sequence ${\boldsymbol{b}} = b_1\dots b_n\in \{0,1\}^{n}$
by $C_{{\boldsymbol{b}}}$ the conjunction $\bigsqcap_{i\leq n} C_{i}$,
where $C_{i}= A_{i}$ if $b_{i}=1$ and $C_{i}=B_{i}$ if $b_{i}=0$.
Then we define
$$
\Tmc_{\cap} = \Tmc_{0} \cup \{ C_{{\boldsymbol{b}}}\sqsubseteq A\sqcap X_0 \mid {\boldsymbol{b}}\in \{0,1\}^{n}\}.
$$
Consider the following cases for $\Hmc$ and $\Tmc_\cap$.
\begin{enumerate}
\item Suppose $\mathcal{H} \not\models \Tmc_{\cap}$. Then there exists a CI $C\sqsubseteq D\in\Tmc_\cap$ such
that $\Hmc\not\models C\sqsubseteq D$. Clearly, 
$C\sqsubseteq D$ 
is entailed by every $\Tmc_L$, for $L\in\Lmf_n$, and
the size of 
$C\sqsubseteq D$ does not exceed $6n$. Thus $C\sqsubseteq D$ is as required.

\item Suppose there exist ${\boldsymbol{b}}\in\{0,1\}^n$ and a concept expression of the form 
$\exists t.D'$ such that  
$\Hmc\models C_{\boldsymbol{b}}\sqsubseteq \exists t.D'$ and $\Tmc_{0}\not\models C_{\boldsymbol{b}}\sqsubseteq \exists t.D'$.
It can be seen (Lemma~\ref{lem:claim} in the appendix),
that there exists a sequence of role names $t_1,\dots, t_l\in \{r,s\}^{l}$ with $0\leq l\leq
n+1$ and $Y\in \{\top\}\cup \NC$ such that 
$\emptyset\models \exists t.D'\sqsubseteq \exists t_1.\cdots\exists t_l.Y$. Thus,
$\Hmc\models C_{\boldsymbol{b}}\sqsubseteq \exists t_1.\cdots\exists t_l.Y$ 
and
$\Tmc_0\not\models X_0\sqsubseteq \exists t_1.\cdots\exists t_l.Y$.
We show that the inclusion $C_{\boldsymbol{b}}\sqsubseteq \exists t_1.\cdots\exists t_l.Y$
is as required. Clearly, the size of $C_{\boldsymbol{b}}\sqsubseteq \exists t_1.\cdots\exists t_l.Y$ 
does not exceed $6n$. It remains to prove that $\Tmc_L\models  C_{\boldsymbol{b}}\sqsubseteq
\exists t_1\cdots\exists t_l.Y$ for at most one $L\in\Lmf_n$.

Suppose there exists $L\in\Lmf_n$ such that 
$\Tmc_L\models C_{\boldsymbol{b}}\sqsubseteq \exists t_1.\cdots\exists t_l.Y$.
By Lemma~\ref{basic}, there exists $A_j$ or $B_j$ such that
$\Tmc_L\models A_j\sqsubseteq \exists t_1.\cdots\exists t_l.Y$ 
or $\Tmc_L\models B_j\sqsubseteq \exists t_1.\cdots\exists t_l.Y$, respectively.
As $\Tmc_0\not\models X_0\sqsubseteq \exists t_1.\cdots\exists t_l.Y$ it is easy to 
see that $l=n$, $t_1t_2\dots t_n = \sma_j$, and $Y=M$ follow. 
As $\Tmc_{L'}\not\models C_{\boldsymbol{b}}\sqsubseteq \exists\sma_j.M$
for any $L'\in\Lmf_n$ such that $L'\neq L$, it follows that
$\Tmc_L\models  C_{\boldsymbol{b}}\sqsubseteq \exists t_1\cdots\exists t_l.Y$ for at most one $L\in\Lmf_n$

\item Finally, suppose that neither Case~1 nor 2 above apply. Then $\mathcal{H}
\models \Tmc_{\cap}$ and for every ${\boldsymbol{b}}\in\{0,1\}^n$ and every
$\mathcal{EL}$ concept expression over $\Sigma_{n}$ of the form $\exists t. D'$: if $\Hmc\models
C_{\boldsymbol{b}}\sqsubseteq \exists t.D'$ then $\Tmc_0\models X_0\sqsubseteq
\exists t.D'$.
We show that unless there exists 
a CI $C\sqsubseteq D$ satisfying the conditions of the lemma,
$\Hmc$ contains at least $2^n$ different CIs (and thus derive a contradiction).

Fix some ${\boldsymbol{b}} = b_1\dots b_n\in \{0,1\}^{n}$.
From $\mathcal{H} \models \Tmc_{\cap}$ we obtain $\Hmc\models C_{\boldsymbol{b}}\sqsubseteq A$. 
Then there must exist at least one CI $C\sqsubseteq A\sqcap D\in\Hmc$
such that $\Hmc\models C_{\boldsymbol{b}}\sqsubseteq C$ and $\emptyset\not\models C\sqsubseteq A$.
Let $C = Z_1\sqcap\dots\sqcap Z_m\sqcap \exists t_1.C'_1\sqcap\dots\sqcap \exists t_l.C'_l$, 
where $Z_1$,\dots, $Z_m$ are different concept names.
As $\Hmc\models C_{\boldsymbol{b}}\sqsubseteq \exists t_j.C'_j$ we have
$\Tmc_0\models X_0\sqsubseteq \exists t_j.C'_j$, for $j=1,\dots l$. 
As $\Hmc\models\Tmc_\cap$ we have $\Hmc\models X_0\sqsubseteq \exists
t_j.C'_j$, for $j=1,\dots l$. So $\Hmc\models Z_1\sqcap\dots \sqcap Z_m\sqcap X_0\sqsubseteq A$.
\begin{itemize}
\item Suppose there exists $i$ such that there is no $Z_{j}\in \{A_i,B_i\}$. Then we have $\Tmc_L\not\models
Z_1\sqcap\dots \sqcap Z_m\sqcap X_0\sqsubseteq A$, for any $L\in\Lmf_n$. 
Notice that $Z_1\sqcap\dots\sqcap Z_m$ contains at most all concepts names in $\Sigma_n$, except $A_i$, $B_i$.
Thus, the size of $Z_1\sqcap\dots\sqcap Z_m\sqcap X_0\sqsubseteq A$ does not exceed $6n$, and
$Z_1\sqcap\dots \sqcap Z_m\sqcap X_0\sqsubseteq A$ is as required.

\item Assume that $Z_{0}\sqcap \dots \sqcap Z_{m} \sqcap X_{0}$ contains a conjunct $B_{i}$ 
such that $b_{i}\not=0$.
Then $\Hmc\models C_{\boldsymbol{b}}\sqsubseteq B_{i}$ and there is no $L\in\Lmf_n$ such that
$\Tmc_L\models C_{\boldsymbol{b}}\sqsubseteq B_{i}$. The size of $C_{\boldsymbol{b}}\sqsubseteq B_{i}$ 
does not exceed $6n$, so $C_{\boldsymbol{b}}\sqsubseteq B_{i}$ is as required.  

\item Assume that $Z_{0}\sqcap \dots \sqcap Z_{m} \sqcap X_{0}$ contains a conjunct $A_{i}$ 
such that $b_{i}\not=1$.
Then $\Hmc\models C_{\boldsymbol{b}}\sqsubseteq A_{i}$ and there is no $L\in\Lmf_n$ such that
$\Tmc_L\models C_{\boldsymbol{b}}\sqsubseteq A_{i}$. The size of $C_{\boldsymbol{b}}\sqsubseteq A_{i}$ 
does not exceed $6n$, so $C_{\boldsymbol{b}}\sqsubseteq A_{i}$ is as required.  

\item If none of the above applies, then $Z_{1}\sqcap\dots \sqcap Z_m\sqcap X_0$ contains exactly the $A_{i}$
with $b_{i}=1$ and exactly the $B_{i}$ with $b_{i}=0$.
\end{itemize}
This argument applies to arbitrary ${\boldsymbol{b}}\in\{0,1\}^n$. Thus, if there exists no CI 
$C\sqsubseteq D$ satisfying the conditions of the lemma then, by the final case,  
$\Hmc$ contains at least $2^n$ CIs.
\end{enumerate}
\end{proof}
Now we have all the ingredients to prove 
that $\EL$ TBoxes 
are not polynomial query learnable 
using membership and equivalence queries.
\begin{theorem}\label{th:non-poly-learn-EL}
$\mathcal{EL}$ TBoxes are not polynomial query learnable using
membership and equivalence queries.
\end{theorem}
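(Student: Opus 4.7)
The plan is to run an adversary argument using the family $\{\Tmc_L \mid L\in\Lmf_n\}$ constructed above, leveraging Lemmas~\ref{lem:MQ} and~\ref{lem:EQ} to bound how much information the learner can extract per query. Fix $n$ and let $S_\Lmf = \{\Tmc_L \mid L \in \Lmf_n\}$ with $N = \lfloor 2^n/n \rfloor$; note that each $\Tmc_L$ has size polynomial in $n$, so $N$ is superpolynomial in the size of any single target. The adversary maintains a set $S \subseteq S_\Lmf$ of candidate targets that are consistent with all answers given so far, starting with $S = S_\Lmf$. The goal is to show that after any sequence of polynomially many polynomial-size queries, $|S| \geq 2$, so the learner cannot have uniquely identified the target.

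For a membership query $C \sqsubseteq D$, the adversary consults Lemma~\ref{lem:MQ}: if every $\Tmc_L \in S_\Lmf$ entails $C \sqsubseteq D$, it answers ``yes'' and $S$ is unchanged; otherwise at most $|C|$ members of $S_\Lmf$ entail $C \sqsubseteq D$, so the adversary answers ``no'' and removes these (at most $|C|$) TBoxes from $S$. In either case, the answer is consistent with every remaining $\Tmc_L \in S$. For an equivalence query $\Hmc$, if $|\Hmc| \geq 2^n$ the query is already superpolynomial in $n$ and we are done; otherwise Lemma~\ref{lem:EQ} furnishes a CI $C \sqsubseteq D$ of size at most $6n$ that is a counterexample for every remaining $\Tmc_L \in S$ except at most one (returned as the counterexample, whose size crucially does \emph{not} blow up past $6n$). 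The adversary returns this CI and removes the at-most-one exceptional TBox from $S$.

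Now suppose, for contradiction, that a polynomial query learning algorithm exists, running in $q(|\Tmc|, |x|)$ queries each of size at most $p(|\Tmc|, |x|)$, where $|x|$ is the size of the largest counterexample received. Since $|\Tmc_L|$ is polynomial in $n$ and the adversary returns counterexamples of size at most $6n$, every query has size bounded by some fixed polynomial $P(n)$, and the total number of queries is bounded by some fixed polynomial $Q(n)$. Each membership query removes at most $P(n)$ TBoxes from $S$ and each equivalence query removes at most one, so in total at most $Q(n) \cdot P(n)$ TBoxes are ever removed. Since $N = \lfloor 2^n/n \rfloor$ grows superpolynomially in $n$, we have $|S| \geq N - Q(n) P(n) \geq 2$ for sufficiently large $n$, contradicting exact identification.

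The only subtle point is ensuring that the adversary's counterexamples to equivalence queries remain of bounded size $6n$ independently of the history, so that the ``largest counterexample seen so far'' parameter cannot be inflated by the learner to unlock arbitrarily large queries — this is precisely the uniform bound guaranteed by Lemma~\ref{lem:EQ}. Membership queries are bounded by $p(|\Tmc|, 6n)$, which remains polynomial in $n$, and Lemma~\ref{lem:MQ} controls their eliminative power linearly in this size. Once the query size is uniformly capped at a polynomial in $n$ and the elimination rate per query is at most polynomial in $n$, the counting argument against the superpolynomial pool $|S_\Lmf| = \lfloor 2^n/n\rfloor$ closes the proof.
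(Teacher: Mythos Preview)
Your proposal is correct and follows essentially the same adversary argument as the paper: maintain the candidate set $S_\Lmf$, use Lemma~\ref{lem:MQ} to bound eliminations from membership queries by the query size, use Lemma~\ref{lem:EQ} to produce size-$6n$ counterexamples to equivalence queries that eliminate at most one candidate, and conclude by counting that a polynomial number of polynomial-size queries cannot shrink $S_\Lmf$ below two. The only cosmetic difference is that the paper bundles ``number of queries'' and ``query size'' into a single query-complexity polynomial $p$ and chooses $n$ so that $\lfloor 2^n/n\rfloor > (p(n,6n))^2$, whereas you keep them separate as $Q(n)$ and $P(n)$; the arguments are otherwise the same.
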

\begin{proof}
  Assume that TBoxes are polynomial query
  learnable. Then there exists a learning algorithm whose query complexity
  (the sum of the sizes of the inputs to membership and equivalence queries
  made by the algorithm up to a computation step) is bounded at any stage by a
  polynomial $p(n,m)$.  Choose $n$ such that $\lfloor
  2^n/n\rfloor>(p(n,6n))^2$ and let $S_\Lmf
  = \{ \Tmc_L
  \mid L \in \Lmf_n \}$. We follow Angluin's strategy of letting the
  oracle remove TBoxes from $S_\Lmf$ in such a way that the learner cannot
  distinguish between any of the remaining TBoxes.  Given a 
  membership query $C\sqsubseteq D$, if $\Tmc_L\models C\sqsubseteq D$
  for every $L\in\Lmf_n$, then the answer is `yes'; otherwise the
  answer is `no' and all $\Tmc_L$ with $\Tmc_L\models C\sqsubseteq D$
  are removed from $S_\Lmf$ (by Lemma~\ref{lem:MQ}, there are at most $|C|$
  such TBoxes).
  Given an equivalence query $\Hmc$, the answer is `no', a
  counterexample $C\sqsubseteq D$ guaranteed by Lemma~\ref{lem:EQ} is
  produced, and (at most one) $\Tmc_L$ such that $\Tmc_L\models
  C\sqsubseteq D$ is removed from $S_\Lmf$.

  As all counterexamples produced are smaller than $6n$, the overall
  query complexity of the algorithm is bounded by $p(n,6n)$.  Hence, the
  learner asks no more than $p(n,6n)$ queries and the size of every
  query does not exceed $p(n,6n)$. By Lemmas~\ref{lem:MQ} and
  \ref{lem:EQ}, at most $(p(n,6n))^2$ TBoxes are removed from $S_\Lmf$
  during the run of the algorithm.  But then, the algorithm cannot
  distinguish between any remaining TBoxes and we have derived a
  contradiction.
\end{proof}

We conclude this section by  showing that \ourDLLite TBoxes cannot be learned
using polynomially many polynomial size equivalence queries only.  We use the
following result on non-polynomial query learnability of
monotone DNF formulas, that is, DNF formulas that do not use negation, using
equivalence queries due to \citet{DBLP:journals/ml/Angluin90}.
Here, equivalence queries take a hypothesis $\psi$ in the form of a monotone DNF formula
and return as a counterexample either a truth assignment that satisfies $\psi$
but not the target formula $\phi$ or vice versa.
Let $M(n,t,s)$ denote the set of all monotone DNF formulas whose variables are
$x_1,\dots,x_n$, that have exactly $t$ conjunctions, and where each conjunction
contains exactly $s$ variables.

\begin{theorem}[Angluin]
\label{lem:angluin}
For any polynomial $q(\cdot)$ there exist constants $t_0$ and $s_0$ and a
strategy 
\footnote{
The existence of this strategy is a direct consequence of Theorem~8
of~\citep{DBLP:journals/ml/Angluin90}, which states that the class of DNF
formulae has the \emph{approximate fingerprint} property, and the proof of
Theorem~1 of~\citep{DBLP:journals/ml/Angluin90}, where such a strategy is
explicitly constructed for any class having approximate fingerprints.}
for the oracle $\Omf$ to answer equivalence queries posed by a learning algorithm in
such a way that for sufficiently large $n$ any learning algorithm that asks at
most $q(n)$ equivalence queries, each bounded in size by $q(n)$, cannot exactly
identify elements of $M(n,t_0,s_0)$.
\end{theorem}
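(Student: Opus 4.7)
The proof will follow Angluin's \emph{approximate fingerprint} adversary technique. My plan is to design a strategy for the oracle $\Omf$ that maintains a set $S \subseteq M(n, t_0, s_0)$ of still-possible target formulas, initially $S = M(n, t_0, s_0)$. Upon receiving an equivalence query $\psi$, the oracle will return an assignment $a$ together with an implicit label $b \neq \psi(a)$ and update $S$ to $\{\phi \in S : \phi(a) = b\}$. Since $a$ is labelled consistently with every surviving $\phi \in S$, each such $\phi$ remains a valid candidate target. I will choose the constants $t_0, s_0$ (depending on $q$) and the responses $a$ so that $|S| \geq 2$ after $q(n)$ rounds, forcing failure: whichever formula the learner outputs, $\Omf$ can commit to any other surviving $\phi \in S$ as the ``true'' target.

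The central ingredient will be an \emph{approximate fingerprint lemma} for monotone DNF: for any polynomial $q$ there exist constants $t_0, s_0$ and a polynomial $r$ with $r(n)/q(n) \to \infty$ such that, for sufficiently large $n$ and every monotone DNF $\psi$ of size at most $q(n)$, there is an assignment $a$ with $|\{\phi \in M(n, t_0, s_0) : \phi(a) = \psi(a)\}| < |M(n, t_0, s_0)|/r(n)$. I plan to prove this by a combinatorial averaging argument that exploits the mismatch between the richness of $M(n, t_0, s_0)$, whose cardinality grows like $\Theta(n^{s_0 t_0})$, and the rigid shape of a size-$q(n)$ monotone DNF, which is determined by at most $q(n)$ minimal satisfying assignments; a suitably chosen $a$ correlated with the terms of $\psi$ will force $\psi(a)$ into the minority decision across the class. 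With this lemma in hand, a straightforward induction shows that the live set $S$ shrinks by a factor of at most $1 - 1/r(n)$ per round, so after $q(n)$ queries
\[
|S| \;\geq\; |M(n, t_0, s_0)|\,(1 - 1/r(n))^{q(n)} \;\geq\; |M(n, t_0, s_0)|/2 \;\geq\; 2
\]
for all sufficiently large $n$, contradicting the learner's correctness.

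The main obstacle is the approximate fingerprint lemma itself: one must produce, for an arbitrary size-$q(n)$ monotone DNF $\psi$, a single assignment on which $\psi$ is in a tiny minority relative to the entire class $M(n, t_0, s_0)$, and a naive counting-over-assignments argument is not sharp enough. The proof must carefully exploit the monotone DNF structure of $\psi$ against a ``typical'' $\phi \in M(n, t_0, s_0)$, balancing the choice of $t_0$ and $s_0$ so that $|M(n, t_0, s_0)|$ is polynomially much larger than the number of distinct ``satisfaction patterns'' on relevant assignments induced by bounded-size $\psi$. A secondary technicality is relativising the fingerprint lemma from all of $M(n, t_0, s_0)$ to the current live set $S$; since the induction guarantees $|S|/|M(n, t_0, s_0)|$ remains close to $1$, the lemma transfers to $S$ by a short pigeonhole/averaging step, so no new combinatorics is needed for this part.
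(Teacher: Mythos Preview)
The paper does not prove this theorem; it is stated as a result of Angluin and justified only by the footnote pointing to Theorems~1 and~8 of \citep{DBLP:journals/ml/Angluin90}. Your proposal is precisely a sketch of Angluin's approximate-fingerprint argument that the footnote invokes, so in that sense you are following the same route the paper defers to.

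One technical point: your multiplicative shrinkage analysis and the accompanying ``relativisation to $S$'' step are an unnecessary complication. In Angluin's argument the fingerprint assignment $a$ satisfies $|\{\phi\in M(n,t_0,s_0):\phi(a)=\psi(a)\}|<|M(n,t_0,s_0)|/r(n)$ with respect to the \emph{whole} class, so each round removes at most $|M(n,t_0,s_0)|/r(n)$ formulas from $S$ \emph{additively}. After $q(n)$ rounds one has $|S|\geq |M(n,t_0,s_0)|\bigl(1-q(n)/r(n)\bigr)$, which is at least $|M(n,t_0,s_0)|/2$ once $r(n)\geq 2q(n)$. No relativisation is needed, and your claim that the fingerprint lemma ``transfers to $S$ by a short pigeonhole/averaging step'' is not obviously true and should simply be dropped in favour of the additive bound.
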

To employ Theorem~\ref{lem:angluin}, we associate with every
monotone DNF formula 
$$
    \phi= \bigvee_{i=1}^t(x_1^i\land\dots\land x_{s^i}^i),
$$
where $\{x_1^i,\dots, x_{s^i}^i\}\subseteq \{x_{1},\ldots,x_{n}\}$,
a $\ourDLLite$ TBox
$\Tmc_\phi$ as follows. With each conjunct 
$x_1^i\land\dots \land x_{s^i}^i$ we associate a concept expression 
$$C_i :=\exists \rho_1^i.\exists \rho_2^i.\dots\exists \rho_n^i.\top,$$ 
where $\rho_j^i= r$ if
$x_j$ occurs in $x_1^i\land\dots \land x_{s^i}^i$ and $\rho_j^i=\bar r$
otherwise ($r$ and $\bar{r}$ are role names). Let $A$ be a concept name and set
\begin{center}
$
\Tmc_\phi = \{A \sqsubseteq \bigsqcap_{i=1}^t C_{i}, \quad \bar r\sqsubseteq r\}.
$
\end{center}
For example, for $n=4$ and $\phi = (x_1\land x_4)\lor x_2$ we have
$$
\Tmc_\phi = \{
   A  \sqsubseteq  \exists r.\exists \bar r.\exists \bar r.\exists r.\top, \;
   A  \sqsubseteq  \exists \bar r.\exists r.\exists \bar r.\exists \bar r.\top,\;
\bar r\sqsubseteq r\}.
$$
We say that a TBox \Tmc \emph{has a DNF-representation for $n$} if it is obtained by the translation of a monotone DNF-formula
with $n$ variables; that is, if \Tmc is of the following form, for some
$\Gamma \subseteq \{r,\bar r\}^n$:
$$
\{A \sqsubseteq \bigsqcap_{\rho_1 \cdots\rho_n \in \Gamma} \exists
\rho_1.\exists \rho_2.\dots\exists \rho_n.\top, \quad
\bar r\sqsubseteq r\}.
$$

A truth assignment $I$ (for the variables $x_1 \dots,x_n$) also corresponds to a concept expression 
$$C_I :=\exists \rho_1^i.\exists \rho_2^i.\dots\exists \rho_n^i.\top,$$
where $\rho^i_j = r$ if $I$ makes $x_j$ true and $\rho^i_j=\bar r$
otherwise. Then
\begin{center}
$I\models \phi$ if, and only if, $\Tmc_\phi\models A\sqsubseteq C_I$
\end{center}
holds
for all truth assignments~$I$.

Note that $\bar r$ represents that a variable is false and $r$ that a
variable is true. Thus, the RI $\bar r \sqsubseteq r$ captures
the monotonicity of the DNF formulas considered. 
For any fixed values $n$, $s$ and $t$, we set 
$$
T(n,t,s)= \{\Tmc_{\phi} \mid \phi\in M(n,t,s)\}.
$$
Note that the TBoxes in $T(n,t,s)$ are exactly those TBoxes that have a DNF-representation for $n$ and
satisfy additionally the conditions that the DNF represented by $\Tmc_{\phi}$ has exactly $t$ conjunctions
each conjunction of which has exactly $s$ variables. 

\bigskip

We describe now the strategy for the oracle $\Omf'$ to answer equivalence queries so that 
no learning algorithm is able to exactly identify members of $T(n,t,s)$ based on the
answers to polynomially many equivalence queries of polynomial size.
If the TBox in the equivalence query is `obviously' not within
the class $T(n,t,s)$, then we will explicitly produce a counterexample that the oracle can
return.  If, on the other hand, the TBox $\Hmc$ from the equivalence query is
`similar' to TBoxes that have a DNF-representation for $n$, 
then we approximate $\Hmc$ by a TBox $\Hmc'$ that has a DNF-representation for $n$
and return the counterexample $A \sqsubseteq C_{I}$ corresponding to the truth
assignment $I$ that the oracle $\Omf$ from Theorem~\ref{lem:angluin} would
return when given $\psi$. 

In detail the strategy is as follows. Assume $q$ is the given polynomial in
Theorem~\ref{lem:angluin} and that $t_{0}$, $s_{0}$ and the strategy of the
oracle $\Omf$ are chosen so that for sufficiently large $n$ no learning
algorithm for DNF formulas that asks at most $q(n)$ equivalence queries, each
bounded in size by $q(n)$, can distinguish all members of $M(n,t_{0},s_{0})$.
Choose a sufficiently large $n$.  Let $\Hmc$ be an equivalence TBox query
issued by a learning algorithm. Then $\Omf'$ does the following:
\begin{enumerate}

\item If $\Hmc$ entails some $A\sqsubseteq \exists
\rho_1.\exists \rho_2.\dots\exists \rho_{n+1}.\top$ with $\rho_{i}\in \{r,\bar{r}\}$ for $1\leq i \leq n+1$, 
then return this CI as a negative counterexample;

\item If $\Hmc$ entails some $\exists \rho_{1}.\top \sqsubseteq \exists \rho_{2}.\top$ such that $\{\rho_{1},\rho_{2}\}
\subseteq \{r,\bar{r},r^-,\bar{r}^-\}$
and $\{\bar{r}\sqsubseteq r\} \not\models \exists \rho_{1}.\top \sqsubseteq \exists \rho_{2}.\top$,
then return this CI as a negative counterexample;

\item If $\Hmc\models \exists \rho_{1}.\top \sqsubseteq \exists \rho_{2}.\exists \rho_{3}.\top$ such that
$\{\rho_{1},\rho_{2},\rho_{3}\}\subseteq \{r,\bar{r}\}$, then return this CI as a negative counterexample;
 
\item If there exists no $\rho_1,\dots, \rho_n\in\{r,\bar r\}^n$ such that  
  $\Hmc \models A \sqsubseteq \exists \rho_1 . \cdots \exists \rho_n. \top$ 
  then return 
$
A \sqsubseteq \underbrace{\exists {r} \cdots \exists {r}}_{n}.\top
$
as a positive counterexample.

\item Suppose now that none of the above applies. We say that a sequence $\rho_1,\dots,\rho_n\in\{r,\bar r\}^n$
 is \emph{$r$-minimal for $\Hmc$} if $\Hmc\models A\sqsubseteq \exists \rho_1.\cdots \exists \rho_n.\top$ and
 whenever $\rho_i = r$, for $1\leq i\leq n$, we have 
 $\Hmc\not\models \exists \rho_1.\cdots \exists \rho_{i-1}.\exists\bar r.\exists \rho_{i+1}.\cdots\exists \rho_n.\top$.
 We obtain a TBox $\Hmc'$ with a DNF representation by setting
  $$
  \Hmc' = \{A \sqsubseteq \bigsqcap_{\substack{\rho_1,\dots,\rho_n \textrm{ is} \\ \textrm{$r$-minimal for $\Hmc$}}} 
               \exists \rho_1.\cdots\exists \rho_n.\top, \quad \bar r\sqsubseteq r\}.
  $$
  Observe that for any sequence $\rho_1, \dots, \rho_n\in\{r,\bar r\}^n$ we have
  $\Hmc \models A\sqsubseteq \exists \rho_1.\cdots\exists \rho_n.\top$ if, and only if, 
  $\Hmc'\models A\sqsubseteq \exists \rho_1.\cdots\exists \rho_n.\top$.
  We convert $\Hmc'$ into its corresponding monotone DNF formula 
  $\phi_{\Hmc'}$ by reversing the translation from monotone DNF formulas into 
  $\ourDLLite$ TBoxes of the above form in the obvious way. 
  Note that the size of $\phi_{\Hmc'}$ is linear in the size of $\Hmc'$.
  Given $\phi_{\Hmc'}$ the oracle $\Omf$ returns a (positive or negative) counterexample (a truth assignment)
  $I$. Then return the counterexample in the form of the CI $A \sqsubseteq C_I$.

\end{enumerate}
Observe that the answers given in Points~1 to 3 are correct in the sense that if an inclusion
$\alpha$ is returned as a negative example then $\Tmc\not\models \alpha$ for any $\Tmc\in T(n,t,s)$.
Point~4 is trivially correct, since any monotone DNF is satisfied by the truth assignment that makes every
variable true. We analyse the size of the TBox $\Hmc'$ computed in Point~5.
\begin{lemma}\label{lem:sizest}
Assume that Points~1 to 4 do not apply to $\Hmc$. Then the number of sequences $\rho_1,\dots,\rho_n\in\{r,\bar r\}^n$
which are $r$-minimal for $\Hmc$ is bounded by $|\Hmc|$.
\end{lemma}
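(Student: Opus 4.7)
For each $r$-minimal $\rho = \rho_1\cdots\rho_n$, by the defining property $\Hmc\models A\sqsubseteq \exists\rho_1.\cdots\exists\rho_n.\top$ and Lemma~\ref{lem:can1}, there is a witness path $p_\rho = (\rho_A = d_0, d_1, \ldots, d_n)$ in the canonical model $\Imc_{A,\Hmc}$ with $(d_{i-1}, d_i) \in \rho_i^{\Imc_{A,\Hmc}}$ for each $i$. The first observation is that $r$-minimality forces each edge $(d_{i-1}, d_i)$ with $\rho_i = r$ not to also lie in $\bar r^{\Imc_{A,\Hmc}}$: otherwise the same path would simultaneously witness the sequence obtained from $\rho$ by flipping position $i$ to $\bar r$, contradicting minimality. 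Since $\Imc_{A,\Hmc}$ is tree-shaped, the labelled path $p_\rho$ is therefore entirely determined by $\rho$, so distinct $r$-minimal sequences are realised by distinct labelled length-$n$ paths out of $\rho_A$.

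The plan is then to inject these labelled paths into the set of length-$n$ paths in the tree representations $T_C$ of right-hand sides of CIs $A'\sqsubseteq C \in \Hmc$ with $A'\in\NC$, whose total cardinality is at most $\sum_{\alpha\in\Hmc}|T_{{\sf rhs}(\alpha)}|\leq|\Hmc|$. The key structural step uses Points~2 and 3: Point~2 implies $\Hmc$ entails no non-trivial inclusion $\exists\rho_1.\top\sqsubseteq\exists\rho_2.\top$ with $\rho_i\in\{r,\bar r,r^-,\bar r^-\}$, and Point~3 analogously rules out $\exists\rho_1.\top\sqsubseteq\exists\rho_2.\exists\rho_3.\top$ with $\rho_i\in\{r,\bar r\}$. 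Combined with Point~1, no CI with a basic-concept left-hand side can ever contribute an $\{r,\bar r\}$-labelled successor along a witness path: otherwise one of the forbidden inclusions above would be derivable by composition. Hence every edge of $p_\rho$ arises directly from $T_C$ of some CI $A'\sqsubseteq C \in \Hmc$ with $A'\in\NC$ that is applied at an ancestor of the edge during the canonical-model construction.

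To build the injection, I would assign to each $r$-minimal $\rho$ the CI-and-path signature obtained from the deepest attached subtree contributing the last edge of $p_\rho$, reading the signature path backwards within that subtree. The intrinsic label sequence along this signature path coincides with $\rho$ by the first paragraph, and the tree representation of the right-hand side of the chosen CI pins down a unique such path; thus distinct $r$-minimal sequences receive distinct signatures. Since the codomain has cardinality at most $|\Hmc|$, the bound follows. The main technical obstacle will be to argue that the ``deepest attached subtree'' assignment is canonical and injective even when $p_\rho$ traverses several attached subtrees via concept-name chaining; this is precisely where Points~1 to 3 together with the $r$-minimality observation play their essential role.
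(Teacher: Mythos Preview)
Your first paragraph is correct and the $r$-minimality observation (that an edge with $\rho_i=r$ cannot also lie in $\bar r^{\Imc_{A,\Hmc}}$, so the underlying path determines~$\rho$) is a valid way to secure injectivity at the level of paths in the canonical model. The problem lies in the passage from paths in $\Imc_{A,\Hmc}$ to paths inside the trees $T_C$ for $A\sqsubseteq C\in\Hmc$.

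The paper's proof contains one step you are missing, and it is precisely the step that dissolves what you call the ``main technical obstacle''. Since the signature is $\{A,r,\bar r\}$, the only concept name is $A$; once Points~2 and~3 have ruled out CIs with left-hand side $\exists\rho.\top$ contributing forward $\{r,\bar r\}$-edges, the only way the witness path can leave the first attached tree is by reaching some $d_i$ with $i\geq 1$ and $d_i\in A^{\Imc_{A,\Hmc}}$. But then $\Hmc\models A\sqsubseteq\exists\rho_1.\cdots\exists\rho_i.A$, and iterating yields $\Hmc\models A\sqsubseteq\exists(\rho_1\cdots\rho_i)^m.\top$ for every $m>0$, contradicting that Point~1 does not apply. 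Hence no such $d_i$ exists, the entire path $d_0,\ldots,d_n$ sits inside a \emph{single} tree $T_C$ attached at the root via some $A\sqsubseteq C\in\Hmc$, and the injection $\rho\mapsto(\text{depth-}n\text{ path in }T_C)$ is immediate; summing over the CIs in $\Hmc$ gives the bound $|\Hmc|$.

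Your ``deepest attached subtree'' construction cannot be made to work while allowing multiple concept-name attachments: the signature path inside that subtree has length strictly less than~$n$, so its label sequence is only a proper suffix of~$\rho$, and two $r$-minimal sequences agreeing on that suffix would collide. The fix is not a cleverer injection but the cycling argument above, which shows that the multi-subtree scenario you are worried about simply never occurs.
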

\begin{proof}
We first show that if $\rho_1,\dots,\rho_n\in\{r,\bar r\}^n$ is $r$-minimal for $\Hmc$,
then there exists a CI $A \sqsubseteq C\in \Hmc$ such that 
\begin{itemize}
\item[$(\ast)$] there are concept expressions $C_{0},\ldots,C_{n}$ with $C_{0}=C$ and
$\exists \rho_{i+1}.C_{i+1}$ a top-level conjunct of $C_{i}$, for all $i<n$.
\end{itemize}
For the proof we require the canonical model $\Imc_{A,\Hmc}$ of $A$ and $\Hmc$ (Lemma~\ref{lem:can1}).
Denote the root of $\Imc_{A,\Hmc}$ by $\rho_{A}$. 
Let $\rho_1,\dots,\rho_n\in\{r,\bar r\}^n$ be $r$-minimal for $\Hmc$.
Then there are $d_{0},\ldots,d_{n}\in \Delta^{\Imc_{A,\Hmc}}$ with $d_{0}=\rho_{A}$ such that
$(d_{i},d_{i+1})\in \rho_{i}^{\Imc_{A,\Hmc}}$ for all $i<n$. 
By the canonical model construction and the assumption that Points~2 and 3 do not hold, there either 
exists $d_{i}\in A^{\Imc_{A,\Hmc}}$ or there is a CI $A \sqsubseteq C\in \Hmc$ such 
that $(\ast)$ holds. We show that the first condition does not hold. Assume for a prove by contradiction 
that $d_{i}\in A^{\Imc_{A,\Hmc}}$. By Lemma~\ref{lem:can1},
$\Hmc\models A \sqsubseteq \exists \rho_{1}\cdots \exists \rho_{i}.A$.
But then $\Hmc\models A \sqsubseteq \exists (\rho_{1}\cdots \rho_{i})^{n}.\top$ for all $n>0$ which contradicts 
the assumption that Point~1 does not apply to $\Hmc$. 

It follows that the number of distinct $r$-minimal sequences is bounded by the number of distinct sequences 
$C_{0},\ldots,C_{n}$ with $A \sqsubseteq C_{0}\in \Hmc$ and $\exists \rho_{i+1}.C_{i+1}$ a top-level conjunct of 
$C_{i}$ for all $i<n$. Thus, the number of distinct $r$-minimal sequences is bounded by $|\Hmc|$.
\end{proof}
It follows from Lemma~\ref{lem:sizest} that the size of the TBox $\Hmc'$ computed in Point~5 is bounded by
$4n|\Hmc|+2$.

\begin{theorem}\label{th:DLLiteEQ}
DL-Lite$_{\mathcal{R}}^{\exists}$ TBoxes (even without inverse roles) are not polynomial query
learnable using only equivalence queries.
\end{theorem}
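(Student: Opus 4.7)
The plan is to derive the result by reduction from Theorem~\ref{lem:angluin}, using the oracle strategy $\Omf'$ defined above. Suppose for contradiction that there is an exact learning algorithm $\mathcal{A}$ whose total query complexity is bounded by a polynomial $p$. For fixed $t_0,s_0$ the TBoxes in $T(n,t_0,s_0)$ all have size $O(n)$, and every counterexample produced by $\Omf'$ has size at most $6n$ (Points~1--4 return CIs of length bounded by $n+1$, and Point~5 returns $A\sqsubseteq C_I$ of size $n+1$); hence the query budget of $\mathcal{A}$ at any stage is bounded by $q'(n):=p(cn,cn)$ for a suitable constant $c$. By Lemma~\ref{lem:sizest} the DNF formula $\phi_{\Hmc'}$ passed to the Angluin oracle $\Omf$ in Point~5 has size at most $4n|\Hmc|+2=O(n\cdot q'(n))$. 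I therefore apply Theorem~\ref{lem:angluin} with polynomial $q(n):=4nq'(n)+2$ to obtain constants $t_0,s_0$ and a strategy $\Omf$ that, for sufficiently large $n$, foils any DNF learning algorithm asking at most $q(n)$ equivalence queries of size at most $q(n)$.

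Next I would verify that $\Omf'$, parameterised by this $\Omf$ and simulating an unknown target from $T(n,t_0,s_0)$, produces valid counterexamples regardless of which $\Tmc_\phi\in T(n,t_0,s_0)$ is hidden. For Points~1--3 the returned CI is entailed by $\Hmc$ but by no $\Tmc_\phi\in T(n,t_0,s_0)$: Point~1 because the canonical model of $A$ and $\Tmc_\phi$ has depth exactly $n$, and Points~2--3 because the only role-related axiom in $\Tmc_\phi$ is $\bar r\sqsubseteq r$ and no CI of $\Tmc_\phi$ forces successors on arbitrary elements. For Point~4 the returned CI is entailed by every $\Tmc_\phi\in T(n,t_0,s_0)$ (via $\bar r\sqsubseteq r$ together with any of its length-$n$ branches from $A$) but by assumption not by $\Hmc$. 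For Point~5, the equivalence $I\models\phi \iff \Tmc_\phi\models A\sqsubseteq C_I$, its analogue for $\Hmc'$ and $\phi_{\Hmc'}$, and the observation $\Hmc\models A\sqsubseteq C_I \iff \Hmc'\models A\sqsubseteq C_I$ (made just after the definition of $\Hmc'$) together ensure that the truth assignment $I$ returned by $\Omf$ translates to a valid counterexample $A\sqsubseteq C_I$ distinguishing $\Hmc$ from $\Tmc_\phi$.

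It remains to tally the information leaked to $\Omf$ in a full run of $\mathcal{A}$. The responses in Points~1--4 are consistent with \emph{every} $\Tmc_\phi\in T(n,t_0,s_0)$ and so carry no information about the hidden target. Each invocation of Point~5 corresponds to a single DNF equivalence query of size at most $q(n)$, and the total number of invocations is bounded by $q'(n)\le q(n)$. By Theorem~\ref{lem:angluin} at least two semantically distinct formulas $\phi,\phi'\in M(n,t_0,s_0)$ remain consistent with $\Omf$'s answers; correspondingly, $\Tmc_\phi$ and $\Tmc_{\phi'}$ remain consistent with the answers of $\Omf'$ and are semantically distinct, contradicting the assumption that $\mathcal{A}$ exactly identifies the target. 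The main subtlety is the faithfulness of the translation in Point~5: the DNF counterexample $I$ is returned by $\Omf$ against the surrogate hypothesis $\phi_{\Hmc'}$, yet it must serve as a DL-Lite counterexample $A\sqsubseteq C_I$ against the \emph{original} $\Hmc$ actually posed by the learner, and this is precisely where the $\Hmc\mapsto\Hmc'$ invariance (combined with the size bookkeeping of Lemma~\ref{lem:sizest}) is used.
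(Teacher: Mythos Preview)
Your approach is essentially the paper's: reduce from Angluin's lower bound on monotone DNF by simulating a DNF learner via the oracle strategy $\Omf'$, using Lemma~\ref{lem:sizest} to control the blow-up when passing from $\Hmc$ to $\phi_{\Hmc'}$. The verification of Points~1--5 and the final indistinguishability argument are correct in spirit.

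There is, however, a circularity in your choice of constants. You set $q'(n)=p(cn,cn)$ ``for a suitable constant $c$'' chosen so that the target TBox has size at most $cn$; but the target TBoxes in $T(n,t_0,s_0)$ have size $\Theta(t_0\cdot n)$, so $c$ depends on $t_0$, which you only obtain from Theorem~\ref{lem:angluin} \emph{after} fixing the polynomial $q(n)=4nq'(n)+2$ --- which already depends on $c$. The paper breaks this loop by bounding the target size by $n^2$ rather than by a $t_0$-dependent linear function: since $t_0$ is a constant, $t_0\cdot(4n+2)\le n^2$ for all sufficiently large $n$, so the polynomial $q(n)=(p(n^2,4n+6))^2$ can be fixed \emph{before} invoking Theorem~\ref{lem:angluin}. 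With this adjustment your argument goes through. (A minor imprecision: the CI returned in Point~1 has $n+1$ existentials but size $4n+6$, not ``length bounded by $n+1$''; your overall $6n$ ceiling is nonetheless valid.)
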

\begin{proof}
  Suppose that the query complexity of a learning algorithm $\Amf$ for DL-Lite$_{\mathcal{R}}^{\exists}$ TBoxes in
  $\Sigma=\{A,r,\bar{r}\}$  
  is bounded at every stage of computation by a polynomial $p(x,y)$, where $x$ is the size of
  the target TBox, and $y$ is the maximal size of a counterexample returned by the oracle 
  up to the current stage of computation.
  Let $q(n) = (p(n^2, 4n+6))^2$, and let constants $t_0$ and $s_0$ be as guaranteed by Lemma~\ref{lem:angluin}.
  We claim that, for sufficiently large $n$, $\Amf$ cannot distinguish some $\Tmc_\phi$ and $\Tmc_\psi$ for 
  $\phi,\psi\in M(n,t_0,s_0)$.
  
  Assuming that $n>11$ (the maximal size of counterexamples given under Point~2 and 3), 
  the largest counterexample returned by our strategy described above is of the form
  $A\sqsubseteq \exists \rho_1.\cdots\exists \rho_{n+1}.\top$, so for
  sufficiently large $n$ the maximal size of any counterexample in any run of
  $\Amf$ is bounded by $4n+6=4(n+1)+2$.
  Similarly, the size of every potential target TBox $\Tmc_\phi\in T(n,t_0,s_0)$ does not exceed $t_0\cdot(4n+2)$
  and, as $t_0$ is a constant, for sufficiently large $n$ it is bounded by 
  $n^{2}$. Thus, for sufficiently large $n$ the total query complexity of $\Amf$
  on any input from $T(n, t_0, s_0)$ is bounded by $p(n^{2},4n+6)$.
  Obviously, the size of each query is bounded by the query complexity of the learning algorithm.
  So, the size of a DNF equivalence query forwarded to the strategy $\Omf$ guaranteed
  by Lemma~\ref{lem:angluin} is bounded by $4n\times p(n^{2},4n+6)+2 \leq q(n)$, 
  and there will be at most $q(n)$ queries forwarded. But then $\Omf$ can return answers such that some $\phi$ and 
  $\psi$ from $M(n,t_0,s_0)$ cannot be distinguished. It remains to observe that $\Amf$ cannot distinguish 
  $\Tmc_\phi$ and $\Tmc_\psi$.
\end{proof}

\section{Related Work}\label{sect:related}
Some related work has already been discussed in the introduction to
this paper. Here we discuss in more detail related work from
\emph{ontology learning} in general and \emph{exact learning of
  ontologies} in particular. We start with the former. %

\paragraph{Ontology Learning.} Research in ontology learning has a
rich history that we cannot discuss here in full detail. The
collection~\citep{lehmann2014perspectives} and
surveys~\citep{DBLP:reference/nlp/CimianoVB10,Wong:2012:OLT:2333112.2333115}
provide an excellent introduction to the state of the art in this
field. The techniques applied in ontology learning range from
information extraction and text mining to interactive learning and
inductive logic programming (ILP). Of particular relevance for this
paper are the approaches to learning logical expressions (rather than
subsumption hierarchies between concept names). For example, the work
in~\citep{lehmann2009ideal,DBLP:journals/ml/LehmannH10,DBLP:conf/ekaw/BuhmannFLMV14}
applies techniques from ILP to learn description logic concept
expressions. ILP is applied as well
in~\citep{DBLP:journals/ijswis/Lisi11} for learning logical rules for
ontologies. The learning of fuzzy DLs has been considered
in~\citep{DBLP:journals/fuin/LisiS15}.  Other machine learning methods
which have been applied to learn %
ontology axioms include Association Rule Mining
(ARM)~\citep{volker2011statistical,DBLP:conf/otm/FleischhackerVS12,DBLP:journals/ws/VolkerFS15}
and Formal Concept Analysis
(FCA)~\citep{DBLP:conf/iccs/Rudolph04,DBLP:conf/ijcai/BaaderGSS07,DBLP:phd/de/Distel2011,borchmann2014learning,ganter2016conceptual}.
Recently, learnability of lightweight DL TBoxes from finite sets of interpretations has been investigated in \citep{klarmanontology}.

\paragraph{Exact Learning of Description Logic Concept Expressions.}
Rather than aiming to learn a TBox here one is interested in learning a
target concept expression~$C_{\ast}$. This was first studied in
\citep{Cohen1994learnability,Cohen94learningthe,DBLP:journals/ml/FrazierP96}. The
standard learning protocol is as follows:
\begin{itemize}
\item a membership query asks whether a concept expression $C$ is subsumed by the target concept expression $C_{\ast}$ 
(in symbols, $\emptyset \models C\sqsubseteq C_{\ast}$?);
\item an equivalence query asks whether a concept expression $C$ is equivalent to the target concept expression $C_\ast$ (in symbols,
$\emptyset \models C \equiv C_{\ast}$?). If $C$ and $C_\ast$ are not equivalent then the oracle gives a counterexample, that is, 
a concept expression $C'$ such that either $\emptyset \models C' \sqsubseteq C_{\ast}$ and $\emptyset\not\models C'\sqsubseteq C$
or $\emptyset \not\models C' \sqsubseteq C_{\ast}$ and $\emptyset\models C'\sqsubseteq C$. 
\end{itemize}
\citep{Cohen1994learnability,Cohen94learningthe,DBLP:journals/ml/FrazierP96}
consider concept expressions in (variations of) the now largely historic
description logic \Classic
\citep{DBLP:conf/sigmod/BorgidaBMR89,DBLP:journals/sigart/Patel-SchneiderMB91,borgida1994semantics}.
The expressive power of \Classic and its variants is incomparable to
the expressive power of modern lightweight description logics.
\Classic only shares conjunction and unqualified existential
restrictions of the form $\exists r.\top$ with the DLs considered in
this paper. It additionally admits \emph{value restrictions}
$\forall r.C$ whose interpretation is given as
$$
(\forall r.C)^{\Imc}= \{ d \in \Delta^{\Imc} | d'\in C^{\Imc} \mbox{ for all $d'$ with } (d,d')\in r^{\Imc}\}
$$
and \emph{unqualified number restrictions} $(\leq n\; r)$ and $(\geq n \;r)$ interpreted as 
\begin{eqnarray*}
(\leq n r)^{\Imc} & = & \{ d \in \Delta^{\Imc} \mid \,| \{d' \mid (d,d')\in r^{\Imc}\}|\leq n\}\\
(\geq n r)^{\Imc} & = & \{ d \in \Delta^{\Imc} \mid \,|  \{d' \mid (d,d')\in r^{\Imc}\}|\geq n\}
\end{eqnarray*} 
as well as various constructors using individual names. For example, if $a_{1},\ldots, a_{n}$ are names for individual objects,
then ONE-OF$(a_{1},\ldots,a_{n})$ is a \Classic concept denoting the set $\{a_{1}^{\Imc},\ldots,a_{n}^{\Imc}\}$, 
where $a_{i}^{\Imc}$ denotes the individual with name $a_{i}$ in interpretation $\Imc$. It is proved 
in~\citep{Cohen1994learnability,Cohen94learningthe,DBLP:journals/ml/FrazierP96} that 
in many fragments of \Classic concept expressions cannot be learned polynomially using only membership or equivalence queries 
but that they can be learned in polynomial time using both. Exact learning of concept expressions 
in modern lightweight description logics has not yet been investigated.   

\paragraph{Exact Learning of TBoxes using Concept Inclusions as Queries.} First results on exact learning of description logic TBoxes using 
concept inclusions as queries were presented in~\citep{DBLP:conf/dlog/KonevLW13,DBLP:conf/kr/KonevLOW14}, of which this paper is 
an extension. In contrast to~\citep{DBLP:conf/dlog/KonevLW13,DBLP:conf/kr/KonevLOW14}, we make the distinction between 
polynomial time and polynomial query learnability which enables us to
formulate and prove results on a more fine grained level.
TBoxes in \ourDLLitehorn, for which we prove polynomial query
learnability, were not considered
in~\citep{DBLP:conf/dlog/KonevLW13,DBLP:conf/kr/KonevLOW14}. The current paper is also closely related to the PhD thesis of the third author~\citep{AOzaki}.
In addition to the results presented here, it is shown there that
even in the extension of $\mathcal{EL}_{\sf lhs}$ with role inclusions,
TBoxes 
can be learned in polynomial time. The learning algorithm is a non-trivial extension of the algorithm presented here for
$\mathcal{EL}_{\sf lhs}$ TBoxes.

\paragraph{Exact Learning of TBoxes using Certain Answers.} 
In recent years, data access mediated by ontologies has become one of
the most important applications of DLs,
see~\citep{PLCD*08,DBLP:journals/tods/BienvenuCLW14,DBLP:conf/rweb/KontchakovZ14,DBLP:conf/rweb/BienvenuO15}
and references therein. The idea is to use a TBox to specify semantics
and background knowledge for the data and use it for deriving more
complete answers to queries over the data. In this context, the data
is stored in an \emph{ABox} consisting of a finite set of assertions
of the form $A(a)$ or $r(a,b)$, where $A$ is a concept names, $r$ a
role name, and $a,b$ are individual names. Given a query $q(\vec{x})$
(typically a conjunctive query), a TBox $\Tmc$, and an ABox $\Amc$, a
tuple of individual names $\vec{a}$ from $\Amc$ and of the same length
as $\vec{x}$ is called a certain answer to $q(\vec{x})$ over $\Amc$
w.r.t.~$\Tmc$, in symbols $\Tmc,\Amc\models q(\vec{a})$, if every
model $\Imc$ of $\Tmc$ and $\Amc$ satisfies $q(\vec{a})$.  Motivated
by this setup, \citep{DBLP:conf/aaai/KonevOW16,AOzaki} study
polynomial learnability of TBoxes using membership queries that ask
whether a tuple of individuals names is a certain answer to a query
over an ABox w.r.t.\ the target TBox.  This is a natural
alternative to learning using concept inclusions since domain experts
are often more familiar with querying data in a particular domain than
with the logical notion of subsumption between concept expressions. In
detail, the learning protocol is as follows:
\begin{itemize}
\item a membership query takes the form $(\Amc,q(\vec{a}))$ and asks whether the tuple $\vec{a}$ of individual names is a certain answer 
to the query $q(\vec{x})$ over the ABox \Amc w.r.t.~the target TBox~\Tmc; 
\item an equivalence query asks whether a TBox \Hmc is equivalent to the target TBox $\Tmc$. 
If \Tmc and \Hmc are not equivalent then a counterexample of the form $(\Amc,q(\vec{a}))$ is given such that $\Tmc,\Amc\models q(\vec{a})$
and $\Hmc,\Amc\not\models q(\vec{a})$ (a positive counterexample) or $\Tmc,\Amc\not\models q(\vec{a})$
and $\Hmc,\Amc\models q(\vec{a})$ (a negative counterexample). 
\end{itemize}
In the learning protocol above we have not yet specified the class of queries from which the $q(\vec{x})$ are drawn and 
which strongly influences the classes of TBoxes that can be learned. In the context of data access using TBoxes the two 
most popular classes of queries are:
\begin{itemize}
\item conjunctive queries (CQs), that is, existentially quantified conjunctions of atoms; and 
\item instance queries (IQs), which take the form $C(x)$ or $r(x,y)$ with $C$ a concept expression from the DL under consideration and $r$ a 
role name. 
\end{itemize}
In~\citep{DBLP:conf/aaai/KonevOW16,AOzaki}, exact learning of TBoxes in the languages $\mathcal{EL}$, 
$\mathcal{EL}_{\sf lhs}$ and \ourDLLite 
is studied for both IQs and CQs in queries. The positive learnability results are proved by polynomial reductions to the 
learnability results presented in this paper and~\citep{AOzaki}. The basic link between learning using concept inclusions as 
queries and learning by certain answers is as follows: if \Tmc is a TBox and  
$C,D$ are concept expressions in any of the DLs discussed above then one can regard the labelled tree $T_{C}$ corresponding to $C$ 
as an ABox $\Amc_C$ with root $\rho_C$ and it holds that $\Tmc \models C\sqsubseteq D $ if, and only if, $\Tmc,\Amc_{C} \models D(\rho_C)$. 
The converse direction (obtaining a concept expression from an ABox) is more involved since 
ABoxes are not tree-shaped and an additional unfolding step is needed to compute a corresponding concept expression. Using this link 
it is proved in~\citep{DBLP:conf/aaai/KonevOW16,AOzaki} that \ourDLLite and $\mathcal{EL}_{\sf lhs}$ 
TBoxes with role inclusions can be learned 
with polynomially many queries using certain answers to IQs. 
It is also proved that \EL is still not learnable with polynomially many queries using certain answers 
with neither IQs nor CQs as the query language and that \ourDLLite TBoxes cannot be learned with 
polynomially many queries using certain answers with CQs as the query language. 

\paragraph{Exact Learning in (other) Fragments of FO Horn.} 
We discuss results on exact learning of finite sets of FO Horn clauses
or fragments of this logic, where a \emph{FO Horn clause} is a
universally quantified clause with at most one positive literal
\citep{page1993anti,arimura1997learning,reddy1998learning,DBLP:journals/iandc/AriasK02,DBLP:journals/jmlr/AriasKM07,DBLP:conf/pkdd/SelmanF11}.
Depending on what is used as membership queries and as counterexamples to
equivalence queries, one can distinguish between exact learning FO Horn clauses using interpretations and using entailments.
As learning using entailments is closer to our approach we focus on
that setting.
The exact learning protocol is
then as follows:
\begin{itemize}
\item a membership query asks whether an FO Horn clause is entailed by the target set $T$ of FO Horn clauses;  
\item an equivalence query asks whether a set $H$ of FO Horn clauses is equivalent to the target set $T$.
If $H$ and $T$ are not equivalent then a counterexample is given, that is, an FO Horn clause entailed by $T$ but not by $H$
(a positive counterexample) or vice versa. 
\end{itemize}   
Considering how terms (with function symbols allowed) 
can appear in an FO Horn clause, two main restrictions have been studied in the literature: 
\begin{enumerate}
\item %
\emph{Range restricted clauses}: when the set of terms in the positive literal (if existent) 
is a subset of the terms in the negative literals and their subterms; and 
\item %
\emph{Constrained clauses}: when the set of terms and subterms in the positive literal (if existent) 
is a superset of the terms in the negative literals. 
\end{enumerate}
For example, the FO Horn clause $\forall x(\neg P(f(x)) \vee P(x))$
is range restricted 
but not constrained %
and the FO Horn  clause $\forall x(\neg P(x) \vee P(f (x)))$  
is constrained  
but not %
range restricted, where $P$ is a predicate symbol and $f$ a function symbol.
In~\citep{reddy1998learning} and~\citep{arimura1997learning}, it is shown that under certain 
acyclicity conditions FO Horn with range restricted clauses and, respectively, constrained clauses
are polynomial time learnable from entailments if the arity of predicates is bounded by a constant.
A learning algorithm for a fragment of FO Horn (called closed FO Horn) that subsumes 
the two languages defined above is presented in~\citep{DBLP:journals/iandc/AriasK02}.  
The algorithm is polynomial in the number of clauses, terms and
predicates and the size of the counterexamples, but exponential not
only in the arity of predicates but also in the number of variables
per clause. In fact, it is an open question whether there exists a
learning algorithm for closed FO Horn that is polynomial in the number
of variables per clause.

We relate the learnability results for FO Horn to the learnability
results for lightweight description logics presented in this paper.
Observe that most DLs (and in particular all DLs investigated in this paper)  
can be translated into FO~\citep{baader2003description}.
For example, a translation of the $\EL_{\sf lhs}$ CI %
$\exists r.A \sqsubseteq B$ is $\forall x\forall y(\neg r(x,y)\vee\neg A(y)\vee B(x))$
and  a translation of the \ourDLLite CI $A \sqsubseteq \exists r.A$ 
is $\forall x(A(x) \rightarrow \exists y. (r(x,y) \wedge A(y)))$. 
Under this translation, every $\EL_{\sf lhs}$ TBox
can be regarded as a set of range restricted FO Horn clauses, where the arity of predicates is 
bounded by $2$.   
In contrast, since in \ourDLLite existential quantifiers can be nested in the right 
side of CIs, \ourDLLite CIs cannot be translated into FO Horn clauses.
We can now summarise the relationship between our learnability results for $\EL_{\sf lhs}$, \ourDLLite and \ourDLLitehorn 
and the results on exact learnability of FO Horn from entailments as follows: 
since the arity of DL predicates is at most~$2$ and since no function symbols are admitted in DLs, 
none of the DLs considered in this paper can express the fragments of FO Horn discussed above.
On the other hand, we do not impose an acyclicity condition on the TBoxes (in contrast to~\citep{reddy1998learning,arimura1997learning}) 
and our algorithms are polynomial in the number of variables permitted in any clause (in contrast to~\citep{DBLP:journals/iandc/AriasK02}). 
Thus, the results discussed above for FO Horn do not translate into polynomial learning algorithms for $\mathcal{EL}_{\sf lhs}$
and are not applicable to \ourDLLite nor \ourDLLitehorn.
Our results thus cover new fragments of FO that have not yet been considered for exact learning. This is not surprising,
given the fact that the fragments of FO considered previously were not motivated by applications in ontology learning.

Also related to exact learning of Horn FO is recent work on exact learning of schema mappings in data
exchange~\citep{DBLP:conf/icdt/CateDK12}. Schema mappings are tuples $(S,T,M)$ where $S$ is a source schema (a finite set of predicates),
$T$ is a target schema (a finite set of predicates), and $M$ is a
finite set of sentences of the form $\forall \vec{x} (\varphi(\vec{x}) \rightarrow \exists \vec{y}\psi(\vec{x},\vec{y}))$
where $\varphi(\vec{x})$ and $\psi(\vec{x},\vec{y})$ are conjunctions of atoms over $S$ and $T$, respectively~\citep{fagin2005data}. 
$(S,T,M)$ is a GAV schema mapping if $\vec{y}$ is empty and $\psi(\vec{x},\vec{y})$ is an atom. In \citep{DBLP:conf/icdt/CateDK12},
the authors study exact learnability of GAV schema mappings from data examples $(I,J)$ consisting of a database $I$ over the
source schema $S$ and a database $J$ over the target schema $T$. Such a data example satisfies $M$ if $I\cup J \models M$.
The authors present both polynomial query learnability results for protocols using membership and equivalence queries
and non-polynomial query learnability results if either only
membership or only equivalence queries are allowed. The results presented in~\citep{DBLP:conf/icdt/CateDK12} are not applicable
to the setting considered in this paper since the learning protocol uses data examples instead of entailments. 

\section{Conclusion}
We have presented the first study of learnability of DL ontologies in
Angluin et al's framework of exact learning, obtaining both positive
and negative results. 
Several research questions remain to be
explored. One immediate question is whether acyclic $\EL$ TBoxes can
be learned in polynomial time using queries and counterexamples of the
form $A \equiv C$ and $A \sqsubseteq C$ only. Note that our
non-polynomial query learnability result for acyclic $\EL$ TBoxes relies
heavily on counterexamples that are not of this form.  Another
immediate question is whether the extension of
$\mathcal{EL}_{\mn{lhs}}$ with inverse roles (which is a better
approximation of OWL2 RL than $\mathcal{EL}_{\mn{lhs}}$ itself) can
still be learned in polynomial time, or at least with polynomially
many queries of polynomial size. Other interesting research directions are non-polynomial
time learning algorithms for $\mathcal{EL}$ TBoxes and the admission
of different types of membership queries and counterexamples in the
learning protocol. For example, one could replace CIs as
counterexamples with interpretations.

\smallskip

\noindent
{\bf Acknowledgements} 
Lutz was supported by the DFG project Prob-DL (LU1417/1-1).
Konev and Wolter were supported by the EPSRC project EP/H043594/1.
Ozaki was supported by the Science without Borders scholarship programme.
\bibliography{ourbib}

\newpage

\appendix

\section{Proofs for Section~\ref{sec:EL_not_learnable}}
We supply proofs for Lemma~\ref{lem:dllite-non-learn} and Lemma~\ref{lem:MQ}. In addition, we prove
a claim used in the proof of Lemma~\ref{lem:EQ}. 
We start by giving the proof of Lemma~\ref{lem:dllite-non-learn}.

\medskip
\noindent
{\bf Lemma~\ref{lem:dllite-non-learn}}
{\it For every $\ourDLLite$ CI $B \sqsubseteq D$ over the signature of $\Tmc_{\sma}$,
\begin{itemize}
\item either $\Tmc_{\sma} \models B \sqsubseteq D$ for every $\Tmc_{\sma}\in S_N$
\item or there is at most one $\Tmc_{\sma}\in S_N$ such that $\Tmc_{\sma}\models B \sqsubseteq D$.
\end{itemize} 
}
\medskip
\noindent
\begin{proof}
Assume the CI $B \sqsubseteq D$ is given. If $B\not=A$ or $M$ does not occur in $D$, then the claim can be
readily checked. Thus, we assume that $B=A$ and $M$ occurs in $D$. 
Assume there exists $\sma_{0}$ such that $\Tmc_{\sma_{0}}\models A \sqsubseteq D$ (if no such $\sma_{0}$ exists,
we are done). For any $\sma$, let $\Imc_{A,\Tmc_{\sma}}$ be the canonical model of $A$ and $\Tmc_{\sma}$
(Lemma~\ref{lem:can1}). 
Apply the following restricted form of parent/child merging exhaustively to the concept expression $D$:
\begin{itemize}
\item if there are nodes $d,d_{1},d_{2}\in T_{D}$ with $l(d_{1},d)=\sigma$ and $l(d,d_{2})=\sigma^{-}$ for some $\sigma\in \{r,s\}$, 
then replace $D$ by the resulting concept expression after $d_{1}$ and $d_{2}$ are merged in $D$.
\end{itemize}
Let $D'$ be the resulting concept expression. Recall from Lemma~\ref{lem:can1} that 
$\Tmc_{\sma}\models A\sqsubseteq D$
iff there is a homomorphism from $T_{D}$ to $\Imc_{A,\Tmc_{\sma}}$ mapping $\rho_{D}$ to $\rho_{A}$.
Using the fact that $\Imc_{A,\Tmc_{\sma}}$ is a ditree interpretation, one can readily check that 
any homomorphism $h$ from $T_{D}$ to $\Imc_{A,\Tmc_{\sma}}$ mapping $\rho_{D}$ to $\rho_{A}$ factors through 
$T_{D'}$ and that $D'$ is an $\EL$ concept expression. Thus, if there is an additional
$\sma'\not=\sma_{0}$ such that $\Tmc_{\sma'} \models A \sqsubseteq D$, then there are two homomorphisms 
$h_{\sma_{0}}$ and $h_{\sma'}$
with the same domain $T_{D'}$ into $\Imc_{A,\Tmc_{\sma_{0}}}$ and $\Imc_{A,\Tmc_{\sma'}}$ and
mapping the root of $T_{D'}$ to the roots of $\Imc_{A,\Tmc_{\sma_{0}}}$ and $\Imc_{A,\Tmc_{\sma'}}$, respectively. 
Since $M$ occurs in $D'$ and $D'$ is an $\EL$ concept expression
we find a sequence $D_{0},\ldots,D_{m}$ with $D_{0}=D'$ and $D_{m}=M$ such that 
$\exists s_{i+1}.D_{i+1}$ is a top-level conjunct of $D_{i}$ for $s_{i}\in \{r,s\}$
and all $i<m$. But then $s_{1}\cdots s_{m}=\sma_{0}$ and $s_{1}\cdots s_{m}=\sma'$ and we have derived a 
contradiction to the assumption that $\sma_{0}$ and $\sma'$ are distinct.
\end{proof}
To prove Lemma~\ref{lem:MQ} we require the following observation.
\begin{lemma}\label{lem:rhs3}
For any acyclic $\mathcal{EL}$ TBox $\Tmc$, any CI $A\sqsubseteq C\in\Tmc$
and any concept expression of the form $\exists t. D$ we have
$\Tmc\models A\sqsubseteq \exists t.D$ if, and only if,
$\Tmc\models C\sqsubseteq \exists t.D$.
\end{lemma}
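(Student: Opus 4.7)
The ``if'' direction is immediate: from $A\sqsubseteq C\in\Tmc$ we have $\Tmc\models A\sqsubseteq C$, which together with $\Tmc\models C\sqsubseteq\exists t.D$ yields $\Tmc\models A\sqsubseteq\exists t.D$ by transitivity.

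For the ``only if'' direction, my plan is to exploit the canonical model characterisation of Lemma~\ref{lem:can1}: it suffices to show that $\rho_A\in(\exists t.D)^{\Imc_{A,\Tmc}}$ implies $\rho_C\in(\exists t.D)^{\Imc_{C,\Tmc}}$. The crucial structural observation, made possible by acyclicity (each concept name is the left-hand side of at most one axiom and there are no cyclic definitions), is that $A$ occurs on the left-hand side of exactly one axiom of $\Tmc$, namely $A\sqsubseteq C$. In the canonical model construction for $A$ and $\Tmc$ I can therefore schedule the rule for $A\sqsubseteq C$ as the very first step at $\rho_A$: it attaches a copy of $\Imc_C$ to $\rho_A$ with $\rho_C$ identified with $\rho_A$. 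The remaining rule applications can then be chosen to mimic the construction of $\Imc_{C,\Tmc}$ from $\Imc_C$ step by step, so that $\Imc_{A,\Tmc}$ and $\Imc_{C,\Tmc}$ differ only at the root, where $\rho_A$ additionally carries the concept name $A$ and possibly further labels derived from $A$ through axioms whose right-hand side mentions~$A$.

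The key step, and the main obstacle, is to show that this extra label information at the root does not create any $t$-successor of $\rho_A$ in $\Imc_{A,\Tmc}$ that is not already present as a $t$-successor of $\rho_C$ in $\Imc_{C,\Tmc}$. Suppose such an extra $t$-successor is produced by a CI $B\sqsubseteq E$ with $\exists t.F$ among the top-level conjuncts of $E$ and with $\rho_A\in B^{\Imc_{A,\Tmc}}$ but $\rho_C\notin B^{\Imc_{C,\Tmc}}$. Since $B$ appears on the left-hand side of this CI, acyclicity forbids $B$ from being the left-hand side of any other axiom; in particular there is no CE $B\equiv G$. I then trace a derivation of $\rho_A\in B$ back through the rule applications that produced it, using acyclicity to ensure termination. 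Each intermediate label $Z$ encountered is either obtained via a CI application (in which case the same rule fires at the corresponding node of $\Imc_{C,\Tmc}$ by the inductive hypothesis) or via the backward direction $G'\sqsubseteq W$ of some CE $W\equiv G'$ (in which case $W$ is not on the left-hand side of any CI, so $W$-membership itself cannot spawn fresh outgoing structure and only feeds further CE-driven cascades). Pushing the chain all the way back to the initial label $\{A\}$ of $\rho_A$, the first rule applied must be $A\sqsubseteq C$, so $B$ is forced to hold at $\rho_C$ in $\Imc_{C,\Tmc}$ as well, contradicting the assumption.

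Once the claim about coinciding outgoing structure is in place, a homomorphism $h:T_{\exists t.D}\to\Imc_{A,\Tmc}$ witnessing $\rho_A\in(\exists t.D)^{\Imc_{A,\Tmc}}$ sends $\rho_D$ to a $t$-successor of $\rho_A$ that lies in the part of $\Imc_{A,\Tmc}$ copied from $\Imc_{C,\Tmc}$. Transferring $h$ along the identification of $\rho_A$ with $\rho_C$ yields a homomorphism $h':T_{\exists t.D}\to\Imc_{C,\Tmc}$ with $h'(\rho_{\exists t.D})=\rho_C$, and Lemma~\ref{lem:can1} concludes $\Tmc\models C\sqsubseteq\exists t.D$.
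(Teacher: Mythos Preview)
The paper states this lemma as an ``observation'' and supplies no proof, so there is nothing to compare against directly. Your overall strategy---comparing $\Imc_{A,\Tmc}$ with $\Imc_{C,\Tmc}$ and arguing that the extra root label $A$ cannot generate additional $t$-successors---is sound and can be completed, but two points need tightening.

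First, your case split assumes the offending rule is a standalone CI $B\sqsubseteq E$ and concludes ``in particular there is no CE $B\equiv G$''; but the rule creating the extra $t$-successor could equally well be the forward direction of a CE $B\equiv E$ with $\exists t.F$ a conjunct of $E$, and that case must be covered. Second, and more importantly, the reason a backward-CE-derived label $W$ cannot feed the chain is not that ``$W$ is not on the left-hand side of any CI'' (it \emph{is} the left-hand side of the CI $W\sqsubseteq G'$ arising from $W\equiv G'$). The correct reason is that when $W$ is first added via the backward direction $G'\sqsubseteq W$ one already has $\rho_A\in G'$, so the forward rule $W\sqsubseteq G'$ never fires thereafter. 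This is precisely what forces every label in the chain from $B$ back to the initial label to be \emph{first} obtained by a forward rule application; by acyclicity the chain then terminates at $A$, and replaying it from the conjuncts of $C$ inside $\Imc_{C,\Tmc}$ yields $\rho_C\in B^{\Imc_{C,\Tmc}}$, the desired contradiction. The phrase ``by the inductive hypothesis'' in your sketch should be replaced by this explicit chain-and-replay argument. With these clarifications the proof goes through.

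A slicker alternative that sidesteps the derivation-tracing entirely: take $\Imc=\Imc_{C,\Tmc}$ and define $\Jmc$ on the same domain and role extensions by setting $A^{\Jmc}=A^{\Imc}\cup\{\rho_C\}$, keeping $Z^{\Jmc}=Z^{\Imc}$ for every concept name that is undefined or defined by a CI, and setting $Z^{\Jmc}=E^{\Jmc}$ whenever $Z\equiv E\in\Tmc$ (well-defined by acyclicity). Then $\Jmc\models\Tmc$, $\rho_C\in A^{\Jmc}$, and $\Jmc$ agrees with $\Imc$ at every non-root node; hence $\rho_C\in(\exists t.D)^{\Jmc}$ forces $\rho_C\in(\exists t.D)^{\Imc}$, and Lemma~\ref{lem:can1} concludes.
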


We are now ready to prove Lemma~\ref{lem:MQ}.

\smallskip
\noindent\textbf{Lemma~\ref{lem:MQ}\ }
{\it
For every $\mathcal{EL}$ CI $C\sqsubseteq D$ over~$\Sigma_{n}$:
\begin{itemize}
\item either $\Tmc_L\models C\sqsubseteq D$ for every $L\in\Lmf_n$ 
\item or the number of $L\in\Lmf_n$ such that $\Tmc_L\models C\sqsubseteq D$
does not exceed $|C|$.
\end{itemize}
}

\smallskip
\noindent\begin{proof}
We prove the lemma by induction on the structure of $D$.
We assume throughout the proof that there exists some $L_0\in\Lmf_n$ such that
$\Tmc_{L_0}\models C\sqsubseteq D$.

\smallskip

\noindent \textsl{Base case}: $D$ is a concept name.
We make the following case distinction.
\begin{itemize}
\item 
$D\in \{X_i, A_i, B_i \mid 1\leq i \leq n\}$ or $D=M$. 
By Lemma~\ref{basic}, $C$ is of the
form $Z\sqcap C'$, for some concept name $Z$, and $\Tmc_{L_0}\models Z\sqsubseteq D$.
But then $Z=D$ and it follows that $\Tmc_{L}\models C\sqsubseteq D$ for every $L\in\Lmf_n$.

\item $D=X_0$. By Lemma~\ref{basic}, $C$ is of the
form $Z\sqcap C'$, for some concept name $Z$, and $\Tmc_{L_0}\models Z\sqsubseteq X_0$.
This is the case if either $Z=X_{0}$, or $Z\in \{A,A_1,B_1,\dots,A_n,B_n\}$.
In either case, $\Tmc_{L}\models C\sqsubseteq X_0$ for every $L\in\Lmf_n$.

\item 
$D = A$. If $C$ is of the form $A\sqcap C'$ or for all $i$ such that $1\leq i \leq n$,
$A_i$ or $B_i$ is a conjunct of $C$, then $\Tmc_L\models C\sqsubseteq A$ for
every $L\in\Lmf_n$. 
Assume now that $C$ is not of this form. Then for some $j$ such that $1\leq j \leq n$, 
$C$ is neither of the form $A\sqcap C'$ nor of the form $A_j\sqcap C'$ nor of the form $B_j\sqcap C'$.
Let $L = (\sma_1,\dots,\sma_n)\in\Lmf_n$ be such that  
$\Tmc_L\models C\sqsubseteq A$.  
Notice that $\Tmc_L\models C\sqsubseteq A$, for $L=(\sma_1,\dots,\sma_n)\in\Lmf_n$, if, and only if, 
$\Tmc_L\models C\sqsubseteq X_0\sqcap\exists \sma_1.M\sqcap\dots\sqcap\exists \sma_n.M$.
By the claim in the proof of Lemma~\ref{lem:component}, for such a $\Tmc_L$ we must have
$\emptyset\models C\sqsubseteq \exists {\sma_j}.M$. 
Clearly, the number of $L=(\sma_1,\dots,\sma_n)\in\Lmf_n$
with $\emptyset\models C\sqsubseteq \exists\sma_j.M$ does not exceed $|C|$.

Thus, either $\Tmc_L\models C\sqsubseteq A$ for every $L\in\Lmf_n$ or the number of 
$L\in\Lmf_n$ such that $\Tmc_L\models C\sqsubseteq A$ does
not exceed $|C|$.
\end{itemize}

\smallskip

\noindent \textsl{Induction step}. 
If $D = D_1\sqcap D_2$, then $\Tmc_L\models C\sqsubseteq D$ if, and only if, $\Tmc\models
C\sqsubseteq D_i$, $i=1,2$.
By induction hypothesis, for $i=1,2$
either $\Tmc_L\models C\sqsubseteq D_i$ for every $L\in\Lmf_n$ or
there exist at most $|C|$ different $L\in\Lmf_n$
such that $\Tmc_L\models C\sqsubseteq D_i$. Thus
either $\Tmc_L\models C\sqsubseteq D$ for every $L\in\Lmf_n$ or
the number of $L\in\Lmf_n$ 
such that $\Tmc_L\models C\sqsubseteq D$ also does not exceed $|C|$.

\medskip

\noindent 
Now assume that $D = \exists t.D'$.
Suppose that $\Tmc_L\models C\sqsubseteq D$ for some $L\in\Lmf_n$.
Then, by Lemma~\ref{basic}, either there exists a
conjunct $Z$ of $C$, $Z$ a concept name, such that 
$\Tmc_{L} \models Z\sqsubseteq \exists t.D'$ or
there exists a conjunct $\exists t.C'$ of $C$ with
$\Tmc_{L} \models C' \sqsubseteq D'$. 
We analyse for every conjunct of $C$ of the form $Z$ or $\exists t.C'$ the number of
$L\in\Lmf_n$ such that $\Tmc_L\models Z\sqsubseteq \exists t.D'$
or $\Tmc_L\models \exists t.C'\sqsubseteq \exists t.D'$, respectively.
\begin{itemize}
\item[(i)] Let $Z$ be a conjunct of $C$ such that 
$Z$ is a concept name and $\Tmc_{L} \models Z\sqsubseteq \exists t.D'$. 
Notice that $Z\not=M$ as there is no $L\in\Lmf_n$ such that
$\Tmc_L\models M\sqsubseteq \exists t.D'$.
We consider the remaining cases.
\begin{itemize}
\item $Z = X_i$, for some $i\geq0$.
It is easy to see that for $L,L'\in\Lmf_n$ we have 
$\Tmc_L\models X_i\sqsubseteq \exists t.D'$
if, and only if
$\Tmc_{L'}\models X_i\sqsubseteq \exists t.D'$.
Thus, $\Tmc_L\models Z\sqsubseteq \exists t.D'$ for every $L\in\Lmf_n$.
\item $Z\in \{A_i, B_i \mid 1\leq i \leq n\}$.
By Lemma~\ref{lem:rhs3}, 
$\Tmc_{L}\models Z\sqsubseteq \exists t.D'$ if, and only if,
$\Tmc_{L}\models X_0\sqcap \exists \sma_i.M\sqsubseteq \exists t.D'$. By Lemma~\ref{basic},
either $\Tmc_{L}\models X_0\sqsubseteq \exists t.D'$
or $\Tmc_{L}\models \exists \sma_i.M\sqsubseteq \exists t.D'$.
If $\Tmc_{L}\models X_0\sqsubseteq \exists t.D'$ then, as above,
for $\Tmc_{L}\models C\sqsubseteq \exists t.D'$ every $L\in\Lmf_n$. 
Suppose that $\exists t.D'$ is such that 
$\Tmc_{L}\not \models X_0\sqsubseteq \exists t.D'$ and
$\Tmc_{L}\models \exists \sma_i.M \sqsubseteq \exists t.D'$.
By inductive applications of Lemma~\ref{basic},
this is only possible if $\exists t.D'  = \exists \sma_i.M$.
Thus, there is exactly one $L\in\Lmf_n$ (namely, $L= L_0$) such that 
$\Tmc_L\models Z\sqsubseteq \exists \sma_i.M$.
\item $Z = A$. Suppose that for some $L=(\sma_1,\dots,\sma_n)\in\Lmf_n$ we have
$\Tmc_{L}\models A \sqsubseteq \exists t.D'$. Equivalently, 
$\Tmc_{L}\models X_0\sqcap \exists \sma_1.M\sqcap\dots\sma_n.M\sqsubseteq \exists t.D'$. 
By Lemma~\ref{basic}, 
either $\Tmc_{L}\models X_0\sqsubseteq \exists t.D'$
or $\Tmc_{L}\models \exists \sma_i.M\sqsubseteq \exists t.D'$ for some $i$ with $1\leq i \leq n$.
Thus, as above, unless $\Tmc_L\models X_0\sqsubseteq \exists t.D'$ we have
$\exists t.D'$ is $\exists \sma_i.M$. But then $L=L_0$.
\end{itemize}

\item[(ii)] Let $\exists t.C'$ be a conjunct of $C$ with
$\Tmc_{L} \models C' \sqsubseteq D'$. 
The induction hypothesis implies that the number of $L\in\Lmf_n$ such that $\Tmc_L\models C'\sqsubseteq D'$
does not exceed $|C'|$.
\end{itemize}
To summarise,
either $\Tmc_L\models C\sqsubseteq \exists t.D'$ for every $L\in\Lmf_n$ or
for every conjunct $C_0$ of $C$ of the form $Z$ or $\exists t.C'$,
the number of $L\in\Lmf_n$ such that $\Tmc_L\models C_0\sqsubseteq \exists t.D'$
does not exceed $|C_0|$. Hence
the number of $L\in\Lmf_n$ such that $\Tmc_L\models C\sqsubseteq \exists t.D'$
does not exceed $|C|$.
\end{proof}
The next result is used in the proof of Lemma~\ref{lem:EQ}.
\begin{lemma}\label{lem:claim}
For any $0\leq i\leq n$ and concept expression $D$ over ${\Sigma_n}$, if
$\Tmc_0\not\models X_i\sqsubseteq D$ 
then there exists a sequence of role names $t_1,\dots t_l\in \{r,s\}^{l}$ such that  
$\emptyset\models D\sqsubseteq \exists t_1.\cdots\exists t_l.Y$ and 
$\Tmc_0\not\models X_i\sqsubseteq \exists t_1.\cdots\exists t_l.Y$,
where $Y$ is either $\top$ or a concept name and $0\leq l\leq n-i+1$.
\end{lemma}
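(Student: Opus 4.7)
The plan is to prove the lemma by induction on the structure of $D$, exploiting the simple shape of the canonical model $\Imc_{X_i,\Tmc_0}$: it is a full binary tree of depth $n-i$ whose root $\rho_{X_i}$ is labeled $X_i$, whose nodes at depth $k$ are labeled $X_{i+k}$, and where each node at depth $k<n-i$ has exactly one $r$-successor and one $s$-successor. The witnesses we produce are obtained by tracing, inside $T_D$, a single path that fails to embed into this tree.

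The base cases are direct. If $D$ is a concept name $Y$ (the hypothesis rules out $D=\top$, since $\Tmc_0\models X_i\sqsubseteq\top$ trivially), take $l=0$ and witness $Y$; the bound $0\leq n-i+1$ is immediate, and $\emptyset\models D\sqsubseteq Y$ holds because $D=Y$. If $D=D_1\sqcap D_2$, the hypothesis forces $\Tmc_0\not\models X_i\sqsubseteq D_j$ for some $j\in\{1,2\}$; applying the induction hypothesis to $D_j$ returns a sequence that also works for $D$, because $\emptyset\models D\sqsubseteq D_j$.

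The main case is $D=\exists t.D'$, where necessarily $t\in\{r,s\}$ since these are the only roles in $\Sigma_n$. If $i=n$, then $X_n$ has no outgoing edges in $\Imc_{X_n,\Tmc_0}$, so $\Tmc_0\not\models X_n\sqsubseteq\exists t.\top$, and we take $l=1$, $t_1=t$, $Y=\top$, satisfying $l=1\leq n-i+1$. Otherwise $i<n$, and by the canonical model construction the unique $t$-successor of $\rho_{X_i}$ in $\Imc_{X_i,\Tmc_0}$ is labeled $X_{i+1}$ and the subtree it roots is isomorphic to $\Imc_{X_{i+1},\Tmc_0}$. Lemma~\ref{lem:can1} therefore yields the key equivalence $\Tmc_0\models X_i\sqsubseteq\exists t.E$ iff $\Tmc_0\models X_{i+1}\sqsubseteq E$, for every concept expression $E$. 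Applied to $E=D'$, this forces $\Tmc_0\not\models X_{i+1}\sqsubseteq D'$; the induction hypothesis applied to $X_{i+1}$ and $D'$ then produces $t_2,\dots,t_l\in\{r,s\}^{l-1}$ and $Y$ with $\emptyset\models D'\sqsubseteq\exists t_2.\cdots\exists t_l.Y$, $\Tmc_0\not\models X_{i+1}\sqsubseteq\exists t_2.\cdots\exists t_l.Y$, and $l-1\leq n-i$. Prepending $t_1:=t$ gives a sequence of length $l\leq n-i+1$ witnessing $\emptyset\models D\sqsubseteq\exists t_1.\cdots\exists t_l.Y$, and applying the equivalence once more (in the contrapositive direction) yields $\Tmc_0\not\models X_i\sqsubseteq\exists t_1.\cdots\exists t_l.Y$, as required.

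The only delicate point is maintaining the tight length bound $l\leq n-i+1$, but this falls out of the book-keeping in the existential case: each descent through an existential edge increments $l$ by one and $i$ by one, so $n-i+1$ stays in lockstep with the growing sequence, while the base cases contribute at most one extra unit. No substantial new ideas beyond the canonical model structure of $\Tmc_0$ and the homomorphism characterisation of entailment from Lemma~\ref{lem:can1} are needed.
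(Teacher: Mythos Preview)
Your proof is correct and follows essentially the same approach as the paper's: both arguments proceed by induction, treating concept names, conjunctions, and existential restrictions separately, and both reduce the existential case $D=\exists t.D'$ at $X_i$ to the same statement for $D'$ at $X_{i+1}$. The only cosmetic differences are that the paper organises this as a double induction (outer on $i$ from $n$ down to $0$, inner on the structure of $D$) and justifies the key step via Lemmas~\ref{basic} and~\ref{lem:rhs3}, whereas you use a single structural induction on $D$ with the statement universally quantified over $i$ and appeal directly to the shape of the canonical model $\Imc_{X_i,\Tmc_0}$ through Lemma~\ref{lem:can1}.
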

\begin{proof} We prove the lemma by induction on $i$ from $i=n$ to $i=0$.
If $i=n$, then $\Tmc_0\not\models X_i\sqsubseteq D$ if either $\emptyset\models
D\sqsubseteq \exists t.\top$, for some role name $t$, or
$\emptyset\models D\sqsubseteq Y$, for some concept name $Y\neq X_i$.

Suppose that the lemma is proved for $0<j\leq n$ and let $i=j-1$.
We proceed by induction on the structure of $D$. If $D$ is 
a concept name, we are done as $\Tmc_0\models X_i\sqsubseteq Z$ does not hold for any concept name $Z\neq X_i$.
If $D$ is of the form $\exists t.D'$, where $t\in \{r,s\}$, then 
$\Tmc_0\not\models X_{i+1}\sqsubseteq D'$, and so, by induction hypothesis,
there exists a sequence of role names $t_1,\dots, t_l$, with $l\leq n-i$, such that
$\Tmc_0\not\models X_{i+1}\sqsubseteq \exists t_1.\cdots\exists t_l.Y$ and 
$\emptyset\models D'\sqsubseteq \exists t_1.\cdots\exists t_l.Y$. 
But then, by Lemma~\ref{lem:rhs3} and Lemma~\ref{basic}, 
$\Tmc_0\not\models X_{i}\sqsubseteq \exists t.\exists t_1.\cdots\exists t_l.Y$ and
$\emptyset\models \exists t.D'\sqsubseteq \exists t.\exists t_1.\cdots\exists t_l.Y$.
If $D$ is of the form $D=D_1\sqcap D_2$, there there exists $D_{i}$, $i=1,2$,
such that $\Tmc_0\not\models X_i\sqsubseteq D_i$ and the lemma holds
by induction hypothesis.
\end{proof}

\medskip

\end{document}